\providecommand{\tabularnewline}{\\}
\providecommand{\algorithmname}{Algorithm}
\theoremstyle{plain}
\newtheorem{thm}{\protect\theoremname}
\newtheorem{thm}{\protect\theoremname}[chapter]
\theoremstyle{definition}
\newtheorem{defn}[thm]{\protect\definitionname}
\newenvironment{proof}[1][\protect\proofname]{\par
\normalfont\topsep6\p@\@plus6\p@\relax
\trivlist
\itemindent\parindent
\item[\hskip\labelsep\scshape #1]\ignorespaces
}{%
\endtrivlist\@endpefalse
}
\providecommand{\proofname}{Proof}
\theoremstyle{remark}
\newtheorem{rem}[thm]{\protect\remarkname}
\theoremstyle{plain}
\newtheorem{lem}[thm]{\protect\lemmaname}
\theoremstyle{plain}
\newtheorem{cor}[thm]{\protect\corollaryname}
\theoremstyle{plain}
\newtheorem{prop}[thm]{\protect\propositionname}
\renewcommand*{\backref}[1]{}
\renewcommand*{\backrefalt}[4]{
  \ifcase #1 %
  \or
    (cited on page #2)%
  \else
    (cited on pages #2)%
  \fi
}
\newcommand{\cmark}{\ding{51}}
\newcommand{\xmark}{\ding{55}}
\title{New Optimisation Methods for Machine Learning}
\author{Aaron Defazio}
\date{November 2014}
\providecommand{\corollaryname}{Corollary}
\providecommand{\definitionname}{Definition}
\providecommand{\lemmaname}{Lemma}
\providecommand{\propositionname}{Proposition}
\providecommand{\remarkname}{Remark}
\providecommand{\theoremname}{Theorem}
\begin{document}

\begin{titlepage}
  \enlargethispage{2cm}
  \begin{center}
    \makeatletter
    \Huge\textbf{\@title} \\[.4cm]
    \Huge\textbf{\thesisqualifier} \\[2.5cm]
    \huge\textbf{\@author} \\[9cm]
    \makeatother
    \LARGE A thesis submitted for the degree of \\
    Doctor of Philosophy\\
    of The Australian National University \\[2cm]
    November 2014
  \end{center}
\end{titlepage}
\selectlanguage{english}%
\global\long\def\n#1{\left\Vert #1\right\Vert }

\global\long\def\ns#1{\left\Vert #1\right\Vert ^{2}}

\global\long\def\r{\mathbb{R}}

\global\long\def\e{\mathbb{E}}

\global\long\def\ip#1#2{\left\langle #1,#2\right\rangle }
\selectlanguage{australian}%

\include{frontmatter}
\chapter*{Acknowledgements}

I would like to thank several NICTA researchers for conversations
and brainstorming sessions during the course of my PhD, particularly
Scott Sanner and my supervisor Tiberio Caetano.

I would like to thank Justin Domke for many discussions about the
Finito algorithm, and his assistance with developing and checking
the proof. Likewise, for the SAGA algorithm I would like to thank
Francis Bach and Simon Lacoste-Julien for discussion and assistance
with the proofs. The SAGA algorithm was discovered in collaboration
with them while visiting the INRIA lab, with some financial support
from INRIA.

I would also like to thank my family for all their support during
the course of my PhD. Particularly my mother for giving me a place
to stay for part of the duration of the PhD as well as food, love
and support. I do not thank her often enough.

I also would like to thank NICTA for their scholarship during the
course of the PhD. NICTA is funded by the Australian Government through
the Department of Communications and the Australian Research Council
through the ICT Centre of Excellence Program.

\chapter*{Abstract}

In this work we introduce several new optimisation methods for problems
in machine learning. Our algorithms broadly fall into two categories:
optimisation of finite sums and of graph structured objectives. The
finite sum problem is simply the minimisation of objective functions
that are naturally expressed as a summation over a large number of
terms, where each term has a similar or identical weight. Such objectives
most often appear in machine learning in the empirical risk minimisation
framework in the non-online learning setting. The second category,
that of graph structured objectives, consists of objectives that result
from applying maximum likelihood to Markov random field models. Unlike
the finite sum case, all the non-linearity is contained within a \emph{partition
function} term, which does not readily decompose into a summation.

For the finite sum problem, we introduce the Finito and SAGA algorithms,
as well as variants of each. The Finito algorithm is best suited to
strongly convex problems where the number of terms is of the same
order as the condition number of the problem. We prove the fast convergence
rate of Finito for strongly convex problems and demonstrate its state-of-the-art
empirical performance on 5 datasets. 

The SAGA algorithm we introduce is complementary to the Finito algorithm.
It is more generally applicable, as it can be applied to problems
without strong convexity, and to problems that have a non-differentiable
regularisation term. In both cases we establish strong convergence
rate proofs. It is also better suited to sparser problems than Finito.
The SAGA method has a broader and simpler theory than any existing
fast method for the problem class of finite sums, in particular it
is the first such method that can provably be applied to non-strongly
convex problems with non-differentiable regularisers without introduction
of additional regularisation. 

For graph-structured problems, we take three complementary approaches.
We look at learning the parameters for a fixed structure, learning
the structure independently, and learning both simultaneously. Specifically,
for the combined approach, we introduce a new method for encouraging
graph structures with the ``scale-free'' property. For the structure
learning problem, we establish SHORTCUT, a $O(n^{2.5})$ expected
time approximate structure learning method for Gaussian graphical
models. For problems where the structure is known but the parameters
unknown, we introduce an approximate maximum likelihood learning algorithm
that is capable of learning a useful subclass of Gaussian graphical
models.

Our thesis as a whole introduces a new suit of techniques for machine
learning practitioners that increases the size and type of problems
that can be efficiently solved. Our work is backed by extensive theory,
including proofs of convergence for each method discussed.

\tableofcontents{}

\mainmatter

\chapter{Introduction and Overview}

\label{chap:introduction}

Numerical optimisation is in many ways the core problem in modern
machine learning. Virtually all learning problems can be tackled by
formulating a real valued objective function expressing some notation
of loss or suboptimality which can be optimised over. Indeed approaches
that don't have well founded objective functions are rare, perhaps
contrastive divergence \citep{hinton2002training} and some sampling
schemes being notable examples. 

Many methods that started as heuristics were able to be significantly
improved once well-founded objectives were discovered and exploited.
Often a convex variant can be developed. A prime example is belief
propagation, the relation to the Bethe approximation \citep{bethe},
and the later development of tree weighted variants \citep{TRW} which
allowed a convex formulation. The core of this thesis is the development
of several new numerical optimisation schemes, primarily focusing
on convex objectives, which either address limitations of existing
approaches, or improve on the performance of state-of-the-art algorithms.
These methods increase the breadth and depth of machine learning problems
that are tractable on modern computers.

\section{Convex Machine Learning Problems}

In this work we particularly focus on problems that have convex objectives.
This is a major restriction, and one at the core of much of modern
optimisation theory, but one that nevertheless requires justification.
The primary reasons for targeting convex problems is their ubiquitousness
in applications and their relative ease of solving them. Logistic
regression, least-squares, support vector machines, conditional random
fields and tree-weighted belief propagation all involve convex models.
All of these techniques have seen real world application, although
their use has been overshadowed in recent years by non-convex models
such as neural networks. 

Convex optimisation is still of interest when addressing non-convex
problems though. Many algorithms that were developed for convex problems,
motivated by their provably fast convergence have later been applied
to non-convex problems with good empirical results. Additionally,
often the best approach to solving a non-convex problem is through
the repeated solution of convex sub-problems, or by replacing the
problem entirely with a close convex surrogate.

The class of convex numerical problems is sometimes considered synonymous
with that of computationally tractable problems. This is not strictly
true in the usual compute science sense of tractability as some convex
problems on complicated but convex sets can still be NP-hard \citep{np-hard-convex}.

On the other hand, we can sometimes approximately solve non-convex
problems of massive scale using modern approaches (e.g. \citealp{google-deep}).
Instead, convex problems can be better thought of as the \emph{reliably
solvable} problems. For convex problems we can almost always establish
theoretical results giving a practical bound on the amount of computation
time required to solve a given convex problem \citep{nes-interior-point}.
For non-convex problems we can rarely do better than finding a locally
optimal solution. 

Together with the small or no tuning required by convex optimisation
algorithms, they can be used as building blocks within larger programs;
details of the problem can be abstracted away from the users. This
is rarely the case for non-convex problems, where the most commonly
used methods require substantial hand tuning.

When tasked with solving a convex problem, we have at our disposal
powerful and flexible algorithms such as interior point methods and
in particular Newton's method. While Newton's method is strikingly
successful on small problems, its approximately cubic running time
per iteration resulting from the need to do a linear solve means that
it scales extremely poorly to problems with large numbers of variables.
It is also unable to directly handle non-differentiable problems common
in machine learning. Both of these shortcomings have been addressed
to some degree \citep{lbfgs,lbfgs-large,lbfgs-l1}, by the use of
low-rank approximations and tricks for specific non-differentiable
structures, although problems remain.

An additional complication is a divergence in between the numerical
optimisation and machine learning communities. Numerical convex optimisation
researchers in the 80s and 90s largely focused on solving problems
with large numbers of complex constraints, particularly Quadratic
Programming (QP) and Linear Programming (LP) problems. These advances
were applicable to the kernel methods of the early 2000s, but at odds
with many of the more modern machine learning problems which are characterised
by large numbers of potentially non-differentiable terms. The core
examples would be linear support vector machines, other max-margin
methods and neural networks with non-differentiable activation functions.
The problem we address in Chapter \ref{chap:submodular} also fits
into this class.

In this thesis we will focus on smooth optimisation problems, allowing
only a controlled level of non-smooth structure in the form of certain
non-differentiable regularisation terms (detailed in Section \ref{sec:problem-structure-intro}).
The notion of smoothness we use is that of Lipschitz smoothness. A
function $f$ is Lipschitz smooth with constant $L$ if its gradients
are Lipschitz continuous. That is, for all $x,y\in\mathbb{R}^{d}$:
\[
\left\Vert f^{\prime}(x)-f^{\prime}(y)\right\Vert \leq L\left\Vert x-y\right\Vert .
\]

Lipschitz smooth functions are differentiable, and if their Hessian
matrix exists it is bounded in spectral norm. The other assumption
we will sometimes make is that of strong convexity. A function $f$
is strongly convex with constant $\mu$ if for all $x,y\in\mathbb{R}^{d}$
and $\alpha\in[0,1]$:
\[
f\left(\alpha x+(1-\alpha)y\right)\leq\alpha f(x)+(1-\alpha)f(y)-\alpha\left(1-\alpha\right)\frac{\mu}{2}\left\Vert x-y\right\Vert ^{2}.
\]

Essentially rather than the usual convexity interpolation bound $f\left(\alpha x+(1-\alpha)y\right)\leq\alpha f(x)+(1-\alpha)f(y)$,
we have it strengthened by a quadratic term.

\section{Problem Structure and Black Box Methods}

\label{sec:problem-structure-intro}

The last few years have seen a resurgence in convex optimisation centred
around the technique of exploiting problem structure, an approach
we take as well. When no structure is assumed by the optimisation
method about the problem other than the degree of convexity, very
strong results are known about the best possible convergence rates
obtainable. These results date back to the seminal work of \citet{nem-yudin}
and \citet[earlier work in Russian]{nes-book}. These results have
contributed to the widely held attitude that convex optimisation is
a solved problem. 

But when the problem has some sort of additional structure these worst-case
theoretical results are no longer applicable. Indeed, a series of
recent results suggest that practically all problems of interest have
such structure, allowing advances in theoretical, not just practical
convergence. For example, non-differentiable problems under reasonable
Lipschitz smoothness assumptions can be solved with an error reduction
of $O(\sqrt{t})$ times after $t$ iterations, for standard measures
of convergence rate, at best \citep[ Theorem 3.2.1]{nes-book}. In
practice, virtually all non-differentiable problems can be treated
by a smoothing transformation, giving a $O(t)$ reduction in error
after t iterations when an optimal algorithm is used \citep{nes-smoothing}.

Many problems of interest have a structure where most terms in the
objective involve only a small number of variables. This is the case
for example in inference problems on graphical models. In such cases
block coordinate descent methods can give better theoretical and practical
results \citep{random-coordinate}.

Another exploitable structure involves a sum of two terms $F(x)=f(x)+h(x)$,
where the first term $f(x)$ is structurally nice, say smooth and
differentiable, but potentially complex to evaluate, and where the
second term $h(x)$ is non-differentiable. As long as $h(x)$ is simple
in the sense that its \emph{proximal operator} is easy to evaluate,
then algorithms exist with the same theoretical convergence rate as
if $h(x)$ was not part of the objective at all ($F(x)=f(x)$) \citep{beck2009fast}.
The proximal operator is a key construction in this work, and indeed
in modern optimisation theory. It is defined for a function $h$ and
constant $\gamma$ as:
\[
\text{prox}_{\gamma}^{h}(v)=\arg\min_{x}\left\{ h(x)+\frac{\gamma}{2}\left\Vert x-v\right\Vert ^{2}\right\} .
\]

Some definitions of the proximal operator use the weighting $\frac{1}{2\gamma}$
instead of $\frac{\gamma}{2}$; we use this form throughout this work.
The proximal operator is itself an optimisation problem, and so in
general it is only useful when the function $h$ is simple. In many
cases of interest the proximal operator has a closed form solution.

The first four chapters of this work focus on quite possibly the simplest
problem structure, that of a finite summation. This occurs when there
is a large number of terms with similar structure added together or
averaged in the objective. Recent results have shown that for strongly
convex problems better convergence rates are possible under such summation
structures than is possible for black box problems \citep{sag,sdca}.
We provide three new algorithms for this problem structure, discussed
in Chapters \ref{chap:finito} and \ref{chap:saga}. We also discuss
properties of problems in the finite sum class extensively in Chapter
\ref{chap:inc-discus}.

\section{Early \& Late Stage Convergence}

\begin{figure}
\includegraphics[width=1\textwidth]{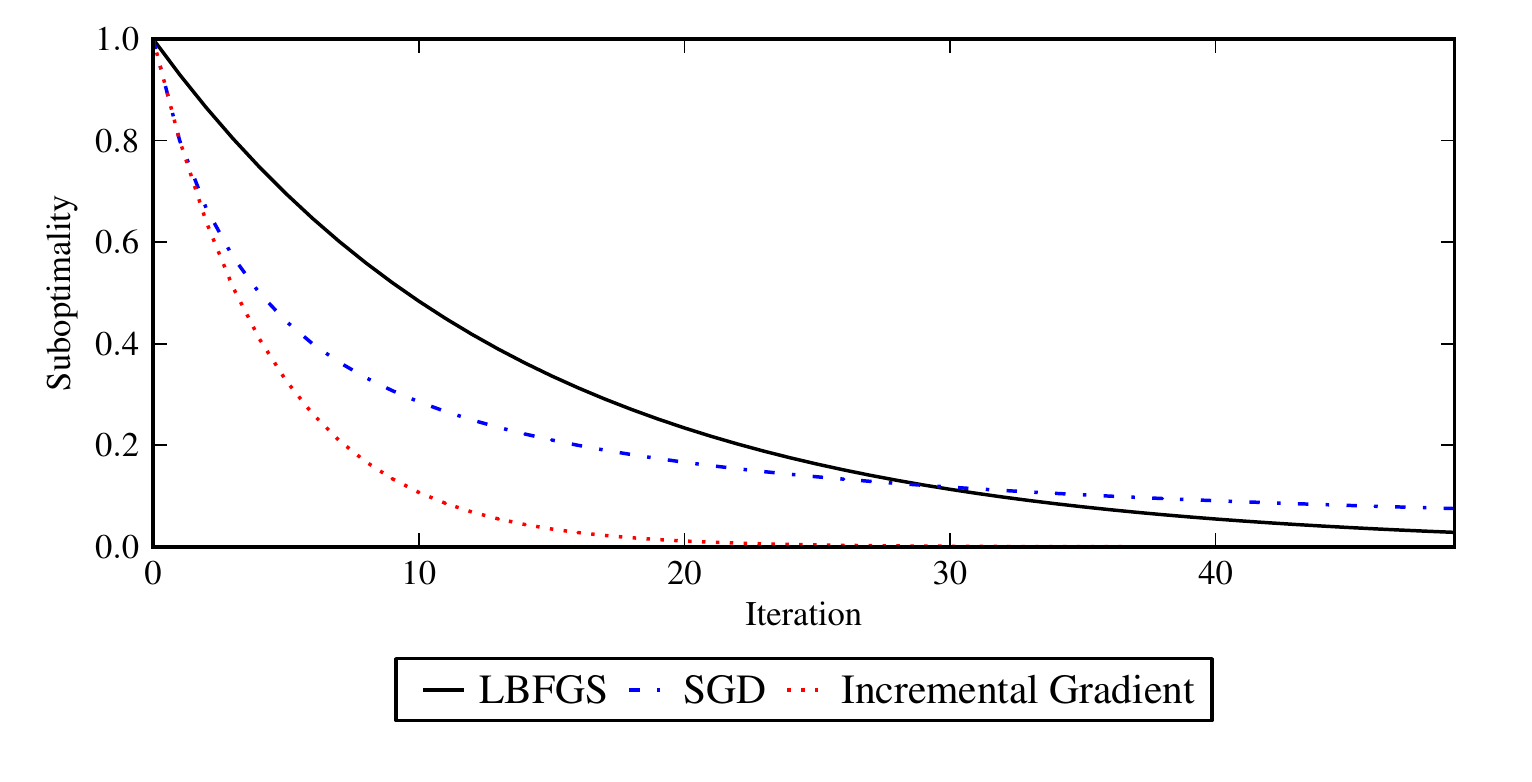}\protect\caption{\label{fig:conv-schematic}Schematic illustration of convergence rates}
\end{figure}

When dealing with problems with a finite sum structure, practitioners
have traditionally had to make a key trade-off between stochastic
methods which access the objective one term at a time, and batch methods
which work directly with the full objective. Stochastic methods such
as SGD (stochastic gradient descent, \citealp{robbins1951}) exhibit
rapid convergence during early stages of optimisation, yielding a
good approximate solution quickly, but this convergence slows down
over time; getting a high accuracy solution is nearly impossible with
SGD. Fortunately, in machine learning it is often the case that a
low accuracy solution gives just as a good a result as a high accuracy
solution for minimising the test loss on held out data. A high accuracy
solution can effectively over-fit to the training data. Running SGD
for a small number of epochs is common in practice.

Batch methods on the other hand are slowly converging but steady;
if run for long enough they yield a high accuracy solution. For strongly
convex problems, the difference in convergence is between a $O(1/t)$
error after $t$ iterations for SGD versus a $O(\rho^{t})$ error
($\rho<1$) for LBFGS\footnote{Quasi-newton methods are often cited as having local super-linear
convergence. This is only true if the dimensionality of the underlying
parameter space is comparable to the number of iterations used. In
machine learning the parameter space is usually much larger in effective
dimension than the number of iterations.}, the most popular batch method \citep{lbfgs}. We have illustrated
the difference schematically in Figure \ref{fig:conv-schematic}.
The SGD and LBFGS lines here are typical of simple prediction problems,
where SGD gives acceptable solutions after 5-10 epochs (passes over
the data), where LBFGS eventually gives a better solution, taking
30-100 iterations to do so. LBFGS is particularly well suited to use
in a distributed computing setting, and it is sometimes the case LBFGS
will give better results ultimately on the test loss, particularly
for poorly conditioned (high-curvature) problems.

Figure \ref{fig:conv-schematic} also illustrates the kind of convergence
that the recently developed class of incremental gradient methods
potentially offers. Incremental gradient methods have the same linear
$O(\rho^{t})$ error after $t$ epochs as a batch method, but with
a coefficient $\rho$ dramatically better. The difference being in
theory thousands of times faster convergence, and in practice usually
10-20 times better. With favorable problem structure incremental gradient
have the potential to offer the best of both worlds, having rapid
initial convergence without the later stage slow-down of SGD.

Another traditional advantage of batch methods over stochastic methods
is their ease of use. Methods such as LBFGS require no hand tuning
to be applied to virtually any smooth problem. Some tuning of the
memory constant that holds the number of past gradients to remember
at each step can give faster convergence, but bad choices of this
constant still result in convergence. SGD and other traditional stochastic
methods on the other hand require a step size parameter and a parameter
annealing schedule to be set. SGD is sensitive to these choices, and
will diverge for poor choices. 

Incremental gradient methods offer a solution to the tuning problem
as well. Most incremental gradient algorithms have only a single step
size parameter that needs to be set. Fortunately the convergence rate
is fairly robust to the value of this parameter. The SDCA algorithm
(\citealt{sdca}) reduces this to 0 parameters, but at the expense
of being limited to problems with efficient to compute proximal operators.

\section{Approximations}

The exploitation of problem structure is not always directly possible
with the objectives we encounter in machine learning. A case we focus
on in this work is the learning of weight parameters in a Gaussian
graphical model structure. This is an undirected graph structure with
weights associated with both edges and nodes. These weights are the
entries of the precision matrix (inverse covariance matrix) of a Gaussian
distribution. Absent edges effectively have a weight of zero (Figure
\ref{fig:intro-gaussian-illus}). A formal definition is given in
Chapter \ref{chap:background-gaussian}. A key approach to such problems
is the use of approximations that introduce additional structure in
the objective which we can exploit. 

\begin{figure}
\begin{centering}
\includegraphics[scale=1.4]{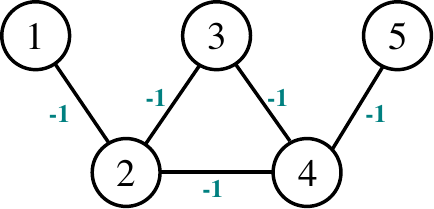}
\par\end{centering}

\smallskip{}

\begin{centering}
$C=\frac{1}{100}\left[\begin{array}{ccccc}
10 & 1 & 0.1 & 0.1 & 0.01\\
1 & 10 & 1 & 1 & 0.1\\
0.1 & 1 & 10 & 1 & 0.0\\
0.1 & 1 & 1 & 10 & 1\\
0.01 & 0.1 & 0.1 & 1 & 10
\end{array}\right]$$\qquad P=C^{-1}=\left[\begin{array}{ccccc}
10 & -1 & 0 & 0 & 0\\
-1 & 10 & -1 & -1 & 0\\
0 & -1 & 10 & -1 & 0\\
0 & -1 & -1 & 10 & -1\\
0 & 0 & 0 & -1 & 10
\end{array}\right]$
\par\end{centering}

\protect\caption{\label{fig:intro-gaussian-illus}Gaussian graphical model defined
by the precision matrix $P$, together with the non-sparse covariance
matrix $C$ it induces with rounding to 1 significant figure. Correlations
are indicated by negative edge weights in a Gaussian model.}
\end{figure}

The regularised maximum likelihood objective for fitting a Gaussian
graphical model can require time $O(n^{3})$ to evaluate\footnote{Theoretically it takes time equivalent to the big-O cost of a fast
matrix multiplication such as Strassen's algorithm ($\approx O(n^{2.8})$),
but in practice simpler $O(n^{3})$ techniques are used.}. This is prohibitively long on many problems of interest. Instead,
approximations can be introduced that decompose the objective, allowing
more efficient techniques to be used. In Chapter \ref{chap:colab}
we show how the \textit{Bethe approximation} may be applied for learning
the edge weights on restricted classes of Gaussian graphical models.
This approximation allows for the use of an efficient dual decomposition
optimisation method, and has direct practical applicability in the
domain of recommendation systems.

Besides parameter learning, the other primary task involving graphs
is directly learning the structure. Structure learning for Gaussian
graphical models is problem that has seen a lot of interest in machine
learning. The structure can be used in a machine learning pipeline
as the precursor to parameter learning, or it can be used for its
own sake as indicator of correlation structure in a dataset. The use
of approximations in structure learning is more widespread than in
parameter learning, and we give an overview of approaches in Chapter
\ref{chap:background-gaussian}. We improve on an existing technique
in Chapter \ref{chap:approx-covsel}, where we show that an existing
approximation can be \emph{further approximated}, giving a substantial
practical and theoretical speed-up by a factor of $O(\sqrt{n})$.

\section{Non-differentiability in Machine Learning}

As mentioned, machine learning problems tend to have substantial non-differentiable
structure compared to the constraint structures more commonly addressed
in numerical optimisation. These two forms of structure are in a sense
two sides of the same coin, as for convex problems the transformation
to the dual problem can often convert from one to the other. The primary
example being support vector machines, where non-differentiability
in the primal hinge loss is converted to a constraint set when the
dual is considered. 

Recent progress in optimisation has seen the use of proximal methods
as the tool of choice for handling both structures in machine learning
problems. When using a regularised loss objective of the form $F(x)=f(x)+h(x)$
as mentioned above in Section \ref{sec:problem-structure-intro},
the non-differentiability can be in the regulariser $h(x)$ or the
loss term $f(x)$. We introduce methods addressing both cases in this
work. The SAGA algorithm of Chapter \ref{chap:saga} is a new primal
method, the first primal incremental gradient method able to be used
on non-strongly convex problems with non-differentiable regularisers
directly. It makes use of the proximal operator of the regulariser.
It can also be used on problems with constraints, where the function
$h(x)$ is the indicator function of the constraint set, and proximal
operator is projection onto the constraint set. 

In this work we also introduce a new non-differentiable regulariser
for the above mentioned graph structure learning problem, which can
also be attacked using proximal methods. Its non-differentiable structure
is atypically complex compared to other regularisers used in machine
learning, requiring a special optimisation procedure to be used just
to evaluate the proximal operator.

For non-differentiable losses, we introduce the Prox-Finto algorithm
(Section \ref{sec:prox-finito}). This incremental gradient algorithm
uses the proximal operator of the single datapoint loss. It provides
a bridge between the Finito algorithm (Section \ref{sec:finito})
and the SDCA algorithm \citep{sdca}, having properties of both methods.

\section{Publications Related to This Thesis}

The majority of the content in this thesis has been published as conference
articles. For the work on incremental gradient methods, the Finito
method has been published as \citet{adefazio-icml2014}, and the SAGA
method as \citet{adefazio-nips2014}. Chapters \ref{chap:finito}
\& \ref{chap:saga} contain much more detailed theory than has been
previously published. Some of the discussion in Chapter \ref{chap:inc-discus}
appears in \citet{adefazio-icml2014} also. For the portion of this
thesis on Gaussian graphical models, Chapter \ref{chap:submodular}
largely follows the publication \citet{adefazio-nips2012}. Chapter
\ref{chap:colab} is based on the work in \citet{adefazio-icml2012},
although heavily revised.

\chapter{Incremental Gradient Methods}

\label{chap:background-incremental}

In this chapter we give an introduction to the class of incremental
gradient (IG) methods. Incremental gradient methods are simply a class
of methods that can take advantage of known summation structure in
an optimisation objective by accessing the objective one term at a
time. Objectives that are decomposable as a sum of a number of terms
come up often in applied mathematics and scientific computing, but
are particularly prevalent in machine learning applications. Research
in the last two decades on optimisation problems with a summation
structure has focused more on the stochastic approximation setting,
where the summation is assumed to be over an infinite set of terms.
The \emph{finite} sum case that incremental gradient methods cover
has seen a resurgence in recent years after the discovery that there
exist \emph{fast} incremental gradient methods whose convergence rates
are better than any possible black box method for finite sums with
particular (common) structures. We provide an extensive overview of
all known fast incremental gradient methods in the later parts of
this chapter. Building on the described methods, in Chapters \ref{chap:finito}
\& \ref{chap:saga} we introduce three novel fast incremental gradient
methods. Depending on the problem structure, each of these methods
can have state-of-the-art performance.

\section{Problem Setup}

We are interested in minimising functions of the form

\[
f(x)=\frac{1}{n}\sum_{i=1}^{n}f_{i}(x),
\]

where $x\in\mathbb{R}^{d}$ and each $f_{i}$ is convex and Lipschitz
smooth with constant $L$. We will also consider the case where each
$f_{i}$ is additionally strongly convex with constant $\mu$. See
Appendix \ref{sec:appendix-defs} for defintions of Lipschitz smoothness
and strong convexity. Incremental gradient methods are algorithms
that at each step evaluate the gradient and function value of only
a single $f_{i}$. 

We will measure convergence rates in terms of the number of $(f_{i}(x),f_{i}^{\prime}(x))$
evaluations, normally these are much cheaper computationally than
evaluations of the whole function gradient $f^{\prime}$, such as
performed by the \emph{gradient descent} algorithm. We use the notation
$x^{*}$ to denote a \foreignlanguage{british}{minimiser} of $f$.
For strongly convex problems this is the unique \foreignlanguage{british}{minimiser}.

This setup differs from the traditional black box smooth convex optimisation
problem only in that we are assuming that our function is decomposable
into a \emph{finite sum} structure. This finite sum structure is widespread
in machine learning applications. For example, the standard framework
of Empirical Risk Minimisation (ERM) takes this form, where for a
loss function $L\,:\,\mathbb{R}^{d}\times\mathbb{R}\rightarrow\mathbb{R}$
and data label tuples $(x_{i},y_{i})$, we have:
\[
R_{\text{emp}}(h)=\frac{1}{n}\sum_{i}^{n}L(h(x_{i}),y_{i}),
\]

where $h$ is the hypothesis function that we intend to \foreignlanguage{british}{optimise}
over. The most common case of ERM is minimisation of the negative
log-likelihood, for instance the classical logistic regression problem
(See standard texts such as \citealt{bishop}). Often ERM is an approximation
to an underlying stochastic programming problem, where the summation
is replaced with an expectation over a population of data tuples from
which we can sample from.

\subsection{Exploiting problem structure}

\label{sub:assumptions}

Given the very general nature of the finite sum structure, we can
not expect to get faster convergence than we would by accessing the
whole gradient without additional assumptions. For example, suppose
the summation only has one term, or alternatively each $f_{i}$ is
the zero function except one of the $n$. 

Notice that the Lipschitz smoothness and strong convexity assumptions
we made are on each $f_{i}$ rather than on $f$. This is a key point.
If the directions of maximum curvature of each term are aligned and
of similar magnitude, then we can expect the term Lipschitz smoothness
to be similar to the smoothness of the whole function. However, it
is easy to construct problems for which this is not the case, in fact
the Lipschitz smoothness of $f$ may be $n$ times smaller than that
of each $f_{i}$. In that case the incremental gradient methods will
give no improvement over black box optimisation methods.

For machine learning problems, and particularly the empirical risk
minimisation problem, this worst case behavior is not common. The
curvature and hence the Lipschitz constants are defined largely by
the loss function, which is shared between the terms, rather than
the data point. Common data preprocessing methods such as data whitening
can improve this even further.

The requirement that the magnitude of the Lipschitz constants be approximately
balanced can be relaxed in some cases. It is possible to formulate
IG methods where the convergence is stated in terms of the average
of the Lipschitz constants of the $f_{i}$ instead of the maximum.
This is the case for the Prox-Finito algorithm described in Section
\ref{sec:prox-finito}. All known methods that make use of the average
Lipschitz constant require knowledge of the ratios of the Lipschitz
constants of the $f_{i}$ terms, which limits their practicality unfortunately.

Regardless of the condition number of the problem, if we have a summation
with enough terms optimisation becomes easy. This in made precise
in the definition that follows.
\begin{defn}
\label{enu:big-data-condition}\textbf{The big data condition:} For
some known constant $\beta\geq1$, 
\[
n\geq\beta\frac{L}{\mu}.
\]
This condition obviously requires strong convexity and Lipschitz smoothness
so that $L/\mu$ is well defined. It is a very strong assumption for
small $n$, as the condition number $L/\mu$ in typical machine learning
problems is at least in the thousands. For applications of this assumption,
$\beta$ is typically between $1$ and $8$. Several of the methods
we describe below have a fixed and very fast convergence rate independent
of the condition number when this big-data condition holds.
\end{defn}

\subsection{Randomness and expected convergence rates}

This thesis works extensively with optimisation methods that make
random decisions during the course of the algorithm. Unlike the stochastic
approximation setting, we are dealing with deterministic, known optimisation
problems; the stochasticity is introduced by our optimisation methods,
it is not inherent in the problem. We introduce randomness because
it allows us to get convergence rates faster than that of any currently
known deterministic methods. The caveat is that these convergence
rates are in \emph{expectation}, so they don't always hold precisely.
This is not as bad as it first seems though. Determining that the
expectation of a general random variable converges is normally quite
a weak result, as its value may vary around the expectation substantially
in practice, potentially by far more than it converges by. The reason
why this is not an issue for the optimisation methods we consider
is that all the random variables we bound are \emph{non-negative}.
A non-negative random variable $X$ with a very small expectation,
say:
\[
E[X]=1\times10^{-5},
\]
is with high probability close to its expectation. This is a fundamental
result implied by \foreignlanguage{british}{Markov's} inequality.
For example, suppose $E[X]=1\times10^{-5}$ and we want to bound the
probability that $X$ is greater than $1\times10^{-3}$, i.e. a factor
of 100 worse than its expectation. Then Markov's inequality tells
us that:
\[
P(X\geq1\times10^{-3})\leq\frac{1}{100}.
\]

So there is only a 1\% chance of $X$ being larger than 100 times
its expected value here. We will largely focus on methods with linear
convergence in the following chapters, so in order to increase the
probability of the value $X$ holding by a factor $\rho$, only a
logarithmic number of additional iterations in $\rho$ is required
($O(\log\rho)$).

We would also like to note that Markov's inequality can be quite conservative.
Our experiments in later chapters show little in the way of random
noise attributable to the optimisation procedure, particularly when
the amount of data is large.

\subsection{Data access order}

\label{subsec:access-orders}

The source of randomness in all the methods considered in this chapter
is the order of accessing the $f_{i}$ terms. By \emph{access} we
mean the evaluation of $f_{i}(x)$ and $f_{i}^{\prime}(x)$ at an
$x$ of our choice. This is more formally known as an oracle evaluation
(see Section \ref{sec:oracles}), and typically constitutes the most
computationally expensive part of the main loop of each algorithm
we consider. The access order is defined on a per\emph{-epoch} basis,
where an epoch is $n$ evaluations. Only three different access orders
are considered in this work:
\begin{description}
\item [{Cyclic}] Each step with $j=1+(k\mod n)$. Effectively we access
$f_{i}$ in the order they appear, then loop to the beginning and
the end of every epoch.
\item [{Permuted}] Each epoch with $j$ is sampled without replacement
from the set of indices not accessed yet in that epoch. This is equivalent
to permuting the $f_{i}$ at the beginning of each epoch, then using
the cyclic order within the epoch.
\item [{Randomised}] The value of $j$ is sampled uniformly at random with
replacement from $1,\dots,n$. 
\end{description}
The ``permuted'' terminology is our nomenclature, whereas the \foreignlanguage{british}{other}
two terms are standard.

\section{Early Incremental Gradient Methods}

The classical incremental gradient (IG) method is simply a step of
the form:
\[
x^{k+1}=x^{k}-\gamma_{k}f_{j}^{\prime}(x^{k}),
\]

where at step $k$ we use cyclic access, taking $j=1+(k\mod n)$.
This is similar to the more well known stochastic gradient descent,
but with a cyclic order of access of the data instead of a random
order. We have introduced here a superscript notation $x^{k}$ for
the variable $x$ at step $k$. We use this notation throughout this
work. 

It turns out to be much easier to analyse such methods under a random
access ordering. For the random order IG method (i.e. SGD) on smooth
strongly convex problems, the following rate holds for an appropriately
chosen step sizes:
\[
\mathbb{E}\left[f(x^{k})-f(x^{*})\right]\leq\frac{L}{2k}\left\Vert x^{0}-x^{*}\right\Vert ^{2}.
\]

The step size scheme required is of the form $\gamma_{k}=\frac{\theta}{k}$,
where $\theta$ is a constant that depends on the gradient norm bound
$R$ as well as the degree of strong convexity $\mu$. It may be required
to be quite small in some cases. This is what's known as a \emph{sublinear}
rate of convergence, as the dependence on $k$ is of the form $O(\frac{L}{2k})$,
which is slower than the linear rate $O((1-\alpha)^{k})$ for any
$\alpha\in(0,1)$ asymptotically. 

Incremental gradient methods for strongly convex smooth problems were
of less practical utility in machine learning up until the development
of fast variants (discussed below), as the sublinear rates for the
previously known methods did not compare favourably to the (super-)linear
rate of quasi-Newton methods. For non-strongly convex problems, or
strongly convex but non-smooth problems, the story is quite different.
In those cases, the theoretical and practical rates are hard to beat
with full (sub-)gradient methods. The non-convex case is of particular
interest in machine learning. SGD has been the de facto standard optimisation
method for neural networks for example since the 1980s \citep{backprop}.

Such incremental gradient methods have a long history, having been
applied to specific problems as far back as the 1960s \citep{ig-old-radio}.
An up-to-date survey can be found in \citet{ig-bert-survey}.

\section{Stochastic Dual Coordinate Descent (SDCA)}

The stochastic dual coordinate descent method \citep{sdca} is based
on the principle that for problems with explicit quadratic regularisers,
the dual takes a particularly easy to work with form. Recall the finite
sum structure $f(x)=\frac{1}{n}\sum_{i=1}^{n}f_{i}(x)$ defined earlier.
Instead of assuming that each $f_{i}$ is strongly convex, we instead
need to consider the regularised objective:
\[
f(x)=\frac{1}{n}\sum_{i=1}^{n}f_{i}(x)+\frac{\mu}{2}\left\Vert x\right\Vert ^{2}.
\]
 For any strongly convex $f_{i}$, we may transform our function to
this form by replacing each $f_{i}$ with $f_{i}-\frac{\mu}{2}\left\Vert x\right\Vert ^{2}$,
then including a separate regulariser. This changes the Lipschitz
smoothness constant for each $f_{i}$ to $L-\mu$, and preserves convexity.
We are now ready to consider the dual transformation. We apply the
technique of \textbf{dual decomposition}, where we decouple the terms
in our objective as follows:
\[
\min_{x,x_{1},\dots x_{i},\dots,x_{n}}f(x)=\frac{1}{n}\sum_{i=1}^{n}f_{i}(x_{i})+\frac{\mu}{2}\left\Vert x\right\Vert ^{2},
\]
\[
s.t.\;x=x_{i}\;i=1\dots n.
\]

This reformulation initially achieves nothing, but the key idea is
that we now have a constrained optimisation problem, and so we may
apply Lagrangian duality (Section \ref{sec:duality}). The Lagrangian
function is:
\begin{eqnarray}
L(x,x_{1},\dots\alpha_{1},\dots) & = & \frac{1}{n}\sum_{i=1}^{n}f_{i}(x_{i})+\frac{\mu}{2}\left\Vert x\right\Vert ^{2}+\frac{1}{n}\sum_{i}^{n}\left\langle \alpha_{i},x-x_{i}\right\rangle \nonumber \\
 & = & \frac{1}{n}\sum_{i=1}^{n}\left(f_{i}(x_{i})-\left\langle \alpha_{i},x_{i}\right\rangle \right)+\frac{\mu}{2}\left\Vert x\right\Vert ^{2}+\left\langle \frac{1}{n}\sum_{i}^{n}\alpha_{i},x\right\rangle ,\label{eq:sdca-lagrangian}
\end{eqnarray}

where $\alpha_{i}\in\mathbb{R}^{d}$ are the introduced dual variables.
The Lagrangian dual function is formed by taking the minimum of $L$
with respect to each $x_{i}$, leaving $\alpha$, the set of $\alpha_{i}$
$i=1\dots n$ free:

\begin{equation}
D(\alpha)=\frac{1}{n}\sum_{i=1}^{n}\min_{x_{i}}\left\{ f_{i}(x_{i})-\left\langle \alpha_{i},x_{i}\right\rangle \right\} +\min_{x}\left\{ \frac{\mu}{2}\left\Vert x\right\Vert ^{2}+\left\langle \frac{1}{n}\sum_{i}\alpha_{i},x\right\rangle \right\} ,\label{eq:sdca-partial-dual}
\end{equation}
Now recall that the definition of the convex conjugate (Section \ref{sec:convex-conj})
says that: 
\[
\min\left\{ f(x)-\left\langle a,x\right\rangle \right\} =-\text{sup}_{x}\left\{ \left\langle a,x\right\rangle -f(x)\right\} =-f^{*}(\alpha).
\]
Clearly we can plug this in for each $f_{i}$ to get:
\[
D(\alpha)=-\frac{1}{n}\sum_{i=1}^{n}f_{i}^{*}(\alpha_{i})+\min_{x}\left[\frac{\mu}{2}\left\Vert x\right\Vert ^{2}+\left\langle \frac{1}{n}\sum_{i}\alpha_{i},x\right\rangle \right].
\]

We still need to simplify the remaining $\min$ term, which is also
in the form of a convex conjugate. We know that squared norms are
self-conjugate, and scaling a function by a positive constant $\beta$
transforms its conjugate from $f^{*}(a)$ to $\beta f^{*}(a/\beta)$,
so we in fact have:

\[
D(\alpha)=-\frac{1}{n}\sum_{i=1}^{n}f_{i}^{*}(\alpha_{i})-\frac{\mu}{2}\left\Vert \frac{1}{\mu n}\sum_{i}\alpha_{i}\right\Vert ^{2}.
\]

This is the objective directly maximised by SDCA. As the name implies,
SDCA is randomised (block) coordinate ascent on this objective, where
only one $\alpha_{i}$ is changed each step. 

In coordinate descent we have the option of performing a gradient
step in a coordinate direction, or an exact minimisation. For the
exact coordinate minimisation, the update is easy to derive:
\begin{eqnarray}
\alpha_{j}^{k+1} & = & \arg\min_{\alpha_{j}}\left[\frac{1}{n}\sum_{i=1}^{n}f^{*}(\alpha_{i})+\frac{\mu}{2}\left\Vert \frac{1}{\mu n}\sum_{i}^{n}\alpha_{i}\right\Vert ^{2}\right]\nonumber \\
 & = & \arg\min_{\alpha_{j}}\left[f_{j}^{*}(\alpha_{j})+\frac{\mu n}{2}\left\Vert \frac{1}{\mu n}\sum_{i}^{n}\alpha_{i}\right\Vert ^{2}\right].\label{eq:coord-exact}
\end{eqnarray}

The primal point $x^{k}$ corresponding to the dual variables $\alpha_{i}^{k}$
at step $k$ is the minimiser of the conjugate problem $x^{k}=\arg\min_{x}\left[\frac{\mu}{2}\left\Vert x\right\Vert ^{2}+\left\langle \frac{1}{n}\sum_{i}^{n}\alpha_{i}^{k},x\right\rangle \right]$,
which in closed form is simply $x^{k}=-\frac{1}{\mu n}\sum_{i}^{n}\alpha_{i}^{k}$.
This can be used to further simplify Equation \ref{eq:coord-exact}.
The full method is Algorithm \ref{alg:sdca-exact}.

\begin{algorithm}
Initialise $x^{0}$ and $\alpha_{i}^{0}$ as the zero vector, for
all $i$.

Step $k+1$:
\begin{enumerate}
\item Pick an index $j$ uniformly at random.
\item Update $\alpha_{j}$, leaving the other $\alpha_{i}$ unchanged:
\[
\alpha_{j}^{k+1}=\arg\min_{y}\left[f_{j}^{*}(y)+\frac{\mu n}{2}\left\Vert x^{k}-\frac{1}{\mu n}\left(y-\alpha_{j}^{k}\right)\right\Vert ^{2}\right].
\]

\item Update $x^{k+1}=x^{k}-\frac{1}{\mu n}\left(\alpha_{j}^{k+1}-\alpha_{j}^{k}\right).$
\end{enumerate}
At completion, for smooth $f_{i}$ return $x^{k}$. For non-smooth,
return a tail average of the $x^{k}$ sequence.

\protect\caption{\label{alg:sdca-exact}SDCA (exact coordinate descent)}
\end{algorithm}

The SDCA method has a geometric convergence rate in the dual objective
$D$ of the form:
\[
\mathbb{E}\left[D(\alpha^{k})-D(\alpha^{*})\right]\leq\left(1-\frac{\mu}{L+\mu n}\right)^{k}\left[D(\alpha^{0})-D(\alpha^{*})\right].
\]

This is easily extended to a statement about the duality gap $f(x^{k})-D(\alpha^{k})$
and hence the suboptimality $f(x^{k})-f(x^{*})$ by using the relation:
\[
f(x^{k})-D(\alpha^{k})\leq\frac{L+\mu n}{\mu}\left(D(\alpha^{k})-D(\alpha^{*})\right).
\]

\subsection{Alternative steps}

The full coordinate minimisation step discussed in the previous section
is not always practical. If we are treating each element $f_{i}$
in the summation $\frac{1}{n}\sum_{i}^{n}f_{i}(x)$ as a single data
point loss, then even for the simple binary logistic loss there is
not a closed form solution for the exact coordinate step. We can use
a black-box 1D optimisation method to find the coordinate minimiser,
but this will generally require 20-30 exponential function evaluations,
together with one vector dot product. 

For multiclass logistic loss, the subproblem solve is not fast enough
to yield a usable algorithm. In the case of non-differentiable losses,
the situation is better. Most non-differentiable functions we use
in machine learning, such as the hinge loss, yield closed form solutions.

For performance reasons we often want to treat each $f_{i}$ as a
minibatch loss, in which case we virtually never have a closed form
solution for the subproblem, even in the non-differentiable case.

\citet{sdca-accel} describe a number of other possible steps which
lead to the same theoretical convergence rate as the exact minimisation
step, but which are more usable in practice:
\begin{description}
\item [{Interval Line search:}] It turns out that it is sufficient to
perform the minimisation in Equation \ref{eq:coord-exact} along the
interval between the current dual variable $\alpha_{j}^{k}$ and the
point $u=f_{j}^{\prime}(x^{k})$. The update takes the form:
\[
s=\arg\min_{s\in[0,1]}\left[f_{j}^{*}\left(\alpha_{j}^{k}+s(u-\alpha_{j}^{k})\right)+\frac{\mu n}{2}\left\Vert x^{k}+\frac{s}{\mu n}\left(u-\alpha_{j}^{k}\right)\right\Vert ^{2}\right],
\]
\[
\alpha_{j}^{k+1}=\alpha_{j}^{k}+s(u-\alpha_{j}^{k}).
\]

\item [{Constant step:}] If the value of the Lipschitz smoothness constant
$L$ is known, we can calculate a conservative value for the parameter
$s$ instead of optimising over it with an interval line search. This
gives an update of the form:
\[
\alpha_{j}^{k+1}=\alpha_{j}^{k}+s(u-\alpha_{j}^{k})
\]
\[
\text{where }\;s=\frac{\mu n}{\mu n+L}.
\]
This method is much slower in practice than performing a line-search,
just as a $\frac{1}{L}$ step size with gradient descent is much slower
than performing a line search.
\end{description}

\subsection{Reducing storage requirements}

We have presented the SDCA algorithm in full generality above. This
results in dual variables of dimension $d$, for which the total storage
$d\times n$ can be prohibitive. In practice, the dual variables often
lie on a low-dimensional subspace. This is the case with linear classifiers
and regressors, where a $r$ class problem has gradients on a $r-1$
dimensional subspace. 

A linear classifier takes the form $f_{i}(x)=\phi\left(X_{i}^{T}x\right)$,
for a fixed loss $\phi:\mathbb{R}^{r}\rightarrow\mathbb{R}$ and data
instance matrix $X_{i}:d\times r$. In the simplest case $X_{i}$
is just the data point duplicated as $r$ rows. Then the dual variables
are $r$ dimensional, and the $x^{k}$ updates change to:
\[
x^{k}=-\frac{1}{\mu n}\sum_{i}^{n}X_{i}\alpha_{i}.
\]

\[
\alpha_{j}^{k+1}=\arg\min_{\alpha}\left[\phi_{j}^{*}(\alpha)+\frac{\mu n}{2}\left\Vert x^{k}+\frac{1}{\mu n}X_{i}\left(\alpha-\alpha_{j}^{k}\right)\right\Vert ^{2}\right].
\]

This is the form of SDCA presented by \citet{sdca-accel}, although
with the negation of our dual variables.

\subsection{Accelerated SDCA}

\label{sub:asdca}

The SDCA method is also currently the only fast incremental gradient
method to have a known accelerated variant. By acceleration, we refer
to the modification of an optimisation method to improve the convergence
rate by an amount greater than any constant factor. This terminology
is common in optimisation although a precise definition is not normally
given.

The \textbf{A}SDCA method \citep{sdca-accel} works by utilising the
regular SDCA method as a sub-procedure. It has an outer loop, which
at each step invokes SDCA on a modified problem $x^{k+1}=\min_{x}f(x)+\frac{\lambda}{2}\left\Vert x-y\right\Vert ^{2},$
where $y$ is chosen as an over-relaxed step of the form:
\[
y=x^{k}+\beta(x^{k}-x^{k-1}),
\]

for some known constant $\beta$. The constant $\lambda$ is likewise
computed from the Lipschitz smoothness and strong convexity constants.
These regularised sub-problems $f(x)+\frac{\lambda}{2}\left\Vert x-y\right\Vert ^{2}$
have a greater degree of strong convexity than $f(x)$, and so individually
are much faster to solve. By a careful choice of the accuracy at which
they are computed to, the total number of steps made between all the
subproblem solves is much smaller than would be required if regular
SDCA is applied directly to $f(x)$ to reach the same accuracy.

In particular, they state that to reach an accuracy of $\epsilon$
in expectation for the function value, they need $k$ iterations,
where:
\[
k=\tilde{O}\left(dn+\min\left\{ \frac{dL}{\mu},d\sqrt{\frac{nL}{\mu}}\right\} \right)\log(1/\epsilon).
\]

The $\tilde{O}$ notation hides constant factors. This rate is not
of the same precise form as the other convergence rates we will discuss
in this chapter. We can make some general statements though. When
$n$ is in the range of the big-data condition, this rate is no better
than regular SDCA's rate, and probably worse in practice due to overheads
hidden by the $\tilde{O}$ notation. When $n$ is much smaller than
$\frac{L}{\mu}$, then potentially it can be much faster than regular
SDCA.

Unfortunately, the ASDCA procedure has significant computational overheads
that make it not necessarily the best choice in practice. Probably
the biggest issue however is a sensitivity to the Lipschitz smoothness
and strong convexity constants. It assumes these are known, and if
the used values differ from the true values, it may be significantly
slower than regular SDCA. In contrast, regular SDCA requires no knowledge
of the Lipschitz smoothness constants (for the prox variant at least),
just the strong convexity (regularisation) constant.

\section{Stochastic Average Gradient (SAG)}

\label{sec:sag-background}

The SAG algorithm \citep{sag} is the closest in form to the classical
SGD algorithm among the fast incremental gradient methods. Instead
of storing dual variables $\alpha_{i}$ like SDCA above, we store
a table of past gradients $y_{i}$, which has the same storage cost
in general, $n\times d$. The SAG method is given in Algorithm \ref{alg:sag}.
The key equation for SAG is the step:
\[
x^{k+1}=x^{k}-\frac{1}{\gamma n}\sum_{i}^{n}y_{i}^{k}.
\]

Essentially we move in the direction of the average of the past gradients.
Note that this average contains one past gradient for each term, and
they are equally weighted. This can be contrasted to the SGD method
with momentum, which uses a geometrically decaying weighted sum of
all past gradient evaluations. SGD with momentum however is not a
linearly convergent method. It is surprising that using equal weights
like this actually yields a much faster converging algorithm, even
though some of the gradients in the summation can be extremely out
of date.

SAG is an evolution of the earlier incremental averaged gradient method
(IAG, \citealp{sag-blatt}) which has the same update with a different
constant factor, and with cyclic access used instead of randomised.
IAG has a more limited convergence theory covering quadratic or bounded
gradient problems, and a much slower rate of convergence.

\begin{algorithm}
Initialise $x^{0}$ as the zero vector, and $y_{i}=f_{i}^{\prime}(x^{0})$
for each $i$.

Step $k+1$:
\begin{enumerate}
\item Pick an index $j$ uniformly at random.
\item Update $x$ using step length constant $\gamma$:
\[
x^{k+1}=x^{k}-\frac{1}{\gamma n}\sum_{i}^{n}y_{i}^{k}.
\]

\item Set $y_{j}^{k+1}=f_{j}^{\prime}(x^{k+1})$. Leave $y_{i}^{k+1}=y_{i}^{k}$
for $i\neq j$.
\end{enumerate}
\protect\caption{\label{alg:sag}SAG}
\end{algorithm}

The convergence rate of SAG for strongly convex problems is of the
same order as SDCA, although the constants are not quite as good.
In particular, we have an expected convergence rate in terms of function
value suboptimality of:
\[
\mathbb{E}[f(x^{k})-f(x^{*})]\leq\left(1-\min\left\{ \frac{1}{8n},\frac{\mu}{16L}\right\} \right)^{k}L_{0},
\]

Where $L_{0}$ is a complex expression involving $f(x^{0}+\frac{1}{\gamma n}\sum_{i}^{n}y_{i}^{0})$
and a quadratic form of $x^{0}$ and each $y_{i}^{0}$. This theoretical
convergence rate is between 8 and 16 times worse than SDCA. In practice
SAG is often faster than SDCA though, suggesting that the SAG theory
is not tight. A nice feature of SAG is that unlike SDCA, it can be
directly applied to non-strongly convex problems. Differentiability
is still required though. The convergence rate is then in terms of
the average iterate $\bar{x}^{k}=\frac{1}{k}\sum_{l}^{k}x^{l}$: 
\[
\mathbb{E}[f(\bar{x}^{k})-f(x^{*})]\leq\frac{32n}{k}L_{0}.
\]

The SAG algorithm has great practical performance, but it is surprisingly
difficult to analyse theoretically. The above rates are likely conservative
by a factor of between $4$ and $8$. Due to the difficulty of analysis,
the proximal version for composite losses has not yet had its theoretical
convergence established.

\section{Stochastic Variance Reduced Gradient (SVRG)}

\label{sec:svrg-background}

The SVRG method \citep{svrg} is a recently developed fast incremental
gradient method. It was developed to address the potentially high
storage costs of SDCA and SAG, by trading off storage against computation.
The SVRG method is given in Algorithm \ref{alg:svrg}. Unlike the
other methods discussed, there is a tunable parameter $m$, which
specifies the number of iterations to complete before the current
gradient approximation is ``recalibrated'' by computing a full gradient
$f^{\prime}(\tilde{x})$ at the last iterate before the recalibration,
$\tilde{x}\coloneqq x^{k}$. Essentially, instead of maintaining a
table of past gradients $y_{i}$ for each $i$ like SAG does, the
algorithm just stores the location $\tilde{x}$ at which those gradients
should be evaluated, then re-evaluates them when needed by just computing
$f_{j}^{\prime}(\tilde{x}).$

Like the SAG algorithm, at each step we need to know the updated term
gradient $f_{j}^{\prime}(x^{k})$, the old term gradient $f_{j}^{\prime}(\tilde{x})$
and the average of the old gradients $f^{\prime}(\tilde{x})$. Since
we are not storing the old term gradient, just its average, we need
to calculate two term gradients instead of the one term gradient calculated
by SAG at each step.

\begin{algorithm}
Initialise $x^{0}$ as the zero vector, $g^{k}=\frac{1}{n}\sum_{i}f_{i}^{\prime}(x^{0})$
and $\tilde{x}^{0}=x^{0}$.

Step $k+1$:
\begin{enumerate}
\item Pick $j$ uniformly at random.
\item Update $x$:
\[
x^{k+1}=x^{k}-\frac{1}{\eta}f_{j}^{\prime}(x^{k})+\frac{1}{\eta}\left[f_{j}^{\prime}(\tilde{x}^{k})-g^{k}\right].
\]

\item \label{enu:xtilde}Every $m$ iterations, set $\tilde{x}$ and recalculate
the full gradient at that point: 
\[
\tilde{x}^{k+1}=x^{k+1}.
\]
\[
g^{k}=\frac{1}{n}\sum_{i}f_{i}^{\prime}(\tilde{x}^{k+1}).
\]
Otherwise leave $\tilde{x}^{k+1}=\tilde{x}^{k}$ and $g^{k+1}=g^{k}$.
\end{enumerate}
At completion return $\tilde{x}$.

\protect\caption{\label{alg:svrg}SVRG}
\end{algorithm}

The S2GD method \citep{semi} was concurrently developed with SVRG.
It has the same update as SVRG, just differing in that the theoretical
choice of $\tilde{x}$ discussed in the next paragraph. We use SVRG
henceforth to refer to both methods.

The update $\tilde{x}^{k+1}=x^{k+1}$ in step \ref{enu:xtilde} above
is technically not supported by the theory. Instead, one of the following
two updates are used:
\begin{enumerate}
\item $\tilde{x}$ is the average of the $x$ values from the last $m$
iterations. This is the variant suggested by \citet{svrg}.
\item $\tilde{x}$ is a randomly sampled $x$ from the last $m$ iterations.
This is used in the S2GD variant \citep{semi}.
\end{enumerate}
These alternative updates are required theoretically as the convergence
between recalibrations is expressed in terms of the average of function
values of the last $m$ points, 
\[
\frac{1}{m}\sum_{r=k-m}^{k}\left[f(x^{r})-f(x^{*})\right],
\]
 instead of in terms of $f(x^{k})-f(x^{*})$ directly. Variant 1 avoids
this issue by using Jensen's inequality to pull the summation inside:
\[
\frac{1}{m}\sum_{r=k-m}^{k}\left[f(x^{r})-f(x^{*})\right]\geq f(\frac{1}{m}\sum_{r=k-m}^{k}x^{r})-f(x^{*}).
\]

Variant 2 uses a sampled $x$, which in expectation will also have
the required value. In practice, there is a very high probability
that $f(x^{k})-f(x^{*})$ is less than the last-$m$ average, so just
taking $\tilde{x}=x^{k}$ works.

The SVRG method has the following convergence rate if $k$ is a multiple
of $m$:
\[
\mathbb{E}[f(\tilde{x}^{k})-f(x^{*})]\leq\rho^{k/m}\left[f(\tilde{x}^{0})-f(x^{*})\right],
\]
\[
\text{where }\rho=\frac{\eta}{\mu(1-4L/\eta)m}+\frac{4L(m+1)}{\eta(1-4L/\eta)m}.
\]

Note also that each step requires two term gradients, so the rate
must be halved when comparing against the other methods described
in this chapter. There is also the cost of the recalibration pass,
which (depending on $m$) can further increase the run time to three
times that of SAG per step. This convergence rate has quite a different
form from that of the other methods considered in this section, making
direct comparison difficult. However, for most parameter values this
theoretical rate is worse than that of the other fast incremental
gradient methods. In Section \ref{sec:svrg-understanding} we give
an analysis of SVRG that requires additional assumptions, but gives
a rate that is directly comparable to the other fast incremental gradient
methods.

\chapter{New Dual Incremental Gradient Methods}

\label{chap:finito}

In this chapter we introduce a novel fast incremental gradient method
for strongly convex problems that we call \emph{Finito}. Like SDCA,
SVRG and SAG, Finito is a stochastic method that is able to achieve
linear convergence rates for strongly convex problems. Although the
Finito algorithm only uses primal quantities directly, the proof of
its convergence rate uses lower bounds extensively, so it can be considered
a dual method, like SDCA. Similar to SDCA, its theory does not support
its use on non-strongly convex problems, although there are no practical
issues with its application.

In Section \ref{sec:finito-theory} we prove the convergence rate
of the Finito method under the big-data condition described in the
previous chapter. This theoretical rate is better than the SAG and
SVRG rates but not quite as good as the SDCA rate. In Section \ref{sec:finito-experiments}
we compare Finito empirically against SAG and SDCA, showing that it
converges faster. This difference is most pronounced when using a
permuted access order, which unfortunately is not covered by current
convergence theory.

The relationship between Finito and SDCA allows a kind of midpoint
algorithm to be constructed, which has favourable properties of both
methods. We call this midpoint Prox-Finito. It is described in Section
\ref{sec:prox-finito}.

An earlier version of the work in this chapter has been published
as \citet{adefazio-icml2014}.

\section{The Finito Algorithm}

\label{sec:finito}

As discussed in Chapter \ref{chap:background-incremental}, we are
interested in convex functions of the form
\[
f(w)=\frac{1}{n}\sum_{i=1}^{n}f_{i}(w).
\]

We assume that each $f_{i}$ is Lipschitz smooth with constant $L$
and is strongly convex with constant $\mu$. We will focus on the
\emph{big data} setting:
\[
n\geq\beta\frac{L}{\mu}
\]
 with $\beta=2$, as described in Section \ref{sub:assumptions}.

\subsection{Additional notation}

We omit the $n$ superscript on summations throughout, and subscript
$i$ with the implication that indexing starts at $1$. When we use
separate arguments for each $f_{i}$, we denote them $\phi_{i}$.
Let $\bar{\phi}^{k}$ denote the average $\bar{\phi}^{k}=\frac{1}{n}\sum_{i=1}^{n}\phi_{i}^{k}$.
Our step length constant, which depends on $\ensuremath{\beta}$ (the
big-data constant), is denoted $\alpha$. Note that $\alpha$ is an
inverse step length, in the sense that large $\alpha$ results in
smaller steps. We use angle bracket notation for dot products $\langle\cdot,\cdot\rangle$.

\subsection{Method}

We start with a table of known $\phi_{i}^{0}$ values, and a table
of known gradients $f_{i}^{\prime}(\phi_{i}^{0})$, for each $i$.
We will update these two tables during the course of the algorithm.
The Finito method is described in Algorithm \ref{alg:finito-algorithm}.

\begin{algorithm}[H]
Initialise $\phi_{i}^{0}=\phi^{0}$ for some $\phi^{0}$ and all $i$,
and calculate and store each $f_{i}^{\prime}(\phi_{i}^{0})$.

The step for iteration $k$, is:
\begin{enumerate}
\item Update $w$ using the step:
\begin{equation}
w^{k}=\bar{\phi}^{k}-\frac{1}{\alpha\mu n}\sum_{i}f_{i}^{\prime}(\phi_{i}^{k}).\label{eq:main-finito-update}
\end{equation}

\item Pick an index $j$ uniformly at random, or using without-replacement
sampling as discussed in Section \ref{sec:randomness}.
\item Set $\phi_{j}^{k+1}=w^{k}$ in the table and leave the other variables
the same ($\phi_{i}^{k+1}=\phi_{i}^{k}$ for $i\neq j$).
\item Calculate and store $f_{j}^{\prime}(\phi_{j}^{k+1})$ in the table.
\end{enumerate}
\protect\caption{\label{alg:finito-algorithm}Finito Algorithm}

\end{algorithm}

We have established the theoretical convergence rate of Finito under
the big-data condition:
\begin{thm}
When the big-data condition holds with $\beta=2$, $\alpha=2$ may
be used. In that setting, the expected convergence rate is: 
\[
\mathbb{E}\left[f(\bar{\phi}^{k})-f(w^{*})\right]\leq\left(1-\frac{1}{2n}\right)^{k}\frac{3}{4\mu}\left\Vert f^{\prime}(\phi^{0})\right\Vert ^{2}.
\]

\end{thm}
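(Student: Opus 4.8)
The plan is to prove the bound by exhibiting a nonnegative Lyapunov (potential) function $T^{k}$ that (i) dominates the quantity of interest $f(\bar{\phi}^{k})-f(w^{*})$, (ii) contracts in expectation by a factor $1-\tfrac{1}{2n}$ at every step, and (iii) starts below $\tfrac{3}{4\mu}\ns{f'(\phi^{0})}$. A natural candidate, motivated by the dual reading of the method, combines a Bregman-type ``model gap'' term with a weighted distance term,
\[
T^{k}=\frac{1}{n}\sum_{i}\left[f_{i}(\phi_{i}^{k})-f_{i}(w^{*})-\ip{f_{i}'(w^{*})}{\phi_{i}^{k}-w^{*}}\right]+\frac{c\mu}{2}\cdot\frac{1}{n}\sum_{i}\ns{\phi_{i}^{k}-w^{*}},
\]
for a constant $c$ to be fixed. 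The first sum is the aggregate of per-term Bregman divergences $D_{f_{i}}(\phi_{i}^{k},w^{*})\ge 0$, and it is the primal counterpart of the gap between the strongly convex lower-bound model $\mathcal{L}^{k}(w)=\frac{1}{n}\sum_{i}[f_{i}(\phi_{i}^{k})+\ip{f_{i}'(\phi_{i}^{k})}{w-\phi_{i}^{k}}+\tfrac{\mu}{2}\ns{w-\phi_{i}^{k}}]\le f(w)$ and the true optimum. This model supplies the ``dual'' character of the argument: its minimiser is exactly $\bar{\phi}^{k}-\frac{1}{\mu n}\sum_{i}f_{i}'(\phi_{i}^{k})$, so the Finito update is a damped step (since $\alpha=2>1$) toward the model minimiser.

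The core computation is the expected one-step change. Since only the $j$-th table entry is overwritten, with $\phi_{j}^{k+1}=w^{k}$, conditioning on the history $\mathcal{F}^{k}$ and averaging over the uniformly random index $j$ collapses the change in each sum to its $\frac{1}{n}$-weighted mean. Using $f'(w^{*})=0$, the Bregman part contributes $\frac{1}{n}[f(w^{k})-f(w^{*})]$ minus $\frac{1}{n}$ times its own value, and the distance part contributes $\frac{c\mu}{2n}[\ns{w^{k}-w^{*}}-\frac{1}{n}\sum_{i}\ns{\phi_{i}^{k}-w^{*}}]$. Collecting terms, the desired contraction $\e[T^{k+1}\mid\mathcal{F}^{k}]\le(1-\tfrac{1}{2n})T^{k}$ reduces to the deterministic per-iteration inequality
\[
f(w^{k})-f(w^{*})+\frac{c\mu}{2}\ns{w^{k}-w^{*}}\le\frac{1}{2}\cdot\frac{1}{n}\sum_{i}D_{f_{i}}(\phi_{i}^{k},w^{*})+\frac{c\mu}{4}\cdot\frac{1}{n}\sum_{i}\ns{\phi_{i}^{k}-w^{*}}.
\]

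Establishing this inequality is where the real work lies, and I expect it to be the main obstacle. The left side is controlled from above by substituting the explicit update $w^{k}=\bar{\phi}^{k}-\frac{1}{\alpha\mu n}\sum_{i}f_{i}'(\phi_{i}^{k})$, bounding $f(w^{k})-f(w^{*})\le\frac{L}{2}\ns{w^{k}-w^{*}}$ by $L$-smoothness of $f$, and expanding $\ns{w^{k}-w^{*}}$ to expose the cross term $\ip{\sum_{i}f_{i}'(\phi_{i}^{k})}{\bar{\phi}^{k}-w^{*}}$. The right side is bounded from below using $\mu$-strong convexity, via $D_{f_{i}}(\phi_{i}^{k},w^{*})\ge\frac{\mu}{2}\ns{\phi_{i}^{k}-w^{*}}$ and the co-coercivity inequalities relating $f_{i}'(\phi_{i}^{k})-f_{i}'(w^{*})$ to $D_{f_{i}}$. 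The result is a quadratic form in the displacements $\phi_{i}^{k}-w^{*}$ and the stored gradients; choosing $\alpha=2$ and $c=\tfrac{1}{2}$ and invoking the big-data condition $n\ge 2L/\mu$ (to absorb the smoothness constant $L$ into $n\mu$) should render this form negative semidefinite, which is exactly what the inequality asserts. The delicate point is arranging the completion of the square so the cross terms cancel without wasting constants. Finally, at initialisation every $\phi_{i}^{0}=\phi^{0}$, so $T^{0}=[f(\phi^{0})-f(w^{*})]+\frac{c\mu}{2}\ns{\phi^{0}-w^{*}}$; bounding these by the standard strong-convexity consequences $f(\phi^{0})-f(w^{*})\le\frac{1}{2\mu}\ns{f'(\phi^{0})}$ and $\ns{\phi^{0}-w^{*}}\le\frac{1}{\mu^{2}}\ns{f'(\phi^{0})}$ gives, with $c=\tfrac{1}{2}$, precisely $T^{0}\le\frac{3}{4\mu}\ns{f'(\phi^{0})}$. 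Iterating the contraction yields $\e[T^{k}]\le(1-\tfrac{1}{2n})^{k}T^{0}$, and relating $T^{k}$ back to the reported gap $f(\bar{\phi}^{k})-f(w^{*})$ through convexity of $f$ (and, where the curvature mismatch between terms bites, the big-data condition once more) completes the proof.
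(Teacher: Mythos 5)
Your reduction to the deterministic per-iteration inequality
\[
f(w^{k})-f(w^{*})+\frac{c\mu}{2}\left\Vert w^{k}-w^{*}\right\Vert ^{2}\leq\frac{1}{2}\cdot\frac{1}{n}\sum_{i}D_{f_{i}}(\phi_{i}^{k},w^{*})+\frac{c\mu}{4}\cdot\frac{1}{n}\sum_{i}\left\Vert \phi_{i}^{k}-w^{*}\right\Vert ^{2}
\]
is where the argument fails, and it fails irreparably rather than being merely ``the main obstacle'': the inequality is false already at initialisation. Take $n=2L/\mu$ (big-data with equality), $f_{i}(x)=\frac{L}{2}\left\Vert x\right\Vert ^{2}$ for all $i$, and $\phi_{i}^{0}=\phi^{0}\neq0$. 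Then $w^{0}=\phi^{0}-\frac{1}{\alpha\mu}f^{\prime}(\phi^{0})=\left(1-\frac{L}{2\mu}\right)\phi^{0}=\left(1-\frac{n}{4}\right)\phi^{0}$, so both $\left\Vert w^{0}-w^{*}\right\Vert ^{2}$ and $f(w^{0})-f(w^{*})$ are roughly $\frac{n^{2}}{16}$ times their counterparts at $\phi^{0}$; the left side exceeds the right side by a factor of order $n^{2}$ for \emph{any} choice of $c$, and your potential grows by a factor of order $n$ in expectation on the first step. The root cause is structural: the Finito point $w^{k}$ is a damped minimiser of the strong-convexity lower-bound model, and it can overshoot $w^{*}$ by a factor of up to $L/(\alpha\mu)\sim n$ even when every $\phi_{i}$ is close to $w^{*}$. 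A SAGA-style Lyapunov function anchored at $w^{*}$ therefore cannot absorb the newly inserted table entry $\phi_{j}^{k+1}=w^{k}$ (this is also why the theorem is stated for $\bar{\phi}^{k}$ rather than for $w^{k}$). The SAGA analysis survives this because its step size is $O(1/(\mu n+L))$, so the inserted point stays close to the previous iterate; Finito's effective step is $n$ times larger.

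The paper's proof avoids $w^{*}$ entirely. Its Lyapunov function is a duality-gap-type quantity, essentially $T=f(\bar{\phi})-\frac{1}{n}\sum_{i}\left[f_{i}(\phi_{i})+\left\langle f_{i}^{\prime}(\phi_{i}),w-\phi_{i}\right\rangle +\frac{\mu}{2}\left\Vert w-\phi_{i}\right\Vert ^{2}\right]+\frac{\mu}{2n}\sum_{i}\left\Vert \bar{\phi}-\phi_{i}\right\Vert ^{2}$, i.e.\ the value at the averaged point minus the lower-bound model evaluated at the (overshooting) point $w$, plus a variance correction; contraction comes from the model tightening (Theorem \ref{thm:finito-main}, using Lemma \ref{lem:scaled-lemma} to invoke the big-data condition), and the suboptimality is recovered via $f(\bar{\phi}^{k})-f(w^{*})\leq\alpha T^{k}$ (Theorem \ref{thm:lyp-to-func}), which compares the model at $w$ with the model at its true minimiser. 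Your initial-value computation $T^{0}\leq\frac{3}{4\mu}\left\Vert f^{\prime}(\phi^{0})\right\Vert ^{2}$ with $c=\frac{1}{2}$ lands on the right constant, but this is coincidental; note also that your closing step $f(\bar{\phi}^{k})-f(w^{*})\leq T^{k}$ does not follow from convexity alone, since $\frac{1}{n}\sum_{i}D_{f_{i}}(\phi_{i}^{k},w^{*})\geq f(\bar{\phi}^{k})-f(w^{*})$ requires controlling the sign-indefinite cross term $\frac{1}{n}\sum_{i}\left\langle f_{i}^{\prime}(\bar{\phi})-f_{i}^{\prime}(w^{*}),\phi_{i}-\bar{\phi}\right\rangle $.
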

See Section \ref{sec:finito-theory} for the proof. This can be compared
to the SAG method, which achieves a $1-\frac{1}{8n}$ geometric rate
when $\beta=2$. The SDCA method has a $1-\frac{2L/n}{2L+L}=1-\frac{2}{3n}$
rate, which is very slightly better rate ($0.666\dots$ v.s. $0.5$)
than Finito.

Note that on a per epoch basis, the Finito rate is $\left(1-\frac{1}{2n}\right)^{n}\approx\exp(-1/2)=0.606$.
Lemma \ref{lem:upper-bernoulli-bound} discusses this $\exp(\dots)$
approximation in more detail. To put that rate into context, 10 epochs
will see the error bound reduced by more than 148x.

One notable feature of our method is the fixed step size. In typical
machine learning problems the strong convexity constant is given by
the strength constant of the quadratic regulariser used. Since this
is a known quantity, as long as the big-data condition holds $\alpha=2$
may be used without any tuning or adjustment of Finito required. This
lack of tuning is a major feature of Finito.

Our theory is not as complete as for SDCA and SAG. In cases where
the big-data condition does not hold, we conjecture that the step
size must be reduced proportionally to the violation of the big-data
condition. In practice, the most effective step size can be found
by testing a number of step sizes, as is usually done with other stochastic
optimisation methods. We do not have any convergence theory for when
the big-data condition doesn't hold, except for when very small step
sizes are used, such as by setting the inverse step size constant
$\alpha$ to $\alpha=\frac{L}{\mu}$ (Section \ref{sec:miso-cyclic}).

\subsection{Storage costs}

Another difference compared to the SAG method is that we store both
gradients $f_{i}^{\prime}(\phi_{i})$ and points $\phi_{i}$. We do
not actually need twice as much memory however, as they can be stored
summed together. In particular we can store the quantities $p_{i}=f_{i}^{\prime}(\phi_{i})-\alpha\mu\phi_{i}$,
and use the update rule $w=-\frac{1}{\alpha\mu n}\sum_{i}p_{i}$.
This trick does not work when step lengths are adjusted during optimisation
unfortunately.

When using this trick it would on the surface appear that storage
of $\phi_{i}$ is a disadvantage when the gradients $f_{i}^{\prime}(\phi_{i})$
are sparse but the $\phi_{i}$ values are not. However, when is $f_{i}$
is strongly convex (which is one of our assumptions) this can not
occur. 

There is an additional possible method for avoiding the storage of
the $\phi_{i}$ values. If we can easily evaluate the gradient of
the convex conjugate of each $f_{i}$, then we can use the relation
(Section \ref{sec:convex-conj}):
\[
\phi_{i}=f_{i}^{*\prime}\left(f_{i}^{\prime}(\phi_{i})\right).
\]

This is possible because for strongly convex functions there is an
isomorphism between the dual space that the gradients live in and
the primal space.

\section{Permutation \& the Importance of Randomness}

\label{sec:randomness}

One of the most interesting aspects Finito and the other fast incremental
gradient methods is the random choice of index at each iteration.
We are not in a stochastic approximation setting, so there is no inherent
randomness in the problem. Yet it seems that randomisation is required
for Finito. It diverges in practice if a cyclic access order is used.
It is hard to emphasise enough the importance of randomness here.
The technique of pre-permuting the data, then doing in order passes
after that, is not enough. Even reducing the step size in SAG or Finito
by 1 or 2 orders of magnitude does not fix convergence.

\label{sec:permuted} The permuted ordering described in Section \ref{subsec:access-orders}
is particularly well suited to use with Finito. Recall that for the
permuted ordering, each step within an epoch the data is sampled without
replacement from the points not accessed yet in that epoch. In practice,
this approach does not give any speed-up with SAG, however it works
spectacularly well with Finito. We see speed-ups of up to a factor
of two using this approach. This is one of the major differences in
practice between SAG and Finito. We should note that we have no theory
to support this case however. 

The SDCA method is also sometimes used with a permuted ordering \citep{sdca},
our experiments in Section \ref{sec:finito-experiments} show that
this sometimes results in a speed-up over uniform random sampling,
although it does not appear to be as reliable as with Finito.

\section{Experiments}

\label{sec:finito-experiments}

In this section we compare Finito, SAG, SDCA and LBFGS. The SVRG method
was not published at the time these experiments where run. We only
consider problems where the regulariser is large enough so that the
big-data condition holds, as this is the case our theory supports.
However, in practice our method can be used with smaller step sizes
in the more general case, in much the same way as SAG. 

Since we do not know the Lipschitz smoothness constant for these problems
exactly, the SAG method was run for a variety of step sizes, with
the one that gave the fastest rate of convergence plotted. The best
step size for SAG is usually not what the theory suggests. \citet{sag}
suggest using $\frac{1}{L}$ instead of the theoretical rate $\frac{1}{16L}$.
For Finito, we find that using $\alpha=2$ is the fastest rate when
the big-data condition holds for any $\beta>1$. This is the step
suggested by our theory when $\beta=2$. Interestingly, reducing $\alpha$
to 1 does not improve the convergence rate. Instead, we see no further
improvement in our experiments.

For both SAG and Finito we used a different step size rule than suggested
by the theory for the first pass. For Finito, during the first pass,
since we do not have derivatives for each $\phi_{i}$ yet, we simply
sum over the $k$ terms seen so far 
\[
w^{k}=\frac{1}{k}\sum_{i}^{k}\phi_{i}^{k}-\frac{1}{\alpha\mu k}\sum_{i}^{k}f_{i}^{\prime}(\phi_{i}^{k}),
\]
 where we process data points in cyclic order for the first pass only.
A similar trick is suggested by \citet{sag} for SAG.

For our test problems we choose log loss for 3 binary classification
datasets, and quadratic loss for 2 regression tasks. For classification,
we tested on the ijcnn1 and covtype datasets \footnote{\url{http://www.csie.ntu.edu.tw/~cjlin/libsvmtools/datasets/binary.html}},
as well as MNIST\footnote{\url{http://yann.lecun.com/exdb/mnist/}}
classifying 0-4 against 5-9. For regression, we choose the two datasets
from the UCI repository: the million song year regression dataset,
and the slice-localisation dataset. The training portion of the datasets
are of size $5.3\times10^{5}$, $5.0\times10^{4}$, $\,6.0\times10^{4}$,
$4.7\times10^{5}$ and $5.3\times10^{4}$ respectively.

\begin{figure}
\begin{centering}
\subfloat[MNIST]{\protect\includegraphics[width=0.7\textwidth]{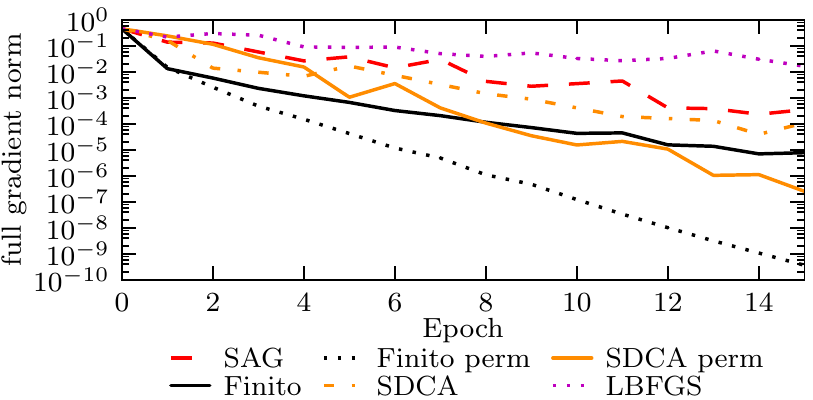}

}
\par\end{centering}

\begin{centering}
\subfloat[ijcnn1]{\protect\includegraphics[width=0.7\textwidth]{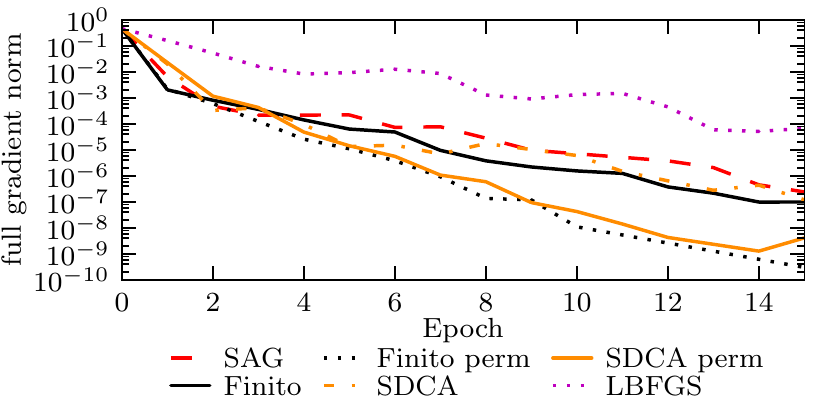}

}
\par\end{centering}

\begin{centering}
\subfloat[covtype]{\protect\includegraphics[width=0.7\textwidth]{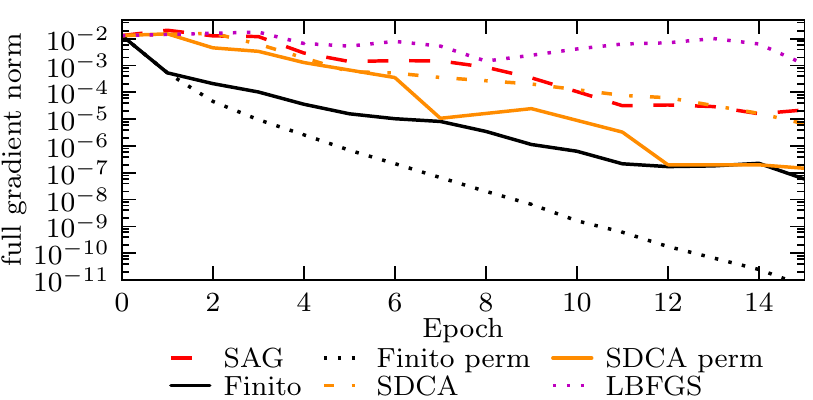}

}\protect\caption{\label{fig:finito-classification}Classification tasks}

\par\end{centering}

\end{figure}

\begin{figure}
\begin{centering}
\subfloat[million song]{\protect\includegraphics[width=0.7\textwidth]{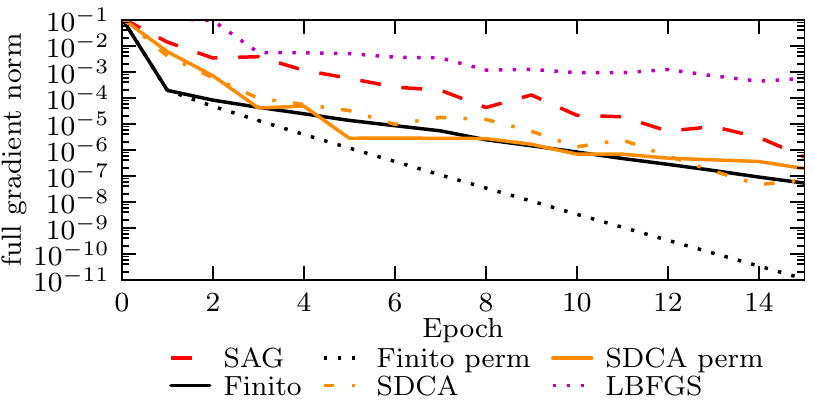}

}
\par\end{centering}

\begin{centering}
\subfloat[slice]{\protect\includegraphics[width=0.7\textwidth]{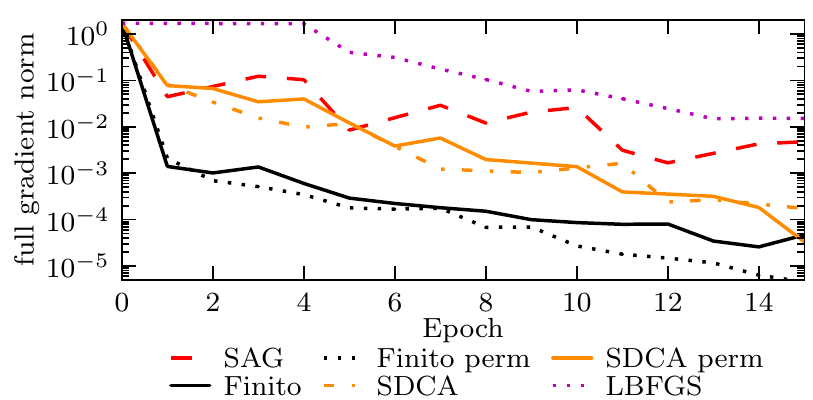}

}\protect\caption{\label{fig:finito-regression}Regression tasks}

\par\end{centering}

\end{figure}

Figures \ref{fig:finito-classification} \& \ref{fig:finito-regression}
shows the results of our experiments. Firstly we can see that LBFGS
is not competitive with any of the incremental gradient methods considered.
Secondly, the non-permuted SAG, Finito and SDCA often converge at
very similar rates. The observed differences are usually down to the
speed of the very first pass, where SAG and Finito are using the above
mentioned trick to speed their convergence. After the first pass,
the slopes of the line are usually comparable. When considering the
methods with permutation each pass, we see a clear advantage for Finito.
Interestingly, it gives very flat lines, indicating very stable convergence.
SAG with permutation is not shown as it is much slower and unstable
than randomised SAG.

\section{The MISO Method}

The MISO \citep{miso} method is an incremental gradient method applicable
when each of the terms $f_{i}$ can easily be majorised (that is upper
bounded everywhere) by known functions. When quadratic functions are
chosen for the majorisation step, the method has the same update as
Finito but with an alternative and much smaller step size. MISO maintains
the following upper bound on $f(x)$ at each step:

\[
B(x)=\frac{1}{n}\sum_{i}f_{i}(\phi_{i})+\frac{1}{n}\sum_{i}\left\langle f_{i}^{\prime}(\phi_{i}),x-\phi_{i}\right\rangle +\frac{L}{2n}\sum_{i}\left\Vert x-\phi_{i}\right\Vert ^{2}.
\]

This is just the sum of the Lipschitz smoothness upper bounds around
each $f_{i}$, at different points $\phi_{i}$. The method alternates
updating randomly chosen $\phi_{j}$ values, using the update:
\[
\phi_{j}^{k+1}=w^{k}\triangleq\arg\min_{x}B^{k}(x).
\]

This update is sensible. We are using the minimum of the upper bound
as our best guess at the solution, just like in gradient descent and
Newton's method. Taking the gradient of $B$ to zero gives an explicit
formula for $w^{k}$ of:
\begin{equation}
w^{k}=\bar{\phi}-\frac{1}{Ln}\sum_{i}f_{i}^{\prime}(\phi_{i}),\label{eq:miso-update}
\end{equation}
which is identical to Finito but with an inverse step size of $L$
instead of $2\mu$. Since the condition number of practical problems
is large, generally in the thousands to millions, this step size is
dramatically smaller. 

\citet{miso} establishes the following convergence rate for this
method in the strongly convex case:
\begin{equation}
\mathbb{E}\left[f(x^{k})-f(x^{*})\right]\leq\left(1-\frac{\mu}{(\mu+L)n}\right)^{k-1}\frac{L}{2}\left\Vert w^{0}-w^{*}\right\Vert ^{2}.\label{eq:miso-basic-rate}
\end{equation}
Note that on a per epoch basis, this rate essentially matches that
of gradient descent, as $\left(1-\frac{\mu}{(\mu+L)n}\right)^{n}\approx\left(1-\frac{\mu}{L}\right)$.
This approximation is described further in Lemma \ref{lem:bernoulli}.
Gradient descent is not considered a competitive method for optimisation,
and likewise, experiments suggest that this method is even slower
in practice than gradient descent with line searches. In Theorem \ref{thm:miso-fast-rate}
(Chapter \ref{chap:inc-discus}) we improve on this result, establishing
a convergence rate with a factor of $2$ better geometric constant:

\[
\mathbb{E}\left[f(x^{k})-f(x^{*})\right]\leq\left(1-\frac{\mu}{(\mu+L)n}\right)^{2k}\frac{2n}{\mu}\left\Vert f^{\prime}(\phi^{0})\right\Vert ^{2},
\]

although this result is for the specific form of MISO given in Equation
\ref{eq:miso-update} rather than the general upper bound minimisation
scheme they describe.

A recent technical report \citep{miso2} gives a proof of the MISO
method's convergence rate under the larger step sizes we consider
with Finito. Although this no longer fits into the upper bound minimisation
framework of MISO, they refer to this as the MISO$\mu$ method. Their
report was released while Finito was under peer review. They establish
for a step size $\alpha=1$ (Compared to our $\alpha=2$ rate used
in the Finito proof, see Equation \ref{eq:main-finito-update}) a
rate of 
\[
\mathbb{E}\left[f(x^{k})-f(x^{*})\right]\leq\left(1-\frac{1}{3n}\right)^{k}\frac{2n}{\mu}\left\Vert f^{\prime}(\phi^{0})\right\Vert ^{2}.
\]

This $1-\frac{1}{3n}$ constant is slightly worse than the $1-\frac{1}{2n}$
constant we get. They make the same big-data assumption $n\geq2L/\mu$
that we do.

\section{A Primal Form of SDCA}

At first glance, the SDCA method bears no obvious relation to Finito.
However, there is a simple transformation that can be applied to SDCA,
that makes the relation clear. This leads to a novel primal formulation
of SDCA. This also allows us to construct an algorithm that sits at
the midpoint between SDCA and Finito.

\begin{algorithm}
Initialise $\phi_{i}^{0}=\phi^{0}$ for some initial$\phi^{0}$ and
compute the table of gradients $f_{i}^{\prime}(\phi_{i}^{0})$ for
each $i$ .

At step $k+1$:
\begin{enumerate}
\item Pick an index $j$ uniformly at random.
\item \label{enu:psdca-phij}Compute $\phi_{j}^{k+1}=\text{prox}_{1/\lambda}^{f_{j}}(z)$,
where 
\begin{equation}
z=-\lambda\sum_{i\neq j}^{n}f_{i}^{\prime}(\phi_{i}^{k}),\label{eq:sdca-z}
\end{equation}
\[
\text{and \,\ }\lambda=\frac{1}{\mu n}.
\]

\item \label{enu:psdca-grad-step}Store the gradient $f_{j}^{\prime}(\phi_{j}^{k+1})=\frac{1}{\lambda}\left(z-\phi_{j}^{k+1}\right)$
in the table at location $j$. For $i\neq j$, the table entries are
unchanged ($\ensuremath{f_{i}^{\prime}(\phi_{i}^{k+1})=f_{i}^{\prime}(\phi_{i}^{k})}$).
\end{enumerate}
At completion, return $x^{k}=-\lambda\sum_{i}^{n}f_{i}^{\prime}(\phi_{i}^{k})$.

\protect\caption{\label{alg:primal-sdca}Primal SDCA}

\end{algorithm}

The primal version of SDCA is given as Algorithm \ref{alg:primal-sdca}.
At each step, this algorithm evaluates the proximal operator of a
term $f_{j}$ at a particular point $z$, yielding the new point $\phi_{j}^{k+1}$.
The gradient table is then updated with $f_{j}^{\prime}(\phi_{j}^{k+1})$.
The point $z$ is just a constant times the table's average gradient,
excluding location $j$. Compare this to the other fast incremental
gradient methods that use a table average, such as SDCA and Finito,
where in those cases the average includes the location $j$.

It's not immediately obvious that the $f_{j}^{\prime}(\phi_{j}^{k+1})$
update is actually computing $f_{j}^{\prime}(\phi_{j}^{k+1})$ in
Algorithm \ref{alg:primal-sdca}. To see why it works, consider the
optimality condition of the proximal operator from Step \ref{enu:psdca-phij},
found by taking the definition of the proximal operator and setting
its gradient to zero:
\[
f_{j}^{\prime}(\phi_{j}^{k+1})+\frac{1}{\lambda}(\phi_{j}^{k+1}-z)=0.
\]

Relating this to the update in Step \ref{enu:psdca-grad-step} shows
that the step does indeed compute the gradient.

We claim that this algorithm is exactly equivalent to SDCA when exact
coordinate minimisation is used (the standard variant). In fact, the
simple relation $\alpha_{i}^{k}=f_{i}^{\prime}(\phi_{i}^{k})$ holds.
Note that in SDCA the quantities $\phi_{i}$ are not used explicitly,
whereas in our notation here we have made them explicit.
\begin{thm}
Primal SDCA (Algorithm \ref{alg:primal-sdca}) is equivalent to SDCA
(Algorithm \ref{alg:sdca-exact}) when each $\alpha_{i}^{0}$ is initialised
to $f_{i}^{\prime}(\phi_{i}^{0})$.

In particular the iterate $x^{k}$ defined as $x^{k}=-\lambda\sum_{i}^{n}f_{i}^{\prime}(\phi_{i}^{k})$
is identical to the SDCA iterate $x^{k}=-\frac{1}{\mu n}\sum_{i}^{n}\alpha_{i}$,
due to the relation $\alpha_{i}^{k}=f_{i}^{\prime}(\phi_{i}^{k})$.\end{thm}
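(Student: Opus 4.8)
The plan is to argue by induction on the iteration count $k$, maintaining as the invariant the claimed relation $\alpha_i^k = f_i'(\phi_i^k)$ for every $i$, across two coupled executions of the algorithms that draw the same random index $j$ at each step. The base case is immediate from the stated initialisation $\alpha_i^0 = f_i'(\phi_i^0)$, and the invariant instantly delivers the iterate identity, since $-\frac{1}{\mu n}\sum_i \alpha_i^k = -\lambda\sum_i f_i'(\phi_i^k)$ with $\lambda = 1/(\mu n)$. For the coordinates $i \neq j$ neither algorithm touches its stored quantity, so the invariant is preserved there for free; all the content lies in the updated coordinate $j$.

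First I would rewrite the SDCA dual update of Algorithm \ref{alg:sdca-exact} so that the point $z$ of Algorithm \ref{alg:primal-sdca} appears. Using the invariant, $z = -\lambda\sum_{i\neq j} f_i'(\phi_i^k) = -\lambda\sum_i \alpha_i^k + \lambda\alpha_j^k = x^k + \lambda\alpha_j^k$, hence $x^k - \frac{1}{\mu n}(y - \alpha_j^k) = z - \lambda y$. Since $\frac{\mu n}{2} = \frac{1}{2\lambda}$, the SDCA coordinate update collapses to $\alpha_j^{k+1} = \arg\min_y [\, f_j^*(y) + \frac{1}{2\lambda}\n{z - \lambda y}^2 \,]$.

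The crux is to identify this dual minimiser with the primal quantity $f_j'(\phi_j^{k+1})$. I would compare first-order optimality conditions. The proximal step $\phi_j^{k+1} = \text{prox}_{1/\lambda}^{f_j}(z)$ satisfies $f_j'(\phi_j^{k+1}) = \frac{1}{\lambda}(z - \phi_j^{k+1})$, which is precisely the relation already verified in the text and exactly what Step \ref{enu:psdca-grad-step} stores. The dual problem's optimality condition reads $z - \lambda y \in \partial f_j^*(y)$. Substituting the candidate $y = f_j'(\phi_j^{k+1})$, the left-hand side collapses to $z - \lambda\cdot\frac{1}{\lambda}(z - \phi_j^{k+1}) = \phi_j^{k+1}$, and the Fenchel--Young equality (Section \ref{sec:convex-conj}), namely that $u \in \partial f_j(\phi)$ is equivalent to $\phi \in \partial f_j^*(u)$, certifies $\phi_j^{k+1} \in \partial f_j^*(f_j'(\phi_j^{k+1}))$. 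Strict convexity of the quadratic in $y$ forces the dual minimiser to be unique, so $\alpha_j^{k+1} = f_j'(\phi_j^{k+1})$, closing the induction for coordinate $j$.

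Finally I would check that the two $x$-updates then agree automatically. In SDCA $x^{k+1} = x^k - \lambda(\alpha_j^{k+1} - \alpha_j^k)$, and since only coordinate $j$ changed, the primal expression gives $-\lambda\sum_i f_i'(\phi_i^{k+1}) = x^k - \lambda(f_j'(\phi_j^{k+1}) - f_j'(\phi_j^k))$, which coincide under the freshly established relation. The main obstacle is the conjugacy manipulation of the preceding paragraph: one must carry the $\lambda$ versus $\mu n$ bookkeeping carefully, and because the $f_i$ here are the regularisation-stripped terms, which are convex but need not be strictly convex, I would phrase the conjugate-gradient correspondence through subgradients and Fenchel--Young rather than assuming $f_j^{*\prime}$ is single-valued.
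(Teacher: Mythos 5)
Your proof is correct and follows essentially the same route as the paper's: induction on the invariant $\alpha_i^k = f_i'(\phi_i^k)$, rewriting the SDCA exact coordinate minimisation in terms of the point $z$, and identifying its unique minimiser with the stored quantity $\frac{1}{\lambda}\left(z-\phi_j^{k+1}\right)$. The only divergence is presentational: where the paper invokes the Moreau decomposition $\text{prox}^{f^*}(v)=v-\text{prox}^{f}(v)$ as a black box to equate the dual proximal point with $z-\phi_j^{k+1}$, you re-derive that identity in situ from the first-order optimality conditions together with the Fenchel--Young subgradient inversion, which is an equivalent argument with the minor virtue of making explicit that no strict convexity of the individual $f_j$ is required.
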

\begin{proof}
We will prove this by induction, by showing that $\alpha_{i}^{k}=f_{i}^{\prime}(\phi_{i}^{k})$
at each step. For the base case, we have assumed that $\alpha_{i}^{0}$
is initialised to $f_{i}^{\prime}(\phi_{i}^{0})$, so it holds by
construction. Next suppose that at step $k$, $\alpha_{i}^{k}=f_{i}^{\prime}(\phi_{i}^{k})$.
Then we just need to show that a single step in Algorithm \ref{alg:primal-sdca}
gives $\alpha_{j}^{k+1}=f_{j}^{\prime}(\phi_{j}^{k+1})$.

The relation between the SDCA and the primal variant we describe stems
from the fact that the standard dual version performs a minimisation
of the following form at each step:
\begin{equation}
\alpha_{j}^{k+1}=\arg\min_{\alpha}\left[f_{j}^{*}(\alpha)+\frac{\mu n}{2}\left\Vert x^{k}-\frac{1}{\mu n}\left(\alpha-\alpha_{j}^{k}\right)\right\Vert ^{2}\right].\label{eq:sdca-step}
\end{equation}
Here $\alpha$ are dual variables and $x^{k}$ the current primal
point. As noted by \citet{sdca-accel}, this is actually an instance
of the proximal operator of the convex conjugate of $f_{j}$. Recall
the definition of the proximal operator of a function $f$ with weight
$\gamma$:

\[
\text{prox}_{\gamma}^{f}(v)=\arg\min_{x}\left\{ f(x)+\frac{\gamma}{2}\left\Vert x-v\right\Vert ^{2}\right\} .
\]

Clearly the exact coordinate step (Equation \ref{eq:sdca-step}) is
the proximal operator of $f^{*}$ in the following form:
\begin{eqnarray}
\alpha_{j}^{k+1} & = & \mu n\cdot\text{prox}_{\mu n}^{f^{*}}(x^{k}+\frac{1}{\mu n}\alpha_{j}^{k}).\label{eq:dual-prox-sdca}\\
 & = & \mu n\cdot\text{prox}_{\mu n}^{f^{*}}(-\frac{1}{\mu n}\sum_{i\neq j}\alpha_{i}^{k}).\\
 & = & \frac{1}{\lambda}\text{prox}_{1/\lambda}^{f^{*}}(z^{k})\quad\text{(in notation from Eq \ref{eq:sdca-z})}\label{eq:aj-prox-def}
\end{eqnarray}
In the last step we have used our assumption that $\alpha_{i}^{k}=f_{i}^{\prime}(\phi_{i}^{k})$.
Our primal formulation exploits this definition of $\alpha_{j}^{k+1}$
using a relation between the proximal operator of a function and its
convex conjugate known as the Moreau decomposition:
\[
\text{prox}^{f^{*}}(v)=v-\text{prox}^{f}(v).
\]

See Section \ref{sec:convex-conj} for more details on the Moreau
decomposition. This decomposition allows us to compute the proximal
operator of conjugate via the primal proximal operator. As this is
the only use in the basic SDCA method of the conjugate function, applying
this decomposition allows us to completely eliminate the ``dual''
aspect of the algorithm. The same trick can be used to interpret Dykstra's
set intersection as a primal algorithm instead of a dual block coordinate
descent algorithm \citep{prox-dk}.

We now apply the Moreau decomposition to our primal SDCA (Algorithm
\ref{alg:primal-sdca}) formulation's update : $\phi_{j}^{k+1}=\text{prox}_{1/\lambda}^{f_{j}}(-\lambda\sum_{i\neq j}^{n}f_{i}^{\prime}(\phi_{i}^{k}))$,
giving:
\begin{eqnarray*}
\phi_{j}^{k+1} & = & \text{prox}_{1/\lambda}^{f_{j}}(z)\\
 & = & z-\text{prox}_{1/\lambda}^{f_{j}^{*}}(z).
\end{eqnarray*}
\begin{equation}
\therefore\text{prox}_{1/\lambda}^{f_{j}^{*}}(z)=z-\phi_{j}^{k+1}.\label{eq:z-phij-relation}
\end{equation}

Now consider Step \ref{enu:psdca-grad-step} of Algorithm \ref{alg:primal-sdca}.
Combining with Equation \ref{eq:z-phij-relation} and $\alpha_{j}^{k+1}=f_{j}^{\prime}(\phi_{j}^{k+1})$
we have: 
\begin{eqnarray*}
\frac{1}{\lambda}\left(z-\phi_{j}^{k+1}\right) & = & \frac{1}{\lambda}\cdot\text{prox}_{1/\lambda}^{f_{j}^{*}}(z).\\
 & = & \alpha_{j}^{k+1}\;\text{(Equation \ref{eq:aj-prox-def})}.
\end{eqnarray*}

This gives that the update for $\alpha_{j}^{k+1}$ in SDCA is identical
to the $f_{j}^{\prime}(\phi_{j}^{k+1})$ update in Algorithm \ref{alg:primal-sdca}. \end{proof}
\begin{rem}
The initialisation of SDCA typically takes each $\alpha_{i}$ as the
zero vector. This is a somewhat unnatural initialisation to use in
the primal variant, where it is more natural to take $\phi_{i}^{0}=0$
for all $i$. The SDCA style initialisation can be done using the
assignment $\phi_{i}^{0}=f_{i}^{*\prime}(0)$ if desired, but that
requires working with the convex conjugate function.
\end{rem}

\section{Prox-Finito: a Novel Midpoint Algorithm}

\label{sec:prox-finito}

The primal form of SDCA has some resemblance to Finito; the primary
difference being that it assumes strong convexity is induced by a
separate strongly convex regulariser, rather than each $f_{i}$ being
strongly convex. We now show how to modify SDCA so that it works without
a separate regulariser. 

Consider the dual problem maximised by SDCA, using our relation $\alpha_{i}=f_{i}^{\prime}(\phi_{i})$,
and $x^{k}=-\lambda\sum_{i}^{n}f_{i}^{\prime}(\phi_{i}^{k})$. Using
the Lagrangian from Equation \ref{eq:sdca-lagrangian}:

\begin{eqnarray*}
D & = & \min_{w,x_{1},\dots x_{n},}\left\{ \frac{1}{n}\sum_{i=1}^{n}f_{i}(x_{i})+\frac{\mu}{2}\left\Vert w\right\Vert ^{2}+\frac{1}{n}\sum_{i}\left\langle \alpha_{i},w-x_{i}\right\rangle \right\} \\
 & = & \min_{w}\left\{ \frac{1}{n}\sum_{i=1}^{n}\min_{x_{i}}\left[f_{i}(x_{i})+\left\langle f_{i}^{\prime}(\phi_{i}^{k}),w-x_{i}\right\rangle \right]+\frac{\mu}{2}\left\Vert w\right\Vert ^{2}\right\} .
\end{eqnarray*}

Notice now that a minimiser of each inner problem can be found by
setting the gradient to zero: $f^{\prime}(x_{i})-f^{\prime}(\phi_{i}^{k})=0$,
therefore we can take $x_{i}=\phi_{i}^{k}$. 
\begin{eqnarray}
D & = & \min_{w}\left\{ \frac{1}{n}\sum_{i=1}^{n}\left[f_{i}(\phi_{i}^{k})+\left\langle f_{i}^{\prime}(\phi_{i}^{k}),w-\phi_{i}^{k}\right\rangle \right]+\frac{\mu}{2}\left\Vert w\right\Vert ^{2}\right\} .\nonumber \\
 & = & \frac{1}{n}\sum_{i=1}^{n}\left[f_{i}(\phi_{i}^{k})+\left\langle f_{i}^{\prime}(\phi_{i}^{k}),x^{k}-\phi_{i}^{k}\right\rangle \right]+\frac{\mu}{2}\left\Vert x^{k}\right\Vert ^{2}.\label{eq:sdca-alt}
\end{eqnarray}
Using this form of the dual, the coordinate step for coordinate $j$
can be seen as the minimiser of the following function with respect
to $x$: 
\[
A_{j}^{k}(x)=\frac{1}{n}f_{j}(x)+\frac{1}{n}\sum_{i\neq j}^{n}\left[f_{i}(\phi_{i}^{k})+\left\langle f_{i}^{\prime}(\phi_{i}^{k}),x-\phi_{i}^{k}\right\rangle \right]+\frac{\mu}{2}\left\Vert x\right\Vert ^{2}.
\]

So each step of SDCA:
\begin{enumerate}
\item Replaces the linear lower bound around one of the functions $f_{j}$
with the explicit function $f_{j}$, yielding the global lower bound
$A_{j}^{k}$;
\item Sets $x^{k+1}$ to be the minimiser of $A_{j}^{k}$;
\item Then re-linearises $f_{j}$ around the point $x^{k+1}$ by taking
$\phi_{j}^{k+1}=x^{k+1}$.
\end{enumerate}
Now notice that Equation \ref{eq:sdca-alt} is the standard convexity
lower bound in terms of each $f_{i}$ summed together, added to the
explicit regulariser term. Since Finito works with strongly convex
functions without an explicit regulariser, it is natural to replace
this expression with the strong convexity lower bound:
\[
B=\frac{1}{n}\sum_{i=1}^{n}\left[f_{i}(\phi_{i}^{k})+\left\langle f_{i}^{\prime}(\phi_{i}^{k}),x^{k}-\phi_{i}^{k}\right\rangle +\frac{\mu}{2}\left\Vert x^{k}-\phi_{i}^{k}\right\Vert ^{2}\right].
\]

Now applying the same procedure as SDCA uses to this alternative expression
yields Algorithm \ref{alg:prox-finito}.

\begin{algorithm}
\begin{enumerate}
\item Pick an index $j$ uniformly at random.
\item Compute $\phi_{j}^{k+1}=\text{prox}_{\eta}^{f_{j}}(z)$, where: 
\begin{equation}
z=\frac{1}{n-1}\sum_{i\neq j}^{n}\phi_{i}^{k}-\frac{1}{\eta}\sum_{i\neq j}^{n}f_{i}^{\prime}(\phi_{i}^{k}),\label{eq:prox-finito-z}
\end{equation}
\[
\text{and \,\ }\eta=\mu(n-1).
\]

\item \label{enu:psdca-grad-step-1}Store the gradient $f_{j}^{\prime}(\phi_{j}^{k+1})=\eta\left(z-\phi_{j}^{k+1}\right)$
in the table at location $j$. For $i\neq j$, the table entries are
unchanged ($\ensuremath{f_{i}^{\prime}(\phi_{i}^{k+1})=f_{i}^{\prime}(\phi_{i}^{k})}$).
\end{enumerate}
At completion, return $x^{k}=\frac{1}{n}\sum_{i}\phi_{i}^{k}-\frac{1}{\mu n}\sum_{i}f_{i}^{\prime}(\phi_{i}^{k})$.

\protect\caption{\label{alg:prox-finito}Prox-Finito}
\end{algorithm}

Like SDCA, Prox-Finito has a linear convergence rate. The geometric
factor is $1-\frac{\mu}{\mu n+L-\mu}$ instead of $1-\frac{\mu}{\mu n+L}$,
as the Lipschitz smoothness constant now includes the strong convexity
constant due to the assumption that strong convexity is induced by
each $f_{i}$ instead of a separate regulariser. We establish this
convergence rate directly in Section \ref{sec:prox-finito-theory}.

\subsection{Prox-Finito relation to Finito}

We constructed Prox-Finito as a natural modification of the SDCA algorithm.
We now show how it is closely related to Finito. Consider again the
lower bound used in Prox-Finito:

\[
B=\frac{1}{n}\sum_{i=1}^{n}\left[f_{i}(\phi_{i}^{k})+\left\langle f_{i}^{\prime}(\phi_{i}^{k}),x^{k}-\phi_{i}^{k}\right\rangle +\frac{\mu}{2}\left\Vert x^{k}-\phi_{i}^{k}\right\Vert ^{2}\right].
\]

The update of Finito is also a minimisation involving this bound.
Instead of replacing $f_{j}$ with the true function then minimising,
the minimisation is done as-is on $B$ with respect to $x^{k}$. That
yields the equation:
\[
x^{k}=\frac{1}{n}\sum_{i}^{n}\phi_{i}^{k}-\frac{1}{\mu n}\sum_{i}^{n}f_{i}^{\prime}(\phi_{i}^{k}).
\]

Instead of using this update directly, Finito takes a more conservative
step, smaller by a factor of $\alpha=2:$ 
\[
x^{k}=\frac{1}{n}\sum_{i}^{n}\phi_{i}^{k}-\frac{1}{\alpha\mu n}\sum_{i}^{n}f_{i}^{\prime}(\phi_{i}^{k}).
\]

After each minimisation of $B$ with respect to $x^{k}$, we may then
minimise with respect to $\phi_{j}$, which yields $\phi_{j}=x^{k}$
as the update.

\subsection{Non-Uniform Lipschitz Constants}

The standard Prox-Finito method does not rely on the Lipschitz smoothness
constant of the gradient in the actual algorithm, just its convergence
rate is dependent on $L.$ So it potentially does not require knowledge
of $L$ to be used. Suppose instead that we know separate Lipschitz
smoothness constants $L_{i}$ for each $f_{i}$ function's gradient.
Let $\bar{L}$ be the average of these. This is also a bound on the
Lipschitz smoothness constant of the entire function's gradient $f^{\prime}$.
A straightforward modification of the Prox-Finito algorithm gives
a convergence rate dependent on $\bar{L}$ instead of $L$:
\begin{thm}
\label{thm:prox-finito-weighted} Define $c_{i}=\frac{\mu n+L_{i}-\mu}{\mu}$.
Modify the Prox-Finito algorithm so that sampling of $j$ is done
with probabilities
\[
p_{i}=\frac{1}{Z}\cdot c_{i},
\]

with $Z$ the normalisation constant so $\sum_{i}p_{i}=1$. Then the
expected convergence rate becomes:

\[
f(w^{*})-\mathbb{E}\left[B^{k}(w^{k})\right]\leq\left(1-\frac{\mu}{\left(\mu n+\bar{L}-\mu\right)}\right)^{k}\left[f(w^{*})-B^{0}(w^{0})\right].
\]
\end{thm}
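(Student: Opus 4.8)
The plan is to reuse the per-step analysis behind the uniform Prox-Finito rate (Section \ref{sec:prox-finito-theory}), but to retain the Lipschitz constant of the \emph{single} term being updated, $L_j$, rather than a global $L$, and then to let the non-uniform sampling cancel the resulting $c_j$ factors. First I would fix the sign of the quantity being bounded: each summand of $B^k$ is the strong-convexity lower bound of $f_i$ about $\phi_i^k$, so $B^k(x)\le f(x)$ for every $x$; taking $w^k=\arg\min_x B^k(x)$ (the returned iterate) gives $B^k(w^k)\le f(w^*)$, so $G^k:=f(w^*)-B^k(w^k)\ge 0$ is the nonnegative quantity whose expectation must contract.

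The core step is a per-coordinate improvement bound. A single step on coordinate $j$ replaces the linear-plus-quadratic lower bound on $f_j$ by the exact term via the proximal operator, i.e.\ it performs exact maximisation over coordinate $j$. Writing $G^{k+1}_j$ for the gap after that update and repeating the coordinate-ascent estimate while invoking smoothness only of $f_j$, through $f_j(x)\le f_j(\phi_j^k)+\langle f_j'(\phi_j^k),x-\phi_j^k\rangle+\frac{L_j}{2}\|x-\phi_j^k\|^2$, yields
\[
G^k-G^{k+1}_j\ \ge\ \frac{1}{c_j}\,g_j^k,\qquad \frac{1}{c_j}=\frac{\mu}{\mu n+L_j-\mu},
\]
where $g_j^k\ge 0$ is the coordinate-$j$ gap from the uniform proof, normalised so that $\frac1n\sum_j g_j^k=G^k$. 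This is precisely the decomposition that, under uniform sampling with $L_j\equiv L$, reproduces the standard $1-\mu/(\mu n+L-\mu)$ rate.

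With these pieces the conclusion is a short computation. Taking expectation over $j\sim p$ with $p_j=c_j/Z$, the weight $c_j$ cancels the $1/c_j$:
\[
\mathbb{E}\big[G^k-G^{k+1}\big]=\sum_j p_j\big(G^k-G^{k+1}_j\big)\ \ge\ \sum_j \frac{c_j}{Z}\cdot\frac{1}{c_j}\,g_j^k\ =\ \frac{1}{Z}\sum_j g_j^k\ =\ \frac{n\,G^k}{Z}.
\]
Since $Z=\sum_j c_j=\frac1\mu\sum_j(\mu n+L_j-\mu)=\frac{n(\mu n+\bar L-\mu)}{\mu}$, we have $n/Z=\mu/(\mu n+\bar L-\mu)$, so $\mathbb{E}[G^{k+1}]\le\big(1-\frac{\mu}{\mu n+\bar L-\mu}\big)G^k$; iterating with the tower property over $k$ steps gives the stated rate.

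The main obstacle is the per-coordinate improvement bound. One must verify that in the exact coordinate-maximisation step the smoothness constant enters only through the single term $f_j$, so that $L_j$ (equivalently $c_j$) governs the one-coordinate contraction, with the average $\bar L$ emerging only afterwards through the normaliser $Z$. This requires carefully re-deriving the smoothness inequality and the prox optimality condition from the uniform proof while refusing to upper-bound any $L_j$ by a global constant, and confirming that the resulting per-coordinate gaps $g_j^k$ are exactly those whose mean is $G^k$.
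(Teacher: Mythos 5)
Your proposal is correct and follows essentially the same route as the paper: the per-coordinate improvement bound $B^{k+1}(w_j^{k+1})-B^k(w^k)\ge \frac{1}{c_j}\left(f_j(w^k)-l_j(w^k)\right)$ with $c_j$ depending only on $L_j$ (obtained by running Lemmas \ref{lem:tight-bound} and \ref{lem:constants} with $L_j$ in place of $L$), followed by the cancellation of $c_j$ against the sampling weights and the computation $Z=n(\mu n+\bar L-\mu)/\mu$. The only cosmetic point is that $\frac{1}{n}\sum_j g_j^k$ equals $f(w^k)-B^k(w^k)$, which dominates $G^k$ rather than equalling it, but the inequality runs in the right direction so the argument is unaffected.
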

\begin{proof}
See Section \ref{sub:prox-finito-weighted-proof}.\end{proof}
\begin{rem}
This modification to allow for non-uniform sampling strongly resembles
the non-uniform sampling scheme described by \citet{nes-coord}. Note
that for the other fast gradient methods, non-uniform sampling does
not have a proven convergence rate yet. For SAG, \citet{sag} consider
a simple $c_{i}=L_{i}$ weighting, which they say can diverge in practice.
\end{rem}

\section{Finito Theory}

\label{sec:finito-theory}

This section is highly technical. The main result is in Theorem \ref{thm:finito-main}.
We start by examining the expectation of a single Finito step. All
expectations in the following are over the choice of index $j$ at
step $k$, conditioned on all $\phi_{i}$ and $w$ from previous steps.
Quantities without superscripts are at their values at iteration $k$.
Note if we combine the $\bar{\phi}$ and $w$ updates together, we
get: 
\begin{equation}
w^{k+1}-w=\frac{1}{n}(w-\phi_{j})+\frac{1}{\alpha\mu n}\left[f_{j}^{\prime}(\phi_{j})-f_{j}^{\prime}(w)\right].\label{eq:w-unexp}
\end{equation}

\begin{lem}
The expectation of the Finito step is:
\[
\mathbb{E}[w^{k+1}]-w=-\frac{1}{\alpha\mu n}f^{\prime}(w).
\]

I.e. the $w$ step is a gradient descent step in expectation ($\frac{1}{\alpha\mu n}\propto\frac{1}{L}$).
A similar equality also holds for SGD and SVRG, but not for SAG or
SDCA.\end{lem}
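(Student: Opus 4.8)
The plan is to take the conditional expectation of Equation \ref{eq:w-unexp} directly over the uniformly random index $j$, treating all $\phi_i$ and $w$ as fixed. Because $j$ is drawn uniformly from $\{1,\dots,n\}$, expectation acts on any term of the form $g(\phi_j)$ simply by averaging, $\mathbb{E}[g(\phi_j)]=\frac{1}{n}\sum_i g(\phi_i)$, and likewise $\mathbb{E}[w-\phi_j]=w-\bar{\phi}$. Applying this to the three pieces of Equation \ref{eq:w-unexp} yields
\[
\mathbb{E}[w^{k+1}]-w = \frac{1}{n}\left(w-\bar{\phi}\right) + \frac{1}{\alpha\mu n}\left[\frac{1}{n}\sum_i f_i^{\prime}(\phi_i) - \frac{1}{n}\sum_i f_i^{\prime}(w)\right].
\]

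First I would simplify the final term using the finite-sum structure $f=\frac{1}{n}\sum_i f_i$, so that $\frac{1}{n}\sum_i f_i^{\prime}(w)=f^{\prime}(w)$. This is the only place the gradient of the full objective appears, and it supplies the right-hand side of the claimed identity. The remaining task is to show that the other two terms, namely $\frac{1}{n}(w-\bar{\phi})$ and $\frac{1}{\alpha\mu n}\cdot\frac{1}{n}\sum_i f_i^{\prime}(\phi_i)$, cancel each other.

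The crucial step is to invoke the definition of the current iterate from Equation \ref{eq:main-finito-update}, which rearranges to $w-\bar{\phi}=-\frac{1}{\alpha\mu n}\sum_i f_i^{\prime}(\phi_i)$. Hence $\frac{1}{n}(w-\bar{\phi})=-\frac{1}{\alpha\mu n}\cdot\frac{1}{n}\sum_i f_i^{\prime}(\phi_i)$, which is exactly the negative of the stale averaged-gradient term; the two cancel, leaving only $-\frac{1}{\alpha\mu n}f^{\prime}(w)$ as required.

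There is no genuine analytic obstacle here: once the expectation is pushed through, the result is a one-line computation. The only point requiring care is recognising that the current $w$ already encodes the full stale average gradient through its defining relation \ref{eq:main-finito-update}, and it is precisely this encoding that makes the out-of-date gradient contributions vanish in expectation, leaving a clean gradient-descent step on the full objective $f$. I would close by noting the stated interpretation, that the effective step length $\frac{1}{\alpha\mu n}$ is proportional to $\frac{1}{L}$ under the big-data condition $n\geq\beta L/\mu$.
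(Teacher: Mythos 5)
Your argument is correct and is essentially the paper's own proof: take the conditional expectation of Equation \ref{eq:w-unexp} over the uniform index $j$, identify $\frac{1}{n}\sum_i f_i^{\prime}(w)$ with $f^{\prime}(w)$, and cancel the stale average-gradient term against $\frac{1}{n}(w-\bar{\phi})$ via the defining relation \ref{eq:main-finito-update}. No gaps.
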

\begin{proof}
\begin{eqnarray*}
\mathbb{E}[w{}^{k+1}]-w & = & \mathbb{E}\left[\frac{1}{n}(w-\phi_{j})-\frac{1}{\alpha\mu n}\left(f_{j}^{\prime}(w)-f_{j}^{\prime}(\phi_{j})\right)\right]\\
 & = & \frac{1}{n}(w-\bar{\phi})-\frac{1}{\alpha\mu n}f^{\prime}(w)+\frac{1}{\alpha\mu n^{2}}\sum_{i}f_{i}^{\prime}(\phi_{i}).
\end{eqnarray*}

Now simplify $\frac{1}{n}(w-\bar{\phi})$ as $-\frac{1}{\alpha\mu n^{2}}\sum_{i}f_{i}^{\prime}(\phi_{i})$,
so the only term that remains is $-\frac{1}{\alpha\mu n}f^{\prime}(w)$.
\end{proof}

\subsection{Main proof}

Our proof proceed by construction of a Lyapunov function $T$. That
is, a function that bounds a quantity of interest, and that decreases
each iteration in expectation. Our Lyapunov function $T=T_{1}+T_{2}+T_{3}+T_{4}$
is composed of the sum of the following terms:
\[
T_{1}=f(\bar{\phi}),
\]
\[
T_{2}=-\frac{1}{n}\sum_{i}f_{i}(\phi_{i})-\frac{1}{n}\sum_{i}\left\langle f_{i}^{\prime}(\phi_{i}),w-\phi_{i}\right\rangle ,
\]
\[
T_{3}=-\frac{\mu}{2n}\sum_{i}\left\Vert w-\phi_{i}\right\Vert ^{2},
\]
\[
T_{4}=\frac{\mu}{2n}\sum_{i}\left\Vert \bar{\phi}-\phi_{i}\right\Vert ^{2}.
\]

In the following set of Lemmas we determine how each term changes
between steps $k+1$ and $k$ in expectation. We use extensively that
$\phi_{j}^{k+1}=w$, and that $\phi_{i}^{k+1}=\phi_{i}$ for $i\neq j$. 
\begin{lem}
Between steps $k$ and $k+1$, the $T_{1}=f(\bar{\phi})$ term changes
as follows:
\[
\mathbb{E}[T_{1}^{k+1}]-T_{1}\leq\frac{1}{n}\left\langle f^{\prime}(\bar{\phi}),w-\bar{\phi}\right\rangle +\frac{L}{2n^{3}}\sum_{i}\left\Vert w-\phi_{i}\right\Vert ^{2}.
\]
\end{lem}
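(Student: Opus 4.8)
The plan is to apply the standard quadratic ``descent lemma'' upper bound for an $L$-Lipschitz smooth function directly to $f$ evaluated at the updated average $\bar{\phi}^{k+1}$, and then to take the expectation over the random index $j$. The whole argument is a one-step smoothness estimate followed by an averaging computation.

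First I would record how the average moves in a single step. Since the update sets $\phi_{j}^{k+1}=w$ and leaves every other coordinate fixed, the average satisfies $\bar{\phi}^{k+1}=\bar{\phi}+\frac{1}{n}(w-\phi_{j})$, so the displacement $\bar{\phi}^{k+1}-\bar{\phi}=\frac{1}{n}(w-\phi_{j})$ is entirely determined by the chosen index $j$ (all other quantities being at their step-$k$ values, per the conditioning convention fixed at the start of the section).

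Next I would note that $f=\frac{1}{n}\sum_{i}f_{i}$ inherits $L$-Lipschitz smoothness from the individual $f_{i}$ by the triangle inequality, so the quadratic upper bound $f(y)\le f(x)+\langle f'(x),\,y-x\rangle+\frac{L}{2}\|y-x\|^{2}$ is available with $x=\bar{\phi}$ and $y=\bar{\phi}^{k+1}$. Substituting the displacement from the previous step gives, for a fixed $j$, the pointwise bound $f(\bar{\phi}^{k+1})\le f(\bar{\phi})+\frac{1}{n}\langle f'(\bar{\phi}),\,w-\phi_{j}\rangle+\frac{L}{2n^{2}}\|w-\phi_{j}\|^{2}$.

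Finally, taking expectation over $j$ chosen uniformly, I would use $\mathbb{E}[\phi_{j}]=\frac{1}{n}\sum_{i}\phi_{i}=\bar{\phi}$ in the linear term and $\mathbb{E}[\|w-\phi_{j}\|^{2}]=\frac{1}{n}\sum_{i}\|w-\phi_{i}\|^{2}$ in the quadratic term; the extra factor $\frac{1}{n}$ from this averaging combines with $\frac{L}{2n^{2}}$ to produce $\frac{L}{2n^{3}}$, collapsing the bound to exactly the claimed inequality. There is no genuine obstacle here: the only points worth checking are that the quadratic upper bound is legitimately applied to the aggregate $f$ rather than merely to the $f_{i}$ (which the triangle-inequality smoothness argument secures), and that the expectation is the conditional one over $j$ given the previous iterates, which is precisely the convention already in force.
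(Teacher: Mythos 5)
Your proposal is correct and follows essentially the same route as the paper's proof: apply the Lipschitz smoothness upper bound to $f$ at $\bar{\phi}^{k+1}=\bar{\phi}+\frac{1}{n}(w-\phi_{j})$ around $\bar{\phi}$, then take the conditional expectation over $j$, using $\mathbb{E}[\phi_{j}]=\bar{\phi}$ and $\mathbb{E}\left\Vert w-\phi_{j}\right\Vert ^{2}=\frac{1}{n}\sum_{i}\left\Vert w-\phi_{i}\right\Vert ^{2}$. Your explicit remark that $f$ inherits the smoothness constant $L$ from the individual $f_{i}$ is a detail the paper leaves implicit, but it does not change the argument.
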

\begin{proof}
First we use the standard Lipschitz smoothness upper bound (Theorem
\ref{thm:lipschitz-ub}):
\[
f(y)\leq f(x)+\left\langle f^{\prime}(x),y-x\right\rangle +\frac{L}{2}\left\Vert x-y\right\Vert ^{2}.
\]

We can apply this using $y=\bar{\phi}^{k+1}=\bar{\phi}+\frac{1}{n}(w-\phi_{j})$
and $x=\bar{\phi}$:

\begin{eqnarray*}
f(\bar{\phi}^{k+1}) & \leq & f(\bar{\phi})+\frac{1}{n}\left\langle f^{\prime}(\bar{\phi}),w-\phi_{j}\right\rangle +\frac{L}{2n^{2}}\left\Vert w-\phi_{j}\right\Vert ^{2}.
\end{eqnarray*}

We now take expectations over $j$, giving:
\[
\mathbb{E}[f(\bar{\phi}^{k+1})]-f(\bar{\phi})\leq\frac{1}{n}\left\langle f^{\prime}(\bar{\phi}),w-\bar{\phi}\right\rangle +\frac{L}{2n^{3}}\sum_{i}\left\Vert w-\phi_{i}\right\Vert ^{2}.
\]
\end{proof}
\begin{lem}
Between steps $k$ and $k+1$, the $T_{2}=-\frac{1}{n}\sum_{i}f_{i}(\phi_{i})-\frac{1}{n}\sum_{i}\left\langle f_{i}^{\prime}(\phi_{i}),w-\phi_{i}\right\rangle $
term changes as follows:
\begin{eqnarray*}
\mathbb{E}[T_{2}^{k+1}]-T_{2} & \leq & -\frac{1}{n}T_{2}-\frac{1}{n}f(w)\\
 & + & (\frac{1}{\alpha}-\frac{\beta}{n})\frac{1}{sn^{3}}\sum_{i}\left\Vert f_{i}^{\prime}(w)-f_{i}^{\prime}(\phi_{i})\right\Vert ^{2}\\
 & + & \frac{1}{n}\left\langle \bar{\phi}-w,f^{\prime}(w)\right\rangle -\frac{1}{n^{3}}\sum_{i}\left\langle f_{i}^{\prime}(w)-f_{i}^{\prime}(\phi_{i}),w-\phi_{i}\right\rangle .
\end{eqnarray*}
\end{lem}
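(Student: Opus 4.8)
The plan is to track how $T_2$ changes under the two simultaneous effects of a Finito step: the relinearisation of a single term ($\phi_j\to w$) and the shift of the evaluation point ($w\to w^{k+1}$). First I would introduce an intermediate quantity that uses the \emph{updated} table but the \emph{old} iterate, namely $\tilde{T}_2=-\frac{1}{n}\sum_i[f_i(\phi_i^{k+1})+\langle f_i'(\phi_i^{k+1}),w-\phi_i^{k+1}\rangle]$, and split the change as $T_2^{k+1}-T_2=(\tilde{T}_2-T_2)+(T_2^{k+1}-\tilde{T}_2)$. This isolates the two effects so they can be handled independently.

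For the relinearisation part $\tilde{T}_2-T_2$, only coordinate $j$ changes, and since $\phi_j^{k+1}=w$ the $j$-th summand collapses to the Bregman-type quantity $f_j(w)-f_j(\phi_j)-\langle f_j'(\phi_j),w-\phi_j\rangle$. Taking the expectation over the uniform choice of $j$ turns $\frac{1}{n}\sum_j f_j(w)$ into $f(w)$ and $\frac{1}{n}\sum_j[f_j(\phi_j)+\langle f_j'(\phi_j),w-\phi_j\rangle]$ into $-T_2$, so that $\mathbb{E}[\tilde{T}_2-T_2]=-\frac{1}{n}f(w)-\frac{1}{n}T_2$. This already yields the first two terms of the claim, exactly and with no inequality.

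For the $w$-shift part, each linearisation is affine in its evaluation argument with slope $f_i'(\phi_i^{k+1})$, so $T_2^{k+1}-\tilde{T}_2=-\frac{1}{n}\langle\sum_i f_i'(\phi_i^{k+1}),\,w^{k+1}-w\rangle$. I would then substitute $\sum_i f_i'(\phi_i^{k+1})=\sum_i f_i'(\phi_i)+(f_j'(w)-f_j'(\phi_j))$ together with the combined step formula from Equation \ref{eq:w-unexp} for $w^{k+1}-w$, expand the inner product, and average over $j$. The crucial simplification is the identity $\sum_i f_i'(\phi_i)=\alpha\mu n(\bar{\phi}-w)$ read directly off the update in Equation \ref{eq:main-finito-update}: it forces the two terms proportional to $\|\sum_i f_i'(\phi_i)\|^2$ to cancel and converts the surviving linear-in-$\sum_i f_i'(\phi_i)$ piece into $\frac{1}{n}\langle\bar{\phi}-w,f'(w)\rangle$. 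What is left over is precisely the cross term $-\frac{1}{n^3}\sum_i\langle f_i'(w)-f_i'(\phi_i),w-\phi_i\rangle$ together with a quadratic term in $\sum_i\|f_i'(w)-f_i'(\phi_i)\|^2$.

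Summing the two parts reproduces every term of the stated bound; the only loose end is the coefficient of the quadratic term, which emerges from the computation as the exact value $\frac{1}{\alpha\mu n^3}$, so the final move is to rewrite (bound) it in the stated form $\left(\frac{1}{\alpha}-\frac{\beta}{n}\right)\frac{1}{sn^3}$, which is where the big-data condition $n\geq\beta L/\mu$ and the constant $s$ enter. I expect the genuinely delicate step to be the expectation of $\langle\sum_i f_i'(\phi_i^{k+1}),\,w^{k+1}-w\rangle$: one must keep careful track of which sums include the sampled index $j$, verify the $\|\sum_i f_i'(\phi_i)\|^2$ cancellation, and get every power of $n$ and every sign correct. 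Nothing here is conceptually hard, but it is the most error-prone bookkeeping in the argument.
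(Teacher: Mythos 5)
Your proposal is correct and follows essentially the same route as the paper: the split into a relinearisation step at fixed $w$ plus a shift $w\to w^{k+1}$ is exactly the paper's decomposition of $T_{21}$ and $T_{22}$ via $w^{k+1}-w+w-\phi_i^{k+1}$, and the use of Equation \ref{eq:w-unexp} together with the identity $\sum_{i}f_{i}^{\prime}(\phi_{i})=\alpha\mu n(\bar{\phi}-w)$ reproduces every term, with your direct cancellation of the $\bigl\Vert\sum_{i}f_{i}^{\prime}(\phi_{i})\bigr\Vert^{2}$ pieces being equivalent to the paper's invocation of the expected-step lemma. The one caveat is your final ``loose end'': the computation genuinely ends at the coefficient $\frac{1}{\alpha\mu n^{3}}$ (as does the paper's own proof, and that is the value used downstream in Theorem \ref{thm:finito-main}), and it cannot be validly ``bounded'' down to $\left(\frac{1}{\alpha}-\frac{\beta}{n}\right)\frac{1}{\mu n^{3}}$ since that would shrink the coefficient of a nonnegative term in an upper bound --- the discrepancy is an artefact of the lemma's statement, not something the big-data condition resolves at this stage.
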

\begin{proof}
We introduce the notation $T_{21}=-\frac{1}{n}\sum_{i}f_{i}(\phi_{i})$
and $T_{22}=-\frac{1}{n}\sum_{i}\left\langle f_{i}^{\prime}(\phi_{i}),w-\phi_{i}\right\rangle $.
We simplify the change in $T_{21}$ first using $\phi_{j}^{k+1}=w$:
\begin{eqnarray*}
T_{21}^{k+1}-T_{21} & = & -\frac{1}{n}\sum_{i}f_{i}(\phi_{i}^{k+1})+\frac{1}{n}\sum_{i}f_{i}(\phi_{i})\\
 & = & -\frac{1}{n}\sum_{i}f_{i}(\phi_{i})+\frac{1}{n}f_{j}(\phi_{j})-\frac{1}{n}f_{j}(w)+\frac{1}{n}\sum_{i}f_{i}(\phi_{i})\\
 & = & \frac{1}{n}f_{j}(\phi_{j})-\frac{1}{n}f_{j}(w).
\end{eqnarray*}

Now we simplify the change in $T_{22}$:

\[
T_{22}^{k+1}-T_{22}=-\frac{1}{n}\sum_{i}\left\langle f_{i}^{\prime}(\phi_{i}^{k+1}),w^{k+1}-w+w-\phi_{i}^{k+1}\right\rangle -T_{22}
\]
\begin{equation}
\therefore T_{22}^{k+1}-T_{22}=-\frac{1}{n}\sum_{i}\left\langle f_{i}^{\prime}(\phi_{i}^{k+1}),w-\phi_{i}^{k+1}\right\rangle -T_{22}-\frac{1}{n}\sum_{i}\left\langle f_{i}^{\prime}(\phi_{i}^{k+1}),w^{k+1}-w\right\rangle .\label{eq:t2-start}
\end{equation}

We now simplify the first two terms using $\phi_{j}^{k+1}=w$:
\begin{eqnarray*}
-\frac{1}{n}\sum_{i}\left\langle f_{i}^{\prime}(\phi_{i}^{k+1}),w-\phi_{i}^{k+1}\right\rangle -T_{22} & = & T_{22}-\frac{1}{n}\left\langle f_{j}^{\prime}(\phi_{j}),w-\phi_{j}\right\rangle \\
 &  & +\frac{1}{n}\left\langle f_{j}^{\prime}(w),w-w\right\rangle -T_{22}\\
 & = & \frac{1}{n}\left\langle f_{j}^{\prime}(\phi_{j}),w-\phi_{j}\right\rangle .
\end{eqnarray*}
The last term of Equation \ref{eq:t2-start} expands further: 
\begin{eqnarray}
 &  & -\frac{1}{n}\sum_{i}\left\langle f_{i}^{\prime}(\phi_{i}^{k+1}),w^{k+1}-w\right\rangle \nonumber \\
 & = & -\frac{1}{n}\left\langle \sum_{i}f_{i}^{\prime}(\phi_{i})-f_{j}^{\prime}(\phi_{j})+f_{j}^{\prime}(w),w^{k+1}-w\right\rangle \\
 & = & -\frac{1}{n}\left\langle \sum_{i}f_{i}^{\prime}(\phi_{i}),w^{k+1}-w\right\rangle -\frac{1}{n}\left\langle f_{j}^{\prime}(w)-f_{j}^{\prime}(\phi_{j}),w^{k+1}-w\right\rangle .\label{eq:t2-ip-exp}
\end{eqnarray}

The second inner product term in \ref{eq:t2-ip-exp} simplifies further
using Equation \ref{eq:w-unexp}:
\begin{eqnarray*}
 &  & -\frac{1}{n}\left\langle f_{j}^{\prime}(w)-f_{j}^{\prime}(\phi_{j}),w^{k+1}-w\right\rangle \\
 & = & -\frac{1}{n}\left\langle f_{j}^{\prime}(w)-f_{j}^{\prime}(\phi_{j}),\frac{1}{n}(w-\phi_{j})+\frac{1}{\alpha\mu n}\left[f_{j}^{\prime}(\phi_{j})-f_{j}^{\prime}(w)\right]\right\rangle \\
 & = & -\frac{1}{n^{2}}\left\langle f_{j}^{\prime}(w)-f_{j}^{\prime}(\phi_{j}),w-\phi_{j}\right\rangle -\frac{1}{\alpha\mu n^{2}}\left\langle f_{j}^{\prime}(w)-f_{j}^{\prime}(\phi_{j}),f_{j}^{\prime}(\phi_{j})-f_{j}^{\prime}(w)\right\rangle .
\end{eqnarray*}

We simplify the second term: 
\begin{eqnarray*}
-\frac{1}{\alpha\mu n^{2}}\left\langle f_{j}^{\prime}(w)-f_{j}^{\prime}(\phi_{j}),f_{j}^{\prime}(\phi_{j})-f_{j}^{\prime}(w)\right\rangle  & = & \frac{1}{\alpha\mu n^{2}}\left\Vert f_{j}^{\prime}(w)-f_{j}^{\prime}(\phi_{j})\right\Vert ^{2}.
\end{eqnarray*}

Grouping all remaining terms gives:
\begin{eqnarray*}
T_{2}^{k+1}-T_{2} & \leq & \frac{1}{n}f_{j}(\phi_{j})+\frac{1}{n}\left\langle f_{j}^{\prime}(\phi_{j}),w-\phi_{j}\right\rangle -\frac{1}{n}f_{j}(w)\\
 & + & \frac{1}{\alpha\mu n^{2}}\left\Vert f_{j}^{\prime}(w)-f_{j}^{\prime}(\phi_{j})\right\Vert ^{2}-\frac{1}{n^{2}}\left\langle f_{j}^{\prime}(w)-f_{j}^{\prime}(\phi_{j}),w-\phi_{j}\right\rangle \\
 & - & \frac{1}{n}\left\langle \sum_{i}f_{i}^{\prime}(\phi_{i}),w^{k+1}-w\right\rangle .
\end{eqnarray*}

We now take expectations of each remaining term. For the bottom inner
product we use Lemma 1:
\begin{eqnarray*}
-\frac{1}{n}\left\langle \sum_{i}f_{i}^{\prime}(\phi_{i}),w^{k+1}-w\right\rangle  & = & \frac{1}{\alpha\mu n^{2}}\left\langle \sum_{i}f_{i}^{\prime}(\phi_{i}),f^{\prime}(w)\right\rangle \\
 & = & \frac{1}{n}\left\langle \bar{\phi}-w,f^{\prime}(w)\right\rangle .
\end{eqnarray*}

Taking expectations of the remaining terms is straight forward. We
get:
\begin{eqnarray*}
\mathbb{E}[T_{2}^{k+1}]-T_{2} & \leq & \frac{1}{n^{2}}\sum_{i}f_{i}(\phi_{i})-\frac{1}{n}f(w)+\frac{1}{n^{2}}\sum_{i}\left\langle f_{i}^{\prime}(\phi_{i}),w-\phi_{i}\right\rangle \\
 & + & \frac{1}{\alpha\mu n^{3}}\sum_{i}\left\Vert f_{i}^{\prime}(w)-f_{i}^{\prime}(\phi_{i})\right\Vert ^{2}-\frac{1}{n^{3}}\sum_{i}\left\langle f_{i}^{\prime}(w)-f_{i}^{\prime}(\phi_{i}),w-\phi_{i}\right\rangle \\
 & + & \frac{1}{n}\left\langle \bar{\phi}-w,f^{\prime}(w)\right\rangle .
\end{eqnarray*}
\end{proof}
\begin{lem}
Between steps $k$ and $k+1$, the $T_{3}=-\frac{\mu}{2n}\sum_{i}\left\Vert w-\phi_{i}\right\Vert ^{2}$
term changes as follows:
\begin{eqnarray*}
\mathbb{E}[T_{3}^{k+1}]-T_{3} & = & -(1+\frac{1}{n})\frac{1}{n}T_{3}\\
 & + & \frac{1}{\alpha n}\left\langle f^{\prime}(w),w-\bar{\phi}\right\rangle -\frac{1}{2\alpha^{2}\mu n^{3}}\sum_{i}\left\Vert f_{i}^{\prime}(\phi_{i})-f_{i}^{\prime}(w)\right\Vert ^{2}.
\end{eqnarray*}
\end{lem}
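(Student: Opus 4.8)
The plan is to expand $T_3^{k+1}=-\frac{\mu}{2n}\sum_i\left\Vert w^{k+1}-\phi_i^{k+1}\right\Vert ^2$ directly, using $\phi_j^{k+1}=w$ and $\phi_i^{k+1}=\phi_i$ for $i\neq j$, and then average over the uniform choice of $j$. Writing $\delta_j:=w^{k+1}-w$, the $i=j$ summand contributes $\left\Vert \delta_j\right\Vert ^2$, while each $i\neq j$ summand expands as $\left\Vert w-\phi_i\right\Vert ^2+2\left\langle w-\phi_i,\delta_j\right\rangle +\left\Vert \delta_j\right\Vert ^2$. Collecting these against $T_3$ and reinstating the single $i=j$ square that was dropped from the full sum gives, for a fixed $j$,
\[
T_3^{k+1}-T_3=-\frac{\mu}{2}\left\Vert \delta_j\right\Vert ^2+\frac{\mu}{2n}\left\Vert w-\phi_j\right\Vert ^2-\frac{\mu}{n}\sum_{i\neq j}\left\langle w-\phi_i,\delta_j\right\rangle .
\]

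Next I would handle the cross term by splitting $\sum_{i\neq j}(w-\phi_i)=n(w-\bar{\phi})-(w-\phi_j)$, which separates it into $-\mu\left\langle w-\bar{\phi},\delta_j\right\rangle $ and $\frac{\mu}{n}\left\langle w-\phi_j,\delta_j\right\rangle $. The first piece is settled in expectation by Lemma 1: since $\mathbb{E}[\delta_j]=-\frac{1}{\alpha\mu n}f^{\prime}(w)$, it contributes exactly $\frac{1}{\alpha n}\left\langle f^{\prime}(w),w-\bar{\phi}\right\rangle $, the middle term of the claim. All that then remains is to evaluate the expectations of $-\frac{\mu}{2}\left\Vert \delta_j\right\Vert ^2$, $\frac{\mu}{2n}\left\Vert w-\phi_j\right\Vert ^2$ and $\frac{\mu}{n}\left\langle w-\phi_j,\delta_j\right\rangle $ after substituting the explicit step $\delta_j=\frac{1}{n}(w-\phi_j)-\frac{1}{\alpha\mu n}(f_j^{\prime}(w)-f_j^{\prime}(\phi_j))$ from Equation \ref{eq:w-unexp}.

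The key computation, and the step I expect to be the main obstacle, is verifying that the mixed inner products $\left\langle w-\phi_j,\,f_j^{\prime}(w)-f_j^{\prime}(\phi_j)\right\rangle $ cancel exactly. Expanding $-\frac{\mu}{2}\left\Vert \delta_j\right\Vert ^2$ generates such a term with per-$j$ coefficient $+\frac{1}{\alpha n^2}$, while $\frac{\mu}{n}\left\langle w-\phi_j,\delta_j\right\rangle $ generates one with coefficient $-\frac{1}{\alpha n^2}$, so they annihilate. What survives is purely squared norms: the $\left\Vert w-\phi_j\right\Vert ^2$ contributions collect with per-$j$ coefficient $\frac{\mu}{2n}+\frac{\mu}{2n^2}$, and the gradient difference $\left\Vert f_j^{\prime}(w)-f_j^{\prime}(\phi_j)\right\Vert ^2$ appears with coefficient $-\frac{1}{2\alpha^2\mu n^2}$.

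Finally I would average over $j$ (the uniform expectation supplies a factor $\frac{1}{n}$) and convert back using the identity $\sum_i\left\Vert w-\phi_i\right\Vert ^2=-\frac{2n}{\mu}T_3$. The surviving $\left\Vert w-\phi_j\right\Vert ^2$ terms then give $(\frac{\mu}{2n}+\frac{\mu}{2n^2})\cdot(-\frac{2}{\mu}T_3)=-(1+\frac{1}{n})\frac{1}{n}T_3$, while the gradient-difference term becomes $-\frac{1}{2\alpha^2\mu n^3}\sum_i\left\Vert f_i^{\prime}(\phi_i)-f_i^{\prime}(w)\right\Vert ^2$. Together with the middle term recovered via Lemma 1, this is exactly the stated equality. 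Because every manipulation is an identity (expansion of squares, the split of the cross term, and the exact expectation from Lemma 1), the result holds with equality rather than as an upper bound, unlike the previous two lemmas.
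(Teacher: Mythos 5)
Your proof is correct and follows essentially the same route as the paper's: expand $\left\Vert w^{k+1}-\phi_{i}^{k+1}\right\Vert ^{2}$ around $w$, isolate the $i=j$ term via $\phi_{j}^{k+1}=w$, substitute the explicit step $w^{k+1}-w$ from Equation \ref{eq:w-unexp}, observe the exact cancellation of the $\left\langle w-\phi_{j},f_{j}^{\prime}(w)-f_{j}^{\prime}(\phi_{j})\right\rangle $ terms, and finish with the expected-step lemma. Your bookkeeping of the $\left\Vert w-\phi_{j}\right\Vert ^{2}$ coefficient as $\frac{\mu}{2n}+\frac{\mu}{2n^{2}}$ correctly reproduces the stated $-(1+\frac{1}{n})\frac{1}{n}T_{3}$, whereas the final display in the paper's own proof contains a coefficient typo ($\frac{1}{2}-\frac{1}{n}$ in place of $\frac{1}{2}+\frac{1}{2n}$) that your version avoids.
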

\begin{proof}
We expand as:
\begin{eqnarray}
T_{3}^{k+1} & = & -\frac{\mu}{2n}\sum_{i}\left\Vert w^{k+1}-\phi_{i}^{k+1}\right\Vert ^{2}\nonumber \\
 & = & -\frac{\mu}{2n}\sum_{i}\left\Vert w^{k+1}-w+w-\phi_{i}^{k+1}\right\Vert ^{2}\\
 & = & -\frac{\mu}{2}\left\Vert w^{k+1}-w\right\Vert ^{2}-\frac{\mu}{2n}\sum_{i}\left\Vert w-\phi_{i}^{k+1}\right\Vert ^{2}\label{eq:t3-initial}\\
 &  & -\frac{\mu}{n}\sum_{i}\left\langle w^{k+1}-w,w-\phi_{i}^{k+1}\right\rangle .
\end{eqnarray}

We expand the three terms on the right separately. For the first term:
\begin{eqnarray}
-\frac{\mu}{2}\left\Vert w^{k+1}-w\right\Vert ^{2} & = & -\frac{\mu}{2}\left\Vert \frac{1}{n}(w-\phi_{j})+\frac{1}{\alpha\mu n}\left(f_{j}(\phi_{j})-f_{j}(w)\right)\right\Vert ^{2}\nonumber \\
 & =- & \frac{\mu}{2n^{2}}\left\Vert w-\phi_{j}\right\Vert ^{2}-\frac{1}{2\alpha^{2}\mu n^{2}}\left\Vert f_{j}(\phi_{j})-f_{j}(w)\right\Vert ^{2}\nonumber \\
 & - & \frac{1}{\alpha n^{2}}\left\langle f_{j}(\phi_{j})-f_{j}(w),w-\phi_{j}\right\rangle .\label{eq:wkp-minus-w-expansion}
\end{eqnarray}

For the second term of Equation \ref{eq:t3-initial}, using $\phi_{j}^{k+1}=w$:
\begin{eqnarray*}
-\frac{\mu}{2n}\sum_{i}\left\Vert w-\phi_{i}^{k+1}\right\Vert ^{2} & = & -\frac{\mu}{2n}\sum_{i}\left\Vert w-\phi_{i}\right\Vert ^{2}+\frac{\mu}{2n}\left\Vert w-\phi_{j}\right\Vert ^{2}\\
 & = & T_{3}+\frac{\mu}{2n}\left\Vert w-\phi_{j}\right\Vert ^{2}.
\end{eqnarray*}

For the third term of Equation \ref{eq:t3-initial}:
\begin{eqnarray}
 &  & -\frac{\mu}{n}\sum_{i}\left\langle w^{k+1}-w,w-\phi_{i}^{k+1}\right\rangle \nonumber \\
 & = & -\frac{\mu}{n}\sum_{i}\left\langle w^{k+1}-w,w-\phi_{i}\right\rangle +\frac{\mu}{n}\left\langle w^{k+1}-w,w-\phi_{j}\right\rangle \\
 & = & -\mu\left\langle w^{k+1}-w,w-\frac{1}{n}\sum_{i}\phi_{i}\right\rangle +\frac{\mu}{n}\left\langle w^{k+1}-w,w-\phi_{j}\right\rangle .\label{eq:ip-terms-t3}
\end{eqnarray}

The second inner product term in Equation \ref{eq:ip-terms-t3} becomes
(using Equation \ref{eq:w-unexp}):
\begin{eqnarray*}
\frac{\mu}{n}\left\langle w^{k+1}-w,w-\phi_{j}\right\rangle  & = & \frac{\mu}{n}\left\langle \frac{1}{n}(w-\phi_{j})+\frac{1}{\alpha\mu n}\left[f_{j}^{\prime}(\phi_{j})-f_{j}^{\prime}(w)\right],w-\phi_{j}\right\rangle \\
 & = & \frac{\mu}{n^{2}}\left\Vert w-\phi_{j}\right\Vert ^{2}+\frac{1}{\alpha n^{2}}\left\langle f_{j}^{\prime}(\phi_{j})-f_{j}^{\prime}(w),w-\phi_{j}\right\rangle .
\end{eqnarray*}

Notice that the inner product term here cancels with the one in \ref{eq:wkp-minus-w-expansion}.

Now we can take expectations of each remaining term. Recall that $\mathbb{E}[w^{k+1}]-w=-\frac{1}{\alpha\mu n}f^{\prime}(w)$,
so the first inner product term in \ref{eq:ip-terms-t3} becomes:
\[
-\mu\mathbb{E}\left[\left\langle w^{k+1}-w,w-\frac{1}{n}\sum_{i}\phi_{i}\right\rangle \right]=\frac{1}{\alpha n}\left\langle f^{\prime}(w),w-\bar{\phi}\right\rangle .
\]

All other terms don't simplify under expectations. So the result is:
\begin{eqnarray*}
\mathbb{E}[T_{3}^{k+1}]-T_{3} & = & (\frac{1}{2}-\frac{1}{n})\frac{\mu}{n^{2}}\sum_{i}\left\Vert w-\phi_{i}\right\Vert ^{2}\\
 & + & \frac{1}{\alpha n}\left\langle f^{\prime}(w),w-\bar{\phi}\right\rangle -\frac{1}{2\alpha^{2}\mu n^{3}}\sum_{i}\left\Vert f_{i}(\phi_{i})-f_{i}(w)\right\Vert ^{2}.
\end{eqnarray*}
\end{proof}
\begin{lem}
Between steps $k$ and $k+1$, the $T_{4}=\frac{\mu}{2n}\sum_{i}\left\Vert \bar{\phi}-\phi_{i}\right\Vert ^{2}$
term changes as follows:

\[
\mathbb{E}[T_{4}^{k+1}]-T_{4}=-\frac{\mu}{2n^{2}}\sum_{i}\left\Vert \bar{\phi}-\phi_{i}\right\Vert ^{2}+\frac{\mu}{2n}\left\Vert \bar{\phi}-w\right\Vert ^{2}-\frac{\mu}{2n^{3}}\sum_{i}\left\Vert w-\phi_{i}\right\Vert ^{2}.
\]
\end{lem}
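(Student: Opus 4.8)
The plan is to exploit the fact that $T_{4}$ is a multiple of the sum of squared deviations of the points $\phi_{i}$ about their own mean $\bar\phi$, and to handle the simultaneous change of both the mean and a single point by routing everything through the variance decomposition (parallel-axis) identity $\sum_{i}\|v-\phi_{i}\|^{2}=\sum_{i}\|\bar\phi-\phi_{i}\|^{2}+n\|v-\bar\phi\|^{2}$, which holds for any fixed $v$ precisely because $\bar\phi$ is the mean and so $\sum_{i}(\bar\phi-\phi_{i})=0$.

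First I would fix the chosen index $j$ and record the two quantities that move at this step: the replaced point $\phi_{j}^{k+1}=w$ (with $\phi_{i}^{k+1}=\phi_{i}$ for $i\neq j$), and the induced shift of the mean $\bar\phi^{k+1}-\bar\phi=\frac{1}{n}(w-\phi_{j})$. Writing $S:=\sum_{i}\|\bar\phi-\phi_{i}\|^{2}$ and $S^{k+1}:=\sum_{i}\|\bar\phi^{k+1}-\phi_{i}^{k+1}\|^{2}$, I would apply the decomposition identity to the \emph{new} configuration $\{\phi_{i}^{k+1}\}$ with reference point $v=\bar\phi$, giving $S^{k+1}=\sum_{i}\|\bar\phi-\phi_{i}^{k+1}\|^{2}-n\|\bar\phi-\bar\phi^{k+1}\|^{2}$. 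This isolates $S^{k+1}$ cleanly from the moving mean.

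Next I would evaluate the two pieces for fixed $j$. Since only index $j$ changes, $\sum_{i}\|\bar\phi-\phi_{i}^{k+1}\|^{2}=S-\|\bar\phi-\phi_{j}\|^{2}+\|\bar\phi-w\|^{2}$, while the mean-shift correction is $n\|\bar\phi-\bar\phi^{k+1}\|^{2}=\frac{1}{n}\|w-\phi_{j}\|^{2}$. Substituting yields a per-$j$ formula for $S^{k+1}$ that is linear in the three quantities $\|\bar\phi-\phi_{j}\|^{2}$, $\|\bar\phi-w\|^{2}$ and $\|w-\phi_{j}\|^{2}$. I would then take the expectation over the uniform choice of $j$, using $\mathbb{E}_{j}\|\bar\phi-\phi_{j}\|^{2}=\frac{1}{n}S$ and $\mathbb{E}_{j}\|w-\phi_{j}\|^{2}=\frac{1}{n}\sum_{i}\|w-\phi_{i}\|^{2}$ (the term $\|\bar\phi-w\|^{2}$ being independent of $j$), multiply by $\frac{\mu}{2n}$, and subtract $T_{4}=\frac{\mu}{2n}S$ to read off the three claimed terms.

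I do not expect a genuine obstacle, and indeed the statement is an exact equality rather than an inequality, which is a good sign that no bounding is involved. The one subtlety worth flagging — and the reason I would use the parallel-axis identity rather than expanding $\|\bar\phi^{k+1}-\phi_{i}^{k+1}\|^{2}$ directly — is that a naive expansion of the square produces cross terms coupling the mean shift $\frac{1}{n}(w-\phi_{j})$ to every $\phi_{i}$, which are tedious to track and whose cancellation is not manifest; the decomposition makes that cancellation automatic and keeps the bookkeeping minimal.
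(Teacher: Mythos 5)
Your proposal is correct and reaches exactly the per-$j$ identity the paper derives, namely $S^{k+1}=S-\|\bar\phi-\phi_j\|^2+\|\bar\phi-w\|^2-\tfrac{1}{n}\|w-\phi_j\|^2$, before taking the expectation over $j$. The only difference is organizational: the paper expands $\|\bar\phi^{k+1}-\bar\phi+\bar\phi-\phi_i^{k+1}\|^2$ around the old mean and simplifies the surviving cross term via $\sum_i(\bar\phi-\phi_i^{k+1})=-\,(w-\phi_j)$, whereas you invoke the parallel-axis identity about the new mean so the cross term vanishes automatically — the same computation, slightly cleaner bookkeeping.
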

\begin{proof}
Note that $\bar{\phi}^{k+1}-\bar{\phi}=\frac{1}{n}(w-\phi_{j})$,
so:
\begin{eqnarray*}
T_{4}^{k+1} & = & \frac{\mu}{2n}\sum_{i}\left\Vert \bar{\phi}^{k+1}-\bar{\phi}+\bar{\phi}-\phi_{i}^{k+1}\right\Vert ^{2}\\
 & = & \frac{\mu}{2n}\sum_{i}\left(\left\Vert \bar{\phi}^{k+1}-\bar{\phi}\right\Vert ^{2}+\left\Vert \bar{\phi}-\phi_{i}^{k+1}\right\Vert ^{2}+2\left\langle \bar{\phi}^{k+1}-\bar{\phi},\bar{\phi}-\phi_{i}^{k+1}\right\rangle \right)\\
 & = & \frac{\mu}{2n}\sum_{i}\left(\left\Vert \frac{1}{n}(w-\phi_{j})\right\Vert ^{2}+\left\Vert \bar{\phi}-\phi_{i}^{k+1}\right\Vert ^{2}+\frac{2}{n}\left\langle w-\phi_{j},\bar{\phi}-\phi_{i}^{k+1}\right\rangle \right).
\end{eqnarray*}
Now using $\frac{1}{n}\sum_{i}\left(\bar{\phi}-\phi_{i}^{k+1}\right)=\bar{\phi}-\bar{\phi}^{k+1}=-\frac{1}{n}(w-\phi_{j})$
to simplify the inner product term:
\begin{eqnarray}
 & = & \frac{\mu}{2n^{2}}\left\Vert w-\phi_{j}\right\Vert ^{2}+\frac{\mu}{2n}\sum_{i}\left\Vert \bar{\phi}-\phi_{i}^{k+1}\right\Vert ^{2}+\frac{\mu}{n^{2}}\left\langle w-\phi_{j},\phi_{j}-w\right\rangle \nonumber \\
 & = & \frac{\mu}{2n^{2}}\left\Vert w-\phi_{j}\right\Vert ^{2}+\frac{\,u}{2n}\sum_{i}\left\Vert \bar{\phi}-\phi_{i}^{k+1}\right\Vert ^{2}-\frac{\mu}{n}\left\Vert w-\phi_{j}\right\Vert ^{2}\nonumber \\
 & = & \frac{\mu}{2n}\sum_{i}\left\Vert \bar{\phi}-\phi_{i}^{k+1}\right\Vert ^{2}-\frac{\mu}{2n}\left\Vert w-\phi_{j}\right\Vert ^{2}\nonumber \\
 & = & \frac{\mu}{2n}\sum_{i}\left\Vert \bar{\phi}-\phi_{i}\right\Vert ^{2}-\frac{\mu}{2n}\left\Vert \bar{\phi}-\phi_{j}\right\Vert ^{2}+\frac{\mu}{2n}\left\Vert \bar{\phi}-w\right\Vert ^{2}-\frac{\mu}{2n^{2}}\left\Vert w-\phi_{j}\right\Vert ^{2}.\label{eq:right-term}
\end{eqnarray}

Taking expectations gives the result.\end{proof}
\begin{lem}
\label{lem:scaled-lemma}Take $f(x)=\frac{1}{n}\sum_{i}f_{i}(x)$,
with the big-data condition holding with constant $\beta$. Then for
any $x$ and $\phi_{i}$ vectors: 
\begin{eqnarray*}
f(x) & \geq & \frac{1}{n}\sum_{i}f_{i}(\phi_{i})+\frac{1}{n}\sum_{i}\left\langle f_{i}^{\prime}(\phi_{i}),x-\phi_{i}\right\rangle +\frac{\beta}{2\mu n^{2}}\sum_{i}\left\Vert f_{i}^{\prime}(x)-f_{i}^{\prime}(\phi_{i})\right\Vert ^{2}\\
 &  & +\frac{\beta L}{2n^{2}}\sum_{i}\left\Vert x-\phi_{i}\right\Vert ^{2}+\frac{\beta}{n^{2}}\sum_{i}\left\langle f_{i}^{\prime}(x)-f_{i}^{\prime}(\phi_{i}),\phi_{i}-x\right\rangle .
\end{eqnarray*}
\end{lem}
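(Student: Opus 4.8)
The plan is to reduce the statement to a per-term inequality for each $f_{i}$ and then use the big-data condition to relax the constants. Since each $f_{i}$ is $\mu$-strongly convex and $L$-Lipschitz smooth, the shifted function $g_{i}(x)=f_{i}(x)-\frac{\mu}{2}\left\Vert x\right\Vert ^{2}$ is convex and $(L-\mu)$-Lipschitz smooth. First I would apply the standard co-coercivity (Bregman) lower bound for convex smooth functions to $g_{i}$, namely $g_{i}(x)\geq g_{i}(\phi_{i})+\left\langle g_{i}^{\prime}(\phi_{i}),x-\phi_{i}\right\rangle +\frac{1}{2(L-\mu)}\left\Vert g_{i}^{\prime}(x)-g_{i}^{\prime}(\phi_{i})\right\Vert ^{2}$, and substitute $g_{i}^{\prime}=f_{i}^{\prime}-\mu(\cdot)$ back in. After collecting the quadratic terms this yields the tight interpolation inequality
\[
f_{i}(x)\geq f_{i}(\phi_{i})+\left\langle f_{i}^{\prime}(\phi_{i}),x-\phi_{i}\right\rangle +\frac{\mu}{2}\left\Vert x-\phi_{i}\right\Vert ^{2}+\frac{1}{2(L-\mu)}\left\Vert f_{i}^{\prime}(x)-f_{i}^{\prime}(\phi_{i})-\mu(x-\phi_{i})\right\Vert ^{2}.
\]
Averaging over $i$ gives a lower bound on $f(x)$ whose quadratic part is $\frac{1}{n}\sum_{i}\left[\frac{1}{2(L-\mu)}\left\Vert u_{i}-\mu v_{i}\right\Vert ^{2}+\frac{\mu}{2}\left\Vert v_{i}\right\Vert ^{2}\right]$, where I abbreviate $u_{i}=f_{i}^{\prime}(x)-f_{i}^{\prime}(\phi_{i})$ and $v_{i}=x-\phi_{i}$, while the linear part matches the first two terms of the target exactly.

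The key observation, and the step I expect to carry the real content, is that the quadratic part of the \emph{target} right-hand side can be recombined into exactly the same two non-negative building blocks. Using $\left\langle f_{i}^{\prime}(x)-f_{i}^{\prime}(\phi_{i}),\phi_{i}-x\right\rangle =-\left\langle u_{i},v_{i}\right\rangle$ and completing the square via the identity $\frac{1}{\mu}\left\Vert u_{i}\right\Vert ^{2}+L\left\Vert v_{i}\right\Vert ^{2}-2\left\langle u_{i},v_{i}\right\rangle =\frac{1}{\mu}\left\Vert u_{i}-\mu v_{i}\right\Vert ^{2}+(L-\mu)\left\Vert v_{i}\right\Vert ^{2}$, the target quadratic part becomes $\frac{1}{n}\sum_{i}\left[\frac{\beta}{2\mu n}\left\Vert u_{i}-\mu v_{i}\right\Vert ^{2}+\frac{\beta(L-\mu)}{2n}\left\Vert v_{i}\right\Vert ^{2}\right]$. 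Thus both bounds are non-negative combinations of the same two squared quantities, and it remains only to compare coefficients.

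Finally I would invoke the big-data condition $n\geq\beta L/\mu$, equivalently $\mu n\geq\beta L\geq\beta(L-\mu)$, which immediately yields both $\frac{1}{2(L-\mu)}\geq\frac{\beta}{2\mu n}$ and $\frac{\mu}{2}\geq\frac{\beta(L-\mu)}{2n}$. Because both squared norms are non-negative, the averaged interpolation bound dominates the target bound term by term, establishing the lemma. The only delicate point is the degenerate case $L=\mu$, where $g_{i}$ is $0$-smooth and the factor $\frac{1}{L-\mu}$ is vacuous; there each $f_{i}$ is a pure quadratic of curvature $\mu$ and the claimed inequality reduces to an exact second-order expansion, so it is handled directly. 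The main obstacle is therefore purely the algebraic recognition that the cross term in the target reassembles, after completing the square, into the single squared norm $\left\Vert u_{i}-\mu v_{i}\right\Vert ^{2}$ already present in the interpolation inequality; once this is seen, the big-data condition finishes the argument with no discriminant or Cauchy-Schwarz estimate required.
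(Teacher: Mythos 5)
Your proof is correct and is essentially the paper's argument: the paper applies its Theorem \ref{thm:full-strong-lb} (which is exactly your interpolation inequality with the square $\left\Vert u_{i}-\mu v_{i}\right\Vert ^{2}$ expanded out) to each $f_{i}$ with the smoothness constant inflated to $\frac{\mu n}{\beta}+\mu\geq L$, and then averages. Your version, which keeps the true constant $L$ and instead completes the square in the target and compares coefficients via $\mu n\geq\beta L\geq\beta(L-\mu)$, is the same computation packaged slightly more transparently, and your explicit handling of the degenerate case $L=\mu$ is a point the paper leaves implicit.
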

\begin{proof}
We apply Lemma \ref{thm:full-strong-lb} to each $f_{i}$ , but instead
of using the actual constant $L$, we use $\frac{\mu n}{\beta}+\mu$,
which under the big-data assumption is larger than $L$:

\begin{eqnarray*}
f_{i}(x) & \geq & f_{i}(\phi_{i})+\left\langle f_{i}^{\prime}(\phi_{i}),x-\phi_{i}\right\rangle +\frac{\beta}{2\mu n}\left\Vert f_{i}^{\prime}(x)-f_{i}^{\prime}(\phi_{i})\right\Vert ^{2}\\
 &  & +\frac{\beta L}{2n}\left\Vert x-\phi_{i}\right\Vert ^{2}+\frac{\beta}{n}\left\langle f_{i}^{\prime}(x)-f_{i}^{\prime}(\phi_{i}),\phi_{i}-x\right\rangle .
\end{eqnarray*}

Averaging over $i$ gives the result.
\end{proof}
We are now ready to prove our main result:
\begin{thm}
\label{thm:finito-main}Between steps $k$ and $k+1$ of the Finito
algorithm, if $\frac{2}{\alpha}-\frac{1}{\alpha^{2}}-\beta+\frac{\beta}{\alpha}\leq0$,
$\alpha\geq2$ and $\beta\geq2$ then
\[
\mathbb{E}[T^{k+1}]-T\leq-\frac{1}{\alpha n}T.
\]
\end{thm}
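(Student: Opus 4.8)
The plan is to assemble the one-step Lyapunov decrease directly from the four single-step lemmas just proved, which already record the expected change of each of $T_1,\dots,T_4$, and then to use the strong-convexity lower bound of Lemma~\ref{lem:scaled-lemma} to dispose of the one quantity not yet expressed through $T$ itself or through the per-term quadratics $\|f_i'(w)-f_i'(\phi_i)\|^2$, $\|w-\phi_i\|^2$ and $\langle f_i'(w)-f_i'(\phi_i),\,w-\phi_i\rangle$.

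First I would add the four expectation bounds to obtain a single inequality for $\mathbb{E}[T^{k+1}]-T$. The only awkward leftover is the isolated $-\tfrac1n f(w)$ produced by the $T_2$ lemma. Since $-\tfrac1n T_2-\tfrac1n f(w)=\tfrac1{n^2}\sum_i\bigl[f_i(\phi_i)+\langle f_i'(\phi_i),w-\phi_i\rangle-f_i(w)\bigr]$ is exactly $-\tfrac1{n^2}$ times the summed strong-convexity gaps, I would invoke Lemma~\ref{lem:scaled-lemma} with $x=w$ to bound it. This is the step where the big-data constant enters, since Lemma~\ref{lem:scaled-lemma} replaces $L$ by the surrogate smoothness constant $\tfrac{\mu n}{\beta}+\mu\ge L$ (valid under the big-data condition with $\beta\ge2$); it simultaneously annihilates the $-\tfrac1n T_2$ piece and injects the negative multiples of $\sum_i\|f_i'(w)-f_i'(\phi_i)\|^2$ and $\sum_i\|w-\phi_i\|^2$, together with an inner-product term, all carrying the factor $\beta$.

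Next I would sort the resulting expression into three pools. The terms proportional to $T_1,\dots,T_4$ must deliver the target $-\tfrac1{\alpha n}T$: the $T_2$ contributions cancel as above, while the $T_3$ and $T_4$ changes come with a factor $\tfrac1n\ge\tfrac1{\alpha n}$, so with $\alpha\ge2$ they supply at least $-\tfrac1{\alpha n}(T_3+T_4)$, the surplus being pushed into the distance pool. The ``global'' cross terms $\tfrac1n\langle f'(\bar\phi),w-\bar\phi\rangle$, the combined $-\tfrac1n(1-\tfrac1\alpha)\langle f'(w),w-\bar\phi\rangle$, and $\tfrac{\mu}{2n}\|\bar\phi-w\|^2$ I would collapse using monotonicity and strong convexity of the full gradient, $\langle f'(w)-f'(\bar\phi),w-\bar\phi\rangle\ge\mu\|w-\bar\phi\|^2$, so that the $f'(\bar\phi)$--$f'(w)$ mismatch produces $-\tfrac\mu n\|w-\bar\phi\|^2$ and cancels the positive $\tfrac{\mu}{2n}\|\bar\phi-w\|^2$; whatever remains I would rewrite via the Finito identity $w-\bar\phi=-\tfrac1{\alpha\mu n}\sum_i f_i'(\phi_i)$ from~\eqref{eq:main-finito-update} so that it folds back into the per-term quadratics.

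The hard part is the third pool, the per-term quadratic form in $\|f_i'(w)-f_i'(\phi_i)\|^2$, $\langle f_i'(w)-f_i'(\phi_i),w-\phi_i\rangle$ and $\|w-\phi_i\|^2$, which must be shown non-positive term by term. The delicate point is that the inner-product coefficient is positive, so it cannot simply be discarded; I would control it through the two-sided relations that hold for a $\mu$-strongly-convex, $L$-smooth $f_i$, namely $\mu\|w-\phi_i\|\le\|f_i'(w)-f_i'(\phi_i)\|\le L\|w-\phi_i\|$ together with $\langle f_i'(w)-f_i'(\phi_i),w-\phi_i\rangle\le\|f_i'(w)-f_i'(\phi_i)\|\,\|w-\phi_i\|$. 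Writing the form as a concave quadratic in the ratio $\|f_i'(w)-f_i'(\phi_i)\|/\|w-\phi_i\|\in[\mu,L]$ and checking that its maximiser lies to the left of this interval, the worst case sits at the endpoint $\mu$, and the scalar inequality that survives is the one stated in the theorem, $\tfrac2\alpha-\tfrac1{\alpha^2}-\beta+\tfrac\beta\alpha\le0$; the conditions $\alpha\ge2$ and $\beta\ge2$ then keep the distance pool and the $T_3,T_4$ remainders non-positive. The main obstacle, and where I expect most of the effort to go, is verifying that all of this coefficient bookkeeping really does reduce to that single condition and that no stray positive term of order $1/n^2$ leaks through once the surrogate-$L$ distance contributions are correctly accounted for.
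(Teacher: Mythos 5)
Your overall architecture tracks the paper's closely: sum the four per-term lemmas, spend a $\frac{1}{\alpha n}$ fraction of $\left\langle f^{\prime}(\bar{\phi}),w-\bar{\phi}\right\rangle $ on the strong-convexity lower bound to manufacture $-\frac{1}{\alpha n}T_{1}$, use monotonicity of $f^{\prime}$ together with the variance decomposition (Lemma \ref{lem:decomposition-of-variance}) to merge $\left\Vert w-\bar{\phi}\right\Vert ^{2}$ and $\sum_{i}\left\Vert \bar{\phi}-\phi_{i}\right\Vert ^{2}$ into $\sum_{i}\left\Vert w-\phi_{i}\right\Vert ^{2}$, and invoke Lemma \ref{lem:scaled-lemma} to dispose of the distance and inner-product pools. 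Where you genuinely diverge is the endgame. The paper splits the Bregman-gap budget $(1-\frac{1}{\alpha})[-\frac{1}{n}f(w)-\frac{1}{n}T_{2}]$ into a $\frac{1}{\beta}$ share fed to Lemma \ref{lem:scaled-lemma} -- the weight chosen precisely so that the $\sum_{i}\left\Vert w-\phi_{i}\right\Vert ^{2}$ and $\sum_{i}\left\langle f_{i}^{\prime}(w)-f_{i}^{\prime}(\phi_{i}),w-\phi_{i}\right\rangle $ terms cancel exactly -- and a $(1-\frac{1}{\alpha}-\frac{1}{\beta})$ share fed to the plain Lipschitz lower bound (Theorem \ref{thm:lipschitz-lb}), which emits only gradient-norm terms; the stated condition then appears directly as the surviving coefficient of $\sum_{i}\left\Vert f_{i}^{\prime}(\phi_{i})-f_{i}^{\prime}(w)\right\Vert ^{2}$, and $\alpha,\beta\geq2$ is exactly what keeps both split weights non-negative. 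Your alternative -- apply the scaled lemma at full weight and maximise the resulting concave quadratic in $t=\left\Vert f_{i}^{\prime}(w)-f_{i}^{\prime}(\phi_{i})\right\Vert /\left\Vert w-\phi_{i}\right\Vert \in[\mu,L]$ -- can be made to work and dispenses with the second inequality, but it is not as clean as you suggest.

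Two concrete points need repair. First, your budget bookkeeping is inconsistent: in your first paragraph you feed the whole of $-\frac{1}{n}f(w)-\frac{1}{n}T_{2}$ to Lemma \ref{lem:scaled-lemma}, but then nothing is left either to supply the $-\frac{1}{\alpha n}T_{2}$ component of the target (and $T_{2}$ has no definite sign, so you cannot drop it) or to absorb the $+\frac{1}{\alpha n}f(w)$ created when you lower-bound $\frac{1}{\alpha n}\left\langle f^{\prime}(\bar{\phi}),w-\bar{\phi}\right\rangle $ to produce $-\frac{1}{\alpha n}T_{1}$; only the residual $(1-\frac{1}{\alpha})$ fraction is available for the lemma. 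Second, the endpoint evaluation does not reduce to the stated condition alone: writing $b=(1-\frac{1}{\alpha})\beta-1\geq0$, the worst case at $t=\mu$ equals $b(\mu-\frac{L}{2})+\frac{\mu}{2}(\frac{2}{\alpha}-\frac{1}{\alpha^{2}}-\beta+\frac{\beta}{\alpha})$, whose first summand is positive whenever $L<2\mu$. It is still non-positive -- using $\mu-\frac{L}{2}\leq\frac{\mu}{2}$ and the identity $\frac{2}{\alpha}-\frac{1}{\alpha^{2}}-\beta+\frac{\beta}{\alpha}=-(b+(1-\frac{1}{\alpha})^{2})$ the bound collapses to $-\frac{\mu}{2}(1-\frac{1}{\alpha})^{2}$ -- but that extra step is currently missing. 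Finally, the leftover global quadratic should be absorbed via Lemma \ref{lem:decomposition-of-variance}, not by substituting the Finito identity for $w-\bar{\phi}$, which produces $\left\Vert \sum_{i}f_{i}^{\prime}(\phi_{i})\right\Vert ^{2}$ and does not fold back into the per-term quantities.
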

\begin{proof}
First recall from the Lemmas above the change in each term:
\[
\mathbb{E}[T_{1}^{k+1}]-T_{1}\leq\frac{1}{n}\left\langle f^{\prime}(\bar{\phi}),w-\bar{\phi}\right\rangle +\frac{L}{2n^{3}}\sum_{i}\left\Vert w-\phi_{i}\right\Vert ^{2},
\]
\begin{eqnarray*}
\mathbb{E}[T_{2}^{k+1}]-T_{2} & \leq & -\frac{1}{n}T_{2}-\frac{1}{n}f(w)+(\frac{1}{\alpha}-\frac{\beta}{n})\frac{1}{\mu n^{3}}\sum_{i}\left\Vert f_{i}^{\prime}(w)-f_{i}^{\prime}(\phi_{i})\right\Vert ^{2}\\
 &  & +\frac{1}{n}\left\langle \bar{\phi}-w,f^{\prime}(w)\right\rangle -\frac{1}{n^{3}}\sum_{i}\left\langle f_{i}^{\prime}(w)-f_{i}^{\prime}(\phi_{i}),w-\phi_{i}\right\rangle ,
\end{eqnarray*}
\[
\mathbb{E}[T_{3}^{k+1}]-T_{3}=-(\frac{1}{n}+\frac{1}{n^{2}})T_{3}+\frac{1}{\alpha n}\left\langle f^{\prime}(w),w-\bar{\phi}\right\rangle -\frac{1}{2\alpha^{2}\mu n^{3}}\sum_{i}\left\Vert f_{i}^{\prime}(\phi_{i})-f_{i}^{\prime}(w)\right\Vert ^{2},
\]

\[
\mathbb{E}[T_{4}^{k+1}]-T_{4}=-\frac{\mu}{2n^{2}}\sum_{i}\left\Vert \bar{\phi}-\phi_{i}\right\Vert ^{2}+\frac{\mu}{2n}\left\Vert \bar{\phi}-w\right\Vert ^{2}-\frac{\mu}{2n^{3}}\sum_{i}\left\Vert w-\phi_{i}\right\Vert ^{2}.
\]

We now combine these and group like terms to get:
\begin{eqnarray}
\mathbb{E}[T^{k+1}]-T & \leq & \frac{1}{n}\left\langle f^{\prime}(\bar{\phi}),w-\bar{\phi}\right\rangle +\frac{1}{n^{2}}\sum_{i}f_{i}(\phi_{i})\nonumber \\
 &  & -\frac{1}{n}f(w)+\frac{1}{n^{2}}\sum_{i}\left\langle f_{i}^{\prime}(\phi_{i}),w-\phi_{i}\right\rangle \nonumber \\
 &  & +(1-\frac{1}{\alpha})\frac{1}{n}\left\langle f^{\prime}(w),\bar{\phi}-w\right\rangle \nonumber \\
 &  & +(\frac{L}{\mu n}+1)\frac{\mu}{2n^{2}}\sum_{i}\left\Vert w-\phi_{i}\right\Vert ^{2}\nonumber \\
 &  & -\frac{1}{n^{3}}\sum_{i}\left\langle f_{i}^{\prime}(w)-f_{i}^{\prime}(\phi_{i}),w-\phi_{i}\right\rangle \nonumber \\
 &  & +(1-\frac{1}{2\alpha})\frac{1}{\alpha\mu n^{3}}\sum_{i}\left\Vert f_{i}^{\prime}(\phi_{i})-f_{i}^{\prime}(w)\right\Vert ^{2}\nonumber \\
 &  & +\frac{\mu}{2n}\left\Vert w-\bar{\phi}\right\Vert ^{2}-\frac{\mu}{2n^{2}}\sum_{i}\left\Vert \bar{\phi}-\phi_{i}\right\Vert ^{2}.\label{eq:finito-main-s1}
\end{eqnarray}

Next we cancel part of the first line using the strong convexity lower
bound:
\[
\frac{1}{\alpha n}\left\langle f^{\prime}(\bar{\phi}),w-\bar{\phi}\right\rangle \leq\frac{1}{\alpha n}f(w)-\frac{1}{\alpha n}f(\bar{\phi})-\frac{\mu}{2\alpha n}\left\Vert w-\bar{\phi}\right\Vert ^{2},
\]

We then pull terms occurring in $-\frac{1}{\alpha n}T$ together,
giving 
\begin{eqnarray}
\mathbb{E}[T^{k+1}]-T & \leq & -\frac{1}{\alpha n}T+(1-\frac{1}{\alpha})\frac{1}{n}\left\langle f^{\prime}(\bar{\phi})-f^{\prime}(w),w-\bar{\phi}\right\rangle \nonumber \\
 &  & +(1-\frac{1}{\alpha})\left[-\frac{1}{n}f(w)-\frac{1}{n}T_{2}\right]\nonumber \\
 &  & +(\frac{L}{\mu n}+1-\frac{1}{\alpha})\frac{\mu}{2n^{2}}\sum_{i}\left\Vert w-\phi_{i}\right\Vert ^{2}\nonumber \\
 &  & -\frac{1}{n^{3}}\sum_{i}\left\langle f_{i}^{\prime}(w)-f_{i}^{\prime}(\phi_{i}),w-\phi_{i}\right\rangle \nonumber \\
 &  & +(1-\frac{1}{2\alpha})\frac{1}{\alpha\mu n^{3}}\sum_{i}\left\Vert f_{i}^{\prime}(\phi_{i})-f_{i}^{\prime}(w)\right\Vert ^{2}\nonumber \\
 &  & +(1-\frac{1}{\alpha})\frac{\mu}{2n}\left\Vert w-\bar{\phi}\right\Vert ^{2}-(1-\frac{1}{\alpha})\frac{\mu}{2n^{2}}\sum_{i}\left\Vert \bar{\phi}-\phi_{i}\right\Vert ^{2}.\label{eq:finito-main-s2}
\end{eqnarray}

Next we use the standard inequality $\left\langle f^{\prime}(x)-f^{\prime}(y),x-y\right\rangle \geq\mu\left\Vert x-y\right\Vert ^{2}$
in the form:
\[
(1-\frac{1}{\alpha})\frac{1}{n}\left\langle f^{\prime}(\bar{\phi})-f^{\prime}(w),w-\bar{\phi}\right\rangle \leq-(1-\frac{1}{\alpha})\frac{\mu}{n}\left\Vert w-\bar{\phi}\right\Vert ^{2},
\]

which changes the bottom row of Equation \ref{eq:finito-main-s2}
to
\[
-(1-\frac{1}{\alpha})\frac{\mu}{2n}\left\Vert w-\bar{\phi}\right\Vert ^{2}-(1-\frac{1}{\alpha})\frac{\mu}{2n^{2}}\sum_{i}\left\Vert \bar{\phi}-\phi_{i}\right\Vert ^{2}.
\]

These two terms can then be grouped using Lemma \ref{lem:decomposition-of-variance}
to give
\begin{eqnarray*}
\mathbb{E}[T^{k+1}]-T & \leq & -\frac{1}{\alpha n}T+\frac{L}{2n^{3}}\sum_{i}\left\Vert w-\phi_{i}\right\Vert ^{2}\\
 &  & +(1-\frac{1}{\alpha})\left[-\frac{1}{n}f(w)-\frac{1}{n}T_{2}\right]\\
 &  & -\frac{1}{n^{3}}\sum_{i}\left\langle f_{i}^{\prime}(w)-f_{i}^{\prime}(\phi_{i}),w-\phi_{i}\right\rangle \\
 &  & +(1-\frac{1}{2\alpha})\frac{1}{\alpha\mu n^{3}}\sum_{i}\left\Vert f_{i}^{\prime}(\phi_{i})-f_{i}^{\prime}(w)\right\Vert ^{2}.
\end{eqnarray*}

We now use Lemma \ref{lem:scaled-lemma} in the following scaled form
to cancel against the $\sum_{i}\left\Vert w-\phi_{i}\right\Vert ^{2}$
term:
\begin{eqnarray*}
\frac{1}{\beta}\left[-\frac{1}{n}f(w)-\frac{1}{n}T_{2}\right] & \leq & \frac{1}{n^{3}}\sum_{i}\left\langle f_{i}^{\prime}(w)-f_{i}^{\prime}(\phi_{i}),w-\phi_{i}\right\rangle \\
 &  & -\frac{L}{2n^{3}}\sum_{i}\left\Vert w-\phi_{i}\right\Vert ^{2}-\frac{1}{2\mu n^{3}}\sum_{i}\left\Vert f_{i}^{\prime}(w)-f_{i}^{\prime}(\phi_{i})\right\Vert ^{2},
\end{eqnarray*}
and then apply the following standard equality (Theorem \ref{thm:lipschitz-lb})
to partially cancel $\sum_{i}\left\Vert f_{i}(\phi_{i})-f_{i}(w)\right\Vert ^{2}$:
\[
\left(1-\frac{1}{\alpha}-\frac{1}{\beta}\right)\left[-\frac{1}{n}f(w)-\frac{1}{n}T_{2}\right]\leq-\left(1-\frac{1}{\alpha}-\frac{1}{\beta}\right)\frac{\beta}{2\mu n^{3}}\sum_{i}\left\Vert f_{i}^{\prime}(\phi_{i})-f_{i}^{\prime}(w)\right\Vert ^{2}.
\]

Leaving us with
\[
\mathbb{E}[T^{k+1}]-T\leq-\frac{1}{\alpha n}T+(\frac{2}{\alpha}-\frac{1}{\alpha^{2}}-\beta+\frac{\beta}{\alpha})\frac{1}{2\mu n^{3}}\sum_{i}\left\Vert f_{i}^{\prime}(\phi_{i})-f_{i}^{\prime}(w)\right\Vert ^{2}.
\]

The remaining gradient norm term is non-positive under the conditions
specified in our assumptions.\end{proof}
\begin{thm}
\label{thm:lyp-to-func}The Finito Lyapunov function bounds $f(\bar{\phi})-f(w^{*})$
as follows:
\[
f(\bar{\phi}^{k})-f(w^{*})\leq\alpha T^{k}.
\]
\end{thm}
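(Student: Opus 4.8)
The plan is to recognise the middle two Lyapunov terms as a strong-convexity lower bound and then exploit its quadratic structure. Define the averaged lower bound
\[
\ell(x)=\frac{1}{n}\sum_{i}\left[f_{i}(\phi_{i})+\left\langle f_{i}^{\prime}(\phi_{i}),x-\phi_{i}\right\rangle +\frac{\mu}{2}\left\Vert x-\phi_{i}\right\Vert ^{2}\right],
\]
so that by inspection $T_{2}+T_{3}=-\ell(w)$, hence $T_{1}+T_{2}+T_{3}=f(\bar{\phi})-\ell(w)$, while $T_{4}\geq0$. Since each $f_{i}$ is $\mu$-strongly convex, averaging the strong-convexity lower bounds gives $\ell(x)\leq f(x)$ for every $x$; in particular $\min_{x}\ell(x)\leq\ell(w^{*})\leq f(w^{*})$. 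These two facts are the only properties of $\ell$ I will need.

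Next I would use that $\ell$ is a quadratic with Hessian $\mu I$. Writing $g=\frac{1}{n}\sum_{i}f_{i}^{\prime}(\phi_{i})$, its minimiser is $x^{\star}=\bar{\phi}-\frac{1}{\mu}g$, and $\ell(x)=\ell(x^{\star})+\frac{\mu}{2}\left\Vert x-x^{\star}\right\Vert ^{2}$. The Finito update (Equation \ref{eq:main-finito-update}) reads $w=\bar{\phi}-\frac{1}{\alpha\mu}g$, so $w$ and $\bar{\phi}$ both lie on the ray through $x^{\star}$ in the direction $g$, with $\left\Vert \bar{\phi}-x^{\star}\right\Vert ^{2}=\frac{1}{\mu^{2}}\left\Vert g\right\Vert ^{2}$ and $\left\Vert w-x^{\star}\right\Vert ^{2}=\frac{(\alpha-1)^{2}}{\alpha^{2}\mu^{2}}\left\Vert g\right\Vert ^{2}$. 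Substituting these into the quadratic expansion yields exact expressions for $\ell(\bar{\phi})$ and $\ell(w)$ in terms of $\ell(x^{\star})$ and $\left\Vert g\right\Vert ^{2}$, which is what lets the bookkeeping close.

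Finally I would assemble the inequality $\alpha T-\left[f(\bar{\phi})-f(w^{*})\right]\geq0$. Dropping $\alpha T_{4}\geq0$ and using $T_{1}+T_{2}+T_{3}=f(\bar{\phi})-\ell(w)$ reduces the claim to $(\alpha-1)f(\bar{\phi})-\alpha\ell(w)+f(w^{*})\geq0$. Lowering $f(\bar{\phi})\geq\ell(\bar{\phi})$ (legitimate since its coefficient $\alpha-1$ is positive for $\alpha\geq1$, and in particular $\alpha=2$) and inserting the two quadratic expansions, the $\ell(x^{\star})$ contributions collapse to $-\ell(x^{\star})$ and the $\left\Vert g\right\Vert ^{2}$ contributions combine to $\frac{\alpha-1}{2\alpha\mu}\left\Vert g\right\Vert ^{2}\geq0$; what remains is $f(w^{*})-\ell(x^{\star})\geq0$, which is exactly the fact that $\ell$ globally underestimates $f$. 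The main obstacle here is conceptual rather than computational: $\ell$ is evaluated at the deliberately conservative iterate $w$ rather than at its own minimiser or at $w^{*}$, so $\ell\leq f$ cannot be applied to $\ell(w)$ directly. The resolution is to account for the gap between $w$ and $x^{\star}$ exactly through the quadratic structure, spending the surplus $f(\bar{\phi})$ term against $\ell(\bar{\phi})$, and then to check that the leftover $\left\Vert g\right\Vert ^{2}$ coefficient is non-negative.
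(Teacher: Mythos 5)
Your proof is correct and follows essentially the same route as the paper: both identify $T_{1}+T_{2}+T_{3}=f(\bar{\phi})-R(w)$ with $R=\ell$ the averaged strong-convexity lower bound, drop $T_{4}\geq0$, and exploit that $w=(1-\tfrac{1}{\alpha})\bar{\phi}+\tfrac{1}{\alpha}w^{\prime}$ where $w^{\prime}$ is the minimiser of $R$, together with $R(\bar{\phi})\leq f(\bar{\phi})$ and $R(w^{\prime})\leq f(w^{*})$. The only difference is that where the paper applies convexity of $R$ along that segment (Jensen), you expand the quadratic exactly about $x^{\star}=w^{\prime}$, which recovers the same inequality with an explicit non-negative surplus $\frac{\alpha-1}{2\alpha\mu}\left\Vert g\right\Vert ^{2}$.
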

\begin{proof}
Consider the following function, which we will call $R(x)$:
\[
R(x)=\frac{1}{n}\sum_{i}f_{i}(\phi_{i})+\frac{1}{n}\sum_{i}\left\langle f_{i}^{\prime}(\phi_{i}),x-\phi_{i}\right\rangle +\frac{\mu}{2n}\sum_{i}\left\Vert x-\phi_{i}\right\Vert ^{2}.
\]

When evaluated at its minimum with respect to $x$, which we denote
$w^{\prime}=\bar{\phi}-\frac{1}{\mu n}\sum_{i}f_{i}^{\prime}(\phi_{i})$,
it is a lower bound on $f(w^{*})$ by strong convexity. However, we
are evaluating at $w=\bar{\phi}-\frac{1}{\alpha\mu n}\sum_{i}f_{i}^{\prime}(\phi_{i})$
instead in the (negated) Lyapunov function. $R$ is convex with respect
to $x$, so by definition
\[
R(w)=R\left(\left(1-\frac{1}{\alpha}\right)\bar{\phi}+\frac{1}{\alpha}w^{\prime}\right)\leq\left(1-\frac{1}{\alpha}\right)R(\bar{\phi})+\frac{1}{\alpha}R(w^{\prime}).
\]

Therefore, by the lower bounding property
\begin{eqnarray*}
f(\bar{\phi})-R(w) & \geq & f(\bar{\phi})-\left(1-\frac{1}{\alpha}\right)R(\bar{\phi})-\frac{1}{\alpha}R(w^{\prime})\\
 & \geq & f(\bar{\phi})-\left(1-\frac{1}{\alpha}\right)f(\bar{\phi})-\frac{1}{\alpha}f(w^{*})\\
 & = & \frac{1}{\alpha}\left(f(\bar{\phi})-f(w^{*})\right).
\end{eqnarray*}

Now note that $T\geq f(\bar{\phi})-R(w)$. So
\[
f(\bar{\phi})-f(w^{*})\leq\alpha T.
\]
\end{proof}
\begin{thm}
If the Finito method is initialised with all $\phi_{i}^{0}$ the same,
$\phi_{i}^{0}=\phi^{0}$, and the assumptions of Theorem \ref{thm:finito-main}
hold, then the expected convergence rate is:
\[
\mathbb{E}\left[f(\bar{\phi}^{k})\right]-f(w^{*})\leq\frac{c}{\mu}\left(1-\frac{1}{\alpha n}\right)^{k}\left\Vert f^{\prime}(\phi^{0})\right\Vert ^{2},
\]

with $c=\left(1-\frac{1}{2\alpha}\right)$.\end{thm}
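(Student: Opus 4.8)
The plan is to assemble the result from the two pieces of machinery already in hand, Theorem~\ref{thm:finito-main} and Theorem~\ref{thm:lyp-to-func}, and then to evaluate the Lyapunov function at the symmetric initialisation. Theorem~\ref{thm:finito-main} gives the one-step bound $\mathbb{E}[T^{k+1}] - T \leq -\frac{1}{\alpha n}T$, which (the expectation being conditional on the history $\mathcal{F}_k$ through step $k$) reads $\mathbb{E}[T^{k+1}\mid\mathcal{F}_k] \leq (1-\frac{1}{\alpha n})T^k$. Taking total expectations and applying the tower property repeatedly yields $\mathbb{E}[T^k] \leq (1-\frac{1}{\alpha n})^k T^0$. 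Since Theorem~\ref{thm:lyp-to-func} holds pathwise, $f(\bar\phi^k)-f(w^*) \leq \alpha T^k$; taking expectations and chaining the two estimates gives $\mathbb{E}[f(\bar\phi^k)]-f(w^*) \leq \alpha(1-\frac{1}{\alpha n})^k T^0$.

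The substantive step is to evaluate $T^0$ under $\phi_i^0 = \phi^0$ for all $i$. Here $\bar\phi^0 = \phi^0$, and update (\ref{eq:main-finito-update}) gives $w^0 = \phi^0 - \frac{1}{\alpha\mu}f'(\phi^0)$ using $\frac{1}{n}\sum_i f_i'(\phi^0) = f'(\phi^0)$. I would take the four terms in turn: $T_1^0 = f(\phi^0)$; the two parts of $T_2^0$ collapse to $-f(\phi^0) - \langle f'(\phi^0), w^0-\phi^0\rangle = -f(\phi^0) + \frac{1}{\alpha\mu}\|f'(\phi^0)\|^2$; $T_3^0 = -\frac{\mu}{2}\|w^0-\phi^0\|^2 = -\frac{1}{2\alpha^2\mu}\|f'(\phi^0)\|^2$; and $T_4^0 = 0$, since every $\phi_i^0$ coincides with $\bar\phi^0$. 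The function-value contributions cancel, leaving $T^0 = \big(\frac{1}{\alpha\mu} - \frac{1}{2\alpha^2\mu}\big)\|f'(\phi^0)\|^2 = \frac{1}{\alpha\mu}\big(1-\frac{1}{2\alpha}\big)\|f'(\phi^0)\|^2 = \frac{c}{\alpha\mu}\|f'(\phi^0)\|^2$.

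Substituting back, the factor $\alpha$ from Theorem~\ref{thm:lyp-to-func} cancels the $\alpha$ in the denominator of $T^0$, so $\alpha T^0 = \frac{c}{\mu}\|f'(\phi^0)\|^2$ and the claimed bound follows. I expect the main obstacle to be the bookkeeping in $T^0$: one must notice that $T_4^0$ vanishes at the symmetric start and that the $\pm f(\phi^0)$ terms arising from $T_1$ and $T_2$ cancel, so that the estimate reduces to a pure multiple of $\|f'(\phi^0)\|^2$. The only other point requiring care is the tower-property argument that turns the conditional contraction of Theorem~\ref{thm:finito-main} into an unconditional geometric decay; beyond these, the proof is routine substitution.
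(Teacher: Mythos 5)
Your proposal is correct and follows essentially the same route as the paper's own proof: unroll Theorem \ref{thm:finito-main} via the tower property, apply Theorem \ref{thm:lyp-to-func}, and evaluate $T^{0}$ at the symmetric initialisation, where $T_{4}^{0}=0$, the $\pm f(\phi^{0})$ terms cancel, and $w^{0}-\phi^{0}=-\frac{1}{\alpha\mu}f^{\prime}(\phi^{0})$ reduces everything to $\left(1-\frac{1}{2\alpha}\right)\frac{1}{\alpha\mu}\left\Vert f^{\prime}(\phi^{0})\right\Vert ^{2}$. The paper performs exactly this calculation, so nothing further is needed.
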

\begin{proof}
By unrolling Theorem \ref{thm:finito-main}, we get
\[
\mathbb{E}[T^{k}]\leq\left(1-\frac{1}{\alpha n}\right)^{k}T^{0}.
\]

Now using Theorem \ref{thm:lyp-to-func}:
\[
\mathbb{E}\left[f(\bar{\phi}^{k})\right]-f(w^{*})\leq\alpha\left(1-\frac{1}{\alpha n}\right)^{k}T^{0}.
\]
We need to control $T^{0}$ as well. Since we are assuming that all
$\phi_{i}^{0}$ start the same, we have that
\begin{eqnarray*}
T^{0} & = & f(\phi^{0})-\frac{1}{n}\sum_{i}f_{i}(\phi^{0})-\frac{1}{n}\sum_{i}\left\langle f_{i}^{\prime}(\phi^{0}),w^{0}-\phi^{0}\right\rangle -\frac{\mu}{2}\left\Vert w^{0}-\phi^{0}\right\Vert ^{2}\\
 & = & 0-\left\langle f^{\prime}(\phi^{0}),w^{0}-\phi^{0}\right\rangle -\frac{\mu}{2}\left\Vert -\frac{1}{\alpha\mu}f^{\prime}(\phi^{0})\right\Vert ^{2}\\
 & = & \frac{1}{\alpha\mu}\left\Vert f^{\prime}(\phi^{0})\right\Vert ^{2}-\frac{1}{2\alpha^{2}\mu}\left\Vert f^{\prime}(\phi^{0})\right\Vert ^{2}\\
 & = & \left(1-\frac{1}{2\alpha}\right)\frac{1}{\alpha\mu}\left\Vert f^{\prime}(\phi^{0})\right\Vert ^{2}.
\end{eqnarray*}

\end{proof}

\section{Prox-Finito Theory}

\label{sec:prox-finito-theory}

The Prox-Finito algorithm is very closely related to SDCA. Instead
of giving a proof in the spirit of SDCA, we now present a convergence
proof using a novel primal argument. This argument has a geometric
feel to it which we believe is clearer than the SDCA proof, although
it is a little longer. The main result is in Theorem \ref{thm:one-step}.
\begin{lem}
\label{lem:tight-bound}We consider a set of possible functions ($F)$.
$F$ is $S_{\mu,L}$, the set of $\mu$ strong convex and $L$ smooth
functions, with the additional restriction that each $f\in F$ is
lower bounded by a quadratic function $l$ with curvature $\mu$ that
we know ($l\in S_{\mu,\mu}$), and that for all $f\in F$, $f(w)-l(w)=\delta$,
for a single known $w$ and $\delta$. For each unit vector direction
$r$, we define the function: 
\[
b(x)=l(v)+\left\langle l^{\prime}(v),x-v\right\rangle +\frac{L}{2}\left\Vert x-v\right\Vert ^{2},
\]

where $v$ is the point in the direction $r$ such that $\frac{L-\mu}{2}\left\Vert w-v\right\Vert ^{2}=\delta$.
More precisely, $v=w+\alpha r$, for $\alpha=\sqrt{\frac{2\delta}{L-\mu}}$.
Then define the line segment $L$ as the segment between $w$ and
$v$, $\left(L=\left\{ w+\alpha r\,:\,0\leq\alpha\leq\sqrt{\frac{2\delta}{L-\mu}}\right\} \right)$,
Then for all $f\in F$ and points $x\in L$:

\[
f(x)\geq b(x)\geq l(x),
\]

i.e. $b(x)$ is at least as tight a lower bound on the interval $L$
as $l(x)$. Furthermore, we have:
\[
\forall x\in L,\;b(x)-l(x)=\frac{L-\mu}{2}\left\Vert x-v\right\Vert ^{2}.
\]

The above points also imply that:

\begin{equation}
f(w)-l(w)=\frac{L-\mu}{2}\left\Vert w-v\right\Vert ^{2}.\label{eq:lemma-1-at-w}
\end{equation}
\end{lem}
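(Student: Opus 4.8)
The plan is to reduce the whole lemma to a single one-dimensional comparison inequality by working with the gap function $g = f - l$. First I would record the structural facts about $g$. Since $f\in S_{\mu,L}$ and $l$ is a quadratic of curvature exactly $\mu$ (so $\nabla^{2}l=\mu I$), the difference $g$ is convex and $\left(L-\mu\right)$-smooth, is nonnegative everywhere (this is precisely the hypothesis $f\geq l$), and satisfies $g(w)=\delta$. Because $l$ is an exact quadratic, its second-order Taylor expansion about $v$ is an equality, $l(x)=l(v)+\left\langle l'(v),x-v\right\rangle +\frac{\mu}{2}\left\Vert x-v\right\Vert ^{2}$, so subtracting this from the definition of $b$ gives at once
\[
b(x)-l(x)=\tfrac{L-\mu}{2}\left\Vert x-v\right\Vert ^{2}
\]
for every $x$. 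This is the ``Furthermore'' claim, and since $L\geq\mu$ it also yields $b(x)\geq l(x)$. It further reduces the remaining assertion $f(x)\geq b(x)$ on the segment to showing $g(x)\geq\frac{L-\mu}{2}\left\Vert x-v\right\Vert ^{2}$ for $x\in L$.

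The heart of the argument, and the step I expect to be the main obstacle, is exactly this last inequality. I would restrict $g$ to the line by setting $h(t)=g(w+tr)$ for $t\in[0,\alpha]$ with $\alpha=\sqrt{2\delta/(L-\mu)}$, noting that on the segment $\left\Vert x-v\right\Vert =\alpha-t$, so the target becomes $h(t)\geq\frac{L-\mu}{2}(\alpha-t)^{2}$. The key is a differential inequality coupling the value and slope of $h$: from the Lipschitz smoothness upper bound (Theorem~\ref{thm:lipschitz-ub}) one has $h(s)\leq h(t)+h'(t)(s-t)+\frac{L-\mu}{2}(s-t)^{2}$; minimising the right-hand side over $s$ and using $h\geq0$ at the minimiser gives $h'(t)^{2}\leq2(L-\mu)\,h(t)$. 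Hence wherever $h>0$, $\frac{d}{dt}\sqrt{h(t)}=\frac{h'(t)}{2\sqrt{h(t)}}\geq-\sqrt{\frac{L-\mu}{2}}$. Since $\sqrt{h(0)}=\sqrt{\delta}=\sqrt{\frac{L-\mu}{2}}\,\alpha$, integrating from $0$ to $t$ yields $\sqrt{h(t)}\geq\sqrt{\frac{L-\mu}{2}}(\alpha-t)$, and squaring (the right side being nonnegative on $[0,\alpha]$) gives precisely $h(t)\geq\frac{L-\mu}{2}(\alpha-t)^{2}$, i.e. $f(x)\geq b(x)$ on $L$. Note that this step uses only nonnegativity and smoothness of $g$, so it holds uniformly for every $f\in F$.

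The only delicacy is the division by $\sqrt{h}$ at points where $h$ might vanish. I would dispatch it by observing that $h$ cannot reach $0$ before $t=\alpha$: the integrated bound already forces $\sqrt{h(t)}\geq\sqrt{\frac{L-\mu}{2}}(\alpha-t)>0$ on $[0,\alpha)$, so the comparison is valid throughout the interior, and the endpoint $t=\alpha$ is simply $h(\alpha)=g(v)\geq0$. Finally, the displayed equation \eqref{eq:lemma-1-at-w} is immediate: $v$ was chosen so that $\frac{L-\mu}{2}\left\Vert w-v\right\Vert ^{2}=\frac{L-\mu}{2}\alpha^{2}=\delta$, while $f(w)-l(w)=\delta$ by hypothesis, so the two expressions coincide; indeed $f(w)=b(w)$, showing the chain $f\geq b\geq l$ is tight at the endpoint $w$.
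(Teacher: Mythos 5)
Your proof is correct, but it follows a genuinely different route from the paper's. The paper argues by contradiction with a three-way case analysis on the directional derivative of $f$ at $w$ relative to that of $b$: if $f_{r}^{\prime}(w)<b_{r}^{\prime}(w)$ the Lipschitz upper bound about $w$ forces $f(v)<l(v)$; if $f_{r}^{\prime}(w)>b_{r}^{\prime}(w)$ the curves must cross and the first case is re-applied at the crossing point; and in the equality case the Lipschitz condition on the gradients is violated. You instead work directly with the gap $g=f-l$, observe that it is nonnegative and $(L-\mu)$-smooth, restrict to the line via $h(t)=g(w+tr)$, and derive the self-bounding inequality $h^{\prime}(t)^{2}\leq2(L-\mu)h(t)$ by minimising the quadratic upper bound over $s$ and invoking $h\geq0$ at the minimiser (which is legitimate because $g\geq0$ on all of $\mathbb{R}^{d}$, not merely on the segment). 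Integrating $\frac{d}{dt}\sqrt{h(t)}\geq-\sqrt{(L-\mu)/2}$ then gives $h(t)\geq\frac{L-\mu}{2}(\alpha-t)^{2}$, which is exactly $f\geq b$ on the segment; your treatment of the points where $h$ could vanish is the standard comparison argument and is sound. The trade-off: the paper's argument is geometric and matches its schematic figure, but its second and third cases are stated rather informally (the ``crossing point'' and the ``as different as possible'' gradient claims take some work to make fully rigorous), whereas your differential-inequality argument is mechanical, self-contained, and — as you note — uses only nonnegativity and $(L-\mu)$-smoothness of the gap, so it is both tighter in its hypotheses and easier to verify line by line. The identity $b-l=\frac{L-\mu}{2}\left\Vert \cdot-v\right\Vert ^{2}$ and Equation \ref{eq:lemma-1-at-w} are handled the same way in both.
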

\begin{proof}
The situation is depicted in Figure \ref{fig:lem-sch}. Since $l$
is quadratic with curvature $\mu,$ it can be written as
\[
l(x)=l(v)+\left\langle l^{\prime}(v),x-v\right\rangle +\frac{\mu}{2}\left\Vert x-v\right\Vert ^{2}.
\]

By cancelling like terms, $b(x)-l(x)=\frac{L-\mu}{2}\left\Vert x-v\right\Vert ^{2}$
. Let $f$ be any function in $F$. Now suppose that there exists
a $y\in L$ such that $f(y)<b(y)$. Then we consider the possible
derivatives in the direction $r$. Let $f_{r}^{^{\prime}}(x)$ denote
the directional derivative $\left\langle f^{\prime}(x),r\right\rangle .$
\begin{enumerate}
\item Suppose that $f_{r}^{\prime}(w)<b_{r}^{\prime}(w)$ . Then taking
the Lipschitz smoothness upper bound about point $w$ of $f$, for
all $x\in L$: 
\begin{eqnarray*}
f(x) & \leq & f(w)+\left\langle f^{\prime}(w),x-w\right\rangle +\frac{L}{2}\left\Vert x-w\right\Vert ^{2}\\
 & < & f(w)+\left\langle b^{\prime}(w),x-w\right\rangle +\frac{L}{2}\left\Vert x-w\right\Vert ^{2}\\
 & = & b(x),
\end{eqnarray*}
which holds since $b(x)$ can be reparameterised as 
\[
b(w)+\left\langle b^{\prime}(w),x-w\right\rangle +\frac{L}{2}\left\Vert x-w\right\Vert ^{2},
\]
and $b(w)=f(w)$. Since $b(v)=l(v)$, this implies $f(v)<l(v)$, which
is a contradiction.
\item Suppose that $f_{r}^{\prime}(w)>b_{r}^{\prime}(w)$. Then $f$ lies
above $b$ at some point in the direction $r$ from $w$. In order
that our assumption $f(y)<b(y)$ holds, $f$ must equal $b$ somewhere
in the interval $L$, which we denote $u$, i.e. the curves cross.
Then at the crossing point $f$ is more negative than $b$ $\left(f_{r}^{\prime}(u)<b_{r}^{\prime}(u)\right)$,
and the argument from case (1) applies with $u$ instead of $w$.
\item Suppose that $f_{r}^{\prime}(w)=b_{r}^{\prime}(w)$. Then consider
the Lipschitz smoothness upper bound around $y$:
\begin{equation}
f(x)\leq f(y)+\left\langle f^{\prime}(y),x-y\right\rangle +\frac{L}{2}\left\Vert x-y\right\Vert ^{2}.\label{eq:ub-point3}
\end{equation}
Recall that we can reparameterise $b$ around $y$ as $b(x)=b(y)+\left\langle b^{\prime}(y),x-y\right\rangle +\frac{L}{2}\left\Vert x-y\right\Vert ^{2}.$
Since we require that $f(v)\geq b(v)$, the inner product term in
Equation \ref{eq:ub-point3} when evaluating at $v$ must be greater
than that in $b$ to offset the $b(y)-f(y)$ difference, i.e. $\left\langle f^{\prime}(y),v-y\right\rangle >\left\langle b^{\prime}(y),v-y\right\rangle $,
therefore $f_{r}^{\prime}(y)>b_{r}^{\prime}(y)$. But this violates
the Lipschitz smoothness condition $\left\Vert f^{\prime}(y)-f^{\prime}(w)\right\Vert \leq L\left\Vert y-w\right\Vert $
on the gradients of $f$, as $b^{\prime}(y)$ is as different as possible
for a gradient of $f$ at that distance from $w$ along $r$, by its
construction.
\end{enumerate}
\end{proof}
\begin{figure}
\begin{centering}
\includegraphics{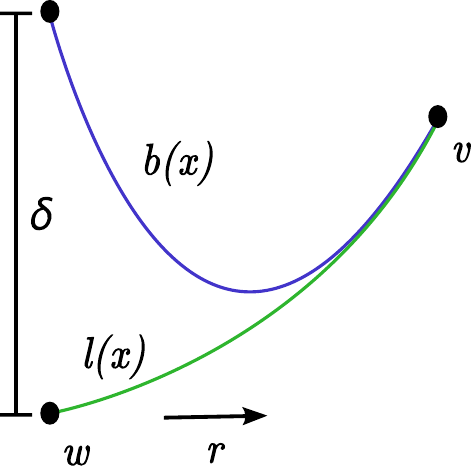}
\par\end{centering}

\protect\caption{\label{fig:lem-sch}Lemma \ref{lem:tight-bound} schematic. The vertical
axis is function value, and the horizontal axis is change along the
direction $r$.}
\end{figure}

\begin{lem}
\label{lem:constants}For any $v_{j}$ and $w^{k}$, the minimum of
\[
g_{j}(w_{j}^{k+1})=\frac{\mu}{2}\left\Vert w_{j}^{k+1}-w^{k}\right\Vert ^{2}+\frac{L-\mu}{2n}\left\Vert w_{j}^{k+1}-v_{j}\right\Vert ^{2},
\]

with respect to $w_{j}^{k+1}$ is:
\[
\frac{\mu\frac{L-\mu}{n}}{2\left(\mu+\frac{L-\mu}{n}\right)}\left\Vert w^{k}-v_{j}\right\Vert ^{2}.
\]
\end{lem}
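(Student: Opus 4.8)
The plan is to treat $g_j$ as a strictly convex quadratic in the single variable $w_j^{k+1}$ and to minimise it via its first-order condition. Writing $p=\mu$ and $q=\frac{L-\mu}{n}$ for the two strictly positive coefficients (positive since $L>\mu$), the objective reads $g_j(x)=\frac{p}{2}\left\Vert x-w^{k}\right\Vert ^{2}+\frac{q}{2}\left\Vert x-v_{j}\right\Vert ^{2}$, with gradient $p(x-w^{k})+q(x-v_{j})$. Because $g_j$ is a sum of two positive-definite quadratics it is coercive and strictly convex, so its unique stationary point is the global minimiser; setting the gradient to zero gives the coefficient-weighted average $x^{\star}=\frac{p\,w^{k}+q\,v_{j}}{p+q}$.

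Next I would substitute $x^{\star}$ back into $g_j$. The simplification hinges on both displacement vectors being scalar multiples of $w^{k}-v_{j}$: one checks directly that $x^{\star}-w^{k}=\frac{q}{p+q}\left(v_{j}-w^{k}\right)$ and $x^{\star}-v_{j}=\frac{p}{p+q}\left(w^{k}-v_{j}\right)$. The two terms therefore become $\frac{p}{2}\cdot\frac{q^{2}}{(p+q)^{2}}\left\Vert w^{k}-v_{j}\right\Vert ^{2}$ and $\frac{q}{2}\cdot\frac{p^{2}}{(p+q)^{2}}\left\Vert w^{k}-v_{j}\right\Vert ^{2}$, which combine to $\frac{pq(p+q)}{2(p+q)^{2}}\left\Vert w^{k}-v_{j}\right\Vert ^{2}=\frac{pq}{2(p+q)}\left\Vert w^{k}-v_{j}\right\Vert ^{2}$. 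Re-expanding $p=\mu$ and $q=\frac{L-\mu}{n}$ then recovers exactly the stated minimum value.

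There is no genuine obstacle here; the computation is elementary. The only point requiring a little care is the bookkeeping in the final line, namely recognising $p q^{2}+q p^{2}=pq(p+q)$ so that one factor of $(p+q)$ cancels against the denominator $(p+q)^{2}$. I would state the strict convexity and coercivity explicitly at the outset, so that the vanishing-gradient point is immediately certified as the global minimum and no boundary or second-order analysis is needed.
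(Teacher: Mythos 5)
Your proposal is correct and follows essentially the same route as the paper's proof: set the gradient of the strictly convex quadratic to zero to obtain the weighted-average minimiser, substitute back, and combine the two resulting coefficients via the identity $pq^{2}+qp^{2}=pq(p+q)$. The only difference is cosmetic — your $p,q$ shorthand streamlines the bookkeeping that the paper carries out with the explicit constants $\mu$ and $\frac{L-\mu}{n}$.
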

\begin{proof}
Let the minimiser be $x$. We find the value of $x$ by setting the
gradient to zero of $g_{j}$:
\[
\mu(x-w^{k})+\frac{L-\mu}{n}(x-v_{j})=0,
\]
\[
\mu w^{k}+\frac{L-\mu}{n}v_{j}=(\mu+\frac{L-\mu}{n})x,
\]
\[
\therefore x=\frac{\mu w^{k}+\frac{L-\mu}{n}v_{j}}{\mu+\frac{L-\mu}{n}}.
\]

So the minimum for the first term is:
\begin{eqnarray*}
\frac{\mu}{2}\left\Vert \frac{\mu w^{k}+\frac{L-\mu}{n}v_{j}-\mu w^{k}-\frac{L-\mu}{n}w^{k}}{\mu+\frac{L-\mu}{n}}\right\Vert ^{2} & = & \frac{\mu}{2}\left\Vert (\frac{L-\mu}{n})\frac{v_{j}-w^{k}}{\mu+\frac{L-\mu}{n}}\right\Vert ^{2}\\
 & = & \frac{\mu}{2}\left(\frac{L-\mu}{n(\mu+\frac{L-\mu}{n})}\right)^{2}\left\Vert w^{k}-v_{j}\right\Vert ^{2},
\end{eqnarray*}

for the second term:
\begin{eqnarray*}
\frac{L-\mu}{2n}\left\Vert \frac{\mu w^{k}+\frac{L-\mu}{n}v_{j}-\mu v_{j}-\frac{L-\mu}{n}v_{j}}{\mu+\frac{L-\mu}{n}}\right\Vert ^{2} & = & \frac{L-\mu}{2n}\left\Vert \mu\frac{w^{k}-v_{j}}{\mu+\frac{L-\mu}{n}}\right\Vert ^{2}\\
 & = & \frac{L-\mu}{2n}\left(\frac{\mu}{\mu+\frac{L-\mu}{n}}\right)^{2}\left\Vert w^{k}-v_{j}\right\Vert ^{2},
\end{eqnarray*}

Combining the constants:
\begin{eqnarray*}
\frac{\mu}{2}\left(\frac{L-\mu}{n(\mu+\frac{L-\mu}{n})}\right)^{2}+\frac{L-\mu}{2n}\left(\frac{\mu}{\mu+\frac{L-\mu}{n}}\right)^{2} & = & \frac{\mu}{2\left(\mu+\frac{L-\mu}{n}\right)^{2}}\left(\left(\frac{L-\mu}{n}\right)^{2}+\mu\frac{L-\mu}{n}\right)\\
 & = & \frac{\mu\frac{L-\mu}{n}}{2\left(\mu+\frac{L-\mu}{n}\right)^{2}}\left(\left(\frac{L-\mu}{n}\right)+\mu\right)\\
 & = & \frac{\mu\frac{L-\mu}{n}}{2\left(\mu+\frac{L-\mu}{n}\right)}.
\end{eqnarray*}

\end{proof}

\subsection{Main result}
\begin{thm}
\label{thm:one-step}Define the Lyapunov function $T^{k}$ at step
$k$ as $T^{k}=f(w^{*})-B^{k}(w^{k})$. Then the expected change in
$T$ between steps $k$ and $k+1$ for the Prox-Finito algorithm is:
\[
\mathbb{E}[T^{k+1}]\leq\left(1-\frac{\mu}{\left(\mu n+L-\mu\right)}\right)T^{k}.
\]
\end{thm}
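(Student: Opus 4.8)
The plan is to read $T^{k}=f(w^{*})-B^{k}(w^{k})$ as the gap between the optimum and the current global lower bound, and to show that one Prox-Finito step closes a fixed fraction of this gap in expectation. First I would record the structural facts that make $B^{k}$ tractable. Since $B^{k}(x)=\frac{1}{n}\sum_{i}\big[f_{i}(\phi_{i}^{k})+\langle f_{i}^{\prime}(\phi_{i}^{k}),x-\phi_{i}^{k}\rangle+\frac{\mu}{2}\|x-\phi_{i}^{k}\|^{2}\big]$ is a sum of strong-convexity lower bounds, it satisfies $B^{k}\le f$ pointwise and is itself a quadratic with Hessian $\mu I$; hence $w^{k}=\arg\min_{x}B^{k}$ (matching the returned iterate) and $B^{k}(x)=B^{k}(w^{k})+\frac{\mu}{2}\|x-w^{k}\|^{2}$. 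In particular $T^{k}\ge B^{k}(w^{*})-B^{k}(w^{k})\ge 0$, so the quantity is a genuine non-negative Lyapunov function.

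Next I would make precise the ``replace-and-relinearise'' description of the step. Let $A_{j}^{k}(x)=\frac{1}{n}f_{j}(x)+\frac{1}{n}\sum_{i\ne j}\big[f_{i}(\phi_{i}^{k})+\langle f_{i}^{\prime}(\phi_{i}^{k}),x-\phi_{i}^{k}\rangle+\frac{\mu}{2}\|x-\phi_{i}^{k}\|^{2}\big]$ be $B^{k}$ with the $j$-th bound replaced by the true $f_{j}$. A short computation with the prox optimality condition shows that the $\phi_{j}^{k+1}$ produced by the algorithm is exactly $\arg\min_{x}A_{j}^{k}$. The crucial identity is $\min_{x}B^{k+1}=\min_{x}A_{j}^{k}$: because the stored gradient makes the new $j$-th strong-convexity bound tangent to $f_{j}$ at $\phi_{j}^{k+1}$, the functions $A_{j}^{k}\ge B^{k+1}$ agree in value and gradient at $\phi_{j}^{k+1}$, so $\phi_{j}^{k+1}$ minimises $B^{k+1}$ as well (indeed $w^{k+1}=\phi_{j}^{k+1}$) and the two minima coincide. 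Writing $G_{j}(x)=f_{j}(x)-\big[f_{j}(\phi_{j}^{k})+\langle f_{j}^{\prime}(\phi_{j}^{k}),x-\phi_{j}^{k}\rangle+\frac{\mu}{2}\|x-\phi_{j}^{k}\|^{2}\big]\ge0$ for the per-term gap and using $A_{j}^{k}=B^{k}+\frac{1}{n}G_{j}$, the one-step increase of the lower bound's minimum becomes the clean two-term minimisation
\[
\min_{x}B^{k+1}-\min_{x}B^{k}=\min_{x}\Big[\tfrac{\mu}{2}\|x-w^{k}\|^{2}+\tfrac{1}{n}G_{j}(x)\Big].
\]

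The heart of the proof is a lower bound on this per-coordinate progress of the form $\frac{\mu\,G_{j}(w^{k})}{\mu n+L-\mu}$. Here I would invoke the two technical lemmas already established: Lemma~\ref{lem:tight-bound} supplies, along the ray from $w^{k}$ towards the minimiser, a point $v_{j}$ with $\frac{L-\mu}{2}\|w^{k}-v_{j}\|^{2}=G_{j}(w^{k})$ such that $G_{j}(x)\ge\frac{L-\mu}{2}\|x-v_{j}\|^{2}$ on the segment $[w^{k},v_{j}]$; and Lemma~\ref{lem:constants} evaluates $\min_{x}\big[\frac{\mu}{2}\|x-w^{k}\|^{2}+\frac{L-\mu}{2n}\|x-v_{j}\|^{2}\big]=\frac{\mu(L-\mu)}{2(\mu n+L-\mu)}\|w^{k}-v_{j}\|^{2}$, which after substituting $\|w^{k}-v_{j}\|^{2}=\tfrac{2G_{j}(w^{k})}{L-\mu}$ equals exactly $\frac{\mu G_{j}(w^{k})}{\mu n+L-\mu}$; the worst case is the maximal gap curvature $L-\mu$, for which the bound is tight. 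The main obstacle is a direction-of-inequality subtlety: Lemma~\ref{lem:tight-bound} only controls $G_{j}$ on the segment, so I must verify that the unconstrained minimiser of $\frac{\mu}{2}\|x-w^{k}\|^{2}+\frac{1}{n}G_{j}(x)$ does not overshoot $v_{j}$ — equivalently that its distance from $w^{k}$ is at most $\|w^{k}-v_{j}\|$ — since otherwise restricting to the segment would bound the wrong quantity. The minimiser of the comparison quadratic is a convex combination of $w^{k}$ and $v_{j}$ and so lies inside the segment, and convexity of $G_{j}$ together with $G_{j}(v_{j})\ge0$ pins the true minimiser inside as well; this is precisely the geometry that the segment construction in Lemma~\ref{lem:tight-bound} is built to exploit.

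Finally I would assemble the pieces. Taking expectation over the uniform choice of $j$ and using $\frac{1}{n}\sum_{j}G_{j}(w^{k})=f(w^{k})-B^{k}(w^{k})$ gives
\[
\mathbb{E}\big[\min_{x}B^{k+1}\big]-B^{k}(w^{k})\ge\frac{\mu}{\mu n+L-\mu}\big(f(w^{k})-B^{k}(w^{k})\big)\ge\frac{\mu}{\mu n+L-\mu}\,T^{k},
\]
where the last step uses $f(w^{k})\ge f(w^{*})$. Since $\mathbb{E}[T^{k+1}]=f(w^{*})-\mathbb{E}[\min_{x}B^{k+1}]$, rearranging yields $\mathbb{E}[T^{k+1}]\le\big(1-\frac{\mu}{\mu n+L-\mu}\big)T^{k}$, as claimed.
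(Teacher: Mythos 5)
Your proposal is correct and follows essentially the same route as the paper's proof: the same decomposition of the one-step change in the lower bound's minimum into the quadratic term $\frac{\mu}{2}\left\Vert x-w^{k}\right\Vert ^{2}$ plus the per-term gap $\frac{1}{n}G_{j}$, the same invocation of Lemma \ref{lem:tight-bound} and Lemma \ref{lem:constants} to obtain the per-coordinate progress $\frac{\mu}{\mu n+L-\mu}G_{j}(w^{k})$, and the same expectation step via $\frac{1}{n}\sum_{j}G_{j}(w^{k})=f(w^{k})-B^{k}(w^{k})\geq T^{k}$. The only differences are presentational: you phrase the progress as an exact two-term minimisation through the auxiliary function $A_{j}^{k}$ and explicitly flag the need to confirm the minimiser stays inside the segment on which Lemma \ref{lem:tight-bound} controls the gap, a point the paper handles implicitly by fixing the direction $r_{j}$ and minimising over the worst case.
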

\begin{proof}
Let 
\[
l_{i}^{k}(x)=f_{i}(\phi_{i}^{k})+\left\langle f_{i}^{\prime}(\phi_{i}^{k}),x-\phi_{i}^{k}\right\rangle +\frac{\mu}{2}\left\Vert x-\phi_{i}^{k}\right\Vert ^{2},
\]

so that $B^{k}(x)=\frac{1}{n}\sum_{i}l_{i}(x)$. Let $w_{j}^{k+1}$
be the minimiser of $B^{k+1}$, which of course depends on the choice
of index $j$.

Now using the fact that $B^{k}$ is quadratic, we get that for all
$x$:
\[
B^{k}(x)=B^{k}(w^{k})+\left\langle \left(B^{k}\right)^{\prime}(w^{k}),x-w^{k}\right\rangle +\frac{\mu}{2}\left\Vert w^{k}-x\right\Vert ^{2},
\]

so using that $\left(B^{k}\right)^{\prime}(w^{k})=0$ by construction,
and evaluating at $w_{j}^{k+1}$, we have:
\begin{equation}
B^{k}(w_{j}^{k+1})-B^{k}(w^{k})=\frac{\mu}{2}\left\Vert w_{j}^{k+1}-w^{k}\right\Vert ^{2}.\label{eq:quad-term}
\end{equation}

We also need to bound the change between $B^{k+1}$ and $B^{k}$ at
the point $w_{j}^{k+1}$. Using the key property $\phi_{j}^{k+1}=w_{j}^{k+1}$,
the added lower bound $l_{j}^{k+1}(w_{j}^{k+1})$ is tight at $w_{j}^{k+1}$
by construction, so: 
\begin{eqnarray}
B^{k+1}(w_{j}^{k+1})-B^{k}(w_{j}^{k+1}) & = & \frac{1}{n}l_{j}^{k+1}(\phi_{j}^{k+1})-\frac{1}{n}l_{j}^{k}(w_{j}^{k+1})\nonumber \\
 & = & \frac{1}{n}f_{j}(w_{j}^{k+1})-\frac{1}{n}l_{j}^{k}(w_{j}^{k+1}).\label{eq:change-term}
\end{eqnarray}

The change in the $B^{k}(w^{k})$ term in the Lyapunov function between
steps is given by combining \ref{eq:quad-term} and \ref{eq:change-term}:
\begin{eqnarray*}
B^{k+1}(w_{j}^{k+1})-B^{k}(w^{k}) & = & \left[B^{k+1}(w_{j}^{k+1})-B^{k}(w_{j}^{k+1})\right]+\left[B^{k}(w_{j}^{k+1})-B^{k}(w^{k})\right]\\
 & = & \frac{\mu}{2}\left\Vert w_{j}^{k+1}-w^{k}\right\Vert ^{2}+\frac{1}{n}f_{j}(w_{j}^{k+1})-\frac{1}{n}l_{j}^{k}(w_{j}^{k+1}).
\end{eqnarray*}

Since our Lyapunov function is $f(w^{*})-B^{k}(w^{k})$ and $f(w^{*})$
is fixed, our goal is to lower bound the expectation of the change
in the $B^{k}$ function's minimum with respect to the choice of index
$j$:
\begin{equation}
\mathbb{E}\left[B^{k+1}(w_{j}^{k+1})-B^{k}(w^{k})\right]=\mathbb{E}\left[\frac{\mu}{2}\left\Vert w_{j}^{k+1}-w^{k}\right\Vert ^{2}+\frac{1}{n}f_{j}(w_{j}^{k+1})-\frac{1}{n}l_{j}^{k}(w_{j}^{k+1})\right].\label{eq:rem-terms}
\end{equation}

Let us fix the choice of $j$. We proceed by determining the minimum
over $w_{j}^{k+1}$ of the quantity inside the expectation. We will
also fix the unit vector direction $r_{j}$ from $w^{k}$ that $w_{j}^{k+1}$
is in. Then we can apply Lemma \ref{lem:tight-bound} with $\delta=f_{j}(w^{k})-l_{j}(w^{k})$,
to give: 
\[
\frac{1}{n}f_{j}(w_{j}^{k+1})-\frac{1}{n}l_{j}^{k}(w_{j}^{k+1})\geq\frac{1}{n}b(w_{j}^{k+1})-\frac{1}{n}l_{j}(w_{j}^{k+1})=\frac{L-\mu}{2n}\left\Vert w_{j}^{k+1}-v_{j}\right\Vert ^{2},
\]
where $v_{j}$ and $b$ are given in Lemma \ref{lem:tight-bound}.
Note that $v_{j}$ is coplanar with $w^{k}$ and $w_{j}^{k+1}$, namely
in direction $r_{j}$ from $w^{k}$. Applying this result: 
\begin{eqnarray*}
B^{k+1}(w_{j}^{k+1})-B^{k}(w^{k}) & = & \frac{\mu}{2}\left\Vert w_{j}^{k+1}-w^{k}\right\Vert ^{2}+\frac{1}{n}f_{j}(w_{j}^{k+1})-\frac{1}{n}l_{j}^{k}(w_{j}^{k+1})\\
 & \geq & \frac{\mu}{2}\left\Vert w_{j}^{k+1}-w^{k}\right\Vert ^{2}+\frac{L-\mu}{2n}\left\Vert w_{j}^{k+1}-v_{j}\right\Vert ^{2}\\
 & = & g_{j}(w_{j}^{k+1})\ \text{ (Lemma \ref{lem:constants})}\\
 & \geq & \frac{\mu\frac{L-\mu}{n}}{2\left(\mu+\frac{L-\mu}{n}\right)}\left\Vert w^{k}-v_{j}\right\Vert ^{2}\ \text{ (Lemma \ref{lem:constants})}\\
 & = & \frac{2}{L-\mu}\cdot\frac{\mu\frac{L-\mu}{n}}{2\left(\mu+\frac{L-\mu}{n}\right)}\left(f_{j}(w^{k})-l_{j}(w^{k})\right)\,\text{ (Lem. \ref{lem:tight-bound}, Eq \ref{eq:lemma-1-at-w})}\\
 & = & \frac{\mu}{\left(\mu n+L-\mu\right)}\left(f_{j}(w^{k})-l_{j}(w^{k})\right).
\end{eqnarray*}

We now take expectations of this quantity with respect to the choice
of index $j$, and note that the above argument holds for all possible
choices of the direction $r_{j}$. So:
\begin{eqnarray}
\mathbb{E}[B^{k+1}(w^{k+1})]-B^{k}(w^{k}) & \geq & \mathbb{E}\left[\frac{\mu}{\left(\mu n+L-\mu\right)}\left(f_{j}(w^{k})-l_{j}(w^{k})\right)\right]\label{eq:prox-finito-main-exp}\\
 & = & \frac{\mu}{\left(\mu n+L-\mu\right)}\left(f(w^{k})-B^{k}(w^{k})\right)\label{eq:prox-finito-inner-step}\\
 & \geq & \frac{\mu}{\left(\mu n+L-\mu\right)}\left(f(w^{*})-B^{k}(w^{k})\right).\nonumber 
\end{eqnarray}

So we have established a bound on Expression \ref{eq:rem-terms}.
Stating in terms of the Lyapunov function:

\[
\mathbb{E}[T^{k+1}]\leq\left(1-\frac{\mu}{\left(\mu n+L-\mu\right)}\right)T^{k}.
\]
\end{proof}
\begin{thm}
\label{thm:conv-main}The expected convergence rate of the Prox-Finito
algorithm for $n>1$ stopping at iteration $k$ is:
\end{thm}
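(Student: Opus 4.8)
The plan is to first unroll the one-step contraction of Theorem \ref{thm:one-step}, then convert the resulting bound on the Lyapunov function $T^k = f(w^*) - B^k(w^k)$ into a statement about the suboptimality of the returned iterate $x^k = w^k$, and finally to bound the initial value $T^0$. Applying Theorem \ref{thm:one-step} repeatedly and taking total expectations gives
\[
\mathbb{E}[T^k] \leq \left(1 - \frac{\mu}{\mu n + L - \mu}\right)^k T^0,
\]
which already exhibits the claimed geometric factor; the remaining work is to relate $T^k$ to $f(w^k) - f(w^*)$ and to evaluate $T^0$. The restriction $n>1$ is what keeps the leave-one-out update and the rate well defined.

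For the conversion I would exploit two facts about $B^k$ simultaneously. First, $B^k$ is $\mu$-strongly convex, being an average of the quadratics $l_i^k$ each having curvature $\mu$, and $w^k$ is its exact minimiser. Second, $B^k$ is a global lower bound on $f$, since $f_i(x) \geq l_i^k(x)$ for every $i$ by strong convexity, so averaging gives $f(x) \geq B^k(x)$ everywhere and in particular $B^k(w^*) \leq f(w^*)$. Evaluating the strong-convexity inequality for $B^k$ at $w^*$ and substituting this bound yields
\[
\frac{\mu}{2}\left\Vert w^k - w^* \right\Vert^2 \leq f(w^*) - B^k(w^k) = T^k.
\]
Since $f$ is $L$-smooth (an average of $L$-smooth terms) and $f'(w^*) = 0$, the Lipschitz smoothness upper bound (Theorem \ref{thm:lipschitz-ub}) then gives $f(w^k) - f(w^*) \leq \frac{L}{2}\left\Vert w^k - w^*\right\Vert^2 \leq \frac{L}{\mu} T^k$, so that
\[
\mathbb{E}[f(w^k) - f(w^*)] \leq \frac{L}{\mu}\left(1 - \frac{\mu}{\mu n + L - \mu}\right)^k T^0.
\]

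To finish I would compute $T^0$ under the stated initialisation $\phi_i^0 = \phi^0$ for all $i$. Then every $l_i^0$ shares the same base point, so $B^0(x) = f(\phi^0) + \left\langle f'(\phi^0), x - \phi^0\right\rangle + \frac{\mu}{2}\left\Vert x - \phi^0\right\Vert^2$, whose minimiser is $w^0 = \phi^0 - \frac{1}{\mu} f'(\phi^0)$. Substituting gives $B^0(w^0) = f(\phi^0) - \frac{1}{2\mu}\left\Vert f'(\phi^0)\right\Vert^2$, hence $T^0 = f(w^*) - f(\phi^0) + \frac{1}{2\mu}\left\Vert f'(\phi^0)\right\Vert^2 \leq \frac{1}{2\mu}\left\Vert f'(\phi^0)\right\Vert^2$, the inequality holding because $w^*$ minimises $f$. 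This produces a clean closed form of the shape $\frac{L}{2\mu^2}\left(1 - \frac{\mu}{\mu n + L - \mu}\right)^k \left\Vert f'(\phi^0)\right\Vert^2$.

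I expect the conversion step, rather than the unrolling or the $T^0$ computation, to be the main obstacle: the Lyapunov function measures the gap between the optimal value and the minimum of a lower-bounding model, a dual-like quantity, whereas the convergence rate must be stated for the primal iterate. The care lies in using that $B^k$ is at once a global lower bound on $f$ and strongly convex with known minimiser $w^k$. Obtaining a tight constant and avoiding a gratuitous extra condition-number factor may require working with the duality gap $f(w^k) - B^k(w^k)$ directly, in the spirit of the SDCA gap relation quoted earlier in this chapter, rather than routing through the distance bound $\left\Vert w^k - w^*\right\Vert^2$.
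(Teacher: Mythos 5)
Your proof is correct, and the conversion step is where you genuinely depart from the paper. The paper converts the Lyapunov bound into a primal suboptimality bound by reusing the one-step inequality $\mathbb{E}[B^{k+1}(w^{k+1})]-B^{k}(w^{k})\geq\frac{\mu}{\mu n+L-\mu}\left(f(w^{k})-B^{k}(w^{k})\right)$: since $B^{k+1}(w^{k+1})\leq f(w^{*})$, rearranging gives $f(w^{k})-f(w^{*})\leq f(w^{k})-B^{k}(w^{k})\leq\frac{\mu n+L-\mu}{\mu}\left[f(w^{*})-B^{k}(w^{k})\right]$ — exactly the SDCA-style gap relation you allude to in your closing paragraph. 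You instead go through the iterate distance: $B^{k}$ is an exact quadratic with curvature $\mu$ minimised at $w^{k}$ and lower-bounds $f$, so $\frac{\mu}{2}\left\Vert w^{k}-w^{*}\right\Vert ^{2}=B^{k}(w^{*})-B^{k}(w^{k})\leq T^{k}$, and $L$-smoothness of $f$ at $w^{*}$ then gives $f(w^{k})-f(w^{*})\leq\frac{L}{\mu}T^{k}$. Both routes are sound, but contrary to your own worry that the distance detour would cost a gratuitous condition-number factor, your constant $\frac{L}{\mu}$ is actually \emph{tighter} than the paper's $\frac{\mu n+L-\mu}{\mu}=n-1+\frac{L}{\mu}$ whenever $n>1$, and substantially so in the big-data regime where $n$ dominates $L/\mu$; so your bound implies the stated theorem. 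Your explicit evaluation of $T^{0}\leq\frac{1}{2\mu}\left\Vert f^{\prime}(\phi^{0})\right\Vert ^{2}$ under uniform initialisation is a correct extra step that the paper does not take here (it leaves $f(w^{*})-B^{0}(w^{0})$ symbolic). What the paper's route buys is that it stays entirely within function-value gaps and transfers verbatim to variants (such as the non-uniform sampling version of Theorem \ref{thm:prox-finito-weighted}) where the relevant one-step inequality is already in gap form; what yours buys is a cleaner constant and, as a by-product, a bound on $\mathbb{E}\left\Vert w^{k}-w^{*}\right\Vert ^{2}$ itself.
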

\[
E\left[f(w^{k})-f(w^{*})\right]\leq\left(1-\frac{\mu}{\left(\mu n+L-\mu\right)}\right)^{k}\frac{\mu n+L-\mu}{\mu}\left[f(w^{*})-B^{0}(w^{0})\right].
\]

\begin{proof}
From here the proof follows closely the SDCA proof. Taking the expectation
of Theorem \ref{thm:one-step} with respect to all $k$, then summing
from $0$ to $k-1$ gives:
\end{proof}
\begin{equation}
f(w^{*})-\mathbb{E}\left[B^{k}(w^{k})\right]\leq\left(1-\frac{\mu}{\left(\mu n+L-\mu\right)}\right)^{k}\left[f(w^{*})-B^{0}(w^{0})\right].\label{eq:prox-finito-summed}
\end{equation}

This expectation is unconditional. We now need to bound $f(w^{k})-f(w^{*})$
by some constant times $f(w^{*})-B^{k}(w^{k})$. Recall Equation \ref{eq:prox-finito-inner-step}:
\[
\mathbb{E}[B^{k+1}(w^{k+1})]-B^{k}(w^{k})\geq\frac{\mu}{\left(\mu n+L-\mu\right)}\left(f(w^{k})-B^{k}(w^{k})\right).
\]

Rearranging gives:
\begin{eqnarray*}
f(w^{k})-B^{k}(w^{k}) & \leq & \frac{\mu n+L-\mu}{\mu}\left[\mathbb{E}[B^{k+1}(w^{k+1})]-B^{k}(w^{k})\right]\\
 & = & \frac{\mu n+L-\mu}{\mu}\left[\left(\mathbb{E}[B^{k+1}(w^{k+1})]-f(w^{*})\right)-\left(B^{k}(w^{k})-f(w^{*})\right)\right].\\
 & \leq & \frac{\mu n+L-\mu}{\mu}\left[-\left(B^{k}(w^{k})-f(w^{*})\right)\right].
\end{eqnarray*}

We have used the negativity of $B^{k}(w^{k})-f(w^{*})$ in the last
step. Since $B^{k}(w^{k})\leq f(w^{*})$, we have by taking expectations
over all previous steps that:
\[
\mathbb{E}\left[f(w^{k})-f(w^{*})\right]\leq\frac{\mu n+L-\mu}{\mu}\mathbb{E}\left[f(w^{*})-B^{k}(w^{k})\right].
\]

This expectation is also unconditional. Combining with Equation \ref{eq:prox-finito-summed}
gives the result.

\subsection{Proof of Theorem \ref{thm:prox-finito-weighted}}

\label{sub:prox-finito-weighted-proof}
\begin{proof}
The key change in the proof of Theorem \ref{thm:one-step} is in the
calculation of the following Expectation in Equation \ref{eq:prox-finito-main-exp}:
\[
\mathbb{E}\left[\frac{1}{c_{j}}\left(f_{j}(w^{k})-l_{j}(w^{k})\right)\right].
\]

Applying the weighted probabilities to this expectation yields:

\begin{eqnarray*}
\mathbb{E}\left[\frac{1}{c_{j}}\left(f_{j}(w^{k})-l_{j}(w^{k})\right)\right] & = & \frac{1}{Z}\sum_{i}c_{i}\frac{1}{c_{i}}\left(f_{i}(w^{k})-l_{i}(w^{k})\right)\\
 & = & \frac{1}{Z}\sum_{i}\left(f_{i}(w^{k})-l_{i}(w^{k})\right).
\end{eqnarray*}

The value of $Z$ is:
\begin{eqnarray*}
\sum_{i}\frac{\mu n+L_{i}-\mu}{\mu} & = & n\frac{\mu n+\bar{L}-\mu}{\mu},
\end{eqnarray*}

so we have
\begin{eqnarray*}
\mathbb{E}\left[\frac{1}{c_{j}}\left(f_{j}(w^{k})-l_{j}(w^{k})\right)\right] & = & \frac{\mu}{\left(\mu n+\bar{L}-\mu\right)}\cdot\frac{1}{n}\sum_{i}\left(f_{i}(w^{k})-l_{i}(w^{k})\right)\\
 & = & \frac{\mu}{\left(\mu n+\bar{L}-\mu\right)}\left(f(w^{k})-B^{k}(w^{k})\right).
\end{eqnarray*}

The remaining steps mirror the proof of Theorem \ref{thm:one-step}
and \ref{thm:conv-main}.\end{proof}

\chapter{New Primal Incremental Gradient Methods}

\label{chap:saga}The fast primal incremental gradient methods in
the existing literature have a number of deficiencies. The SAG method
has a complex theory, and no theoretical support for its use on composite
objectives (defined in Section \ref{sec:composite}). The SVRG method
avoids those problems, but it has an awkward theoretical convergence
rate in terms of the number of gradient evaluations, and it does not
natively support non-strongly convex problems. In this chapter we
present the SAGA method, which has none of these deficiencies. The
SAGA method has a simple theory, with a theoretical convergence rate
directly comparable to SDCA ($2\times$ worse) and SAG ($2\times$
better). It is also adaptive to strong convexity, as it may be used
without modification and with the same step size on both convex and
strongly convex problems. If the problem is strongly convex, it will
converge at a fast linear rate, falling back to an $O(1/k)$ rate
otherwise. This avoids one of the main deficiencies of SDCA.

An interesting property of SAGA is that the algorithm can be written
in a number of different ways, making the update similar to either
SVRG, SAG or Finito. This sheds light on the relationships between
the currently known fast incremental gradient methods.

An earlier version of the work in this chapter has been published
as \citet{adefazio-nips2014}.

\global\long\def\stepsize{\gamma}

\section{Composite Objectives}

\label{sec:composite}

Several of the fast incremental gradient methods can be extended to
support \emph{composite} objectives (Table \ref{tab:incremental-grad-properties}).
A composite problem is just the usual finite sum structure $f(x)=\frac{1}{n}\sum_{i=1}^{n}f_{i}(x)$,
together with an additional term $h$:

\[
F(x)=f(x)+h(x),
\]

where $h\colon\mathbb{R}^{d}\rightarrow\mathbb{R}^{d}$ is convex
but potentially non-differentiable. For the SAGA algorithm we will
solve this problem using a \emph{proximal} approach, making use of
the proximal operator of the function $h$. For most problems, known
convergence rates of composite objectives are as good as those for
the non-composite case \citep{prox-dk} when we have access to the
proximal operator of $h$. Although the problem is not necessarily
smooth, by accessing the non-smooth portion through the proximal operator
it is still possible to converge linearly, which is not the case when
we only have access to (sub-)gradients.

\section{SAGA Algorithm}

\selectlanguage{english}%
We start with some known initial vector $x^{0}\in\mathbb{R}^{d}$
and known derivatives $f_{i}^{\prime}(\phi_{i}^{0})\in\mathbb{R}^{d}$
with $\phi_{i}^{0}=x^{0}$ for each $i$. These derivatives are stored
in a table data-structure of length $n$, or alternatively a $n\times d$
matrix. For many problems of interest, such as binary classification
and least-squares, only a single floating point value instead of a
full gradient vector needs to be stored (see Section~\ref{sec:impl}).\foreignlanguage{australian}{
The update is given in Algorithm \ref{alg:SAGA}.}

\selectlanguage{australian}%
\begin{algorithm}[H]
Given the value of $x^{k}$ and of each $f_{i}^{\prime}(\phi_{i}^{k})$
at the end of iteration $k$, the updates for iteration $k+1$ is
as follows:
\begin{enumerate}
\item Pick a $j$ uniformly at random.
\item Take $\phi_{j}^{k+1}=x^{k}$, and store $f_{j}^{\prime}(\phi_{j}^{k+1})$
in the table. All other entries in the table remain unchanged. The
quantity $\phi_{j}^{k+1}$ is not explicitly stored.
\item Update $x$ using $f_{j}^{\prime}(\phi_{j}^{k+1})$, $f_{j}^{\prime}(\phi_{j}^{k})$
and the table average:
\begin{equation}
w^{k+1}=x^{k}-\stepsize\left[f_{j}^{\prime}(\phi_{j}^{k+1})-f_{j}^{\prime}(\phi_{j}^{k})+\frac{1}{n}\sum_{i=1}^{n}f_{i}^{\prime}(\phi_{i}^{k})\right],\label{eq:saga-main-update}
\end{equation}
\[
x^{k+1}=\text{prox}_{1/\stepsize}^{h}\left(w^{k+1}\right).
\]

\end{enumerate}
\protect\caption{\label{alg:SAGA}SAGA}

\end{algorithm}

We establish the following theoretical results in Section \ref{sec:saga-theory}:

\textit{}

\selectlanguage{english}%
In the strongly convex case, when a step size of $\stepsize=1/(2(\mu n+L))$
is chosen, we have the following convergence rate in the composite
and hence also the non-composite case: {\small{}
\[
\mathbb{E}\left\Vert x^{k}-x^{*}\right\Vert ^{2}\leq\left(1-\frac{\mu}{2(\mu n+L)}\right)^{k}\left[\left\Vert x^{0}-x^{*}\right\Vert ^{2}+\frac{n}{\mu n+L}\left[f(x^{0})-\left\langle f^{\prime}(x^{*}),x^{0}-x^{*}\right\rangle -f(x^{*})\right]\right].
\]
}We prove this result in Section \ref{sub:saga-theory}. The requirement
of strong convexity can be relaxed from needing to hold for each $f_{i}$
to just holding on average, but at the expense of a worse geometric
rate ($1-\frac{\mu}{6(\mu n+L)})$, requiring a step size of $\stepsize=1/(3(\mu n+L))$.

In the non-strongly convex case, we have established the convergence
rate in terms of the average iterate, excluding step 0: $\bar{x}^{k}=\frac{1}{k}\sum_{t=1}^{k}x^{t}$.
Using a step size of $\stepsize=1/(3L)$ we have 
\[
\mathbb{E}\left[F(\bar{x}^{k})\right]-F(x^{*})\leq\frac{10n}{k}\left[\frac{2L}{n}\left\Vert x^{0}-x^{*}\right\Vert ^{2}+f(x^{0})-\left\langle f^{\prime}(x^{*}),x^{0}-x^{*}\right\rangle -f(x^{*})\right].
\]
This result is proved in Section \foreignlanguage{australian}{\ref{sec:saga-theory}}.
Importantly, when this step size $\stepsize=1/(3L)$ is used, our
algorithm \emph{automatically adapts} to the level of strong convexity
$\mu>0$ naturally present, giving a convergence rate of: {\small{}
\[
\mathbb{E}\left\Vert x^{k}\!-\!x^{*}\right\Vert ^{2}\leq\left(1\!-\!\min\left\{ \frac{1}{4n}\,,\,\frac{\mu}{3L}\right\} \right)^{k}\left[\left\Vert x^{0}-x^{*}\right\Vert ^{2}\!+\!\frac{2n}{3L}\left[f(x^{0})\!-\!\left\langle f^{\prime}(x^{*}),x^{0}-x^{*}\right\rangle \!-\!=f(x^{*})\right]\right].
\]
}Although any incremental gradient method can be applied to non-strongly
convex problems via the addition of a small quadratic regularisation,
the amount of regularisation is an additional tunable parameter which
our method avoids.

\selectlanguage{australian}%

\section{Relation to Existing Methods}

\label{sec:saga-relations}

\noindent In the non-composite case, it is possible to write the SAGA
algorithm in terms of two quantities at each step instead of one:
$x^{k}$ and $u^{k}$. 

\noindent 
\begin{algorithm}[H]
Given the value of $u^{k}$ and of each $f_{i}^{\prime}(\phi_{i}^{k})$
at the end of iteration $k$ the updates for iteration $k+1$, is
as follows:
\begin{enumerate}
\item Calculate $x^{k}$:
\begin{equation}
x^{k}=u^{k}-\stepsize\sum_{i=1}^{n}f_{i}^{\prime}(\phi_{i}^{k}).\label{eq:saga-alt-form-update}
\end{equation}

\item Update $u$ with $u^{k+1}=u^{k}+\frac{1}{n}(x^{k}-u^{k})$.
\item Pick a $j$ uniformly at random.
\item Take $\phi_{j}^{k+1}=x^{k}$, and store $f_{j}^{\prime}(\phi_{j}^{k+1})$
in the table replacing $f_{j}^{\prime}(\phi_{j}^{k})$. All other
entries in the table remain unchanged. The quantity $\phi_{j}^{k+1}$
is not explicitly stored.
\end{enumerate}
\protect\caption{\label{alg:2-var-saga}2 variable SAGA}

\end{algorithm}

Writing the algorithm in this form makes the relationship with prior
methods more apparent. We explore the relationship between SAGA and
the other fast incremental gradient methods in this section. By using
SAGA as a midpoint, we are able to provide a more unified view than
is available in the existing literature. A brief summary of the properties
of each method considered in this section is given in Table \ref{tab:incremental-grad-properties}.

\subsection{SAG}

\label{sub:saga-sag-relation}

If we eliminate $x^{k}$ we get an update for $u$ in SAGA of:
\begin{equation}
u^{k+1}=u^{k}-\frac{\gamma}{n}\sum_{i=1}^{n}f_{i}^{\prime}(\phi_{i}^{k}).\label{eq:saga-u-only-update}
\end{equation}
After translating notation, this is identical to the Stochastic Average
Gradient \citep[SAG,][]{sag} update discussed in Section \ref{sec:sag-background},
except instead of setting $\phi_{j}^{k+1}=u^{k}$, we are using a
more aggressive update of $\phi_{j}^{k+1}=x^{k}=u^{k}-\gamma\sum_{i}f_{i}^{\prime}(\phi_{i}^{k})$.
The order of the steps is also changed, as in SAG the $j$th gradient
is updated before the $x$ step is taken, whereas above it is updated
after. The order of the steps for SAG doesn't affect the algorithm
as it only has two steps, so the ordering change is not significant.

The$\phi_{j}^{k+1}=x^{k}$ change has a large effect on the estimate
of the gradient. In SAG, the gradient approximation $\frac{1}{n}\sum_{i}f_{i}^{\prime}(\phi_{i}^{k})$
is biased away from the true gradient, whereas for all the other methods
considered here, including SAGA, the gradient approximation is unbiased.
The trade-off for the increased bias is a decreased variance in the
$f_{i}^{\prime}(\phi_{i}^{k})$ gradients, due to the less aggressive
$\phi_{i}^{k}$ updates used.

There is another way of interpreting the SAG algorithm in relation
to the SAGA algorithm. We can rewrite the SAG update to make it more
directly comparable:

\selectlanguage{english}%
\begin{align}
\textrm{(SAG)}\qquad x^{k+1} & =x^{k}-\stepsize\left[\frac{f_{j}^{\prime}(x^{k})-f_{j}^{\prime}(\phi_{j}^{k})}{n}+\frac{1}{n}\sum_{i=1}^{n}f_{i}^{\prime}(\phi_{i}^{k})\right],\label{SAG}\\
\textrm{(SAGA)}\qquad x^{k+1} & =x^{k}-\stepsize\left[f_{j}^{\prime}(x^{k})-f_{j}^{\prime}(\phi_{j}^{k})+\frac{1}{n}\sum_{i=1}^{n}f_{i}^{\prime}(\phi_{i}^{k})\right],\label{eq:w-update}
\end{align}

Notice that the central terms are weighted by $1/n$ for SAG, in comparison
to SAGA. The lower variance of the SAG update arises because of the
smaller magnitude of those two terms. 

\selectlanguage{australian}%
The per update computation and storage costs of SAGA and SAG is essentially
the same. They both require a gradient evaluation and a gradient storage
(although the storage cost can usually be ameliorated for both, see
Section \ref{sec:impl}). The advantage over SAG is that SAGA has
a much more complete, simple and tight theory. The theoretical and
practical convergence rate is better for SAGA, and no theory exists
for the use of proximal operators in SAG whereas we establish such
a theory for SAGA in this work. Based also on our superior experimental
results in Section \ref{sec:saga-experiments}, we consider SAGA as
a replacement for SAG in most situations.

\subsection{SVRG}

\label{subsec:saga-svrg-relation}

Recall the $x^{k+1}$ update for SAGA (Equation \ref{eq:saga-main-update})
in the non-composite case:
\[
x^{k+1}=x^{k}-\stepsize f_{j}^{\prime}(x^{k})+\stepsize\left[f_{j}^{\prime}(\phi_{j}^{k})-\frac{1}{n}\sum_{i=1}^{n}f_{i}^{\prime}(\phi_{i}^{k})\right].
\]
This can be directly compared against the SVRG (Stochastic Variance
Reduced Gradient) \citep{svrg} update from Section \ref{sec:svrg-background},
if we translate the notation:
\[
x^{k+1}=x^{k}-\stepsize f_{j}^{\prime}(x^{k})+\stepsize\left[f_{j}^{\prime}(\tilde{x})-\frac{1}{n}\sum_{i=1}^{n}f_{i}^{\prime}(\tilde{x})\right].
\]
The vector $\tilde{x}$ is not updated every step, but rather the
loop over $k$ appears inside an outer loop, where $\tilde{x}$ is
updated at the start of each outer iteration. Essentially SAGA is
at the midpoint between SVRG and SAG; it updates the $\phi_{j}$ value
each time index $j$ is picked, whereas SVRG updates all of $\phi$
function's as a batch.

The SVRG algorithm has been previously motivated as being a form of
variance reduction \citep{svrg}, similar to approaches used for Monte
Carlo estimation of integrals. The SAGA algorithm can also be interpreted
as a form of variance reduction, see \citet{adefazio-nips2014}.

SVRG makes a trade-off between time and space. For the equivalent
practical convergence rate it makes 2x-3x more gradient evaluations,
but in doing so it does not need to store a table of gradients, but
a single average gradient. The usage of SAG v.s. SVRG is problem dependent.
For example for linear predictors where gradients can be stored as
a reduced vector of dimension $p-1$ for $p$ classes, SAGA is preferred
over SVRG both theoretically and in practice. For neural networks,
where no theory is available for either method, the storage of gradients
is generally more expensive than the additional backwards propagations,
giving SVRG the advantage. 

SVRG also has an additional parameter besides step size that needs
to be set, namely the number of iterations per inner loop ($m$).
This parameter can be set via the theory, or conservatively as $m=n$,
however doing so does not give anywhere near the best practical performance.
Having to tune one parameter instead of two is a practical advantage
for SAGA.

\begin{table}
\center \small \begin{tabular*}{\textwidth}{@{\extracolsep{\fill}} |cccccc|}
\hline 
 & SAGA & SAG & SDCA & SVRG & FINITO \tabularnewline
Strongly Convex (SC) & \cmark & \cmark & \cmark & \cmark & \cmark \tabularnewline
Convex, Non-SC* & \cmark & \cmark & \xmark & ? & ? \tabularnewline
Prox Reg. & \cmark & ? & \cmark \cite{sdca-admm} & \cmark & \xmark \tabularnewline 
Non-smooth & \xmark & \xmark & \cmark & \xmark & \xmark \tabularnewline
Low Storage Cost & \xmark & \xmark & \xmark & \cmark & \xmark \tabularnewline
Simple(-ish) Proof & \cmark & \xmark & \cmark & \cmark & \cmark \tabularnewline 
Adaptive to SC & \cmark & \cmark & \xmark & ? & ? \tabularnewline
\hline
\end{tabular*}

\protect\caption{\label{tab:incremental-grad-properties} Basic summary of method properties.
Question marks denote unproven, but not experimentally ruled out cases.
A tick indicates that the method can be applied in that setting, or
that the property holds. ({*}) Note that any method can be applied
to non-strongly convex problems by adding a small amount of $L_{2}$
regularisation, this row describes methods that do not require this
trick.}

\end{table}

\subsection{Finito}

The Finito method (Section \ref{sec:finito}) is also closely related
to SAGA. Recall that Finito uses a step of the following form:
\[
x^{k+1}=\bar{\phi}^{k}-\stepsize\sum_{i=1}^{n}f_{i}^{\prime}(\phi_{i}^{k}).
\]

Note that the step sized used is of the order of $\stepsize=1/\mu n$,
roughly comparable to the $\stepsize$ in SAGA. 

Using the two variable SAGA formulation, SAGA can be interpreted as
Finito, but with the quantity $\bar{\phi}$ replaced with $u$, which
is updated in the same way as $\bar{\phi}$, but \emph{in expectation}.
To see this, consider how $\bar{\phi}$ changes in value and in expectation,
conditioned on the previous step:
\[
\mathbb{E}\left[\bar{\phi}^{k+1}\right]=\mathbb{E}\left[\bar{\phi}^{k}+\frac{1}{n}\left(x^{k}-\phi_{j}^{k}\right)\right]=\bar{\phi}^{k}+\frac{1}{n}\left(x^{k}-\bar{\phi}^{k}\right).
\]

The update is identical in expectation to the update for $u$, $u^{k+1}=u^{k}+\frac{1}{n}(x^{k}-u^{k})$.

There are three advantages of SAGA over Finito. SAGA doesn't require
strong convexity to work, it has support for proximal operators, and
it doesn't require storing the $\phi_{i}$ values. The big advantage
of Finito is that when it is applicable, it can be used with a permuted
access ordering, which can make it up to two times faster. Finito
is particularly useful when $f_{i}$ is computationally expensive
to compute compared to the extra storage costs required for it over
the other methods.

\section{Implementation}

\label{sec:impl}

We briefly discuss some implementation concerns:
\begin{enumerate}
\item For many problems each derivative $f_{i}^{\prime}$ is just a simple
weighting of the $i$th data vector. Logistic regression and least
squares have this property. In such cases, instead of storing the
full derivative $f_{i}^{\prime}$ for each $i$, we need only store
the weighting constants. This reduces the storage requirements to
be the same as the SDCA method in practice. A similar trick can be
applied to multi-class classifiers with $p$ classes by storing $p-1$
values for each $i$.
\selectlanguage{english}%
\item Our algorithm assumes that initial gradients are known for each $f_{i}$
at the starting point $x^{0}$. Instead, a heuristic may be used where
during the first pass, data-points are introduced one-by-one, in a
non-randomized order, with averages computed in terms of those data-points
processed so far. This procedure has been successfully used with SAG
\citep{sag}.
\selectlanguage{australian}%
\item The SAGA update as stated is slower than necessary when derivatives
are sparse. A just-in-time updating of $u$ or $x$ may be performed
just as is suggested for SAG \citep*{sag}, which ensures that only
sparse updates are done at each iteration.
\item We give the form of SAGA for the case where each $f_{i}$ are strongly
convex. In practice however, we usually have only convex $f_{i}$,
with strong convexity in $f$ induced by the addition of a quadratic
regulariser. This quadratic regulariser may be split amount the $f_{i}$
functions evenly, to satisfy our assumptions. It is perhaps easier
to use a variant of SAGA where the regulariser $\frac{\mu}{2}||x||^{2}$
is explicit, such as the following modification of Equation \ref{eq:saga-main-update}:
\[
x^{k+1}=\left(1-\mu\gamma\right)x^{k}-\stepsize f_{j}^{\prime}(x^{k})+\stepsize\left[f_{j}^{\prime}(\phi_{j}^{k})-\frac{1}{n}\sum_{i}f_{i}^{\prime}(\phi_{i}^{k})\right].
\]
For sparse implementations instead of scaling $x^{k}$ at each step,
a separate scaling constant $\beta^{k}$ may be scaled instead, with
$\ensuremath{\beta^{k}x^{k}}$ being used in place of $x^{k}$. This
is a standard trick used with stochastic gradient methods.
\end{enumerate}
\begin{figure}
\begin{minipage}[c]{0.09\columnwidth}%
\begin{sideways} \bfseries  Function sub-optimality \end{sideways}%
\end{minipage}%
\begin{minipage}[t]{0.9\columnwidth}%
\begin{center}
\includegraphics[width=0.5\columnwidth]{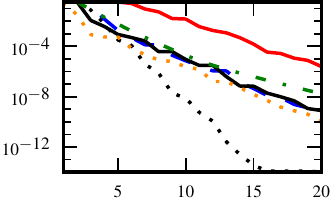}\includegraphics[width=0.49\columnwidth]{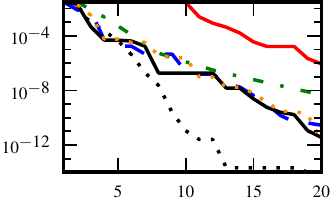}
\par\end{center}

\begin{center}
\medskip{}

\par\end{center}

\begin{center}
\includegraphics[width=0.5\columnwidth]{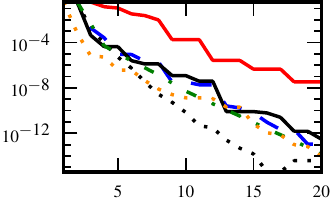}\includegraphics[width=0.49\columnwidth]{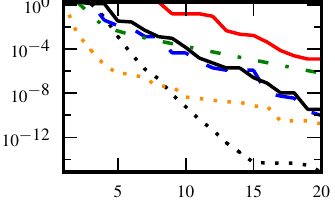}
\par\end{center}

\begin{center}
\textbf{Gradient Evaluations / $n$}
\par\end{center}%
\end{minipage}

\includegraphics[width=1\columnwidth]{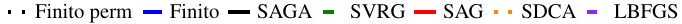}

\protect\caption{\label{fig:saga-l2}From left to right, top to bottom we have the
MNIST, COVTYPE, IJCNN1 and MILLIONSONG datasets with $L_{2}$ regularisation.}

\end{figure}

\begin{figure}
\begin{centering}
\begin{minipage}[c]{0.09\columnwidth}%
\begin{sideways} \bfseries  Function sub-optimality \end{sideways}%
\end{minipage}%
\begin{minipage}[t]{0.9\columnwidth}%
\begin{center}
\includegraphics[width=0.5\columnwidth]{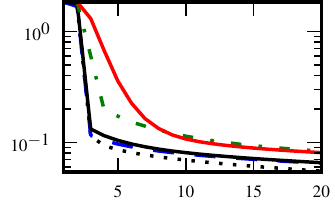}\includegraphics[width=0.49\columnwidth]{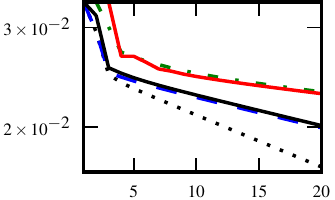}
\par\end{center}

\begin{center}
\medskip{}

\par\end{center}

\begin{center}
\includegraphics[width=0.5\columnwidth]{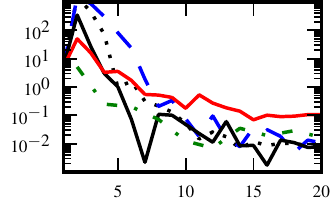}\includegraphics[width=0.49\columnwidth]{saga/millionsong-nl1-stats-runOn-2014-06-06--11-22-14}
\par\end{center}

\begin{center}
\textbf{Gradient Evaluations / $n$}
\par\end{center}%
\end{minipage}
\par\end{centering}

\includegraphics[width=1\columnwidth]{saga/caption}

\protect\caption{\label{fig:saga-l1}From left to right, top to bottom we have the
MNIST, COVTYPE, IJCNN1 and MILLIONSONG datasets with $L_{1}$ regularisation.}
\end{figure}

\section{Experiments}

\label{sec:saga-experiments}

We performed a series of experiments to validate the effectiveness
of SAGA. We tested a binary classifier on MNIST, COVTYPE, IJCNN1 and
a least squares predictor on MILLIONSONG. Details of these datasets
are in Section \ref{sec:finito-experiments}. We used the same code
base for each method, just changing the main update rule. SVRG was
tested with the recalibration pass used every $n$ iterations, as
suggested by \citet{semi}. Each method had its step size parameter
chosen to give the fastest convergence. In contrast to Section \ref{sec:finito-experiments},
we consider non-strongly convex problems, as well as the SVRG method,
which was not published at the time those earlier experiments were
run.

We tested with a $L_{2}$ regulariser, which all methods support,
and with a $L_{1}$ regulariser on a subset of the methods. The results
are shown in Figures \ref{fig:saga-l2} \& \ref{fig:saga-l1}. We
can see that Finito (perm) performs the best on a per epoch equivalent
basis, but it can be the most expensive method per step. We have observed
that Finito (perm) has the advantage in general on problems where
strong regularisation is used. SVRG is similarly fast on a per epoch
basis, but when considering the number of gradient evaluations per
epoch is double that of the other methods for this problem, it is
middle of the pack. SAGA can be seen to perform similar to the non-permuted
Finito case, and to SDCA. Note that SAG is slower than the other methods
at the beginning. To get the optimal results for SAG, an adaptive
step size rule needs to be used rather than the constant step size
we used.

In general, these tests confirm that the choice of methods should
be done based on their properties as discussed in Section \ref{sec:saga-relations},
rather than small constant factors in their convergence rates.

\section{SAGA Theory}

\label{sec:saga-theory}

In this section, all expectations are taken with respect to the choice
of $j$ at iteration $k+1$ and conditioned on $x^{k}$ and each $f_{i}^{\prime}(\phi_{i}^{k})$
unless stated otherwise. The main results are in Theorems \ref{thm:saga-main-lyp}
(strongly convex case) and \ref{thm:saga-non-sc-main}.

We start with Corollaries of standard convexity results.
\begin{cor}
\label{cor:saga-ip-bound} We can apply Theorem \ref{thm:full-strong-lb}
to our finite sum structure (using $y\leftarrow x$ and $x\leftarrow x^{*})$,
where each $f_{i}$ is $\mu$-strongly convex and is Lipschitz smooth
with constant $L$. We get that for all $x$ and $x^{*}$:
\begin{eqnarray*}
\left\langle f^{\prime}(x),x^{*}-x\right\rangle  & \leq & \frac{L-\mu}{L}\left[f(x^{*})-f(x)\right]-\frac{\mu}{2}\left\Vert x^{*}-x\right\Vert ^{2}\\
 &  & -\frac{1}{2Ln}\sum_{i}\left\Vert f_{i}^{\prime}(x^{*})-f_{i}^{\prime}(x)\right\Vert ^{2}-\frac{\mu}{L}\left\langle f^{\prime}(x^{*}),x-x^{*}\right\rangle .
\end{eqnarray*}

\end{cor}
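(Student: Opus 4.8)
The plan is to obtain the stated inequality as a direct consequence of the combined strong-convexity/smoothness lower bound of Theorem~\ref{thm:full-strong-lb}, applied termwise to each $f_i$ and then averaged over $i$. Recall that for a $\mu$-strongly convex, $L$-smooth function $g$, that theorem supplies a lower bound of the shape
\[
g(a) \geq g(b) + \langle g'(b), a-b\rangle + \frac{\mu L}{2(L-\mu)}\|a-b\|^2 + \frac{1}{2(L-\mu)}\|g'(a)-g'(b)\|^2 - \frac{\mu}{L-\mu}\langle g'(a)-g'(b), a-b\rangle.
\]
First I would instantiate this for each $g = f_i$ with the substitution $a \leftarrow x^*$ and $b \leftarrow x$, exactly as the corollary's ``$y\leftarrow x$, $x\leftarrow x^{*}$'' prescribes.

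Next I would average the $n$ resulting inequalities. Using $f = \frac1n\sum_i f_i$ and $f' = \frac1n\sum_i f_i'$, the function-value and linear terms collapse to $f(x^*)$, $f(x)$ and $\langle f'(x), x^*-x\rangle$; the squared-distance term is unchanged; the squared-gradient term becomes $\frac{1}{2(L-\mu)n}\sum_i\|f_i'(x^*)-f_i'(x)\|^2$; and, crucially, the inner-product term averages to $\langle f'(x^*)-f'(x), x^*-x\rangle$, since the argument $x^*-x$ is common to every term and linearity lets the average pass onto the gradients. This produces a single lower bound on $f(x^*)$ in terms of quantities evaluated only at $x$ and $x^*$.

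The one genuinely non-mechanical step is the final rearrangement, which is also what explains why the stated constants ($\frac{L-\mu}{L}$, $\frac{1}{2Ln}$, $\frac{\mu}{L}$) differ from the raw $\frac{1}{L-\mu}$-type coefficients. I would isolate $\langle f'(x), x^*-x\rangle$ on the left, then split the averaged cross term as $\langle f'(x^*)-f'(x), x^*-x\rangle = \langle f'(x^*), x^*-x\rangle - \langle f'(x), x^*-x\rangle$ and move the $\langle f'(x),\,\cdot\,\rangle$ piece back to the left-hand side. The left side then carries the factor $1 + \frac{\mu}{L-\mu} = \frac{L}{L-\mu}$; multiplying the whole inequality through by $\frac{L-\mu}{L}$ turns $\frac{\mu L}{2(L-\mu)}$ into $\frac{\mu}{2}$, $\frac{1}{2(L-\mu)n}$ into $\frac{1}{2Ln}$, and $\frac{\mu}{L-\mu}$ into $\frac{\mu}{L}$, after which rewriting $\langle f'(x^*), x^*-x\rangle = -\langle f'(x^*), x-x^*\rangle$ yields exactly the claimed bound. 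I expect this bookkeeping --- tracking the sign and the coefficient merge on the left-hand side --- to be the only place requiring care; everything else is substitution and linearity of the average.
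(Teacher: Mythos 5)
Your derivation is correct and is exactly the route the paper intends: instantiate Theorem \ref{thm:full-strong-lb} for each $f_{i}$ with the prescribed substitution, average over $i$ (keeping the per-term gradient-difference norms), then absorb the $\langle f^{\prime}(x),x^{*}-x\rangle$ piece of the cross term into the left-hand side and rescale by $\frac{L-\mu}{L}$, which produces the stated constants. The sign bookkeeping in your final rearrangement checks out against the corollary as written.
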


\begin{cor}
\label{cor:grad-diff-phii} By applying Theorem \ref{thm:lipschitz-lb}:
\[
\ensuremath{f(y)\geq f(x)+\left\langle f^{\prime}(x),y-x\right\rangle +\frac{1}{2L}\left\Vert f^{\prime}(x)-f^{\prime}(y)\right\Vert ^{2}}
\]
 with $y\leftarrow\phi_{i}$ and $x\leftarrow x^{*}$, for each $f_{i}$
and summing, we have that for all $\phi_{i}$ and $x^{*}$:
\[
\frac{1}{n}\sum_{i}\left\Vert f_{i}^{\prime}(\phi_{i})-f_{i}^{\prime}(x^{*})\right\Vert ^{2}\leq2L\left[\frac{1}{n}\sum_{i}f_{i}(\phi_{i})-f(x^{*})-\frac{1}{n}\sum_{i}\left\langle f_{i}^{\prime}(x^{*}),\phi_{i}-x^{*}\right\rangle \right].
\]
\end{cor}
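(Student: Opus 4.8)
The plan is to apply Theorem~\ref{thm:lipschitz-lb} termwise to each summand and then average, using the fact that every $f_i$ shares the same smoothness constant $L$. First I would instantiate the quoted lower bound on each individual $f_i$ with the substitution $y \leftarrow \phi_i$ and $x \leftarrow x^*$. Since each $f_i$ is convex and $L$-Lipschitz smooth, the hypotheses of Theorem~\ref{thm:lipschitz-lb} are met, and we obtain for each $i$ the inequality $f_i(\phi_i) \geq f_i(x^*) + \langle f_i'(x^*), \phi_i - x^* \rangle + \frac{1}{2L}\left\Vert f_i'(\phi_i) - f_i'(x^*)\right\Vert^2$.

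Next I would rearrange each of these $n$ inequalities so that the squared gradient-difference term stands alone on the left, giving $\frac{1}{2L}\left\Vert f_i'(\phi_i) - f_i'(x^*)\right\Vert^2 \leq f_i(\phi_i) - f_i(x^*) - \langle f_i'(x^*), \phi_i - x^* \rangle$. Because the coefficient $\frac{1}{2L}$ is identical across all terms, summing over $i$ and dividing by $n$ lets us pull $\frac{1}{2L}$ outside the summation in a single step; this is the one place where the assumption of a common $L$ (rather than per-term constants $L_i$) is genuinely used.

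Finally I would simplify the right-hand side via the definition $f(x^*) = \frac{1}{n}\sum_i f_i(x^*)$, collapsing the averaged constant terms into $f(x^*)$, and then multiply through by $2L$ to clear the leading coefficient. This produces the claimed bound directly. There is no substantive obstacle here: the corollary is a routine rearrangement of the smoothness lower bound followed by averaging, and the only point worth flagging is the uniform-$L$ factoring noted above, which is exactly what makes the single clean factor $2L$ appear on the right-hand side.
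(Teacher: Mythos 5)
Your proposal is correct and follows exactly the route the paper intends: apply Theorem \ref{thm:lipschitz-lb} to each $f_{i}$ with $y\leftarrow\phi_{i}$ and $x\leftarrow x^{*}$, rearrange, average over $i$ using $f(x^{*})=\frac{1}{n}\sum_{i}f_{i}(x^{*})$, and multiply by $2L$. The paper gives no separate proof beyond this instruction, and your observation about the uniform $L$ being what lets the single factor $2L$ come out is accurate.
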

\begin{lem}
\label{lem:saga-error-bound}It holds that for any $\phi_{i}$,$x^{*}$
and $\ensuremath{\beta>0}$, with $w^{k+1}$ and $x^{k}$ as defined
in Equation \ref{eq:saga-main-update}:
\begin{eqnarray*}
\mathbb{E}\left\Vert w^{k+1}\!-\!x^{k}\!-\!\gamma f^{\prime}(x^{*})\right\Vert ^{2} & \!\leq\! & \!\gamma^{2}\left(1+\beta^{-1}\right)\mathbb{E}\left\Vert f_{j}^{\prime}(\phi_{j}^{k})-f_{j}^{\prime}(x^{*})\right\Vert ^{2}\\
 & \!+\! & \!\gamma^{2}\left(1+\beta\right)\mathbb{E}\left\Vert f_{j}^{\prime}(x^{k})-f_{j}^{\prime}(x^{*})\right\Vert ^{2}\!-\!\gamma^{2}\beta\left\Vert f^{\prime}(x^{k})-f^{\prime}(x^{*})\right\Vert ^{2}.
\end{eqnarray*}
\end{lem}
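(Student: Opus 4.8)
The plan is to expand the definition of $w^{k+1}$, recognise the SAGA increment as an unbiased estimator of $-\gamma f^{\prime}(x^{k})$ re-centred at the reference gradient $f^{\prime}(x^{*})$, and then control its second moment by an exact bias--variance split followed by a weighted Young inequality. Throughout, $\e$ and all fluctuations are over the uniform choice of $j$, conditioned on $x^{k}$ and the stored table $\{f_{i}^{\prime}(\phi_{i}^{k})\}$.

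First I would substitute Equation \ref{eq:saga-main-update}, using $\phi_{j}^{k+1}=x^{k}$ so that $f_{j}^{\prime}(\phi_{j}^{k+1})=f_{j}^{\prime}(x^{k})$. Writing $d_{j}=f_{j}^{\prime}(x^{k})-f_{j}^{\prime}(x^{*})$ and $c_{j}=f_{j}^{\prime}(\phi_{j}^{k})-f_{j}^{\prime}(x^{*})$, the centred increment becomes
\[
w^{k+1}-x^{k}+\gamma f^{\prime}(x^{*})=-\gamma\left[d_{j}-c_{j}+\left(\tfrac{1}{n}\sum_{i}f_{i}^{\prime}(\phi_{i}^{k})-f^{\prime}(x^{*})\right)\right].
\]
The crucial observation is that under uniform sampling $\e[c_{j}]=\tfrac{1}{n}\sum_{i}f_{i}^{\prime}(\phi_{i}^{k})-f^{\prime}(x^{*})$ and $\e[d_{j}]=f^{\prime}(x^{k})-f^{\prime}(x^{*})$. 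Hence the bracket equals $A:=d_{j}-(c_{j}-\e[c_{j}])$, a random vector with mean $\e[A]=\e[d_{j}]=f^{\prime}(x^{k})-f^{\prime}(x^{*})$, so that the target is $\gamma^{2}\e\ns{A}$.

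Next I would apply the exact decomposition $\e\ns{A}=\ns{\e[A]}+\e\ns{A-\e[A]}$, where the fluctuation is $A-\e[A]=(d_{j}-\e[d_{j}])-(c_{j}-\e[c_{j}])$. To this fluctuation I apply the weighted Young inequality $\n{u-v}^{2}\le(1+\beta)\ns{u}+(1+\beta^{-1})\ns{v}$ with $u=d_{j}-\e[d_{j}]$ and $v=c_{j}-\e[c_{j}]$, then take expectations. The decisive step is to evaluate the two centred moments via $\e\ns{Z-\e[Z]}=\e\ns{Z}-\ns{\e[Z]}$ \emph{exactly} for $Z=d_{j}$ (retaining $-\ns{\e[d_{j}]}$), while merely discarding the analogous non-negative correction for $Z=c_{j}$. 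The retained $-(1+\beta)\ns{\e[d_{j}]}$ then combines with the $+\ns{\e[A]}=+\ns{\e[d_{j}]}$ bias term to leave precisely $-\beta\ns{f^{\prime}(x^{k})-f^{\prime}(x^{*})}$, and the surviving positive terms are $(1+\beta)\e\ns{d_{j}}$ and $(1+\beta^{-1})\e\ns{c_{j}}$. Multiplying through by $\gamma^{2}$ gives the claimed bound.

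The main obstacle is this final bookkeeping rather than any hard estimate: the negative term $-\gamma^{2}\beta\ns{f^{\prime}(x^{k})-f^{\prime}(x^{*})}$ appears \emph{only} because one keeps the full variance identity on the $d_{j}$ piece instead of the looser $\e\ns{d_{j}-\e[d_{j}]}\le\e\ns{d_{j}}$; everything else is routine rearrangement. I would also verify carefully the placement of $f^{\prime}(x^{*})$: the correctly centred residual is $w^{k+1}-x^{k}+\gamma f^{\prime}(x^{*})$, which vanishes at the fixed point $x^{k}=x^{*},\ \phi_{i}^{k}=x^{*}$, and it is exactly this sign that makes the identity $\tfrac{1}{n}\sum_{i}f_{i}^{\prime}(\phi_{i}^{k})-f^{\prime}(x^{*})=\e[c_{j}]$ produce the clean cancellation above.
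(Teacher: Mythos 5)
Your proposal is correct and follows essentially the same route as the paper's own proof: the same exact bias--variance split on the centred increment, the same weighted Young inequality with $(1+\beta)$ on the $f_{j}^{\prime}(x^{k})-f_{j}^{\prime}(x^{*})$ piece and $(1+\beta^{-1})$ on the $f_{j}^{\prime}(\phi_{j}^{k})-f_{j}^{\prime}(x^{*})$ piece, and the same bookkeeping in which the variance identity is kept exactly on the former term (producing $-\gamma^{2}\beta\left\Vert f^{\prime}(x^{k})-f^{\prime}(x^{*})\right\Vert ^{2}$) and merely relaxed on the latter. Your observation that the correctly centred residual is $w^{k+1}-x^{k}+\gamma f^{\prime}(x^{*})$ also agrees with the paper, whose proof and subsequent use of the lemma both work with that sign.
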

\begin{proof}
We follow a similar argument as occurs in the SVRG proof \citep{svrg}
for this term, but with a tighter argument. The tightening comes from
using $\left\Vert x+y\right\Vert ^{2}\leq(1+\beta^{-1})\left\Vert x\right\Vert ^{2}+(1+\beta)\left\Vert y\right\Vert ^{2}$
instead of the simpler $\beta=1$ case they use. The other key trick
is the use of the standard variance decomposition $\mathbb{E}[\left\Vert X-\mathbb{E}[X]\right\Vert ^{2}]=\mathbb{E}[\left\Vert X\right\Vert ^{2}]-\left\Vert \mathbb{E}[X]\right\Vert ^{2}$
three times.\foreignlanguage{english}{\texttt{
\begin{align*}
 & \mathbb{E}\left\Vert w^{k+1}-x^{k}+\stepsize f^{\prime}(x^{*})\right\Vert ^{2}\\
= & \mathbb{E}\bigg\Vert\underbrace{-\frac{\stepsize}{n}\sum_{i}f_{i}^{\prime}(\phi_{i}^{k})+\stepsize f^{\prime}(x^{*})+\stepsize\left[f_{j}^{\prime}(\phi_{j}^{k})-f_{j}^{\prime}(x^{k})\right]}_{:=\,\,\stepsize X}\bigg\Vert^{2}\\
= & \stepsize^{2}\mathbb{E}\Bigg\Vert\overbrace{\Bigg[f_{j}^{\prime}(\phi_{j}^{k})\!-\!f_{j}^{\prime}(x^{*})\!-\!\frac{1}{n}\sum_{i}f_{i}^{\prime}(\phi_{i}^{k})\!+\!f^{\prime}(x^{*})\Bigg]\!-\!\Bigg[f_{j}^{\prime}(x^{k})\!-\!f_{j}^{\prime}(x^{*})\!}^{X}-\!\overbrace{f^{\prime}(x^{k})\!+\!f^{\prime}(x^{*})\Bigg]}^{\mathbb{E}[X]}\Bigg\Vert^{2}\\
 & +\stepsize^{2}\bigg\Vert\overbrace{f^{\prime}(x^{k})-f^{\prime}(x^{*})}^{\mathbb{E}[X]}\bigg\Vert^{2}\\
\leq & \stepsize^{2}(1+\beta^{-1})\mathbb{E}\left\Vert f_{j}^{\prime}(\phi_{j}^{k})-f_{j}^{\prime}(x^{*})-\frac{1}{n}\sum_{i}f_{i}^{\prime}(\phi_{i})+f^{\prime}(x^{*})\right\Vert ^{2}\\
 & +\stepsize^{2}(1+\beta)\mathbb{E}\left\Vert f_{j}^{\prime}(x^{k})-f_{j}^{\prime}(x^{*})-f^{\prime}(x^{k})+f^{\prime}(x^{*})\right\Vert ^{2}+\stepsize^{2}\left\Vert f^{\prime}(x^{k})-f^{\prime}(x^{*})\right\Vert ^{2}\\
 & \textrm{\qquad\qquad(use variance decomposition twice more):}\\
\leq & \stepsize^{2}(1+\beta^{-1})\mathbb{E}\left\Vert f_{j}^{\prime}(\phi_{j}^{k})-f_{j}^{\prime}(x^{*})\right\Vert ^{2}+\stepsize^{2}(1+\beta)\mathbb{E}\left\Vert f_{j}^{\prime}(x^{k})-f_{j}^{\prime}(x^{*})\right\Vert ^{2}\\
 & -\stepsize^{2}\beta\left\Vert f^{\prime}(x^{k})-f^{\prime}(x^{*})\right\Vert ^{2}.
\end{align*}
}}\end{proof}
\begin{lem}
\label{lem:nonsc-bound}Define $\Delta=-\frac{1}{\stepsize}\left(w^{k+1}-x^{k}\right)-f^{\prime}(x^{k})$,
the difference between our approximation to the gradient at $x$ and
true gradient. Then using the SAGA step: 
\[
\mathbb{E}\left\Vert x^{k+1}-x^{*}\right\Vert ^{2}\leq\left\Vert x^{k}-x^{*}\right\Vert ^{2}-2\stepsize\mathbb{E}\left[F(x^{k+1})-F(x^{*})\right]+2\stepsize^{2}\mathbb{E}\left\Vert \Delta\right\Vert ^{2}.
\]
\end{lem}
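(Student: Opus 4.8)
The plan is to treat the SAGA update as an inexact proximal--gradient step and run a one--step contraction argument, with essentially all the difficulty concentrated in controlling the zero--mean error $\Delta$. First I would use the defining identity $\phi_{j}^{k+1}=x^{k}$ of the step (Equation \ref{eq:saga-main-update}) to write the gradient estimate explicitly, so that
\[
\Delta=f_{j}^{\prime}(x^{k})-f_{j}^{\prime}(\phi_{j}^{k})+\tfrac{1}{n}\sum_{i}f_{i}^{\prime}(\phi_{i}^{k})-f^{\prime}(x^{k}),\qquad w^{k+1}=x^{k}-\stepsize\left(f^{\prime}(x^{k})+\Delta\right).
\]
Taking the expectation over the uniform choice of $j$ gives $\mathbb{E}[\Delta]=0$, which is the crucial variance--reduction fact I will lean on. Next I would invoke the optimality condition of $x^{k+1}=\text{prox}_{1/\stepsize}^{h}(w^{k+1})$, which furnishes a subgradient $\xi\in\partial h(x^{k+1})$ satisfying $x^{k+1}=w^{k+1}-\stepsize\xi$, and hence the single clean relation $x^{k+1}-x^{k}=-\stepsize\left(f^{\prime}(x^{k})+\Delta+\xi\right)$.

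Then I would expand $\left\Vert x^{k+1}-x^{*}\right\Vert ^{2}$ and re-centre the linear term onto $x^{k+1}-x^{*}$ (substituting $x^{k}-x^{*}=(x^{k+1}-x^{*})+\stepsize(f^{\prime}(x^{k})+\Delta+\xi)$), which yields the identity
\[
\left\Vert x^{k+1}-x^{*}\right\Vert ^{2}=\left\Vert x^{k}-x^{*}\right\Vert ^{2}-2\stepsize\left\langle f^{\prime}(x^{k})+\Delta+\xi,\,x^{k+1}-x^{*}\right\rangle -\left\Vert x^{k+1}-x^{k}\right\Vert ^{2}.
\]
I would then bound the three pieces of the inner product separately. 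For $\xi$, the subgradient inequality for the convex $h$ gives $\left\langle \xi,x^{k+1}-x^{*}\right\rangle \geq h(x^{k+1})-h(x^{*})$. For $f^{\prime}(x^{k})$, I split $x^{k+1}-x^{*}=(x^{k+1}-x^{k})+(x^{k}-x^{*})$ and apply convexity of $f$ on the second part ($\left\langle f^{\prime}(x^{k}),x^{k}-x^{*}\right\rangle \geq f(x^{k})-f(x^{*})$) and the Lipschitz smoothness upper bound (Theorem \ref{thm:lipschitz-ub}) on the first, producing $-2\stepsize[f(x^{k+1})-f(x^{*})]+\stepsize L\left\Vert x^{k+1}-x^{k}\right\Vert ^{2}$ after the $f(x^{k})$ terms cancel. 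Together with the $\xi$ bound, the function--value contributions assemble into exactly $-2\stepsize[F(x^{k+1})-F(x^{*})]$.

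The main obstacle is the remaining term $-2\stepsize\left\langle \Delta,x^{k+1}-x^{*}\right\rangle$: since $x^{k+1}$ depends on $j$ through $\Delta$, its expectation is not simply zero. I would resolve this by the same split $x^{k+1}-x^{*}=(x^{k}-x^{*})+(x^{k+1}-x^{k})$: the term $\left\langle \Delta,x^{k}-x^{*}\right\rangle$ vanishes in expectation because $x^{k}-x^{*}$ is fixed given the past and $\mathbb{E}[\Delta]=0$, while the correlated residual is handled by Young's inequality, $-2\stepsize\left\langle \Delta,x^{k+1}-x^{k}\right\rangle \leq 2\stepsize^{2}\left\Vert \Delta\right\Vert ^{2}+\tfrac{1}{2}\left\Vert x^{k+1}-x^{k}\right\Vert ^{2}$, which delivers precisely the advertised $2\stepsize^{2}\left\Vert \Delta\right\Vert ^{2}$ term. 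Finally I would collect the quadratic residuals: the coefficient of $\left\Vert x^{k+1}-x^{k}\right\Vert ^{2}$ is $-1+\stepsize L+\tfrac{1}{2}=\stepsize L-\tfrac{1}{2}$, which is non--positive for the step sizes of interest ($\stepsize\leq 1/(2L)$, satisfied by every choice used in the downstream theorems), so this term may be dropped. Taking the conditional expectation over $j$ then kills the zero--mean cross term and leaves exactly the claimed inequality.
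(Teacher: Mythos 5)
Your proof is correct, but it takes a genuinely different route from the paper's at the one step that matters: controlling $\left\langle \Delta,x^{k+1}-x^{*}\right\rangle $. The paper introduces the auxiliary \emph{deterministic} proximal-gradient iterate $y^{k+1}=\text{prox}_{1/\gamma}^{h}\left(x^{k}-\gamma f^{\prime}(x^{k})\right)$, splits the inner product across $y^{k+1}$, kills $\left\langle \Delta,y^{k+1}-x^{*}\right\rangle $ in expectation, and bounds the residual $\left\langle \Delta,x^{k+1}-y^{k+1}\right\rangle $ via Cauchy--Schwarz plus non-expansiveness of the proximal operator, which yields $\gamma\left\Vert \Delta\right\Vert ^{2}$ directly. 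You instead split across $x^{k}$ (also measurable with respect to the past, so the cross term still vanishes under the conditional expectation) and control the correlated residual $\left\langle \Delta,x^{k+1}-x^{k}\right\rangle $ with Young's inequality, paying an extra $\frac{1}{2}\left\Vert x^{k+1}-x^{k}\right\Vert ^{2}$ that you absorb into the negative quadratic from your three-point identity. Your version is more elementary --- no auxiliary iterate and no appeal to prox non-expansiveness --- but it is slightly less sharp on the step size: your quadratic bookkeeping requires $\gamma L-\frac{1}{2}\leq0$, i.e.\ $\gamma\leq1/(2L)$, whereas the paper's argument only needs $1/\gamma\geq L$. Since the lemma is invoked in Theorem \ref{thm:saga-non-sc-main} with $\gamma=1/(3L)$, your stronger hypothesis is harmless there, but you should state it explicitly in the lemma if you keep this derivation, because the lemma as written carries no step-size restriction. (If you want to recover the paper's range, take the Young weight $\epsilon=1-\gamma L$ instead of $\epsilon=\frac{1}{2}$, which gives $\frac{\gamma^{2}}{1-\gamma L}\left\Vert \Delta\right\Vert ^{2}$ and still yields the stated $2\gamma^{2}\left\Vert \Delta\right\Vert ^{2}$ whenever $\gamma\leq1/(2L)$.) The remainder of your argument --- the subgradient inequality for $h$, the convexity and Lipschitz bounds assembling $-2\gamma\left[F(x^{k+1})-F(x^{*})\right]$, and $\mathbb{E}[\Delta]=0$ --- matches the paper's.
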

\begin{proof}
This Lemma's proof is based on a similar Lemma 3 for SVRG \citep[2nd eq. on p.11][]{prox-svrg}.
The optimality condition of the proximal operator:
\[
z=\text{prox}_{1/\stepsize}^{h}\left(y\right):=\text{argmin}_{x\in\mathbb{R}^{d}}\,\left\{ h(x)+\frac{1}{2\stepsize}\Vert x-y\Vert^{2}\right\} 
\]
implies that there exists a subgradient $\xi$ of $h$ at $z$ such
that:
\[
\frac{1}{\stepsize}(z-y)+\xi=0.
\]
Applying this to our setting, where $x^{k+1}=\text{prox}_{1/\stepsize}^{h}\left(w^{k+1}\right)$,
and 
\[
w^{k+1}=x^{k}-\stepsize\left(f^{\prime}(x^{k})+\Delta\right),
\]
we have $\xi$ as a subgradient of $h$ at $x^{k+1}$. Now we proceed
to lower bound $F(x^{*})=f(x^{*})+h(x^{*})$, by treating the two
parts separately. Firstly the regulariser about $x^{k+1}$, using
the subgradient of $h$ defined above:
\begin{equation}
h(x^{*})\geq h(x^{k+1})+\left\langle \xi,x^{*}-x^{k+1}\right\rangle .\label{eq:saga-reg-lb}
\end{equation}
Next the loss about $x^{k}$:
\begin{equation}
f(x^{*})\geq f(x^{k})+\left\langle f^{\prime}(x^{k}),x^{*}-x^{k}\right\rangle .\label{eq:saga-nonsc-xstar-lb}
\end{equation}
We want $x^{k+1}$ on the right, so we further lower bound around
$x^{k+1}$ by using the negated Lipschitz smoothness upper bound (Theorem
\ref{thm:lipschitz-ub}):
\begin{equation}
f(x^{k})\geq f(x^{k+1})-\left\langle f^{\prime}(x^{k}),x^{k+1}-x^{k}\right\rangle -\frac{L}{2}\left\Vert x^{k+1}-x^{k}\right\Vert ^{2}.\label{eq:saga-nonsc-lip-lb}
\end{equation}
Combining Equation \ref{eq:saga-reg-lb} with \ref{eq:saga-nonsc-xstar-lb}
and \ref{eq:saga-nonsc-lip-lb} gives:
\begin{eqnarray*}
F(x^{*}) & \geq & f(x^{k+1})+h(x^{k+1})-\left\langle f^{\prime}(x^{k}),x^{k+1}-x\right\rangle -\frac{L}{2}\left\Vert x^{k+1}-x^{k}\right\Vert ^{2}\\
 &  & +\left\langle f^{\prime}(x^{k}),x^{*}-x^{k}\right\rangle +\left\langle \xi,x^{*}-x^{k+1}\right\rangle \\
 & = & F(x^{k+1})-\left\langle f^{\prime}(x^{k}),x^{k+1}-x^{k}-x^{*}+x^{k}\right\rangle -\frac{L}{2}\left\Vert x^{k+1}-x^{k}\right\Vert ^{2}-\left\langle \xi,x^{k+1}-x^{*}\right\rangle \\
 & = & F(x^{k+1})-\left\langle f^{\prime}(x^{k}),x^{k+1}-x^{*}\right\rangle -\frac{L}{2}\left\Vert x^{k+1}-x^{k}\right\Vert ^{2}-\left\langle \xi,x^{k+1}-x^{*}\right\rangle .
\end{eqnarray*}
Now we do some careful manipulation of the two inner product terms,
using the optimality condition
\begin{eqnarray*}
\xi & = & \frac{1}{\stepsize}(w^{k+1}-x^{k+1})\\
 & = & \frac{1}{\stepsize}(x^{k}-\stepsize\left(f^{\prime}(x^{k})+\Delta\right)-x^{k+1})\\
 & = & \frac{1}{\stepsize}(x^{k}-x^{k+1})-\left(f^{\prime}(x^{k})+\Delta\right),
\end{eqnarray*}
to simplify: 
\begin{eqnarray*}
 & = & -\left\langle f^{\prime}(x^{k}),x^{k+1}-x^{*}\right\rangle -\left\langle \xi,x^{k+1}-x^{*}\right\rangle \\
 &  & -\left\langle f^{\prime}(x^{k})+\xi,x^{k+1}-x^{*}\right\rangle \\
 & = & -\left\langle f^{\prime}(x^{k})+\frac{1}{\stepsize}(x^{k}-x^{k+1})-\left(f^{\prime}(x^{k})+\Delta\right),x^{k+1}-x^{*}\right\rangle 
\end{eqnarray*}
\begin{eqnarray*}
 & = & -\frac{1}{\stepsize}\left\langle x^{k}-x^{k+1},x^{k+1}-x^{*}\right\rangle +\left\langle \Delta,x^{k+1}-x^{*}\right\rangle \\
 & = & -\frac{1}{\stepsize}\left\langle x^{k}-x^{k+1},x^{k+1}-x^{k}+x^{k}-x^{*}\right\rangle +\left\langle \Delta,x^{k+1}-x^{*}\right\rangle \\
 & = & \frac{1}{\stepsize}\left\Vert x^{k+1}-x^{k}\right\Vert ^{2}-\frac{1}{\stepsize}\left\langle x^{k}-x^{k+1},x^{k}-x^{*}\right\rangle +\left\langle \Delta,x^{k+1}-x^{*}\right\rangle .
\end{eqnarray*}
So we have using $\frac{1}{\stepsize}\geq L$:
\begin{equation}
F(x^{*})\geq F(x^{k+1})+\frac{1}{\stepsize}\left\langle x^{k+1}-x^{k},x^{k}-x^{*}\right\rangle +\left\langle \Delta,x^{k+1}-x^{*}\right\rangle +\frac{1}{2\stepsize}\left\Vert x^{k+1}-x^{k}\right\Vert ^{2}.\label{eq:non-sc-midpoint}
\end{equation}
Now using: 
\begin{eqnarray*}
\left\Vert x^{k+1}-x^{*}\right\Vert ^{2} & = & \left\Vert x^{k+1}-x^{k}+x^{k}-x^{*}\right\Vert ^{2}\\
 & = & \left\Vert x^{k+1}-x^{k}\right\Vert ^{2}+2\left\langle x^{k+1}-x^{k},x^{k}-x^{*}\right\rangle +\left\Vert x^{k}-x^{*}\right\Vert ^{2},
\end{eqnarray*}
we have:
\[
F(x^{*})\geq F(x^{k+1})+\frac{1}{2\stepsize}\left\Vert x^{k+1}-x^{*}\right\Vert ^{2}-\frac{1}{2\stepsize}\left\Vert x^{k}-x^{*}\right\Vert ^{2}+\left\langle \Delta,x^{k+1}-x^{*}\right\rangle .
\]

Now we just need to bound the $\left\langle \Delta,x^{k+1}-x^{*}\right\rangle $
term. We do this by introducing a new point $y^{k+1}$ which is the
result of a non-stochastic proximal gradient step from $x^{k}$:
\[
y^{k+1}=\text{prox}_{1/\stepsize}^{h}\left(x^{k}-\stepsize f^{\prime}(x^{k})\right).
\]
So to lower bound $\left\langle \Delta,x^{k+1}-x^{*}\right\rangle $
we will look at upper bounding it's negation. We start by adding and
subtracting $y^{k+1}$, then we apply Cauchy-Schwarz:
\begin{eqnarray*}
-\left\langle \Delta,x^{k+1}-x^{*}\right\rangle  & = & -\left\langle \Delta,x^{k+1}-y^{k+1}\right\rangle -\left\langle \Delta,y^{k+1}-x^{*}\right\rangle \\
 & = & -\left\langle \Delta,x^{k+1}-y^{k+1}\right\rangle -\left\langle \Delta,y^{k+1}-x^{*}\right\rangle \\
 & \leq & \left\Vert \Delta\right\Vert \left\Vert x^{k+1}-y^{k+1}\right\Vert -\left\langle \Delta,y^{k+1}-x^{*}\right\rangle 
\end{eqnarray*}

Now note that the term $\left\langle \Delta,y^{k+1}-x^{*}\right\rangle $
has expectation zero as the right hand side is independent of the
$j$ choice, and $\mathbb{E}[\Delta]=0$. We next apply non-expansivity
of the proximal operator for $x^{k+1}-y^{k+1}$, and plug in the definitions
of $w^{k+1}$ in terms of $\Delta$:
\begin{eqnarray*}
\left\Vert \Delta\right\Vert \left\Vert x^{k+1}-y^{k+1}\right\Vert  & \leq & \left\Vert \Delta\right\Vert \left\Vert w^{k+1}-x^{k}+\stepsize f^{\prime}(x^{k})\right\Vert \\
 & = & \left\Vert \Delta\right\Vert \left\Vert x^{k}-\stepsize\left(f^{\prime}(x^{k})+\Delta\right)-x^{k}+\stepsize f^{\prime}(x^{k})\right\Vert \\
 & = & \left\Vert \Delta\right\Vert \left\Vert -\stepsize\left(\Delta\right)\right\Vert \\
 & = & \stepsize\left\Vert \Delta\right\Vert ^{2}
\end{eqnarray*}
Plugging this back into Equation \ref{eq:non-sc-midpoint} gives a
scaling of the result:
\[
F(x^{*})\geq\mathbb{E}\left[F(x^{k+1})\right]+\frac{1}{2\gamma}\mathbb{E}\left\Vert x^{k+1}-x^{*}\right\Vert ^{2}-\frac{1}{2\gamma}\left\Vert x^{k}-x^{*}\right\Vert ^{2}-\stepsize\mathbb{E}\left\Vert \Delta\right\Vert ^{2}.
\]

Multiplying through by $2\stepsize$ and rearranging gives the result.\end{proof}
\begin{lem}
\label{lem:delta-bound}We will also need to use a simple modification
of Lemma \ref{lem:saga-error-bound} to bound $\Delta$ as defined
in Lemma \ref{lem:nonsc-bound} directly above:
\[
\mathbb{E}\left\Vert \Delta\right\Vert ^{2}\leq\left(1+\beta^{-1}\right)\mathbb{E}\left\Vert f_{j}^{\prime}(\phi_{j}^{k})+f_{j}^{\prime}(x^{*})\right\Vert ^{2}+(1+\beta)\mathbb{E}\left\Vert f_{j}^{\prime}(x^{k})-f_{j}^{\prime}(x^{*})\right\Vert ^{2}.
\]
\end{lem}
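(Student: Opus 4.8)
The plan is to recognise that $\Delta$ is nothing but the centered (mean-zero) version of the quantity already controlled in Lemma~\ref{lem:saga-error-bound}, so the bound follows from a single application of the variance decomposition together with that lemma, rather than from any fresh computation.

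First I would write $\Delta$ out explicitly. Substituting $w^{k+1}$ from Equation~\ref{eq:saga-main-update} and using the key property $\phi_j^{k+1}=x^k$ (so that $f_j^{\prime}(\phi_j^{k+1})=f_j^{\prime}(x^k)$), one finds
\[
\Delta = g - f^{\prime}(x^k), \qquad g := f_j^{\prime}(x^k)-f_j^{\prime}(\phi_j^k)+\frac{1}{n}\sum_i f_i^{\prime}(\phi_i^k),
\]
i.e. $\Delta$ is the SAGA gradient estimate minus the true gradient. Since $\mathbb{E}[f_j^{\prime}(x^k)]=f^{\prime}(x^k)$ and $\mathbb{E}[f_j^{\prime}(\phi_j^k)]=\frac1n\sum_i f_i^{\prime}(\phi_i^k)$, the estimate is unbiased, $\mathbb{E}[g]=f^{\prime}(x^k)$, and hence $\mathbb{E}[\Delta]=0$.

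Next I would relate this to Lemma~\ref{lem:saga-error-bound}, which controls $\mathbb{E}\Vert w^{k+1}-x^k+\gamma f^{\prime}(x^*)\Vert^2=\gamma^2\,\mathbb{E}\Vert g-f^{\prime}(x^*)\Vert^2$. Setting $Y:=g-f^{\prime}(x^*)$, its mean is $\mathbb{E}[Y]=f^{\prime}(x^k)-f^{\prime}(x^*)$, so that $Y-\mathbb{E}[Y]=g-f^{\prime}(x^k)=\Delta$. The standard variance decomposition $\mathbb{E}\Vert Y\Vert^2=\mathbb{E}\Vert Y-\mathbb{E}[Y]\Vert^2+\Vert\mathbb{E}[Y]\Vert^2$ then gives
\[
\mathbb{E}\Vert\Delta\Vert^2 = \mathbb{E}\Vert g-f^{\prime}(x^*)\Vert^2 - \Vert f^{\prime}(x^k)-f^{\prime}(x^*)\Vert^2.
\]
Dividing the inequality of Lemma~\ref{lem:saga-error-bound} through by $\gamma^2$ to bound $\mathbb{E}\Vert g-f^{\prime}(x^*)\Vert^2$ and substituting, the $-\beta\Vert f^{\prime}(x^k)-f^{\prime}(x^*)\Vert^2$ term combines with the $-\Vert f^{\prime}(x^k)-f^{\prime}(x^*)\Vert^2$ above into a single non-positive remainder, which I discard to obtain the claimed bound. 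Equivalently, and more in the spirit of the ``simple modification'' phrasing, I could re-run the proof of Lemma~\ref{lem:saga-error-bound} verbatim on $\Delta$, centering the variance decomposition about $f^{\prime}(x^k)=\mathbb{E}[g]$ instead of shifting by $\gamma f^{\prime}(x^*)$; that route is even shorter, since no $\gamma f^{\prime}(x^*)$ offset has to be carried and the final negative term is dropped at once.

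There is no genuine analytic obstacle here, as the statement is essentially a corollary. The only point demanding care is the bookkeeping of signs: one must verify that $\Delta$ really is the centered form $Y-\mathbb{E}[Y]$, which hinges entirely on the unbiasedness $\mathbb{E}[g]=f^{\prime}(x^k)$, and then confirm that the leftover curvature term $-(\beta+1)\Vert f^{\prime}(x^k)-f^{\prime}(x^*)\Vert^2$ carries the correct sign to be discarded rather than retained.
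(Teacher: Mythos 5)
Your proposal is correct, and your primary route differs from the paper's in a small but genuine way. The paper proves the lemma from scratch: it expands $\Delta$ using the update rule, identifies $\Delta=X-\mathbb{E}[X]$ with $X=f_{j}^{\prime}(x^{k})-f_{j}^{\prime}(\phi_{j}^{k})$, applies the variance decomposition to obtain $\mathbb{E}\left\Vert \Delta\right\Vert ^{2}\leq\mathbb{E}\left\Vert f_{j}^{\prime}(x^{k})-f_{j}^{\prime}(\phi_{j}^{k})\right\Vert ^{2}$, and only then adds and subtracts $f_{j}^{\prime}(x^{*})$ and applies the $\left\Vert a+b\right\Vert ^{2}\leq(1+\beta^{-1})\left\Vert a\right\Vert ^{2}+(1+\beta)\left\Vert b\right\Vert ^{2}$ splitting — i.e.\ it repeats the $\beta$-bookkeeping of Lemma \ref{lem:saga-error-bound}. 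You instead centre the variance decomposition on $Y=g-f^{\prime}(x^{*})$, observe that $Y-\mathbb{E}[Y]=\Delta$ and that $\gamma^{2}\mathbb{E}\left\Vert Y\right\Vert ^{2}$ is exactly the quantity already bounded in Lemma \ref{lem:saga-error-bound}, and so obtain the result as a one-line corollary of that lemma. Your route is slightly more economical (no re-derivation of the $\beta$ splitting) and in fact retains the extra non-positive remainder $-(\beta+1)\left\Vert f^{\prime}(x^{k})-f^{\prime}(x^{*})\right\Vert ^{2}$ before discarding it, so it is marginally tighter; the paper's route is self-contained and is essentially your second suggested alternative (re-running the argument centred at $f^{\prime}(x^{k})$). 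One incidental point in your favour: the displayed statement of the lemma contains a sign typo, $f_{j}^{\prime}(\phi_{j}^{k})+f_{j}^{\prime}(x^{*})$ where a minus sign is intended (as is clear from both proofs and from how the bound is later invoked in Theorem \ref{thm:saga-non-sc-main}), and your derivation produces the corrected form.
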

\begin{proof}
We first expand and group terms:
\begin{eqnarray*}
 & = & \mathbb{E}\left\Vert \Delta\right\Vert ^{2}\\
 & = & \mathbb{E}\left\Vert \frac{1}{\stepsize}\left(w^{k+1}-x^{k}\right)+f^{\prime}(x^{k})\right\Vert ^{2}\\
 & = & \mathbb{E}\left\Vert -\left[f_{j}^{\prime}(\phi_{j}^{k+1})-f_{j}^{\prime}(\phi_{j}^{k})+\frac{1}{n}\sum_{i=1}^{n}f_{i}^{\prime}(\phi_{i}^{k})\right]+f^{\prime}(x^{k})\right\Vert ^{2}\\
 & = & \mathbb{E}\left\Vert -\left[f_{j}^{\prime}(x^{k})-f_{j}^{\prime}(\phi_{j}^{k})\right]+f^{\prime}(x^{k})-\frac{1}{n}\sum_{i=1}^{n}f_{i}^{\prime}(\phi_{i}^{k})\right\Vert ^{2}\\
 & = & \mathbb{E}\left\Vert \left[f_{j}^{\prime}(x^{k})-f_{j}^{\prime}(\phi_{j}^{k})\right]-\left[f^{\prime}(x^{k})-\frac{1}{n}\sum_{i=1}^{n}f_{i}^{\prime}(\phi_{i}^{k})\right]\right\Vert ^{2}.
\end{eqnarray*}

Now let $X=f_{j}^{\prime}(x^{k})-f_{j}^{\prime}(\phi_{j}^{k}).$ Note
that $E[X]=f^{\prime}(x^{k})-\frac{1}{n}\sum_{i=1}^{n}f_{i}^{\prime}(\phi_{i}^{k})$,
which is the other term, so we have $\mathbb{E}\left\Vert \Delta\right\Vert ^{2}=\mathbb{E}\left\Vert X-E[X]\right\Vert ^{2}$.
Applying decomposition of variance, $\mathbb{E}[\left\Vert X-\mathbb{E}[X]\right\Vert ^{2}]=\mathbb{E}[\left\Vert X\right\Vert ^{2}]-\left\Vert \mathbb{E}[X]\right\Vert ^{2}$.
We drop the 2nd term $-\left\Vert \mathbb{E}[X]\right\Vert ^{2}$
since we want an upper bound:
\[
\mathbb{E}\left\Vert \Delta\right\Vert ^{2}\leq\mathbb{E}\left\Vert f_{j}^{\prime}(x^{k})-f_{j}^{\prime}(\phi_{j}^{k})\right\Vert ^{2}.
\]

Now add and subtract $f_{j}^{\prime}(x^{*})$:
\begin{eqnarray*}
\mathbb{E}\left\Vert \Delta\right\Vert ^{2} & \leq & \mathbb{E}\left\Vert f_{j}^{\prime}(x^{k})-f_{j}^{\prime}(x^{*})-\left[f_{j}^{\prime}(\phi_{j}^{k})+f_{j}^{\prime}(x^{*})\right]\right\Vert ^{2}\\
 & \leq & \left(1+\beta^{-1}\right)\mathbb{E}\left\Vert f_{j}^{\prime}(\phi_{j}^{k})+f_{j}^{\prime}(x^{*})\right\Vert ^{2}+(1+\beta)\mathbb{E}\left\Vert f_{j}^{\prime}(x^{k})-f_{j}^{\prime}(x^{*})\right\Vert ^{2}.
\end{eqnarray*}

\end{proof}

\subsection{Linear convergence for strongly convex problems}

\label{sub:saga-theory}
\begin{thm}
\label{thm:saga-main-lyp} With $x^{*}$ the optimal solution, we
define the Lyapunov function: 
\[
T^{k}=\frac{1}{n}\sum_{i}f_{i}(\phi_{i}^{k})-f(x^{*})-\frac{1}{n}\sum_{i}\left\langle f_{i}^{\prime}(x^{*}),\phi_{i}^{k}-x^{*}\right\rangle +c\left\Vert x^{k}-x^{*}\right\Vert ^{2}.
\]

with $c=\frac{1}{2\stepsize(1-\stepsize\mu)n}$. It is implicitly
a function of $x^{k}$ and each $\phi_{i}^{k}$. \foreignlanguage{english}{Then
under the SAGA algorithm it holds that:}
\[
\mathbb{E}[T^{k+1}]\leq(1-\frac{1}{\kappa})T^{k},
\]

where \foreignlanguage{english}{$\frac{1}{\kappa}=\stepsize\mu=\frac{\mu}{2(\mu n+L)}$
for step size} \foreignlanguage{english}{$\stepsize=\frac{1}{2(\mu n+L)}$
and $\frac{1}{\kappa}=\min\left\{ \frac{1}{4n},\,\frac{\mu}{3L}\right\} $
for step size $\gamma=\frac{1}{3L}$.}\end{thm}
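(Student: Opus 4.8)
The plan is to use $T^k$ as a Lyapunov function and show the one-step drift obeys $\mathbb{E}[T^{k+1}]-T^k\leq-\frac{1}{\kappa}T^k$. Write $T^k=P^k+c\ns{x^k-x^*}$, where $P^k=\frac{1}{n}\sum_i f_i(\phi_i^k)-f(x^*)-\frac{1}{n}\sum_i\left\langle f_i'(x^*),\phi_i^k-x^*\right\rangle$ is the gradient-table part, which is nonnegative by convexity of each $f_i$. I would carry out the argument in the non-composite strongly convex case, where $x^{k+1}=w^{k+1}$, and note that the composite case follows the same template with Lemmas \ref{lem:nonsc-bound} and \ref{lem:delta-bound} replacing the direct expansion used below.

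First I would compute $\mathbb{E}[P^{k+1}]$ exactly. Since $j$ is uniform, each $\phi_i$ is refreshed to $x^k$ with probability $1/n$ and left unchanged otherwise, so $\mathbb{E}[P^{k+1}]=(1-\frac{1}{n})P^k+\frac{1}{n}[f(x^k)-f(x^*)]$, where the cross terms $\frac{1}{n}\left\langle f'(x^*),x^k-x^*\right\rangle$ vanish because $f'(x^*)=0$. Hence the table part contributes $\frac{1}{n}[f(x^k)-f(x^*)]-\frac{1}{n}P^k$ to the drift. For the distance part I would expand $\mathbb{E}\ns{x^{k+1}-x^*}=\ns{x^k-x^*}-2\stepsize\left\langle f'(x^k),x^k-x^*\right\rangle+\stepsize^2\mathbb{E}\ns{v_j}$, since the SAGA direction $v_j$ is unbiased for $f'(x^k)$; then bound the cross term with Corollary \ref{cor:saga-ip-bound} and the variance $\stepsize^2\mathbb{E}\ns{v_j}$ with Lemma \ref{lem:saga-error-bound}. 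The term $-\frac{\mu}{2}\ns{x^k-x^*}$ supplied by Corollary \ref{cor:saga-ip-bound} is exactly what produces $-c\stepsize\mu\ns{x^k-x^*}=-\frac{1}{\kappa}c\ns{x^k-x^*}$, which fixes $\frac{1}{\kappa}=\stepsize\mu$ for the distance part.

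After these substitutions, and using Corollary \ref{cor:grad-diff-phii} to rewrite the stale-gradient variance as $\frac{1}{n}\sum_i\ns{f_i'(\phi_i^k)-f_i'(x^*)}\leq 2LP^k$, I would collect the drift as a linear combination of nonnegative quantities: $P^k$, $\ns{x^k-x^*}$, $f(x^k)-f(x^*)$, and the gradient terms $\frac{1}{n}\sum_i\ns{f_i'(x^k)-f_i'(x^*)}$ and $\ns{f'(x^k)}$. The $\ns{x^k-x^*}$ coefficient cancels by design, while the $P^k$ coefficient, $-\frac{1}{n}+2Lc\stepsize^2(1+\beta^{-1})$, must be driven below $-\frac{1}{\kappa}$; together with $\frac{1}{\kappa}\leq\stepsize\mu$ this is the pair of scalar conditions that pin down the admissible step sizes and the rate, including the $\min$ that appears for $\stepsize=1/(3L)$.

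The delicate part, and the main obstacle, is the leftover $f(x^k)-f(x^*)$ coefficient $\frac{1}{n}-2c\stepsize\frac{L-\mu}{L}$, which is \emph{strictly positive} for every $\stepsize<1/L$ and so cannot simply be discarded. The resolution is that the choice $c=\frac{1}{2\stepsize(1-\stepsize\mu)n}$ is precisely calibrated so that this positive term is absorbed by the negative gradient-norm contributions left over from Lemma \ref{lem:saga-error-bound}. Using Jensen ($\frac{1}{n}\sum_i\ns{f_i'(x^k)-f_i'(x^*)}\geq\ns{f'(x^k)}$) and the Polyak--\L{}ojasiewicz consequence of $\mu$-strong convexity ($\ns{f'(x^k)}\geq 2\mu[f(x^k)-f(x^*)]$), the combined function-value and gradient-norm piece becomes nonpositive. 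Taking the free constant $\beta=(2\mu n+L)/L$ makes this clean: it sends the coefficient of $\frac{1}{n}\sum_i\ns{f_i'(x^k)-f_i'(x^*)}$ to exactly zero and leaves a residual of the form $p[f(x^k)-f(x^*)]-b\ns{f'(x^k)}$ with $p=2\mu b$, which is $\leq0$ by PL. I would finish by verifying that this same $\beta$ satisfies the $P^k$ condition at $\stepsize=\frac{1}{2(\mu n+L)}$ (yielding $\frac{1}{\kappa}=\stepsize\mu=\frac{\mu}{2(\mu n+L)}$), and by re-running the scalar bookkeeping at $\stepsize=1/(3L)$ to obtain $\frac{1}{\kappa}=\min\{1/(4n),\,\mu/(3L)\}$.
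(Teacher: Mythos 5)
Your plan reproduces the paper's argument for the strongly convex case essentially step for step: the same split of $T^{k}$ into the gradient-table part and $c\left\Vert x^{k}-x^{*}\right\Vert ^{2}$, the exact computation of the table part's expected change, the expansion of the distance term with the unbiasedness of the SAGA direction, the variance bound of Lemma \ref{lem:saga-error-bound} with the free parameter $\beta$, the inner-product bound of Corollary \ref{cor:saga-ip-bound}, Corollary \ref{cor:grad-diff-phii} to convert the stale-gradient variance into the table part, and finally Theorem \ref{thm:strong-ub} (your PL step) to absorb the leftover positive $f(x^{k})-f(x^{*})$ coefficient via the $-c\gamma^{2}\beta\left\Vert f^{\prime}(x^{k})-f^{\prime}(x^{*})\right\Vert ^{2}$ term. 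Your identification of the sign problem in the $f(x^{k})-f(x^{*})$ coefficient, and of $\beta=(2\mu n+L)/L$ as the value that zeroes the $\frac{1}{n}\sum_{i}\left\Vert f_{j}^{\prime}(x^{k})-f_{j}^{\prime}(x^{*})\right\Vert ^{2}$ coefficient and leaves a residual killed by $\left\Vert f^{\prime}(x)\right\Vert ^{2}\geq2\mu\left[f(x)-f(x^{*})\right]$, matches the paper's constant verification exactly.

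The one place your plan would go astray is the composite case. The theorem is stated for the SAGA algorithm including the proximal step, and you propose to handle $h\neq0$ by substituting Lemmas \ref{lem:nonsc-bound} and \ref{lem:delta-bound} for the direct expansion. Those lemmas yield a bound of the form
\[
\mathbb{E}\left\Vert x^{k+1}-x^{*}\right\Vert ^{2}\leq\left\Vert x^{k}-x^{*}\right\Vert ^{2}-2\gamma\mathbb{E}\left[F(x^{k+1})-F(x^{*})\right]+2\gamma^{2}\mathbb{E}\left\Vert \Delta\right\Vert ^{2},
\]
which produces no $-\gamma\mu\left\Vert x^{k}-x^{*}\right\Vert ^{2}$ contraction and hence cannot deliver $\frac{1}{\kappa}=\gamma\mu$; that route is what the paper uses for the $O(1/k)$ non-strongly-convex rate, not for this theorem. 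The correct and much lighter device is the non-expansiveness of the proximal operator applied to $x^{k+1}=\text{prox}_{1/\gamma}^{h}(w^{k+1})$ and $x^{*}=\text{prox}_{1/\gamma}^{h}(x^{*}-\gamma f^{\prime}(x^{*}))$ (the fixed-point characterisation of the optimum), which gives $\left\Vert x^{k+1}-x^{*}\right\Vert ^{2}\leq\left\Vert w^{k+1}-x^{*}+\gamma f^{\prime}(x^{*})\right\Vert ^{2}$ and lets the entire non-composite calculation go through verbatim with $f^{\prime}(x^{*})$ retained (it need not vanish when $h\neq0$). With that substitution your argument is the paper's proof.
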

\begin{proof}
The first three terms in $T^{k+1}$ are straightforward to simplify:
\begin{eqnarray*}
\mathbb{E}\left[\frac{1}{n}\sum_{i}f_{i}(\phi_{i}^{k+1})\right] & = & \frac{1}{n}f(x^{k})+\left(1-\frac{1}{n}\right)\frac{1}{n}\sum_{i}f_{i}(\phi_{i}^{k}).\\
\mathbb{E}\left[-\frac{1}{n}\sum_{i}\left\langle f_{i}^{\prime}(x^{*}),\phi_{i}^{k+1}-x^{*}\right\rangle \right] & = & -\frac{1}{n}\left\langle f^{\prime}(x^{*}),x^{k}-x^{*}\right\rangle \\
 &  & -\left(1-\frac{1}{n}\right)\frac{1}{n}\sum_{i}\left\langle f_{i}^{\prime}(x^{*}),\phi_{i}^{k}-x^{*}\right\rangle .
\end{eqnarray*}

For the change in the last term of $T$ we apply the non-expansiveness
of the proximal operator. Note that this is the only place in the
proof where we use the fact that $x^{*}$ is an optimality point.
\begin{eqnarray*}
c\left\Vert x^{k+1}-x^{*}\right\Vert ^{2} & = & c\left\Vert \text{prox}_{1/\gamma}(w^{k+1})-\text{prox}_{1/\stepsize}(x^{*}-\stepsize f^{\prime}(x^{*}))\right\Vert ^{2}\\
 & \leq & c\left\Vert w^{k+1}-x^{*}+\stepsize f^{\prime}(x^{*})\right\Vert ^{2}.
\end{eqnarray*}
Then we expand the quadratic and apply $\mathbb{E}[w^{k+1}]=x^{k}-\stepsize f^{\prime}(x^{k})$
to simplify the inner product term:\foreignlanguage{english}{
\begin{align*}
 & c\mathbb{E}\left\Vert w^{k+1}-x^{*}+\stepsize f^{\prime}(x^{*})\right\Vert ^{2}\\
= & c\mathbb{E}\left\Vert x^{k}-x^{*}+w^{k+1}-x^{k}+\stepsize f^{\prime}(x^{*})\right\Vert ^{2}\\
= & c\left\Vert x^{k}-x^{*}\right\Vert ^{2}+2c\mathbb{E}\left[\left\langle w^{k+1}-x^{k}+\stepsize f^{\prime}(x^{*}),x^{k}-x^{*}\right\rangle \right]+c\mathbb{E}\left\Vert w^{k+1}-x^{k}+\stepsize f^{\prime}(x^{*})\right\Vert ^{2}\\
= & c\left\Vert x^{k}-x^{*}\right\Vert ^{2}-2c\stepsize\left\langle f^{\prime}(x^{k})-f^{\prime}(x^{*}),x^{k}-x^{*}\right\rangle +c\mathbb{E}\left\Vert w^{k+1}-x^{k}+\stepsize f^{\prime}(x^{*})\right\Vert ^{2}
\end{align*}
We now apply Lemma }\ref{lem:saga-error-bound} to bound the error
term $c\mathbb{E}\left\Vert w^{k+1}-x^{k}+\stepsize f^{\prime}(x^{*})\right\Vert ^{2}$,
giving:
\begin{eqnarray*}
 &  & c\mathbb{E}\left\Vert x^{k+1}-x^{*}\right\Vert ^{2}\\
 & \leq & c\left\Vert x^{k}-x^{*}\right\Vert ^{2}-c\stepsize^{2}\beta\left\Vert f^{\prime}(x^{k})-f^{\prime}(x^{*})\right\Vert ^{2}\\
 &  & -2c\stepsize\left\langle f^{\prime}(x^{k}),x^{k}-x^{*}\right\rangle +2c\stepsize\left\langle f^{\prime}(x^{*}),x^{k}-x^{*}\right\rangle \\
 &  & +\left(1+\beta^{-1}\right)c\stepsize^{2}\mathbb{E}\left\Vert f_{j}^{\prime}(\phi_{j}^{k})-f_{j}^{\prime}(x^{*})\right\Vert ^{2}+\left(1+\beta\right)c\stepsize^{2}\mathbb{E}\left\Vert f_{j}^{\prime}(x^{k})-f_{j}^{\prime}(x^{*})\right\Vert ^{2}.
\end{eqnarray*}
The value of $\beta$ shall be fixed later. Now we bound $-2c\gamma\left\langle f^{\prime}(x),x-x^{*}\right\rangle $
with Corollary \ref{cor:saga-ip-bound} and then apply Corollary \ref{cor:grad-diff-phii}
to bound $\mathbb{E}\left\Vert f_{j}^{\prime}(\phi_{j})-f_{j}^{\prime}(x^{*})\right\Vert ^{2}$:\foreignlanguage{english}{
\begin{align*}
c\mathbb{E}\left\Vert x^{k+1}-x^{*}\right\Vert ^{2}\leq & \left(c-c\stepsize\mu\right)\left\Vert x^{k}-x^{*}\right\Vert ^{2}\\
 & +\left((1\!+\!\beta)c\stepsize^{2}-\frac{c\stepsize}{L}\right)\mathbb{E}\left\Vert f_{j}^{\prime}(x^{k})-f_{j}^{\prime}(x^{*})\right\Vert ^{2}\!-\!c\stepsize^{2}\beta\left\Vert f^{\prime}(x^{k})-f^{\prime}(x^{*})\right\Vert ^{2}\\
 & -\frac{2c\stepsize(L-\mu)}{L}\left[f(x^{k})-f(x^{*})-\left\langle f^{\prime}(x^{*}),x^{k}-x^{*}\right\rangle \right]\\
 & +2\left(1+\beta^{-1}\right)c\stepsize^{2}L\left[\frac{1}{n}\sum_{i}f_{i}(\phi_{i})-f(x^{*})-\frac{1}{n}\sum_{i}\left\langle f_{i}^{\prime}(x^{*}),\phi_{i}-x^{*}\right\rangle \right].
\end{align*}
}

We can now combine the bounds we have derived for each term in $T$,
and pull out a fraction $\frac{1}{\kappa}$ of $T^{k}$ (for any $\kappa$
at this point). Together with the inequality from Theorem \ref{thm:strong-ub}:
\[
-\left\Vert f^{\prime}(x)-f^{\prime}(x^{*})\right\Vert ^{2}\leq-2\mu\left[f(x)-f(x^{*})-\left\langle f^{\prime}(x^{*}),x-x^{*}\right\rangle \right],
\]
 that yields:\foreignlanguage{english}{{\small{}
\begin{align}
\mathbb{E}[T^{k+1}]-T^{k}\leq & -\frac{1}{\kappa}T^{k}+\left(\frac{1}{n}-\frac{2c\stepsize(L-\mu)}{L}-2c\stepsize^{2}\mu\beta\right)\left[f(x^{k})-f(x^{*})-\left\langle f^{\prime}(x^{*}),x^{k}-x^{*}\right\rangle \right]\nonumber \\
 & +\left(\frac{1}{\kappa}+2(1\!+\!\beta^{-1})c\stepsize^{2}L-\frac{1}{n}\right)\left[\frac{1}{n}\sum_{i}f_{i}(\phi_{i}^{k})-f(x^{*})-\frac{1}{n}\sum_{i}\left\langle f_{i}^{\prime}(x^{*}),\phi_{i}^{k}-x^{*}\right\rangle \right]\nonumber \\
 & +\left(\frac{1}{\kappa}-\stepsize\mu\right)c\left\Vert x^{k}-x^{*}\right\Vert ^{2}+\left((1+\beta)\stepsize-\frac{1}{L}\right)c\stepsize\mathbb{E}\left\Vert f_{j}^{\prime}(x^{k})-f_{j}^{\prime}(x^{*})\right\Vert ^{2}.\label{eq:constants-strong}
\end{align}
}}{\small \par}

Note that each of the terms in square brackets are positive. In Section
\ref{sec:verf-constants} we verify that for the step size \foreignlanguage{english}{$\stepsize=\frac{1}{2(\mu n+L)}$,
the constants $c=\frac{1}{2\stepsize(1-\stepsize\mu)n}$, and $\kappa=\frac{1}{\stepsize\mu}$
together with $\beta=\frac{2\mu n+L}{L}$ ensure that each of the
quantities in the round brackets are non-positive. Likewise in Section
}\ref{sub:3L-step}\foreignlanguage{english}{ we show that for the
step size $\stepsize=\frac{1}{3L}$, the same $c=\frac{1}{2\stepsize(1-\stepsize\mu)n}$
as above can be used with $\beta=2$ and $\frac{1}{\kappa}=\min\left\{ \frac{1}{4n},\,\frac{\mu}{3L}\right\} $
to ensure non-positive terms.}\end{proof}
\begin{thm}
For step size $\stepsize=\frac{1}{2(\mu n+L)}$ it holds that: {\small{}
\[
\mathbb{E}\left[\left\Vert x^{k}\!-\!x^{*}\right\Vert ^{2}\right]\leq\left(1-\frac{\mu}{2(\mu n+L)}\right)^{k}\left[\left\Vert x^{0}-x^{*}\right\Vert ^{2}\!+\!\frac{n}{\mu n+L}\left[f(x^{0})\!-\!\left\langle f^{\prime}(x^{*}),x^{0}-x^{*}\right\rangle \!-\!f(x^{*})\right]\right],
\]
}and for step size $\stepsize=\frac{1}{3L}$ it holds that:

\selectlanguage{english}%
{\small{}
\[
\mathbb{E}\left\Vert x^{k}\!-\!x^{*}\right\Vert ^{2}\leq\left(1-\min\left\{ \frac{1}{4n}\,,\,\frac{\mu}{3L}\right\} \right)^{k}\left[\left\Vert x^{0}-x^{*}\right\Vert ^{2}\!+\!\frac{2n}{3L}\left[f(x^{0})\!-\!\left\langle f^{\prime}(x^{*}),x^{0}-x^{*}\right\rangle \!-\!f(x^{*})\right]\right].
\]
}{\small \par}

\selectlanguage{australian}%
Here the expectation is over all choices of index $j^{k}$ up to step
$k$. \end{thm}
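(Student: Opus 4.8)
The plan is to derive both rates by unrolling the one-step contraction of Theorem \ref{thm:saga-main-lyp} and then translating the Lyapunov function back into the quantity $\mathbb{E}\|x^k-x^*\|^2$ of interest. First I would take the total expectation of the conditional bound $\mathbb{E}[T^{k+1}]\leq(1-\frac{1}{\kappa})T^k$ over all index choices up to step $k$, using the tower property so that the one-step estimate iterates into $\mathbb{E}[T^k]\leq(1-\frac{1}{\kappa})^k T^0$. Here $\frac{1}{\kappa}=\gamma\mu=\frac{\mu}{2(\mu n+L)}$ for the first step size and $\frac{1}{\kappa}=\min\{\frac{1}{4n},\frac{\mu}{3L}\}$ for the second, exactly the geometric factors recorded in Theorem \ref{thm:saga-main-lyp}, which already establishes that the round-bracket constants are non-positive for these choices of $\gamma$, $c$ and $\beta$.

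Next I would bound $\|x^k-x^*\|^2$ from above by $T^k$. Writing $f(x^*)=\frac{1}{n}\sum_i f_i(x^*)$, the first three terms of $T^k$ regroup into $\frac{1}{n}\sum_i\left[f_i(\phi_i^k)-f_i(x^*)-\langle f_i'(x^*),\phi_i^k-x^*\rangle\right]$, which is an average of Bregman divergences of the convex functions $f_i$ and hence nonnegative. Therefore $T^k\geq c\|x^k-x^*\|^2$, which gives $\mathbb{E}\|x^k-x^*\|^2\leq\frac{1}{c}\mathbb{E}[T^k]\leq\frac{1}{c}(1-\frac{1}{\kappa})^k T^0$.

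It then remains to evaluate $T^0$ and the prefactor $\frac{1}{c}$. Using the initialisation $\phi_i^0=x^0$ for all $i$, the same regrouping collapses the Bregman part of $T^0$ into $f(x^0)-f(x^*)-\langle f'(x^*),x^0-x^*\rangle$, so that $\frac{1}{c}T^0=\|x^0-x^*\|^2+\frac{1}{c}\left[f(x^0)-f(x^*)-\langle f'(x^*),x^0-x^*\rangle\right]$. With $c=\frac{1}{2\gamma(1-\gamma\mu)n}$ we have $\frac{1}{c}=2\gamma(1-\gamma\mu)n$; substituting $\gamma=\frac{1}{2(\mu n+L)}$ yields $\frac{1}{c}=(1-\gamma\mu)\frac{n}{\mu n+L}\leq\frac{n}{\mu n+L}$, while $\gamma=\frac{1}{3L}$ yields $\frac{1}{c}=(1-\gamma\mu)\frac{2n}{3L}\leq\frac{2n}{3L}$. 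Since the Bregman term is nonnegative, each inequality may be applied to its coefficient while the coefficient of $\|x^0-x^*\|^2$ stays at exactly one, producing the two displayed bounds.

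I expect this argument to be essentially bookkeeping rather than a genuine obstacle: the real work lies in the already-completed verification of constants behind Theorem \ref{thm:saga-main-lyp} (Sections \ref{sec:verf-constants} and \ref{sub:3L-step}). The single point requiring care here is that the loosening $1-\gamma\mu\leq1$ must be applied only to the nonnegative Bregman part of $T^0$, so as to preserve the exact unit coefficient on $\|x^0-x^*\|^2$ demanded by the statement.
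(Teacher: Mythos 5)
Your proposal is correct and follows essentially the same route as the paper: unroll the one-step contraction from Theorem \ref{thm:saga-main-lyp}, drop the nonnegative Bregman part of $T^k$ to get $T^k\geq c\left\Vert x^k-x^*\right\Vert^2$, evaluate $T^0$ under the initialisation $\phi_i^0=x^0$, and divide through by $c$. Your treatment is in fact slightly more explicit than the paper's (which compresses the final step into ``plug in the constants and divide by $c$''), and your care in applying $1-\gamma\mu\leq1$ only to the nonnegative Bregman coefficient is exactly the bookkeeping needed to land on the stated prefactors.
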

\begin{proof}
In both cases we have from Theorem \ref{thm:saga-main-lyp} directly
above that:
\[
\mathbb{E}[T^{k+1}]\leq(1-\frac{1}{\kappa})T^{k},
\]

where the expectation is with respect to choices made at step $k+1$,
conditioning on the $x^{k}$ and $\phi^{k}$ values from step $k$.
We now take expectation with respect to those quantities also, giving:

\[
\mathbb{E}[T^{k+1}]\leq(1-\frac{1}{\kappa})\mathbb{E}[T^{k}].
\]

Now we may chain this inequality over $k$:
\begin{eqnarray*}
\mathbb{E}[T^{k}] & \leq & (1-\frac{1}{\kappa})\mathbb{E}[T^{k-1}]\\
 & \leq & (1-\frac{1}{\kappa})^{2}\mathbb{E}[T^{k-2}]\\
 &  & \dots\\
 & \leq & (1-\frac{1}{\kappa})^{k}T^{0}.
\end{eqnarray*}

Now looking at the definition of $T^{k}$:
\begin{eqnarray*}
T^{k} & = & \frac{1}{n}\sum_{i}f_{i}(\phi_{i}^{k})-f(x^{*})-\frac{1}{n}\sum_{i}\left\langle f_{i}^{\prime}(x^{k*}),\phi_{i}^{k}-x^{*}\right\rangle +c\left\Vert x^{k}-x^{*}\right\Vert ^{2}\\
 & \geq & c\left\Vert x^{k}-x^{*}\right\Vert ^{2}.
\end{eqnarray*}

So:
\[
c\left\Vert x^{k}-x^{*}\right\Vert ^{2}\leq(1-\frac{1}{\kappa})^{k}T^{0}.
\]

Now for the two step size cases, we simplify plug in the explicit
constant values for $c$ and $\kappa$, and divide the whole expression
through by $c$.
\end{proof}

\subsection{$1/k$ convergence for non-strongly convex problems}

\label{sub:saga-nonsc-theory}
\begin{thm}
\label{thm:saga-non-sc-main}When each $f_{i}$ is Lipschitz smooth
with constant $L$ and convex, using $\gamma=1/3L$, we have for $\bar{x}^{k}=\frac{1}{k}\sum_{t=1}^{k}x^{t}$
that:

\[
\mathbb{E}\left[F(\bar{x}^{k})\right]-F(x^{*})\leq\frac{10n}{k}\left[\frac{2L}{n}\left\Vert x^{0}-x^{*}\right\Vert ^{2}+f(x^{0})-\left\langle f^{\prime}(x^{*}),x^{0}-x^{*}\right\rangle -f(x^{*})\right].
\]

Here the expectation is over all choices of index $j^{k}$ up to step
$k$. \end{thm}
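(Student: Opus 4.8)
The plan is to run two complementary one-step estimates in parallel: a supermartingale estimate that controls the ``table'' quantities uniformly in $k$, and the proximal descent estimate of Lemma~\ref{lem:nonsc-bound} that exposes the objective gap. Write $D^{k}=F(x^{k})-F(x^{*})$, let $\mathcal{T}^{k}=\frac{1}{n}\sum_{i}\left[f_{i}(\phi_{i}^{k})-f_{i}(x^{*})-\left\langle f_{i}^{\prime}(x^{*}),\phi_{i}^{k}-x^{*}\right\rangle \right]$ be the table discrepancy, and $E^{k}=\mathbb{E}\left\Vert f_{j}^{\prime}(x^{k})-f_{j}^{\prime}(x^{*})\right\Vert ^{2}=\frac{1}{n}\sum_{i}\left\Vert f_{i}^{\prime}(x^{k})-f_{i}^{\prime}(x^{*})\right\Vert ^{2}$ the gradient-error term. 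Note that since $\phi_{i}^{0}=x^{0}$ for all $i$ we have $\mathcal{T}^{0}=f(x^{0})-f(x^{*})-\left\langle f^{\prime}(x^{*}),x^{0}-x^{*}\right\rangle $, which is exactly the bracketed Bregman term in the statement, and that $\mathcal{T}^{k}\geq0$.

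First I would reuse the one-step inequality \ref{eq:constants-strong} derived for the strongly convex case, which holds for every $\mu\geq0$ and every free constant $1/\kappa$. Specialising it to $\mu=0$, $1/\kappa=0$, $\gamma=1/(3L)$, $c=1/(2\gamma n)=3L/(2n)$ and $\beta=1$, the coefficient of the $H^{k}$ term vanishes, while a short computation (the analogue of Section~\ref{sec:verf-constants}) shows the coefficients of $\mathcal{T}^{k}$ and $E^{k}$ become strictly negative, so that with $T^{k}=c\left\Vert x^{k}-x^{*}\right\Vert ^{2}+\mathcal{T}^{k}$,
\[
\mathbb{E}[T^{k+1}]-T^{k}\leq-\tfrac{1}{3n}\mathcal{T}^{k}-\tfrac{1}{6Ln}E^{k}.
\]
Taking total expectations, summing over $t=0,\dots,k-1$ and using $\mathbb{E}[T^{k}]\geq0$ bounds the two running sums independently of $k$: $\sum_{t=0}^{k-1}\mathbb{E}[\mathcal{T}^{t}]\leq3nT^{0}$ and $\sum_{t=0}^{k-1}\mathbb{E}[E^{t}]\leq6LnT^{0}$.

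Second I would sum the proximal descent estimate. Combining Lemma~\ref{lem:nonsc-bound} with Lemma~\ref{lem:delta-bound} and applying Corollary~\ref{cor:grad-diff-phii} only to the $\phi$-term (so as to \emph{keep} $E^{t}$ intact rather than inflating it to $2LH^{t}$) gives, at $\beta=1$,
\[
\mathbb{E}\left\Vert x^{t+1}-x^{*}\right\Vert ^{2}\leq\left\Vert x^{t}-x^{*}\right\Vert ^{2}-2\gamma\mathbb{E}[D^{t+1}]+4\gamma^{2}L(1+\beta^{-1})\mathcal{T}^{t}+2\gamma^{2}(1+\beta)E^{t}.
\]
Summing over $t=0,\dots,k-1$, the left side telescopes and is bounded below by $-\left\Vert x^{0}-x^{*}\right\Vert ^{2}$, so $2\gamma\sum_{t=1}^{k}\mathbb{E}[D^{t}]\leq\left\Vert x^{0}-x^{*}\right\Vert ^{2}+4\gamma^{2}L(1+\beta^{-1})\sum_{t}\mathbb{E}[\mathcal{T}^{t}]+2\gamma^{2}(1+\beta)\sum_{t}\mathbb{E}[E^{t}]$. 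Substituting the two uniform bounds from the first step makes the right-hand side a fixed multiple of $T^{0}$; with $T^{0}=\frac{3L}{2n}\left\Vert x^{0}-x^{*}\right\Vert ^{2}+\mathcal{T}^{0}$ and $\gamma=1/(3L)$ this collapses to $\sum_{t=1}^{k}\mathbb{E}[D^{t}]\leq20L\left\Vert x^{0}-x^{*}\right\Vert ^{2}+10n\mathcal{T}^{0}$ after rounding the constants up. Finally, convexity of $F$ and Jensen's inequality give $F(\bar{x}^{k})\leq\frac{1}{k}\sum_{t=1}^{k}F(x^{t})$, and dividing the previous bound by $k$ yields the claimed $\frac{10n}{k}\left[\frac{2L}{n}\left\Vert x^{0}-x^{*}\right\Vert ^{2}+\mathcal{T}^{0}\right]$ rate.

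The delicate point, and the step I expect to be the main obstacle, is the sign of the $E^{k}$ coefficient. The naive route that bounds the whole error $\mathbb{E}\left\Vert \Delta\right\Vert ^{2}$ in terms of $\mathcal{T}^{k}$ and $H^{k}=f(x^{k})-f(x^{*})-\left\langle f^{\prime}(x^{*}),x^{k}-x^{*}\right\rangle $ alone forces a coefficient on $H^{k}$ strictly larger than the $2\gamma$ multiplying $\mathbb{E}[D^{t+1}]$, which after the index shift makes the telescoped sum have the wrong sign. The remedy is to keep $E^{k}$ as an independent quantity controlled by the supermartingale, and to use the $-1/L$ contribution that Corollary~\ref{cor:saga-ip-bound} (at $\mu=0$) injects into \ref{eq:constants-strong} to render its $E^{k}$ coefficient negative; this is precisely what requires $\gamma(1+\beta)\leq1/L$, i.e.\ $\beta<2$ at $\gamma=1/(3L)$, and it is the heart of the constant bookkeeping.
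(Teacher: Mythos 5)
Your argument is correct, and the numbers check out: with $\mu=0$, $1/\kappa=0$, $c=3L/(2n)$, $\beta=1$ the coefficients in Equation \ref{eq:constants-strong} do come out as $0$ on the Bregman term, $-\frac{1}{3n}$ on $\mathcal{T}^{k}$ and $-\frac{1}{6Ln}$ on $E^{k}$, and feeding the two resulting summed bounds into the telescoped form of Lemma \ref{lem:nonsc-bound} gives $\sum_{t=1}^{k}\mathbb{E}[D^{t}]\leq\frac{27L}{2}\left\Vert x^{0}-x^{*}\right\Vert ^{2}+8n\,\mathcal{T}^{0}$, which sits comfortably under the stated $20L\left\Vert x^{0}-x^{*}\right\Vert ^{2}+10n\,\mathcal{T}^{0}$. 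The paper uses exactly the same ingredients (Equation \ref{eq:constants-strong} at $\mu=0$, Lemmas \ref{lem:nonsc-bound} and \ref{lem:delta-bound}, Corollary \ref{cor:grad-diff-phii}, Jensen) but assembles them differently: it adds an extra $\alpha\left\Vert x^{k}-x^{*}\right\Vert ^{2}$ term to a \emph{single} Lyapunov function, bounds the $c$-weighted copy by non-expansiveness and the $\alpha$-weighted copy by Lemma \ref{lem:nonsc-bound}, and then forces every coefficient to be non-positive pointwise with one common $\beta=3/2$ and $\alpha=3L/(20n)$, leaving only $-\frac{1}{10n}\mathbb{E}[F(x^{k+1})-F(x^{*})]$ per step. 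Your two-telescope version is more modular — you never need the positive $\mathcal{T}^{t}$ and $E^{t}$ coefficients from the proximal-descent branch to be cancelled stepwise, because the supermartingale from the first branch absorbs them after summation, and you get to pick $\beta$ independently in the two branches. The price is slightly looser constants (you effectively spend $2T^{0}$ of decrease by invoking the two summed bounds separately; weighting them as $\frac{1}{3n}\sum\mathcal{T}^{t}+\frac{1}{6Ln}\sum E^{t}\leq T^{0}$ directly would recover a factor of $2$), but this still fits under the theorem's constants. Your closing diagnosis of the obstruction — that the $E^{k}$ term must be kept as a separate quantity whose coefficient is driven negative by the $-\frac{1}{L}$ contribution of Corollary \ref{cor:saga-ip-bound}, which is what forces $(1+\beta)\gamma\leq 1/L$ — is precisely the constraint $\tau_{3}\leq0$ that the paper verifies in Section \ref{sub:non-sc-constants-verf}.
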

\begin{proof}
We proceed by using a similar argument as in Theorem \ref{thm:saga-main-lyp},
but with an additional $\alpha\left\Vert x-x^{*}\right\Vert ^{2}$
together with the existing $c\left\Vert x-x^{*}\right\Vert ^{2}$
term in the Lyapunov function. I.e.
\[
T=\frac{1}{n}\sum_{i}f_{i}(\phi_{i}^{k})-f(x^{*})-\frac{1}{n}\sum_{i}\left\langle f_{i}^{\prime}(x^{*}),\phi_{i}^{k}-x^{*}\right\rangle +\left(c+\alpha\right)\left\Vert x^{k}-x^{*}\right\Vert ^{2}.
\]

We will bound $\alpha\left\Vert x^{k}-x^{*}\right\Vert ^{2}$ in a
different manner to $c\left\Vert x^{k}-x^{*}\right\Vert ^{2}$. Instead
of use the non-expansiveness property, we apply Lemma \ref{lem:nonsc-bound}
scaled by $\alpha$, using $\Delta=-\frac{1}{\stepsize}\left(w^{k+1}-x^{k}\right)-f^{\prime}(x^{k})$
as defined in that lemma:

\[
\alpha\mathbb{E}\left\Vert x^{k+1}-x^{*}\right\Vert ^{2}\leq\alpha\left\Vert x^{k}-x^{*}\right\Vert ^{2}-2\alpha\gamma\mathbb{E}\left[F(x^{k+1})-F(x^{*})\right]+2\alpha\gamma^{2}\mathbb{E}\left\Vert \Delta\right\Vert ^{2}.
\]
Recall the bound on $\Delta$ from Lemma \ref{lem:delta-bound}:
\[
\mathbb{E}\left\Vert \Delta\right\Vert ^{2}\leq\left(1+\beta^{-1}\right)\mathbb{E}\left\Vert f_{j}^{\prime}(\phi_{j}^{k})-f_{j}^{\prime}(x^{*})\right\Vert ^{2}+\left(1+\beta\right)\mathbb{E}\left\Vert f_{j}^{\prime}(x^{k})-f_{j}^{\prime}(x^{*})\right\Vert ^{2},
\]
applying this gives
\begin{eqnarray*}
\alpha\mathbb{E}\left\Vert x^{k+1}-x^{*}\right\Vert ^{2} & \leq & \alpha\left\Vert x-x^{*}\right\Vert ^{2}-2\alpha\gamma\mathbb{E}\left[F(x^{k+1})-F(x^{*})\right]\\
 &  & +2(1+\beta^{-1})\alpha\gamma^{2}\mathbb{E}\left\Vert f_{j}^{\prime}(\phi_{j}^{k})-f_{j}^{\prime}(x^{*})\right\Vert ^{2}\\
 &  & +2\left(1+\beta\right)\alpha\gamma^{2}\mathbb{E}\left\Vert f_{j}^{\prime}(x^{k})-f_{j}^{\prime}(x^{*})\right\Vert ^{2}.
\end{eqnarray*}
now combining with the rest of the Lyapunov function change from Equation
\ref{eq:constants-strong}, using $\mu=0$ and bounding $\mathbb{E}\left\Vert f_{j}^{\prime}(\phi_{j}^{k})-f_{j}^{\prime}(x^{*})\right\Vert ^{2}$
with Corollary \ref{cor:grad-diff-phii}:

\selectlanguage{english}%
\begin{eqnarray*}
 &  & \mathbb{E}[T^{k+1}]-T^{k}\\
 & \leq & \left(\frac{1}{n}-2c\gamma\right)\left[f(x^{k})-f(x^{*})-\left\langle f^{\prime}(x^{*}),x^{k}-x^{*}\right\rangle \right]-2\alpha\gamma\mathbb{E}\left[F(x^{k+1})-F(x^{*})\right]\\
 &  & +\left(4(1+\beta^{-1})\alpha L\gamma^{2}+2(1+\beta^{-1})cL\gamma^{2}-\frac{1}{n}\right)\left[\frac{1}{n}\sum_{i}f_{i}(\phi_{i}^{k})-f(x^{*})\right.\\
 &  & \left.-\frac{1}{n}\sum_{i}\left\langle f_{i}^{\prime}(x^{*}),\phi_{i}^{k}-x^{*}\right\rangle \right]\\
 &  & +\left((1+\beta)c\gamma+2(1+\beta)\alpha\gamma-\frac{c}{L}\right)\gamma\mathbb{E}\left\Vert f_{j}^{\prime}(x^{k})-f_{j}^{\prime}(x^{*})\right\Vert ^{2}.
\end{eqnarray*}
\foreignlanguage{australian}{If we take $\gamma=1/3L$, $c=\frac{3L}{2n}$
and $\alpha=\frac{3L}{20n}$ and $\beta=\frac{3}{2}$ (See Section
\ref{sub:non-sc-constants-verf}), then we are left with}

\selectlanguage{australian}%
\begin{equation}
\mathbb{E}[T^{k+1}]-T^{k}\leq-\frac{1}{10n}\mathbb{E}\left[F(x^{k+1})-F(x^{*})\right].\label{eq:t-change-nonsc}
\end{equation}
These expectations are conditional on information from step $k$.
We now take the expectation with respect to all previous steps, yielding
\begin{eqnarray*}
\mathbb{E}[T^{k+1}]-\mathbb{E}[T^{k}] & \leq & -\frac{1}{10n}\mathbb{E}\left[F(x^{k+1})-F(x^{*})\right],
\end{eqnarray*}
where all expectations are unconditional. We want to sum Equation
\ref{eq:t-change-nonsc} from $0$ to $k-1$. For the left hand side
this results in telescoping of the $T$ terms:
\begin{eqnarray*}
\sum_{t=0}^{k-1}\left[\mathbb{E}[T^{t+1}]-\mathbb{E}[T^{t}]\right] & = & \mathbb{E}[T^{k}]-\mathbb{E}[T^{k-1}]+\mathbb{E}[T^{k-1}]-\mathbb{E}[T^{k-2}]+\dots-\mathbb{E}[T^{0}]\\
 & = & \mathbb{E}[T^{k}]-\mathbb{E}[T^{0}],
\end{eqnarray*}
For the right hand side we simply get $-\frac{1}{10n}\mathbb{E}\left[\sum_{t=0}^{k-1}\left[F(x^{t+1})-F(x^{*})\right]\right]$.
So it holds that:
\[
\frac{1}{10n}\mathbb{E}\left[\sum_{t=1}^{k}\left[F(x^{t})-F(x^{*})\right]\right]\leq T^{0}-\mathbb{E}[T^{k}].
\]
We can drop the $-\mathbb{E}\left[T^{k}\right]$ term since $T^{k}$
is always positive. For the right hand side we apply Jensen's inequality
to pull the summation inside of $F$: 
\[
\mathbb{E}\left[F(\frac{1}{k}\sum_{t=1}^{k}x^{t})-F(x^{*})\right]\leq\frac{1}{k}\mathbb{E}\left[\sum_{t=1}^{k}\left[F(x^{t})-F(x^{*})\right]\right].
\]
Further multiplying through by $10n/k$ gives:
\[
\mathbb{E}\left[F(\frac{1}{k}\sum_{t=1}^{k}x^{t})-F(x^{*})\right]\leq\frac{10n}{k}T^{0}.
\]

Plugging in the $T^{0}$ expression and using the rough bound $c+\alpha\leq\frac{2L}{n}$
gives the result.\end{proof}
\begin{rem}
The convergence rate in the non-strongly convex case is given in terms
of the average iterate instead of the last iterate. This is a fairly
common problem that arises in stochastic optimisation theory. In practice
the last iterate is still a good choice for the method to return,
despite the fact that it is not backed by the theory. Doing so allows
the method to be identical in the strongly convex and non-strongly
convex cases.

For a more theoretically sound approach, the SAGA algorithm can compute
the full function value at $F(x^{k})$ and at $F(\frac{1}{k}\sum_{t=1}^{k}x^{t})$,
and return the one that yields the lower function value.
\end{rem}

\section{Understanding the Convergence of the SVRG Method}

\label{sec:svrg-understanding}

Recall from Section \ref{sec:svrg-background} that SVRG has the following
convergence rate in terms of the number of recalibrations $r=k/m$:

\[
E[f(\tilde{x}^{k})-f(x^{*})]\leq\left(\frac{\eta}{\mu(1-4L/\eta)m}+\frac{4L(m+1)}{\eta(1-4L/\eta)m}\right)^{r}\left[f(\tilde{x}^{0})-f(x^{*})\right].
\]

Our analysis of the SAGA method uses some of the same techniques as
the SVRG analysis, so we would like to make a direct comparison of
the convergence rates of the two methods. However, the complexity
of the above expression makes the true convergence rate quite opaque.
In this section we provide a convergence rate proof of the SVRG method
that is tighter, and gives a rate directly relatable to the other
fast incremental gradient methods. Unfortunately, we have to impose
some assumptions on the behaviour of the iterates $x^{k}$ in order
to do so.

As discussed in Section \ref{sec:svrg-background}, in a practical
implementation of the SVRG method, it is assumed that
\begin{equation}
f(x^{k})-f(x^{*})\leq\frac{1}{m}\sum_{r=k-m}^{k}\left[f(x^{r})-f(x^{*})\right],\label{eq:svrg-even-weight}
\end{equation}

so that the last point $x^{k}$ may be returned by the method. This
is not directly supported by the theory, but empirically is overwhelmingly
likely. We will encode a similar assumption directly into our theory,
namely that
\begin{equation}
f(x^{k})-f(x^{*})\leq\frac{1}{\sum_{t=0}^{m-1}c_{1}^{t}}\sum_{t=0}^{m-1}c_{1}^{t}\left[f(x^{k-1-t})-f(x^{*})\right],\label{eq:svrg-geometric-weighting}
\end{equation}
\[
\text{where }\:c_{1}=\left(1-\frac{\mu}{\eta}\right).
\]

Instead of the uniform weighting of Equation \ref{eq:svrg-even-weight}
we use a geometric weighting. In practice the weight of $c_{1}^{0}$
v.s. $c_{1}^{m}$ will only a factor of 2 difference, so the practical
difference between this geometric weighting and a uniform weighting
is quite small. 

Under this assumption, we establish in Theorem \ref{thm:svrg-tight}
a convergence rate of the form:

\[
\mathbb{E}\left\Vert x^{k}-x^{*}\right\Vert ^{2}\leq\left(\max\left\{ \frac{1}{2},\left(1-\frac{\mu}{4L}\right)^{m}\right\} \right)^{k/m}\left[\left\Vert x^{0}-x^{*}\right\Vert ^{2}+\frac{c_{4}}{2L}\left[f(x^{0})-f(x^{*})\right]\right],
\]
for some constant $c^{4}$. Recall that the recalibration pass is
performed every $m$ iterations. This rate tells us that we should
set $m$ so that $\left(1-\frac{\mu}{4L}\right)^{m}=\frac{1}{2}$,
as using a larger $m$ will just lead to a slower convergence rate.
If we use that value, we get a convergence rate of simply:
\[
\mathbb{E}\left\Vert x^{k}-x^{*}\right\Vert ^{2}\leq\left(1-\frac{\mu}{4L}\right)^{k}\left[\left\Vert x^{0}-x^{*}\right\Vert ^{2}+\frac{c_{4}}{2L}\left[f(x^{0})-f(x^{*})\right]\right].
\]
 This is directly comparable to the slightly faster $1-\frac{\mu}{3L}$
rate for SAGA outside the big-data regime. Note that the effective
number of gradient calculations is larger for SVRG than SAGA as well,
as during each recalibration pass it does $n$ additional term-gradient
evaluations. In cases where $m\gg n$, the efficiency of the methods
is broadly comparable.
\begin{thm}
\label{thm:svrg-tight}We define the following Lyapunov function at
the $r$th recalibration: 
\[
T^{r}=\mathbb{E}\left\Vert \tilde{x}^{r}-x^{*}\right\Vert ^{2}+\frac{c_{4}}{2L}\left[f(\tilde{x}^{r})-f(x^{*})\right].
\]

Where $c_{4}=\sum_{t=0}^{m}c_{1}^{t}$ with $c_{1}=1-\frac{\mu}{\eta}$.
Then between recalibrations $r$ and $r+1$ when using step size $\eta=4L$
and Assumption \ref{eq:svrg-geometric-weighting}:
\[
\mathbb{E}[T^{r+1}]\leq\max\left\{ \frac{1}{2},\left(1-\frac{\mu}{4L}\right)^{m}\right\} T^{r}.
\]

This expectation is conditioned on $\tilde{x}^{r}$.\end{thm}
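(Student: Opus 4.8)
The plan is to mirror the strongly-convex SAGA argument (Theorem \ref{thm:saga-main-lyp}), while accounting for the fact that $\tilde{x}$, and hence the reference point of the variance reduction, stays fixed throughout an entire epoch of $m$ inner iterations. First I would derive a clean one-step contraction for the squared distance $a_k:=\mathbb{E}\left\Vert x^{k}-x^{*}\right\Vert ^{2}$, then chain it across the epoch, and only at the very end invoke the geometric-weighting assumption \ref{eq:svrg-geometric-weighting} to fold the accumulated suboptimalities into the single term $f(\tilde{x}^{r+1})-f(x^{*})$ appearing in $T^{r+1}$.

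For the one-step bound I would expand $\mathbb{E}\left\Vert x^{k+1}-x^{*}\right\Vert ^{2}$ using the SVRG update $x^{k+1}=x^{k}-\frac{1}{\eta}d^{k}$ with $d^{k}=f_{j}^{\prime}(x^{k})-f_{j}^{\prime}(\tilde{x})+f^{\prime}(\tilde{x})$ and $\mathbb{E}[d^{k}]=f^{\prime}(x^{k})$. The cross term I would control with the refined inequality of Corollary \ref{cor:saga-ip-bound} (using $f^{\prime}(x^{*})=0$ in the non-composite case), which supplies both the factor $\frac{L-\mu}{L}$ on $f(x^{k})-f(x^{*})$ and an extra negative $-\frac{1}{2L}\mathbb{E}\left\Vert f_{j}^{\prime}(x^{k})-f_{j}^{\prime}(x^{*})\right\Vert ^{2}$. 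The quadratic term $\frac{1}{\eta^{2}}\mathbb{E}\left\Vert d^{k}\right\Vert ^{2}$ I would bound with an SVRG analogue of Lemma \ref{lem:saga-error-bound} carrying a free parameter $\beta$, after which Corollary \ref{cor:grad-diff-phii} converts the $\tilde{x}$-centred gradient norm into $f(\tilde{x})-f(x^{*})$. Choosing $\eta=4L$ together with an appropriate $\beta$ (exactly as in the constant-verification step of the SAGA proof) makes the $\left\Vert f_{j}^{\prime}(x^{k})-f_{j}^{\prime}(x^{*})\right\Vert ^{2}$ and $\left\Vert f^{\prime}(x^{k})-f^{\prime}(x^{*})\right\Vert ^{2}$ remainders cancel, the latter through Theorem \ref{thm:strong-ub}, leaving a recursion of the shape $a_{k+1}\le c_{1}a_{k}-p\,s_{k}+q\,S$, where $s_{k}:=\mathbb{E}[f(x^{k})-f(x^{*})]$, the quantity $S:=f(\tilde{x}^{r})-f(x^{*})=s_{0}$ is constant over the epoch, and $c_{1}=1-\mu/\eta$.

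Next I would unroll this recursion over the $m$ steps of the epoch, which telescopes into
\[
a_{m}+p\sum_{k=0}^{m-1}c_{1}^{m-1-k}s_{k}\;\le\;c_{1}^{m}a_{0}+q\,\sigma\,s_{0},\qquad\sigma:=\sum_{t=0}^{m-1}c_{1}^{t}.
\]
At this point Assumption \ref{eq:svrg-geometric-weighting}, applied at $k=m$, gives exactly $\sum_{k=0}^{m-1}c_{1}^{m-1-k}s_{k}\ge\sigma s_{m}$ with $s_{m}=f(\tilde{x}^{r+1})-f(x^{*})$, so the weighted sum on the left collapses to $p\sigma\,s_{m}$. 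This is the crucial point: because the function value of the returned iterate is not monotone and cannot be bounded step-to-step, only this geometric average lets me convert the epoch's accumulated progress into the Lyapunov term at $r+1$. Matching the accumulated function-value coefficients $p\sigma$ and $q\sigma$ against $\frac{c_{4}}{2L}$ (recall $c_{4}=\sigma+c_{1}^{m}$), up to the slack absorbed into $\rho$, and comparing the distance and function-value parts separately, then yields $\mathbb{E}[T^{r+1}]\le\rho\,T^{r}$; the two regimes of $\rho=\max\{\tfrac{1}{2},(1-\tfrac{\mu}{4L})^{m}\}$ arise from balancing the geometric decay $c_{1}^{m}$ of the distance term against the constant ratio $\tfrac{\sigma}{2c_{4}}\le\tfrac{1}{2}$ governing the function-value term.

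The main obstacle is precisely this final constant bookkeeping, which is genuinely harder than in SAGA because the contraction here is per-epoch rather than per-step. I must simultaneously (i) pick $\eta=4L$ and $\beta$ so that all gradient-norm remainders vanish, and (ii) ensure the telescoped sum carries exactly the weight $c_{1}$ dictated by the per-step factor, so that the assumption applies with the same $\sigma$ and the Lyapunov coefficient $\frac{c_{4}}{2L}$ emerges rather than a mismatched constant. Care is also needed with the conditioning of the expectations, which must be taken iteratively within the epoch and then over $\tilde{x}^{r}$ as stated.
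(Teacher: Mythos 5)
Your proposal follows essentially the same route as the paper's proof: the same one-step expansion of $\mathbb{E}\left\Vert x^{k+1}-x^{*}\right\Vert ^{2}$ controlled by Corollary \ref{cor:saga-ip-bound}, a $\beta$-parametrised variance bound in the style of Lemma \ref{lem:saga-error-bound} combined with Corollary \ref{cor:grad-diff-phii} and Theorem \ref{thm:strong-ub}, the same per-epoch unrolling with weights $c_{1}^{t}$, and the same use of Assumption \ref{eq:svrg-geometric-weighting} to collapse the weighted sum of suboptimalities onto $f(\tilde{x}^{r+1})-f(x^{*})$, with the two regimes of $\rho$ emerging exactly as you describe from comparing $c_{1}^{m}$ against $c_{2}/c_{3}\leq\tfrac{1}{2}$. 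No gaps; your remark about slack in the constant $c_{4}$ even anticipates the paper's own harmless off-by-one between $\sum_{t=0}^{m}c_{1}^{t}$ in the statement and $\sum_{t=0}^{m-1}c_{1}^{t}$ in the proof.
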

\begin{proof}
Recall the SVRG step:
\[
x^{k+1}=x^{k}-\frac{1}{\eta}f_{j}^{\prime}(x^{k})+\frac{1}{\eta}\left[f_{j}^{\prime}(\tilde{x}^{k})-g^{k}\right]
\]
Define:
\[
g_{j}(x)=f_{j}^{\prime}(x)-f_{j}^{\prime}(\tilde{x})+\mathbb{E}[f_{j}^{\prime}(\tilde{x})].
\]
Now like in the regular SVRG proof as well as the SAGA proof, we start
by expanding $\mathbb{E}\left\Vert x^{k+1}-x^{*}\right\Vert ^{2}$.
\begin{eqnarray*}
\mathbb{E}\left\Vert x^{k+1}-x^{*}\right\Vert ^{2} & = & \mathbb{E}\left\Vert x-\frac{1}{\eta}g_{j}(x)-x^{*}\right\Vert ^{2}\\
 & = & \left\Vert x-x^{*}\right\Vert ^{2}-\frac{2}{\eta}\mathbb{E}\left\langle g_{j}(x),x-x^{*}\right\rangle +\frac{1}{\eta^{2}}\left\Vert g_{j}(x)\right\Vert ^{2}\\
 & = & \left\Vert x-x^{*}\right\Vert ^{2}-\frac{2}{\eta}\left\langle f^{\prime}(x),x-x^{*}\right\rangle +\frac{1}{\eta^{2}}\left\Vert g_{j}(x)\right\Vert ^{2}
\end{eqnarray*}
Now we apply Corollary \ref{cor:saga-ip-bound} giving:
\begin{eqnarray*}
\mathbb{E}\left\Vert x^{k+1}-x^{*}\right\Vert ^{2} & \leq & \left(1-\frac{\mu}{\eta}\right)\left\Vert x-x^{*}\right\Vert ^{2}+\frac{1}{\eta^{2}}\mathbb{E}\left\Vert g_{j}(x)\right\Vert ^{2}\\
 &  & \frac{2\left(L-\mu\right)}{\eta L}\left[f(x^{*})-f(x)\right]-\frac{1}{L\eta}\mathbb{E}\left\Vert f_{j}^{\prime}(x^{*})-f_{j}^{\prime}(x)\right\Vert ^{2}.
\end{eqnarray*}
Now we bound the gradient norm term $\left\Vert g_{j}(x)\right\Vert ^{2}$,
using a straightforward modification of Lemma \ref{lem:saga-error-bound}
to apply to SVRG instead of SAGA:
\begin{eqnarray*}
\mathbb{E}\left\Vert g_{j}(x)\right\Vert ^{2} & \leq & \left(1+\beta^{-1}\right)\mathbb{E}\left\Vert f_{j}^{\prime}(\tilde{x})-f_{j}^{\prime}(x^{*})\right\Vert ^{2}+\left(1+\beta\right)\mathbb{E}\left\Vert f_{j}^{\prime}(x)-f_{j}^{\prime}(x^{*})\right\Vert ^{2}\\
 &  & -\beta\left\Vert f^{\prime}(x)\right\Vert ^{2}.
\end{eqnarray*}
Giving:
\begin{eqnarray*}
\mathbb{E}\left\Vert x^{k+1}-x^{*}\right\Vert ^{2} & \leq & \left(1-\frac{\mu}{\eta}\right)\left\Vert x-x^{*}\right\Vert ^{2}-\frac{\beta}{\eta^{2}}\left\Vert f^{\prime}(x)\right\Vert ^{2}\\
 &  & +\frac{1+\beta^{-1}}{\eta^{2}}\mathbb{E}\left\Vert f_{j}^{\prime}(\tilde{x})-f_{j}^{\prime}(x^{*})\right\Vert ^{2}+\frac{1+\beta}{\eta^{2}}\mathbb{E}\left\Vert f_{j}^{\prime}(x)-f_{j}^{\prime}(x^{*})\right\Vert ^{2}\\
 &  & +\frac{2\left(L-\mu\right)}{\eta L}\left[f(x^{*})-f(x)\right]-\frac{1}{L\eta}\mathbb{E}\left\Vert f_{j}^{\prime}(x^{*})-f_{j}^{\prime}(x)\right\Vert ^{2}.
\end{eqnarray*}
Next we apply the following three inequalities
\[
-\left\Vert f^{\prime}(x)\right\Vert ^{2}\leq-2\mu\left[f(x)-f(x^{*})\right]\ \text{(Theorem \ref{thm:strong-ub}),}
\]
\[
\mathbb{E}\left\Vert f_{j}^{\prime}(x)-f_{j}^{\prime}(x^{*})\right\Vert ^{2}\leq2L\left[f(x)-f(x^{*})\right]\;\text{(Theorem \ref{thm:lipschitz-lb}),}
\]
\[
\text{and }\mathbb{E}\left\Vert f_{j}^{\prime}(\tilde{x})-f_{j}^{\prime}(x^{*})\right\Vert ^{2}\leq2L\left[f(\tilde{x})-f(x^{*})\right]\:\text{(Corollary \ref{cor:grad-diff-phii}).}
\]
 Doing so gives:
\begin{eqnarray*}
\mathbb{E}\left\Vert x^{k+1}-x^{*}\right\Vert ^{2} & \leq & c_{1}\left\Vert x-x^{*}\right\Vert ^{2}+c_{2}\left[f(\tilde{x})-f(x^{*})\right]-c_{3}\left[f(x)-f(x^{*})\right].
\end{eqnarray*}
Where for convenience we have introduced the constants:
\[
c_{1}=1-\frac{\mu}{\eta},
\]
\[
c_{2}=\frac{2L\left(1+\beta^{-1}\right)}{\eta^{2}},
\]
\[
c_{3}=\frac{2\left(L-\mu\right)}{\eta L}+\frac{2\mu\beta}{\eta^{2}}+\frac{2}{\eta}-\frac{2L\left(1+\beta\right)}{\eta^{2}}.
\]
We want to use this equation recursively on the squared norm $\left\Vert x-x^{*}\right\Vert ^{2}$,
between recalibration steps $r+1$ and $r$. Note that $\tilde{x}^{r+1}=x^{k}$
and $\tilde{x}^{r}=x^{k-m}$. Doing the substitution yields:
\begin{eqnarray*}
\mathbb{E}\left\Vert \tilde{x}^{r+1}-x^{*}\right\Vert ^{2} & \leq & c_{1}^{m}\left\Vert \tilde{x}^{r}-x^{*}\right\Vert ^{2}+c_{2}\left(\sum_{t=0}^{m-1}c_{1}^{t}\right)\left[f(\tilde{x})-f(x^{*})\right]\\
 &  & -c_{3}\sum_{t=0}^{m-1}c_{1}^{t}\left[f(x^{k-1-t})-f(x^{*})\right].
\end{eqnarray*}
Now to simplify the notation further we define $c_{4}:=\sum_{t=0}^{m-1}c_{1}^{t}$.
and use Assumption \ref{eq:svrg-geometric-weighting} in the form
$c_{4}\mathbb{E}\left[f(\tilde{x}^{r+1})-f(x^{*})\right]\leq\sum_{t=0}^{m-1}c_{1}^{t}\left[f(x^{k-1-t})-f(x^{*})\right]$:
\[
\mathbb{E}\left\Vert \tilde{x}^{r+1}-x^{*}\right\Vert ^{2}+c_{3}c_{4}\mathbb{E}\left[f(\tilde{x}^{r+1})-f(x^{*})\right]\leq c_{1}^{m}\left\Vert \tilde{x}^{r}-x^{*}\right\Vert ^{2}+c_{2}c_{4}\left[f(\tilde{x})-f(x^{*})\right].
\]
Now we have two rates of descent to consider here. The squared norm
terms have a decrease of $\left(1-\frac{\mu}{\eta}\right)^{m}$ and
the function values descent by $c_{2}/c_{3}$. Our Lyapunov function
decrease is going to be the worst of the two. We can now plug in the
step size $\eta=4L$ and use $\beta=1$:
\[
c_{2}=\frac{2L\left(1+\beta^{-1}\right)}{\eta^{2}}=\frac{1}{4L}.
\]
Then:
\begin{eqnarray*}
c_{3} & = & \frac{2\left(L-\mu\right)}{\eta L}+\frac{2\mu\beta}{\eta^{2}}+\frac{2}{\eta}-\frac{2L\left(1+\beta\right)}{\eta^{2}}\\
 & = & \frac{2\left(L-\mu\right)}{4L^{2}}+\frac{2\mu}{16L^{2}}+\frac{2}{4L}-\frac{4}{16L}\\
 & = & \frac{\left(L-\mu\right)}{2L^{2}}+\frac{\mu}{8L^{2}}+\frac{1}{2L}-\frac{1}{4L}\\
 & = & \frac{1}{2L}-\frac{\mu}{2L^{2}}+\frac{\mu}{8L^{2}}+\frac{1}{4L}\\
 & \geq & \frac{1}{2L}.
\end{eqnarray*}
So we have $c_{2}/c_{3}\leq\frac{1}{2}$, giving a per recalibration
reduction of $\max\left\{ \frac{1}{2},\left(1-\frac{\mu}{4L}\right)^{m}\right\} $.
\end{proof}

\section{Verifying SAGA Constants}

\label{sec:verf-constants}This section covers the somewhat laborious
task of checking the constants work out in the SAGA proofs. For each
step size, constants need to be found so that the following constraints
hold:

\[
c_{1}=\frac{1}{n}-\frac{2c\stepsize(L-\mu)}{L}-2c\stepsize^{2}\mu\beta\leq0,
\]
\[
c_{2}=\frac{1}{\kappa}+2(1+\beta^{-1})c\stepsize^{2}L-\frac{1}{n}\leq0,
\]

\[
c_{3}=\frac{c}{\kappa}-\stepsize\mu c\leq0,
\]
\[
c_{4}=(1+\beta)c\gamma^{2}-\frac{c\gamma}{L}\leq0.
\]

\subsection{Strongly convex step size $\gamma=1/2(\mu n+L)$}

\label{sub:verf-strongly-convex}

We proceed by determining the values for the each of the constants
that ensure $c_{1},c_{3}$ and $c_{4}$ are exactly equal to 0. We
then verify that those constants leave $c_{2}$ as negative. Some
experimentation suggests this approach provides the best constants.
If we look at the constraints in the order $c_{4}$,$c_{1}$,$c_{3}$
then $c_{2}$ , each constrain restricts a single variable to a particular
value, assuming we want the first tree constraints to be equalities.
There is then very little looseness in the resulting $c_{2}$ constrain,
suggesting we are near optimal values for the constants.

First consider $c_{4}$. Under the assumed $\stepsize=1/2(\mu n+L)$
value:
\[
(1+\beta)\gamma^{2}-\frac{\gamma}{L}=0,
\]
\[
\therefore(1+\beta)=\frac{1}{\gamma L},
\]
\begin{eqnarray*}
\therefore\beta & = & \frac{1}{\gamma L}-1\\
 & = & \frac{2(\mu n+L)}{L}-\frac{L}{L}\\
 & = & \frac{2\mu n+L}{L}.
\end{eqnarray*}
So that locks in the value of $\beta$. Next consider $c_{1}$. Given
$\beta$ and $\stepsize$, we can determine the constant $c$ from
$c_{1}$:

\[
c_{1}=\frac{1}{n}-\frac{2c(L-\mu)\stepsize}{L}-2c\mu\beta\stepsize^{2}=0.
\]

\begin{eqnarray*}
\therefore\frac{L}{2\gamma n} & = & c\left((L-\mu)+\mu\gamma(2\mu n+L)\right)\\
 & = & c\left((L-\mu)+2\mu\gamma(\mu n+L)-\mu\gamma L\right)\\
 &  & c\left((L-\mu)+\mu-\mu\gamma L\right)\\
 & = & cL\left(1-\mu\gamma\right).
\end{eqnarray*}
Rearranging in terms of $c:$
\begin{eqnarray*}
c & = & \frac{1}{2n\gamma}/\left(1-\mu\gamma\right)\\
 & = & \frac{1}{2\gamma\left(1-\gamma\mu\right)n}.
\end{eqnarray*}
Now to determine $\kappa$ we take $c_{3}=0$:
\[
c_{3}=\frac{c}{\kappa}-\frac{\mu c}{n}=0,
\]
\[
\therefore\frac{1}{\kappa}=\mu\gamma.
\]
Now for $c_{2}$, we first calculate the value of $1+\beta^{-1}$:
\[
1+\beta^{-1}=1+\frac{L}{2\mu n+L}=\frac{2\mu n+2L}{2\mu n+L}=\frac{1}{\gamma\left(2\mu n+L\right)}.
\]
Plugging this into $c_{2}$:
\begin{eqnarray*}
c_{2} & = & \mu\gamma+2(1+\beta^{-1})cL\gamma^{2}-\frac{1}{n}\\
 & = & \mu\gamma+\frac{2(1+\beta^{-1})L\gamma^{2}}{2\gamma\left(1-\gamma\mu\right)n}-\frac{1}{n}\\
 & = & \mu\gamma+\frac{(1+\beta^{-1})L\gamma}{\left(1-\gamma\mu\right)n}-\frac{1}{n}\\
 & = & \mu\gamma+\frac{L\gamma}{\gamma\left(1-\gamma\mu\right)\left(2\mu n+L\right)n}-\frac{1}{n}\\
 & = & \mu\gamma+\frac{L}{n\left(1-\gamma\mu\right)\left(2\mu n+L\right)}-\frac{1}{n}.
\end{eqnarray*}

Now multiplying through by $n\left(1-\gamma\mu\right)\left(2\mu n+L\right)$
gives the right hand side:
\begin{eqnarray*}
 &  & L+(\mu\gamma-\frac{1}{n})n\left(1-\gamma\mu\right)\left(2\mu n+L\right)\\
 & = & L+(\mu\gamma n-1)\left(1-\gamma\mu\right)\left(2\mu n+L\right)\\
 & = & L+\mu\gamma n\left(1-\gamma\mu\right)\left(2\mu n+L\right)-\left(1-\gamma\mu\right)\left(2\mu n+L\right)\\
 & = & L+\mu\gamma n\left(2\mu n+L\right)-\mu^{2}\gamma^{2}n\left(2\mu n+L\right)-\left(1-\gamma\mu\right)\left(2\mu n+L\right)\\
 & \leq & L+\mu\gamma n\left(2\mu n+L\right)-\left(1-\gamma\mu\right)\left(2\mu n+L\right)\\
 & = & \mu\gamma n\left(2\mu n+L\right)+\gamma\mu\left(2\mu n+L\right)-2\mu n.
\end{eqnarray*}

Now note that:
\begin{eqnarray*}
\gamma\left(2\mu n+L\right) & = & \frac{\left(2\mu n+L\right)}{2\mu n+2L}\leq1.
\end{eqnarray*}

So plugging that into the $c_{2}$ bound:
\begin{eqnarray*}
c_{2} & \leq & \mu n+\mu-2\mu n\\
 & \leq & 0.
\end{eqnarray*}

So we have verified that each of $c_{1}$ to $c_{4}$ is non-positive
for the derived constants.

\subsection{Strongly convex step size $\gamma=1/3L$}

\label{sub:3L-step}

This step size is of interest because it does not involve $\mu$.
Using such a step size we can hope to achieve a level of \emph{adaptivity},
where the algorithm automatically speeds up based on the unknown level
of strong convexity.

First we look at $c_{4}$:
\begin{eqnarray*}
c_{4} & = & (1+\beta)c\gamma^{2}-\frac{c\gamma}{L}\\
 & \propto & (1+\beta)\gamma-\frac{1}{L}\\
 & \propto & \frac{(1+\beta)}{3}-1,
\end{eqnarray*}
\[
\therefore1+\beta=3,
\]
\[
\beta=2.
\]
So we can use the value $\beta=2$. Now to determine $c$ we look
at $c_{1}$:
\begin{eqnarray*}
c_{1} & = & \frac{1}{n}-\frac{2c\stepsize(L-\mu)}{L}-2c\stepsize^{2}\mu\beta\\
 & = & \frac{1}{n}-c\left(\frac{2(L-\mu)}{3L^{2}}+\frac{4\mu}{9L^{2}}\right)\\
 & = & \frac{1}{n}-c\left(\frac{2}{3L}+\left(\frac{4}{9}-\frac{2}{3}\right)\frac{\mu}{L^{2}}\right)\\
 & = & \frac{1}{n}-c\left(\frac{2}{3L}-\frac{2\mu}{9L^{2}}\right)\\
 & = & \frac{1}{n}-2c\left(\gamma-\mu\gamma^{2}\right)\\
 & = & \frac{1}{n}-2c\gamma\left(1-\mu\gamma\right),
\end{eqnarray*}
\[
\therefore c=\frac{1}{2\gamma\left(1-\gamma\mu\right)n}.
\]

Note that this is the same constant as we got for the $\gamma=1/2(\mu n+L)$
step size, when put in terms of $\gamma$. 

To determine the next constant, rather than fixing the descent rate
$\kappa$ using $c_{3}$ like in Section \ref{sub:verf-strongly-convex},
we will consider $c_{2}$ first:
\begin{eqnarray*}
c_{2} & = & \frac{1}{\kappa}+2(1+\beta^{-1})cL\gamma^{2}-\frac{1}{n}\\
 & = & \frac{1}{\kappa}+\frac{2(1+\beta^{-1})L\gamma^{2}}{2\gamma\left(1-\gamma\mu\right)n}-\frac{1}{n}\\
 & = & \frac{1}{\kappa}+\frac{(1+\frac{1}{2})L\gamma}{\left(1-\gamma\mu\right)n}-\frac{1}{n}\\
 & = & \frac{1}{\kappa}+\frac{3L\gamma}{2\left(1-\gamma\mu\right)n}-\frac{1}{n}\\
 & = & \frac{1}{\kappa}+\frac{1}{2\left(1-\mu/3L\right)n}-\frac{1}{n}
\end{eqnarray*}

\begin{eqnarray*}
 & = & \frac{1}{\kappa}+\frac{3L}{2\left(3L-\mu\right)n}-\frac{1}{n}\\
 & \leq & \frac{1}{\kappa}+\frac{3L}{2\left(3L-L\right)n}-\frac{1}{n}\\
 & = & \frac{1}{\kappa}+\frac{3}{4n}-\frac{1}{n}\\
 & = & \frac{1}{\kappa}-\frac{1}{4n}.
\end{eqnarray*}

So in order that $c_{2}$ be non-positive, we require that:
\[
\frac{1}{\kappa}\leq\frac{1}{4n}.
\]
Now we need to consider the set of possible $\kappa$ values carefully.
In addition to this bound on $1/\kappa$, have the restriction on
$1/\kappa$ from $c_{3}$ of:
\[
\frac{1}{\kappa}\leq\frac{\mu}{3L}.
\]
So we can take $\frac{1}{\kappa}$ as the minimum of the two quantities:
\[
\frac{1}{\kappa}=\min\left\{ \frac{1}{4n},\,\frac{\mu}{3L}\right\} .
\]
So our constants are $\gamma=1/3L$, $\beta=2$, $c=\frac{1}{2\gamma\left(1-\gamma\mu\right)n}$
and $\frac{1}{\kappa}=\min\left\{ \frac{1}{4n},\,\frac{\mu}{3L}\right\} $.

\subsection{Non-strongly convex step size $\gamma=1/3L$}

\label{sub:non-sc-constants-verf}

The constraints are a little different here than the strongly convex
case. We will name each as $\tau$:\foreignlanguage{english}{
\[
\tau_{1}=\frac{1}{n}-2c\gamma,
\]
\[
\tau_{2}=4(1+\beta^{-1})\alpha L\gamma^{2}+2(1+\beta^{-1})cL\gamma^{2}-\frac{1}{n},
\]
\[
\tau_{3}=(1+\beta)c\gamma+2(1+\beta)\alpha\gamma-\frac{c}{L}.
\]
}

\selectlanguage{english}%
A set of constants that work here are $\alpha=\frac{3L}{20n},$ $c=\frac{3L}{2n}$,
$\beta=\frac{3}{2}$ and $\gamma=\frac{1}{3L}$. These are not quite
as tight to the constraints as in the strongly convex case. We will
derive these by examining each of these $\tau$ expressions in turn.
We will start with $\beta=\frac{3}{2}$ as a fixed variable, since
$\beta=2$ (used above) doesn't quite work.

\selectlanguage{australian}%
For $\tau_{1}$ 

\begin{eqnarray*}
\frac{1}{n}-2c\gamma & = & \frac{1}{n}-\frac{2}{3L}c,
\end{eqnarray*}
So $\tau_{1}$ is equal to zero when: 
\begin{eqnarray*}
c & = & \frac{3L}{2n}.
\end{eqnarray*}
Now lets look at $\tau_{3}$:
\begin{eqnarray*}
(1+\beta)c\gamma+2(1+\beta)\alpha\gamma-\frac{c}{L} & = & (1+\frac{3}{2})c\gamma+2(1+\frac{3}{2})\alpha\gamma-\frac{c}{L}\\
 &  & \frac{5}{2}c\gamma+5\alpha\gamma-\frac{c}{L}\\
 & = & \frac{5}{6}\frac{c}{L}+\frac{5\alpha}{3L}-\frac{c}{L}\\
 & = & \frac{5\alpha}{3L}-\frac{c}{6L}\\
 & \propto & 5\alpha-\frac{c}{2}.
\end{eqnarray*}
\begin{eqnarray*}
\therefore\alpha & \leq & \frac{1}{10}c\\
 & \leq & \frac{3L}{20n}.
\end{eqnarray*}
So we can take $\alpha=\frac{3L}{20n}.$ We are now ready to look
at $\tau_{2}$:
\begin{eqnarray*}
4(1+\beta^{-1})\alpha L\gamma^{2}+2(1+\beta^{-1})cL\gamma^{2}-\frac{1}{n} & = & 4(1+\frac{2}{3})\alpha L\gamma^{2}+2(1+\frac{2}{3})cL\gamma^{2}-\frac{1}{n}\\
 & = & \frac{8L}{3}\alpha L\gamma^{2}+\frac{10}{3}cL\gamma^{2}-\frac{1}{n}\\
 & = & \frac{8L}{3}\alpha L\gamma^{2}+\frac{10}{3*9L}c-\frac{1}{n}\\
 & = & \frac{8L}{3}\alpha L\gamma^{2}+\frac{10}{3*9L}c-\frac{1}{n}\\
 & = & \frac{8L}{3}\alpha L\gamma^{2}+\frac{10*3}{3*9*2n}-\frac{1}{n}\\
 & = & \frac{8}{3*9L}\alpha+\frac{15}{27n}-\frac{1}{n}\\
 & = & \frac{8*3}{3*9*20n}+\frac{15}{27n}-\frac{1}{n}\\
 & \leq & (0.599...)\frac{1}{n}-\frac{1}{n}\leq0.
\end{eqnarray*}
So each of the constraints are satisfied.

\chapter{Access Orders and Complexity Bounds}

\label{chap:inc-discus}

In this chapter we take a high level view of the class of fast incremental
gradient methods. We start by formalising the class of lower complexity
bounds for finite sum problems. We discuss the best possible convergence
rates that can achieved by an incremental gradient method on such
problems, and we prove one such bound. In Section \ref{sub:lower-non-sc}
we show that incremental gradient methods can not be faster than batch
optimisation methods when dealing with non-strongly convex problems,
unless further assumptions are placed on the problem class.

The necessity of randomisation in incremental gradient methods is
also discussed in this chapter. We give an overview of which of the
commonly used optimisation methods have theoretical or practical speed-ups
under randomised access orders. For the SAG method, it is known that
using much smaller step sizes removes the need for randomisation.
We prove a similar result for Finito in Section \ref{sec:miso-cyclic},
showing that other access patterns are at least as fast as random
access when using small enough step sizes.

\section{Lower Complexity Bounds}

\label{sec:oracles}

The convergence rate bounds we have established so far for the fast
incremental gradient methods give us an idea of the worst case convergence
rate, for \emph{any problem} satisfying our assumptions, under a particular
algorithm. Lower complexity bounds are the opposite side of the coin.
They show the best possible convergence rate for \emph{any algorithm}
on the hardest problems in class considered. In equational form they
resemble convergence rate estimates but with the inequality $\geq$
instead of $\leq$. Convergence rates are a property of an algorithm,
applied to a class, whereas lower complexity bounds are a property
of the class of problems, for all algorithms\footnote{We usually need some assumptions on the algorithm's possible behaviour,
but these are typically very weak.}. 

If we are able to establish lower complexity bounds that match our
convergence rate bounds for a particular method within a constant,
then by convention we call that method \emph{optimal} for the class
of problems under consideration.

It is helpful to formalise what information about a function that
an optimisation method has access to at each step. This is normally
done using the notion of an \emph{oracle}. An oracle is simply an
ancillary function $\mathcal{O}$ that when queried returns some specific
information about the function $f$. The simplest case we consider
is a first order oracle, where $\mathcal{O}:\mathbb{R}^{d}\rightarrow(\mathbb{R},\mathbb{R}^{d})$
just takes some point $x$ in the domain of $f$ as input and returns
the function value and gradient at $x$. From a software development
point of view, the oracle is the \emph{interface} specification of
our problem. An optimisation algorithm can only interact with the
objective function through the interface defined by the oracle. We
state the lower complexity bounds on a class of problems in terms
of the accuracy that can be obtained after $k$ invocations to a particular
oracle.

The theory for the class of $L$-smooth $\mu$-strongly convex problems
under the first order oracle (known as $S_{s,L}^{1,1}$) is well developed\footnote{The $1,1$ in $S_{s,L}^{1,1}$ refers to the problem being at least
once differentiable, and the oracle being first order}. These results require the technical condition that the dimensionality
of the input space $\mathbb{R}^{d}$ is much larger than the number
of iterations we will take. For simplicity we will assume this is
the case in the following discussions. 

It has been proven \citep{nem-yudin} that problems exist in $S_{s,L}^{1,1}$
for which the lower complexity bound is:
\[
\left\Vert w^{k}-w^{*}\right\Vert ^{2}\geq\left(\frac{\sqrt{L/\mu}-1}{\sqrt{L/\mu}+1}\right)^{2k}\left\Vert w^{0}-w^{*}\right\Vert ^{2}.
\]

In fact, when $\mu$ and $L$ are known in advance, this rate is achieved
up to a small constant factor by several methods, most notably by
Nesterov's accelerated gradient descent method \citep{nes-opt}. So
in order to achieve convergence rates faster than this, additional
assumptions must be made on the class of functions considered. 

In this work we have considered problems in $S_{s,L}^{1,1}$ with
an additional finite sum structure. Under this structure, we have
established convergence rates that in expectation are can be substantially
better than the above lower complexity bound. Ideally we would like
to establish a tight lower complexity bound for our new class.

To be precise, we now define the (stochastic) oracle class ${FS}_{s,L,n}^{1,1}(\mathbb{R}^{d})$
for which SAG and Finito most naturally fit.

\fbox{\begin{minipage}[t]{1\columnwidth}%
\textbf{Function class:} $f(w)=\frac{1}{n}\sum_{i=1}^{n}f_{i}(w)$,
with $f_{i}\in S_{s,L}^{1,1}(\mathbb{R}^{d})$.

\textbf{Oracle:} Each query takes a point $x\in\mathbb{R}^{d}$, and
returns $j$, $f_{j}(w)$ and $f_{j}^{\prime}(w)$, with $j$ chosen
uniformly at random.

\textbf{Accuracy:} Find $w$ such that $\mathbb{E}[\left\Vert w^{k}-w^{*}\right\Vert ^{2}]\leq\epsilon$.%
\end{minipage}}

The main choice made in formulating this definition is putting the
random choice inside the oracle. This restricts the methods allowed
quite strongly. The alternative case, where the index $j$ is input
to the oracle in addition to $x$, is also interesting. Assuming that
the method has access to a source of true random indices, we call
that class ${DS}_{s,L,n}^{1,1}(\mathbb{R}^{d})$. In Section \ref{sec:finito-experiments}
we showed empirical evidence that suggests that faster rates are possible
in ${DS}_{s,L,n}^{1,1}(\mathbb{R}^{d})$ than for ${FS}_{s,L,n}^{1,1}(\mathbb{R}^{d})$.

It should first be noted that there is a simple lower bound rate for
$f\in{FS}_{s,L,n}^{1,1}(\mathbb{R}^{d})$ of $\left(1-\frac{1}{n}\right)$
reduction per step. We establish this in Section \ref{sub:simple-lower-bound}.
Finito is only a factor of $2$ off this rate, when the big-data condition
holds, namely $\left(1-\frac{1}{2n}\right)$. SDCA also achieves the
rate asymptotically as the amount of data is increased.

Another case to consider is the smooth convex but non-strongly convex
setting. We still assume Lipschitz smoothness. In this setting we
will show in Section \ref{sub:lower-non-sc} that for sufficiently
high dimensional input spaces, the (non-stochastic) lower complexity
bound is the same for the finite sum case and cannot be better than
that given by treating $f$ as a single black box function. Essentially,
strongly convexity is crucial to the construction of a fast incremental
gradient method.

\subsection{Technical assumptions}

In the remainder of this section we use the following technical assumption,
as used in \citet{nes-book}:

\textit{Assumption 1: An optimisation method at step $k$ may only
invoke the oracle with a point $x^{k}$ that is of the form:
\[
x^{k}=x^{0}+\sum_{i}a_{i}g^{(i)},
\]
}\emph{where $g^{(i)}$ is the derivative returned by the oracle at
step $i$, and $a_{i}\in\mathbb{R}$. }

This assumption prevents an optimisation method from just guessing
the correct solution without doing any work. Virtually all optimisation
methods satisfy this assumption.

\subsection{Simple $(1-\frac{1}{n})^{k}$ bound}

\label{sub:simple-lower-bound}

Any procedure that minimises a sum of the form $f(w)=\frac{1}{n}\sum_{i}f_{i}(w)$
by uniform random access of $f_{i}$ is restricted by the requirement
that it has to actually see each term at least once in order to find
the minimum. This leads to a $\left(1-\frac{1}{n}\right)^{k}$ rate
in expectation. We now formalise such an argument. We will work in
$\mathbb{R}^{n}$, matching the dimensionality of the problem to the
number of terms in the summation.
\begin{thm}
For any $f\in{FS}_{1,n,n}^{1,1}(\mathbb{R}^{d})$, and initial point
$w^{0}\in\mathbb{R}^{d}$ we have that a $k$ step optimisation procedure
gives an expected sub-optimality of at best:
\[
\mathbb{E}[f(w)]-f(w^{*})\geq\left(1-\frac{1}{n}\right)^{k}\left(f(w^{0})-f(w^{*})\right).
\]
\end{thm}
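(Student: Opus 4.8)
The plan is to exhibit an explicit hard instance in ${FS}_{1,n,n}^{1,1}(\mathbb{R}^{n})$ for which no method obeying Assumption 1 can do better than the claimed rate; the argument is purely combinatorial, resting on the fact that the oracle reveals only one term per query. Working in $\mathbb{R}^{n}$ (matching the dimension to the number of terms), with $e_{i}$ the standard basis vectors and a scale $b>0$, I would take
\[
f_{i}(w)=\tfrac{1}{2}\left\Vert w\right\Vert ^{2}-b\left\langle e_{i},w\right\rangle ,\qquad f(w)=\frac{1}{n}\sum_{i}f_{i}(w)=\tfrac{1}{2}\left\Vert w\right\Vert ^{2}-\frac{b}{n}\left\langle \mathbf{1},w\right\rangle .
\]
Each $f_{i}$ has Hessian equal to the identity, so it is $1$-strongly convex and $1$-smooth, hence certainly lies in $S_{1,n}^{1,1}$; the minimiser is $w^{*}=\frac{b}{n}\mathbf{1}$, the objective decouples across coordinates, and $f(w)-f(w^{*})=\tfrac{1}{2}\left\Vert w-w^{*}\right\Vert ^{2}$. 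I would start at $w^{0}=0$, so that $f(w^{0})-f(w^{*})=\tfrac{b^{2}}{2n}$.

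The crucial structural step is an invariance argument under Assumption 1. The first oracle call must occur at $x^{0}=0$, returning a random index $j_{1}$ and gradient $f_{j_{1}}'(0)=-b\,e_{j_{1}}\in\mathrm{span}(e_{j_{1}})$. Since every queried gradient has the form $f_{j_{t}}'(x^{t-1})=x^{t-1}-b\,e_{j_{t}}$ and $x^{t-1}$ lies in the span of previously returned gradients (Assumption 1, with $x^{0}=0$), one shows by induction that after $k$ queries every iterate, and hence the returned point $w$, lies in $\mathrm{span}\{e_{j_{1}},\dots,e_{j_{k}}\}$. Thus for every coordinate $l$ whose term has not been sampled, $\left\langle e_{l},w\right\rangle =\left\langle e_{l},w^{0}\right\rangle =0$, whereas $\left\langle e_{l},w^{*}\right\rangle =b/n$. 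Because $f$ decouples, each untouched coordinate contributes exactly $\tfrac{b^{2}}{2n^{2}}$ to the suboptimality, giving
\[
f(w)-f(w^{*})\geq\frac{b^{2}}{2n^{2}}\cdot\#\{\text{coordinates never sampled in }k\text{ queries}\}.
\]

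It then remains to take expectations over the oracle's uniform random choice of index. Since the indices are drawn uniformly and independently from $\{1,\dots,n\}$, coordinate $l$ is never sampled with probability $\left(1-\frac{1}{n}\right)^{k}$, so by linearity of expectation the expected number of untouched coordinates is $n\left(1-\frac{1}{n}\right)^{k}$. Combining this with the previous display and with $f(w^{0})-f(w^{*})=\tfrac{b^{2}}{2n}$ yields
\[
\mathbb{E}[f(w)]-f(w^{*})\geq\frac{b^{2}}{2n^{2}}\cdot n\left(1-\frac{1}{n}\right)^{k}=\left(1-\frac{1}{n}\right)^{k}\left(f(w^{0})-f(w^{*})\right),
\]
as claimed. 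The main obstacle, and the step needing the most care, is the invariance argument: I must verify that querying term $j$ genuinely leaks no information about the unsampled coordinates — this hinges on each term's gradient contributing the offset $-b\,e_{j}$ only in its own coordinate, so the span of returned gradients never grows beyond the sampled basis directions — and that the algorithm's output is itself confined to this span by Assumption 1. The probabilistic and bookkeeping steps are then routine, and the constants are arranged precisely so that the ratio comes out exactly $\left(1-\frac{1}{n}\right)^{k}$ rather than merely of that order.
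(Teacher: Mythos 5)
Your proof is correct and follows essentially the same route as the paper's: a coordinate-separable hard instance in $\mathbb{R}^{n}$, an invariance argument showing that under Assumption 1 the iterates stay in the span of the sampled basis directions so unseen coordinates remain at their initial value, and a count of the expected number of unseen terms, with constants arranged so the ratio is exactly $\left(1-\frac{1}{n}\right)^{k}$. The only cosmetic differences are your choice of worst-case function (a linear tilt of $\frac{1}{2}\left\Vert w\right\Vert ^{2}$ rather than the paper's per-coordinate quadratic target) and your use of linearity of expectation where the paper runs a slightly more elaborate martingale/stopping-time argument to reach the same $\mathbb{E}[v^{k}]=n\left(1-\frac{1}{n}\right)^{k}$.
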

\begin{proof}
We will exhibit a simple worst-case problem. Without loss of generality
we assume that the first oracle access by the optimisation procedure
is at $w=0$. In any other case, we shift our space in the following
argument appropriately. 

Let $f(w)=\frac{1}{n}\sum_{i}\left[\frac{n}{2}\left(w_{i}-1\right)^{2}+\frac{1}{2}\left\Vert w\right\Vert ^{2}\right]$.
Then clearly the solution is $w_{i}=\frac{1}{2}$ for each $i$, with
minimum of $f(w^{*})=\frac{n}{4}$. For $w=0$ we have $f0=\frac{n}{2}$.
Since the derivative of each $f_{j}$ is $0$ on the $i$th component
if we have not yet seen $f_{i}$, the value of each $w_{i}$ remains
$0$ unless term $i$ has been seen. 

Let $v^{k}$ be the number of unique terms we have not seen up to
step $k$. Between steps $k$ and $k+1$, $v$ decreases by $1$ with
probably $\frac{v}{n}$ and stays the same otherwise. So 
\[
\mathbb{E}[v^{k+1}|v^{k}]=v^{k}-\frac{v^{k}}{n}=\left(1-\frac{1}{n}\right)v^{k}.
\]

So we may define the sequence $X^{k}=\left(1-\frac{1}{n}\right)^{-k}v^{k}$,
which is then martingale with respect to $v$, as
\begin{eqnarray*}
\mathbb{E}[X^{k+1}|v^{k}] & = & \left(1-\frac{1}{n}\right)^{-k-1}\mathbb{E}[v^{k+1}|v^{k}]\\
 & = & \left(1-\frac{1}{n}\right)^{-k}v^{k}\\
 & = & X^{k}.
\end{eqnarray*}

Now since $k$ is chosen in advance, stopping time theory gives that
$\mathbb{E}[X^{k}]=\mathbb{E}[X^{0}]$. So 
\[
\mathbb{E}[\left(1-\frac{1}{n}\right)^{-k}v^{k}]=n,
\]
\[
\therefore\mathbb{E}[v^{k}]=\left(1-\frac{1}{n}\right)^{k}n.
\]

By Assumption 1, the function can be at most minimised over the dimensions
seen up to step $k$. The seen dimensions contribute a value of $\frac{1}{4}$
and the unseen terms $\frac{1}{2}$ to the function. So we have that:
\begin{eqnarray*}
\mathbb{E}[f(w^{k})]-f(w^{*}) & \geq & \frac{1}{4}\left(n-\mathbb{E}[v^{k}]\right)+\frac{1}{2}\mathbb{E}[v^{k}]-\frac{n}{4}\\
 & = & \frac{1}{4}\mathbb{E}[v^{k}]\\
 & = & \left(1-\frac{1}{n}\right)^{k}\frac{n}{4}\\
 & = & \left(1-\frac{1}{n}\right)^{k}\left[f(w^{0})-f(w^{*})\right].
\end{eqnarray*}

\end{proof}

\subsection{Minimisation of non-strongly convex finite sums}

\label{sub:lower-non-sc}

Consider the class of convex \& differentiable problems, with $L$
Lipschitz smoothness $F_{L}^{1,1}(\mathbb{R}^{d})$ . For every $f$
in $F_{L}^{1,1}(\mathbb{R}^{d})$, the following lower complexity
bound holds when $k<d$, for any $x^{0}\in\mathbb{R}^{d}$ and for
every minimiser $x^{*}$ of $f$:
\[
f(x^{k})-f(x^{*})\geq\frac{L\left\Vert x^{0}-x^{*}\right\Vert ^{2}}{8\left(k+1\right)^{2}},
\]

which is proved via explicit construction of a worst-case function
$f$ where it holds with equality \citep{nes-book}. Let this worst
case function be denoted $h^{k}$ at step $k$.

We will show that the same bound applies for the finite-sum case,
on a per pass equivalent basis, by a simple construction. 
\begin{thm}
The following lower bound holds for $k$ a multiple of $n$, for any
$x^{0}\in\mathbb{R}^{d}$:
\end{thm}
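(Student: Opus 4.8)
The plan is to transfer Nesterov's black-box lower bound for $F_L^{1,1}(\mathbb{R}^d)$ to the finite-sum setting by an explicit worst-case construction in which $n$ term-gradient evaluations carry exactly the same information as a single gradient evaluation of the black-box hard function $h^p$. Recall that $h^p$ has the \emph{chain property}: its gradient at any point lying in $\mathrm{span}(e_1,\dots,e_j)$ lies in $\mathrm{span}(e_1,\dots,e_{j+1})$, so that under Assumption 1 the $p$-th iterate is confined to $\mathrm{span}(e_1,\dots,e_p)$, which yields the residual bound $h^p(x^p)-h^p(x^*)\geq \frac{L\left\Vert x^0-x^*\right\Vert^2}{8(p+1)^2}$. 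I would set $p=k/n$ and build a finite sum whose single-coordinate ``frontier'' can advance by at most one per complete pass of $n$ term evaluations.

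Concretely, I would refine each link of the chain into $n$ micro-links by introducing, for every block $t\in\{0,\dots,p\}$, a string of intermediate coordinates $x_{t}^{(0)},x_t^{(1)},\dots,x_t^{(n)}$ (with $x_t^{(n)}$ identified with $x_{t+1}^{(0)}$), and assigning the micro-link coupling $x_{t}^{(i-1)}$ to $x_t^{(i)}$ to the term $f_i$. Each $f_i$ is then a sum of one-dimensional quadratic couplings, hence convex and, after the appropriate scaling of the coupling constant, $L$-Lipschitz smooth; and the average $f=\frac{1}{n}\sum_i f_i$ is, after contracting the micro-coordinates, precisely a rescaling of the Nesterov chain $h^p$. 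The purpose of this arrangement is that activating micro-coordinate $x_t^{(i)}$ requires a gradient of the specific term $f_i$ taken at a point where $x_t^{(i-1)}$ is already active, so that advancing from block $t$ to block $t+1$ forces one query of each of $f_1,\dots,f_n$ in turn; a single query of $f_i$ can fire only the one micro-link whose source coordinate currently sits at the frontier.

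From here the argument runs as follows. Under Assumption 1, after $k=pn$ incremental queries the iterate lies in the span of the at most $pn$ micro-coordinates that can have been activated, and by the sequential dependence above these are confined to the first $p$ full blocks; equivalently the induced iterate on $h^p$ lies in $\mathrm{span}(e_1,\dots,e_p)$. Applying the residual estimate for $h^p$ with $p=k/n$ then gives
\[
f(x^k)-f(x^*)\geq \frac{L\left\Vert x^0-x^*\right\Vert^2}{8\left(k/n+1\right)^2},
\]
as claimed; the hypothesis $k<d$ of the black-box theorem becomes the requirement that the working dimension $d\approx pn$ be large, which is exactly the ``sufficiently high dimensional'' caveat noted before the statement.

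The main obstacle is the second step: I must choose the micro-link quadratics so that three requirements hold simultaneously --- each $f_i$ is individually convex and $L$-smooth, the average $f$ reproduces the Nesterov hard function \emph{with the same constant} so that its residual bound transfers with the factor $\frac{L}{8(p+1)^2}$ intact, and the activation order is genuinely forced for \emph{every} admissible (possibly adversarial) choice of query indices, not merely for cyclic access. Balancing the per-term smoothness against the smoothness of the average is the delicate bookkeeping, since the chain coupling constant that controls $L$ for $f$ is shared across all $n$ terms; I expect this constraint to pin down the scaling of the micro-links and hence the precise constant appearing in the bound.
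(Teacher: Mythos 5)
Your construction is genuinely different from the paper's, and it contains a gap that I do not believe can be repaired in the form you propose. The obstruction is the one you flag at the end as ``delicate bookkeeping,'' but it is structural rather than delicate. Requiring each $f_{i}$ to lie in $F_{L}^{1,1}$ caps the coupling constant of every micro-link \emph{inside $f_{i}$} at $L/2$ (the Hessian of $\frac{\kappa}{2}(a-b)^{2}$ has top eigenvalue $2\kappa$), hence at $L/(2n)$ inside the average $f=\frac{1}{n}\sum_{i}f_{i}$. Composing $n$ such micro-links in series within one block (minimising out the interior micro-coordinates with the endpoints fixed) yields an effective block coupling of at most $L/(2n^{2})$: $n$ springs of constant $\kappa$ in series are equivalent to one spring of constant $\kappa/n$. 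So the contracted chain is a Nesterov hard function with constant $O(L/n^{2})$, not $L$, and its residual bound after confinement to the first $p$ blocks is $\frac{O(L/n^{2})\left\Vert y^{0}-y^{*}\right\Vert ^{2}}{8(p+1)^{2}}$ in the contracted norm. Even crediting the fact that the refined minimiser interpolates across micro-coordinates, which inflates $\left\Vert x^{0}-x^{*}\right\Vert ^{2}$ by at most a factor of $n$ over the contracted norm, you land a factor of order $n$ short of the stated bound. In other words, the terms must \emph{share} the chain, so per-term smoothness and aggregate chain strength are in direct conflict; no choice of micro-link scaling satisfies all three of your requirements simultaneously.

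The paper sidesteps this entirely with a \emph{parallel} rather than serial construction: it takes $n$ copies $h_{i}$ of the black-box hard function acting on pairwise orthogonal coordinate subspaces and sets $f=\frac{1}{n}\sum_{i}h_{i}$. Because the subspaces are orthogonal, a query to index $i$ reveals information only about $h_{i}$, so under Assumption 1 the sub-optimality of the $i$-th copy is governed by the black-box bound with $c_{i}$ queries, where $c_{i}$ is the number of times index $i$ was accessed; each copy keeps its own constant $L$ because it is a genuine, unshared $L$-smooth function. The aggregate bound then follows from the observation that $\sum_{i}(c_{i}+1)^{-2}$ subject to $\sum_{i}c_{i}=k$ is minimised at $c_{i}=k/n$, i.e.\ the algorithm can do no better than spreading its queries evenly. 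No access order needs to be forced, no contraction argument is needed, and the Nesterov residual computation is used as a black box rather than redone for a modified chain. If you want to salvage a serial construction, you would have to accept a bound degraded by a factor polynomial in $n$, which no longer matches the statement.
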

\[
f(x^{k})-f(x^{*})\geq\frac{L\left\Vert x^{0}-x^{*}\right\Vert ^{2}}{8(\frac{k}{n}+1)^{2}},
\]

when $f$ is a finite sum of $n$ terms $f(x)=\frac{1}{n}\sum_{i}f_{i}(x)$,
with each $f_{i}\in F_{L}^{1,1}(\mathbb{R}^{d})$, and with $d>kn$,
under the oracle model where the optimisation method may choose the
index $i$ to access at each step.
\begin{proof}
Let $h_{i}$ be a copy of $h^{k}$ redefined to be acting on the subset
of dimensions $i+jn$, for $j=1\dots k$, or in other words, $h_{i}^{k}(x)=h^{k}([x_{i},x_{i+n},\dots x_{i+jn},\dots])$.
Then we will use:
\[
f^{k}(x)=\frac{1}{n}\sum_{i}h_{i}^{k}(x),
\]

as a worst case function for step $k$. 

Since the derivatives are orthogonal between $h_{i}$ and $h_{j}$
for $i\neq j$, by Assumption 1, the bound on $h_{i}^{k}(x^{k})-h_{i}^{k}(x^{*})$
depends only on the number of times the oracle has been invoked with
index $i$, for each $i$. Let this be denoted $c_{i}$. Then we have
that:
\[
f^{k}(x^{k})-f^{k}(x^{*})\geq\frac{L}{8n}\sum_{i}\frac{\left\Vert x^{0}-x^{*}\right\Vert _{(i)}^{2}}{(c_{i}+1)^{2}}.
\]

Where $\left\Vert \cdot\right\Vert _{(i)}^{2}$ is the norm on the
dimensions $i+jn$ for $j=1\dots k$. We can combine these norms into
a regular Euclidean norm:
\[
f^{k}(x^{k})-f^{k}(x^{*})\geq\frac{L\left\Vert x^{0}-x^{*}\right\Vert ^{2}}{8n}\sum_{i}\frac{1}{(c_{i}+1)^{2}}.
\]

Now notice that $\sum_{i}\frac{1}{(c_{i}+1)^{2}}$ under the constraint
$\sum c_{i}=k$ is minimised when each $c_{i}=\frac{k}{n}$. So we
have:

\begin{eqnarray*}
f^{k}(x^{k})-f^{k}(x^{*}) & \geq & \frac{L\left\Vert x^{0}-x^{*}\right\Vert ^{2}}{8n}\sum_{i}\frac{1}{(\frac{k}{n}+1)^{2}},\\
 &  & \frac{L\left\Vert x^{0}-x^{*}\right\Vert ^{2}}{8(\frac{k}{n}+1)^{2}},
\end{eqnarray*}

which is the same lower bound as for $k/n$ iterations of an optimisation
method on $f$ directly.
\end{proof}

\subsection{Open problems}

\label{sub:inc-open-prob}

The above lower complexity theory is unfortunately far from complete.
For example, we do not know a tight complexity estimate for the random
access case ${FS}_{s,L,n}^{1,1}(\mathbb{R}^{d})$. We conjecture the
lower complexity bound will have a geometric constant of
\begin{equation}
1-\sqrt{\frac{\mu}{n(\mu n+L)}}.\label{eq:conjecture}
\end{equation}
When the condition number is small compared to $n$, then $1-\sqrt{\frac{\mu}{n(\mu n+L)}}\approx1-\sqrt{\frac{\mu}{\mu n^{2}}}=1-\frac{1}{n}$,
which we know is the best possible rate, and is achieved up to a constant
for known methods when $n\geq2\frac{L}{\mu}$. When $n$ is small,
say $n=1$, then we get $1-\sqrt{\frac{\mu}{\mu+L}}\approx1-\sqrt{\frac{\mu}{L}}$,
which is the best possible rate in the black box setting, which is
identical to ${FS}_{s,L,n}^{1,1}(\mathbb{R}^{d})$ when $n=1$. We
also have known methods that achieve this rate. The in-between setting
is what is interesting here. There exists the ASDCA method (Section\ref{sub:asdca})
which is applicable in this setting, unfortunately its known convergence
rate is not in the simple geometric form of Equation \ref{eq:conjecture};
it includes additional constants and logarithmic factors.

There is also the non-randomised class ${DS}_{s,L,n}^{1,1}(\mathbb{R}^{d})$.
It's not clear if the lower complexity bound for this more general
class is actually any different. Perhaps the most important question
is if a deterministic method exists with a convergence rate matching
the expected rate for those two classes.

\section{Access Orderings}

As discussed in the previous chapters, the order of accessing the
$f_{i}$ terms in an incremental gradient method can have a dramatic
effect on the convergence rate. In this section we compare and contrast
the various access orders on the most commonly used randomised optimisation
methods. 

An access order is defined on a per\emph{-epoch} basis. An epoch is
just defined as $n$ steps for an incremental gradient algorithm,
or $d$ coordinate steps for a coordinate descent algorithm. Recall
the three most common access orderings (Section \ref{subsec:access-orders}):
\begin{description}
\item [{Cyclic}] Each step, $j=1+(k\mod n)$.
\item [{Permuted}] Each epoch, $j$ is sampled without replacement from
the set of indices not accessed yet in that epoch. 
\item [{Randomised}] The value of $j$ is sampled with replacement from
$1,\dots,n$.
\end{description}
We now summarise the behaviour of incremental and coordinate methods
under the random, permuted and cyclic orderings for strongly convex
problems:

\begin{sidewaystable}
\bgroup \def\arraystretch{1.5}

\begin{center}
\begin{tabular}{|c|c|c|c|c|c|}
\hline 
\label{tab:order-table}Method & %
\parbox[c][1\totalheight][b]{4cm}{%
Empirical convergence\\
 (Randomised)%
} & %
\parbox[c]{4cm}{%
Empirical convergence\\
 (Permuted)%
} & %
\parbox[c]{4cm}{%
Empirical convergence\\
 (Cyclic)%
} & %
\parbox[c]{4cm}{%
\emph{Fast} Rate on strong\\
convex problems%
} & %
\parbox[c]{4cm}{%
Theory covers\\
cyclic/permuted%
}\tabularnewline
\hline 
\hline 
Finito & \cmark & \cmark & \xmark & \cmark & \xmark\tabularnewline
\hline 
SAGA & \cmark & \cmark & \xmark & \cmark & \xmark\tabularnewline
\hline 
SVRG & \cmark & \cmark & \xmark/Slow & \cmark & \xmark\tabularnewline
\hline 
SAG & \cmark & \cmark/Slower & \xmark & \cmark & \xmark\tabularnewline
\hline 
SDCA & \cmark & \cmark & Slow Rate & \cmark & Slow Rate\tabularnewline
\hline 
MISO & \cmark & \cmark & \cmark & \xmark & \cmark(Sec \ref{sec:miso-cyclic})\tabularnewline
\hline 
SGD & \cmark & \cmark & \cmark/Slower & \xmark & \cmark/Slow\tabularnewline
\hline 
CD & \cmark & \cmark & \cmark & \xmark & \cmark/Slower\tabularnewline
\hline 
\end{tabular}
\par\end{center}

\caption{The effect of different access orders on a selection of optimisation methods} 
\egroup \end{sidewaystable}
\begin{description}
\item [{Finito}] Empirically converges rapidly under randomised ordering.
Is always faster under permuted ordering. Can diverge under cyclic
ordering. Theory covers randomised ordering only.
\item [{SAGA}] Empirically converges rapidly under randomised ordering.
Cyclic ordering converges but extremely slowly. The permuted ordering
converges at a similar rate to randomised ordering, but sometimes
50\% slower or faster. Theory covers randomised ordering only.
\item [{SVRG}] Situation is identical to SAGA. Interestingly, even when
the recalibration step of SVRG is run every pass, using a cyclic ordering
is still much slower, although less so than if the recalibration is
done less frequently. The ``memory'' of the previous ordering carries
through between epochs even though only a single vector is actually
passed.
\item [{SAG}] Empirically converges rapidly under randomised ordering.
Cyclic ordering can diverge. The permuted ordering is extremely slow,
sometimes diverging, particularly if aggressive step sizes are used.
Theory covers randomised ordering only.
\item [{SDCA}] Empirically converges rapidly under randomised ordering.
The permuted ordering is sometimes faster, sometimes the same as randomised
ordering. Cyclic ordering is extremely slow but convergent. Theory
for fast convergence covers randomised ordering only. Convergence
under cyclic and permuted orders is provable, but rates unknown.
\item [{MISO}] MISO is the small step size variant of Finito. Empirically
it converges under randomised, cyclic and permuted ordering. Previous
theory covers randomised ordering \citep{miso2}. We show in Section
\ref{sec:miso-cyclic} that cyclic and permuted orders have the same
theoretical rate as the randomised ordering.
\item [{SGD}] Empirically fast under randomised ordering, but generally
faster under permuted ordering. Cyclic order convergent with small
enough step sizes, but slower. Theoretical convergence under cyclic
ordering known, slower than randomised ordering by a factor $n$,
for both non-smooth and smooth strongly convex \citep{nedic-conv}.
\item [{CD}] Coordinate descent is applied in different circumstances than
the above methods, but it can also be compared based on its properties
under different access orders. Empirically, cyclic can be better than
randomised in some cases. Permuted at least as fast as randomised.
Classical convergence proof covers randomised ordering. Theory establishes
cyclic and permuted ordering convergence rates, with slower constants
dependent on how uniform the Lipschitz smoothness constants are between
the $f_{i}$ functions. This clashes somewhat with the practical results
\citep{beck-tet}. There is an additional access order, the Gauss-Soutwell
rule, that can be applied to CD problems. At each step the coordinate
with the largest gradient is chosen. \citet{icml2015_nutini15} show
quite extensive theretical and practical results supporting the GS
rule and its variants.
\end{description}
Unfortunately, there is no general pattern to be seen from these results.
It appears that the effect of the ordering between two methods can
be radically different, even when the methods have otherwise similar
performance using a different ordering. One thing is clear though,
for all fast incremental gradient methods, the cyclic ordering is
extremely inferior. 

It is interesting that for Finito and SAG, if we reduce the step size
by a factor of $n$, they then work with cyclic and permuted orderings.
SDCA doesn't have a step size, but using the cyclic ordering automatically
has a similar effect, lowering the convergence rate to that of small-step-size
Finito and SAG.

The permuted ordering is rarely covered explicitly in the existing
literature. When the rate is known for the permuted ordering, it is
because the cyclic ordering proof also applies to the permuted case.
That kind of rate is unsatisfactory, as for those same methods the
permuted ordering is faster than randomised in practice, whereas the
cyclic rate is slower. 

For the SGD method, the permuted case has been explicitly considered
by \citet{valley}. They show that a cyclic ordering can be exponentially
slower than the randomised ordering. They also give a simple conjecture
that would imply that permuted ordering is at least as fast as the
randomised ordering.

\section{MISO Robustness}

\label{sec:miso-cyclic}

As noted above, the MISO method (the small step size case of Finito)
is robust to the use of alternative data access patterns. Both cyclic
and permuted orderings work in practice as fast as the randomised
ordering. This behaviour is quite unlike the other incremental gradient
methods and coordinate descent methods discussed above.

In this section we provide an extension of the theory of MISO to cover
these two alternate access orderings. Normally in the incremental
gradient literature for cyclic orderings, the analysis is performed
on a whole epoch at a time. In our novel analysis, we provide a simple
single-step proof for random access ordering, then we show that the
alternative access patterns give as least as much descent each step.
In particular, the expected convergence rate under each of the three
orderings is 
\[
E\left[f(x^{k})-f(x^{*})\right]\leq\left(1-\frac{\mu}{(\mu+L)n}\right)^{2k}\frac{2n}{\mu}\left\Vert f^{\prime}(\phi^{0})\right\Vert ^{2},
\]
where the expectation is over the choice of index at each step for
that ordering.

The key trick is the introduction of a new metric, which replaces
the use of the Euclidean metric in our proof.
\begin{defn}
The path distance $d(\cdot,\cdot)$ is defined for a sequence of points
$w^{0},\dots,w^{k}$ as follows. for any pair $w^{i}$ and $w^{j}$,
we assume without loss of generality that $i\leq j$. Then:
\[
d(w^{i},w^{j})=\sum_{l=i}^{j-1}\left\Vert w^{l}-w^{l+1}\right\Vert .
\]

This function has the simple interpretation as measuring distance
of points in the sequence as the distance along the ordered path between
the two points. It is easily verified that all requirements of a distance
function are satisfied here. Most importantly the triangle inequality.
The other key property we use is that the path distance always dominates
the Euclidean distance. Note also that our distance measure is defined
on a set, not on the actual vector space. This matters if an identical
point appears twice in the sequence, as then it is disambiguated by
its index.

For the purposes of this work we will only use the path distance along
the sequence of points $\phi^{0},w^{0},w^{1},w^{2},\dots,w^{k}$,
which includes the initial point $\phi^{0}$ , as well as all iterates
occurring during the MISO algorithm. Note that each $\phi$ update
sets $\phi_{j}=w^{k}$, so all $\phi_{i}$ the algorithm uses are
included in this path as well. For reference, the MISO update is:
\[
w^{k}=\bar{\phi}^{k}-\frac{1}{Ln}\sum_{i}f_{i}^{\prime}(\phi_{i}^{k}).
\]

\end{defn}
Now we establish a lemma on the change in $w$ between steps:
\begin{lem}
\label{lem:miso-descent-lemma}For the MISO method the distance between
successive iterates is bounded by:
\[
\left\Vert w^{k+1}-w\right\Vert \leq\left(1-\frac{\mu}{(\mu+L)}\right)\frac{1}{n}\left\Vert w-\phi_{j}\right\Vert .
\]

Note that $d(w^{k+1},w)=\left\Vert w^{k+1}-w\right\Vert $, and $\left\Vert w-\phi_{j}\right\Vert \leq d(w,\phi_{j})$
by domination. So this theorem holds under the path distance as well
as Euclidean distance.\end{lem}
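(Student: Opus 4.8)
The plan is to first make the single MISO step completely explicit. At iteration $k+1$ the method sets $\phi_j^{k+1}=w$ and leaves every other coordinate fixed, so $\bar{\phi}^{k+1}=\bar{\phi}+\frac1n(w-\phi_j)$ and the only change in the gradient table is the replacement $f_j'(\phi_j)\mapsto f_j'(w)$. Substituting these into the MISO update $w^{k+1}=\bar{\phi}^{k+1}-\frac{1}{Ln}\sum_i f_i'(\phi_i^{k+1})$ and subtracting $w=\bar{\phi}-\frac{1}{Ln}\sum_i f_i'(\phi_i)$, the table sums telescope and I obtain
\[
w^{k+1}-w=\frac1n\left[(w-\phi_j)-\frac1L\left(f_j'(w)-f_j'(\phi_j)\right)\right].
\]
Thus the $\frac1n$ prefactor and the vector $w-\phi_j$ appear automatically, and the whole lemma reduces to a one-function statement: that the gradient-step map $x\mapsto x-\frac1L f_j'(x)$ contracts $w$ toward $\phi_j$ by the factor $1-\frac{\mu}{\mu+L}=\frac{L}{\mu+L}$.

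Next I would establish that single-function contraction. Writing $a=w-\phi_j$ and $g=f_j'(w)-f_j'(\phi_j)$, I expand
\[
\ns{a-\tfrac1L g}=\ns{a}-\frac2L\ip{g}{a}+\frac1{L^2}\ns{g},
\]
and then apply the combined strong-convexity-and-smoothness bound (Theorem \ref{thm:full-strong-lb}) to the $\mu$-strongly convex, $L$-smooth function $f_j$ in its monotone form
\[
\ip{g}{a}\geq\frac{\mu L}{\mu+L}\ns{a}+\frac{1}{\mu+L}\ns{g}.
\]
Substituting this lower bound and collecting the $\ns{a}$ and $\ns{g}$ coefficients gives
\[
\ns{a-\tfrac1L g}\leq\frac{L-\mu}{\mu+L}\ns{a}-\frac{L-\mu}{L^2(\mu+L)}\ns{g}.
\]

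To finish I would discard the non-positive $\ns{g}$ term (legitimate since $L\geq\mu$) and then observe that $\frac{L-\mu}{\mu+L}\leq\bigl(\frac{L}{\mu+L}\bigr)^2$, which is nothing more than $(L-\mu)(\mu+L)=L^2-\mu^2\leq L^2$. This yields $\ns{a-\frac1L g}\leq\bigl(\frac{L}{\mu+L}\bigr)^2\ns{a}$; taking square roots and reinserting the $\frac1n$ factor produces exactly the claimed inequality. The path-distance remark then follows without further work: the left side equals $d(w^{k+1},w)=\n{w^{k+1}-w}$ by definition, and $\n{w-\phi_j}\leq d(w,\phi_j)$ because the path distance dominates the Euclidean distance. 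I expect the only genuine content to be the correct invocation of Theorem \ref{thm:full-strong-lb}; the telescoping of the update and the terminal inequality $-\mu^2\leq0$ are routine bookkeeping.
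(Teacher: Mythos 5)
Your proposal is correct and follows essentially the same route as the paper: expand $\left\Vert w^{k+1}-w\right\Vert^{2}$ via the telescoping update, apply the combined strong-convexity/smoothness inner-product bound, discard the non-positive gradient-norm term, and bound the remaining coefficient (your $L^{2}-\mu^{2}\leq L^{2}$ step is equivalent to the paper's use of $\sqrt{1-1/a}\leq 1-1/(2a)$). The only nit is that the inequality $\ip{g}{a}\geq\frac{\mu L}{\mu+L}\ns{a}+\frac{1}{\mu+L}\ns{g}$ is Theorem \ref{thm:tight-ip}, not Theorem \ref{thm:full-strong-lb} (though the former is indeed derivable from the latter).
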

\begin{proof}
\begin{eqnarray*}
\left\Vert w^{k+1}-w\right\Vert ^{2} & = & \left\Vert \bar{\phi}^{k+1}-\frac{1}{Ln}\sum_{i}f_{i}^{\prime}(\phi_{i}^{k+1})-\bar{\phi}+\frac{1}{Ln}\sum_{i}f_{i}^{\prime}(\phi_{i})\right\Vert ^{2}\\
 & = & \left\Vert \frac{1}{n}(w-\phi_{j})-\frac{1}{Ln}f_{j}^{\prime}(w)+\frac{1}{Ln}f_{j}^{\prime}(\phi_{j})\right\Vert ^{2}\\
 & = & \frac{1}{n^{2}}\left\Vert w-\phi_{j}\right\Vert ^{2}+\frac{1}{L^{2}n^{2}}\left\Vert f_{j}^{\prime}(w)-f_{j}^{\prime}(\phi_{j})\right\Vert ^{2}\\
 &  & -\frac{2}{n^{2}L}\left\langle w-\phi_{j},f_{j}^{\prime}(w)-f_{j}^{\prime}(\phi_{j})\right\rangle .
\end{eqnarray*}

We now apply the inner product bound $\left\langle f^{\prime}(x)-f^{\prime}(y),x-y\right\rangle \geq\frac{\mu L}{\mu+L}\left\Vert x-y\right\Vert ^{2}+\frac{1}{\mu+L}\left\Vert f^{\prime}(x)-f^{\prime}(y)\right\Vert ^{2}$:
\begin{eqnarray*}
\left\Vert w^{k+1}-w\right\Vert ^{2} & \leq & \left(\frac{1}{n^{2}}-\frac{2\mu L}{n^{2}L(\mu+L)}\right)\left\Vert w-\phi_{j}\right\Vert ^{2}\\
 &  & +\left(\frac{1}{L^{2}n^{2}}-\frac{2}{n^{2}L(\mu+L)}\right)\left\Vert f_{j}^{\prime}(w)-f_{j}^{\prime}(\phi_{j})\right\Vert ^{2}\\
 & = & \left(1\!-\!\frac{2\mu}{(\mu+L)}\right)\frac{1}{n^{2}}\left\Vert w-\phi_{j}\right\Vert ^{2}\!+\!\left(\frac{1}{L}\!-\!\frac{2}{\mu+L}\right)\frac{1}{n^{2}L}\left\Vert f_{j}^{\prime}(w)-f_{j}^{\prime}(\phi_{j})\right\Vert ^{2}\\
 & \leq & \left(1-\frac{2\mu}{(\mu+L)}\right)\frac{1}{n^{2}}\left\Vert w-\phi_{j}\right\Vert ^{2}.
\end{eqnarray*}

Now we use that $\sqrt{1-\frac{1}{a}}\leq1-\frac{1}{2a}$ for $a\geq1$
(Lemma \ref{lem:root-bernoulli}) to get the result.
\end{proof}
We now establish the convergence rate of MISO under the path distance
for the standard random sampling ordering for MISO.
\begin{thm}
\label{thm:miso-one-step} For the MISO method, conditioning on information
at step $k$, the expected descent in the quantity $\frac{1}{n}\sum_{i}d(w,\phi_{i})$
after one step is:
\[
\mathbb{E}\left[\frac{1}{n}\sum_{i}d(w^{k+1},\phi_{i}^{k+1})\right]\leq\left(1-\frac{\mu}{(\mu+L)n}\right)\frac{1}{n}\sum_{i}d(w,\phi_{i}).
\]

This theorem holds under the Euclidean distance as well, with the
same proof technique.\end{thm}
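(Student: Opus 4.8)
The plan is to reduce everything to the single-step distance bound already established in Lemma \ref{lem:miso-descent-lemma}, together with the two structural properties of the path distance: the triangle inequality, and the fact that consecutive iterates satisfy $d(w^{k+1},w)=\left\Vert w^{k+1}-w\right\Vert$ (since $w$ and $w^{k+1}$ are adjacent on the path $\phi^{0},w^{0},\dots,w^{k},w^{k+1}$). First I would fix the realised index $j$ and split the sum according to whether the table entry changed. Only $\phi_{j}$ is updated, to $\phi_{j}^{k+1}=w$, while $\phi_{i}^{k+1}=\phi_{i}$ for $i\neq j$, so
\[
\sum_{i}d(w^{k+1},\phi_{i}^{k+1})=d(w^{k+1},w)+\sum_{i\neq j}d(w^{k+1},\phi_{i}).
\]

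For each surviving term I would apply the triangle inequality $d(w^{k+1},\phi_{i})\leq d(w^{k+1},w)+d(w,\phi_{i})$, which is valid because all of $w^{k+1}$, $w$ and the $\phi_{i}$ lie on the common path. Collecting the resulting $n$ copies of $d(w^{k+1},w)=\left\Vert w^{k+1}-w\right\Vert$ gives the deterministic (pre-expectation) bound
\[
\sum_{i}d(w^{k+1},\phi_{i}^{k+1})\leq n\left\Vert w^{k+1}-w\right\Vert +\sum_{i}d(w,\phi_{i})-d(w,\phi_{j}).
\]

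Next I would invoke Lemma \ref{lem:miso-descent-lemma}, namely $\left\Vert w^{k+1}-w\right\Vert \leq\left(1-\tfrac{\mu}{\mu+L}\right)\tfrac{1}{n}\left\Vert w-\phi_{j}\right\Vert$, together with the domination property $\left\Vert w-\phi_{j}\right\Vert \leq d(w,\phi_{j})$, to turn the first term into $n\left\Vert w^{k+1}-w\right\Vert \leq\left(1-\tfrac{\mu}{\mu+L}\right)d(w,\phi_{j})$. Substituting cancels most of the $d(w,\phi_{j})$ contribution and leaves
\[
\sum_{i}d(w^{k+1},\phi_{i}^{k+1})\leq\sum_{i}d(w,\phi_{i})-\frac{\mu}{\mu+L}\,d(w,\phi_{j}).
\]

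Finally I would take the expectation over the uniform choice of $j$; since $\mathbb{E}[d(w,\phi_{j})]=\tfrac{1}{n}\sum_{i}d(w,\phi_{i})$, the subtracted term becomes a fraction $\tfrac{\mu}{(\mu+L)n}$ of $\sum_{i}d(w,\phi_{i})$, and dividing through by $n$ yields exactly the claimed geometric contraction. The identical chain works with $d$ replaced by the Euclidean norm, since both the triangle inequality and $d(w^{k+1},w)=\left\Vert w^{k+1}-w\right\Vert$ hold trivially there; the only reason to phrase it with the path distance is that this \emph{per-step} inequality, holding for every fixed $j$ before any expectation is taken, is precisely what the subsequent cyclic and permuted analyses will telescope along an epoch. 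I do not expect a genuine analytic obstacle here: the one point requiring care is the bookkeeping of the index-$j$ term, which must simultaneously be counted as the single changed entry contributing $\left\Vert w^{k+1}-w\right\Vert$ and be removed from the baseline $\sum_{i}d(w,\phi_{i})$, so that after summing the $n$ copies of $\left\Vert w^{k+1}-w\right\Vert$ the coefficients line up correctly.
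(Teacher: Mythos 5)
Your argument is correct and follows essentially the same route as the paper: the triangle inequality along the path, the single-step contraction from Lemma \ref{lem:miso-descent-lemma} combined with domination of the Euclidean norm by the path distance, and then the expectation over the uniform index. The only cosmetic difference is that you handle the updated index $j$ as a separate term whereas the paper applies the triangle inequality uniformly and lets the $d(w,w)=0$ term absorb it; the resulting bound before taking expectations is identical.
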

\begin{proof}
We will start by an application of the triangle inequality:

\begin{eqnarray*}
\frac{1}{n}\sum_{i}d(w^{k+1},\phi_{i}^{k+1}) & \leq & \frac{1}{n}\sum_{i}d(w,\phi_{i}^{k+1})+d(w^{k+1},w)\\
 & = & \frac{1}{n}\sum_{i}d(w,\phi_{i})+\left[\frac{1}{n}d(w,w)-\frac{1}{n}d(w,\phi_{j})\right]+d(w^{k+1},w).\\
 & = & \frac{1}{n}\sum_{i}d(w,\phi_{i})-\frac{1}{n}d(w,\phi_{j})+d(w^{k+1},w).
\end{eqnarray*}
Now we apply Lemma \ref{lem:miso-descent-lemma}:
\begin{eqnarray*}
\frac{1}{n}\sum_{i}d(w^{k+1},\phi_{i}^{k+1}) & \leq & \frac{1}{n}\sum_{i}d(w,\phi_{i})+\left(1-\frac{\mu}{(\mu+L)}-1\right)\frac{1}{n}d(w,\phi_{j})\\
 & = & \frac{1}{n}\sum_{i}d(w,\phi_{i})-\frac{\mu}{(\mu+L)n}d(w,\phi_{j}).
\end{eqnarray*}
Taking expectations with respect to the choice of $j$ gives:
\[
\mathbb{E}\left[\frac{1}{n}\sum_{i}d(w^{k+1},\phi_{i}^{k+1})\right]\leq\left(1-\frac{\mu}{(\mu+L)n}\right)\frac{1}{n}\sum_{i}d(w,\phi_{i}).
\]

\end{proof}
We are now ready to extend our result to alternative access orderings
\begin{thm}
If the MISO algorithm is used with a cyclic access pattern then the
one-step convergence rate is no worse than the expected rate for a
random ordering:
\[
\frac{1}{n}\sum_{i}d(w^{k+1},\phi_{i}^{k+1})\leq\left(1-\frac{\mu}{(\mu+L)n}\right)\frac{1}{n}\sum_{i}d(w,\phi_{i}).
\]
\end{thm}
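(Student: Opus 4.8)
The plan is to re-use the proof of Theorem~\ref{thm:miso-one-step} verbatim up to the line immediately preceding the expectation, and then replace the averaging step with a deterministic domination argument tailored to the cyclic order. The crucial observation is that the intermediate inequality established there,
\[
\frac{1}{n}\sum_{i}d(w^{k+1},\phi_{i}^{k+1})\leq\frac{1}{n}\sum_{i}d(w,\phi_{i})-\frac{\mu}{(\mu+L)n}d(w,\phi_{j}),
\]
holds for \emph{every} fixed index $j$: it follows purely from the triangle inequality for the path distance together with Lemma~\ref{lem:miso-descent-lemma}, and neither ingredient uses how $j$ was selected. In the randomised proof this bound is averaged over $j$, which converts $d(w,\phi_{j})$ into $\frac{1}{n}\sum_i d(w,\phi_i)$. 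For the cyclic order I would instead lower-bound the single term $d(w,\phi_{j})$ by that same average directly, so that the identical geometric factor emerges without taking any expectation.

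First I would identify, under the cyclic rule $j = 1 + (k \bmod n)$, which stored point $\phi_j$ is selected at step $k$. Because the cycle has period $n$, the index chosen at step $k$ is exactly the one last refreshed $n$ steps earlier, namely $\phi_j = w^{k-n}$ (or, during the first epoch, an index still sitting at the initial point $\phi^0$). Hence $\phi_j$ is the \emph{oldest} of the currently stored points along the path $\phi^0, w^0, w^1, \ldots, w^k$, while every other $\phi_i$ was set at a strictly later iterate.

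The key step is then to exploit additivity of the path distance. Since $d(w,\phi_i)$ accumulates one nonnegative segment $\|w^l-w^{l+1}\|$ for each iterate lying between $\phi_i$ and the current $w$, an older stored point contributes a superset of these segments; consequently the oldest point maximises the path distance to $w$, giving
\[
d(w,\phi_j)=\max_i d(w,\phi_i)\geq\frac{1}{n}\sum_i d(w,\phi_i).
\]
Substituting this lower bound for $d(w,\phi_j)$ into the pointwise inequality above yields directly
\[
\frac{1}{n}\sum_{i}d(w^{k+1},\phi_{i}^{k+1})\leq\left(1-\frac{\mu}{(\mu+L)n}\right)\frac{1}{n}\sum_{i}d(w,\phi_{i}),
\]
which is the claim.

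The main obstacle is making this domination step rigorous, and it is precisely here that the path distance, rather than the Euclidean metric, is indispensable. Under the Euclidean distance a stale iterate may be close to $w$ merely because the iterates are converging, so the oldest index need not be the farthest and the bound $d(w,\phi_j)\geq\frac{1}{n}\sum_i d(w,\phi_i)$ can fail. The path distance was constructed so that ``older'' is synonymous with ``farther'', which is exactly what lets a single deterministically-chosen term reproduce the averaged descent. A minor bookkeeping point to handle is the first epoch, where several $\phi_i$ still equal $\phi^0$; there the cyclically chosen index is one of these initial points, so $d(w,\phi_j)=d(w,\phi^0)$ equals the full path length and the domination holds trivially. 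The same reasoning covers the permuted order, since within any single epoch its chosen index is again the least-recently-updated of the stored points.
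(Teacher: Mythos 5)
Your proposal is correct and follows essentially the same route as the paper: both isolate the single place where the ordering enters (the replacement of $d(w,\phi_{j})$ by its average), and both observe that under the cyclic rule the chosen index is the least recently updated point, hence the farthest in path distance, so that $d(w,\phi_{j})\geq\frac{1}{n}\sum_{i}d(w,\phi_{i})$ and the pointwise descent inequality yields the claimed rate without taking any expectation. The additional remarks on the first epoch and the permuted order match what the paper handles implicitly and in its separate permuted-order theorem, respectively.
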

\begin{proof}
The only point in the proof of Theorem \ref{thm:miso-one-step} where
we use the access ordering is when we simplify with: 
\[
\mathbb{E}\left[d(w,\phi_{j})\right]=\frac{1}{n}\sum_{i}d(w,\phi_{i}).
\]
This appears negated and on the right in the proof, so we need to
show that when $j$ is chosen via the cyclic pattern that $\frac{1}{n}\sum_{i}d(w,\phi_{i})\leq d(w,\phi_{j})$.
Now under the cyclic access pattern, the point $\phi_{j}$ is the
furthest (or tied for furthest) of the $\phi_{i}^{k}$ points in the
path under which our path distance is defined. In other words, it
is always the least recently accessed index. So for all $\phi_{i}$,
$i\neq j$:
\[
d(w,\phi_{i})\leq d(w,\phi_{j}).
\]
\[
\therefore d(w,\phi_{j})\geq\frac{1}{n}\sum_{i}d(w,\phi_{i}).
\]
The rest of the proof follows through as for Theorem \ref{thm:miso-one-step}
.\end{proof}
\begin{thm}
If the MISO algorithm is used with the permuted ordering then the
one-step convergence rate is no worse than the expected rate for a
random ordering:
\[
\mathbb{E}_{\text{perm}}\left[\frac{1}{n}\sum_{i}d(w^{k+1},\phi_{i}^{k+1})\right]\leq\left(1-\frac{\mu}{(\mu+L)n}\right)\frac{1}{n}\sum_{i}d(w,\phi_{i}).
\]
\end{thm}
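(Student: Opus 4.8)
The plan is to reuse the single-step analysis already carried out for the randomised case in Theorem \ref{thm:miso-one-step}, isolating the single place where the access ordering is actually used. Following that proof verbatim through the triangle inequality and Lemma \ref{lem:miso-descent-lemma}, but \emph{without} yet taking any expectation over $j$, one obtains the deterministic bound
\[
\frac{1}{n}\sum_{i}d(w^{k+1},\phi_{i}^{k+1})\leq\frac{1}{n}\sum_{i}d(w,\phi_{i})-\frac{\mu}{(\mu+L)n}d(w,\phi_{j}),
\]
valid for whatever index $j$ is selected at this step. Since the coefficient of $d(w,\phi_{j})$ is negative, the desired rate will follow as soon as I can show that, under the permuted ordering, $\mathbb{E}_{\text{perm}}[d(w,\phi_{j})]\geq\frac{1}{n}\sum_{i}d(w,\phi_{i})$. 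This is exactly the analogue of the identity $\mathbb{E}[d(w,\phi_{j})]=\frac{1}{n}\sum_{i}d(w,\phi_{i})$ used in the randomised proof and of the inequality $d(w,\phi_{j})\geq\frac{1}{n}\sum_{i}d(w,\phi_{i})$ used in the cyclic proof.

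To establish this lower bound I would condition on the state at step $k$, in particular on the set $R$ of indices not yet accessed during the current epoch; under the permuted ordering $j$ is drawn uniformly from $R$, so $\mathbb{E}_{\text{perm}}[d(w,\phi_{j})\mid R]=\frac{1}{|R|}\sum_{i\in R}d(w,\phi_{i})$. The structural observation is that the path distance $d(w,\phi_{i})$ is monotone in how recently index $i$ was last accessed: an index touched more recently has its $\phi_{i}$ sitting later on the path $\phi^{0},w^{0},\dots,w^{k}$ and therefore a smaller path distance to $w$. Because every index in $R$ was last touched no later than the previous epoch (or is still at the initial point $\phi^{0}$), while every index outside $R$ was touched during the current epoch, I would argue that $d(w,\phi_{i})\geq d(w,\phi_{i'})$ for all $i\in R$ and all $i'\notin R$.

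Given that domination, the conclusion is an elementary averaging fact: if each of the $|R|$ values $\{d(w,\phi_{i})\}_{i\in R}$ dominates every value in the complementary set, then their average is at least the average over all $n$ indices, i.e. $\frac{1}{|R|}\sum_{i\in R}d(w,\phi_{i})\geq\frac{1}{n}\sum_{i}d(w,\phi_{i})$, with the edge case $R=\{1,\dots,n\}$ at the start of an epoch giving equality. Substituting this into the deterministic bound and taking the conditional expectation yields
\[
\mathbb{E}_{\text{perm}}\left[\frac{1}{n}\sum_{i}d(w^{k+1},\phi_{i}^{k+1})\right]\leq\left(1-\frac{\mu}{(\mu+L)n}\right)\frac{1}{n}\sum_{i}d(w,\phi_{i}),
\]
as claimed. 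The main obstacle I anticipate is making the recency-versus-path-distance monotonicity fully rigorous: I must track precisely which iterate each $\phi_{i}$ equals, verify that the path distance defined on the sequence is nondecreasing as the last-access step recedes into the past, and confirm that within-epoch accesses always lie later on this path than the stale previous-epoch values. Once that monotonicity is pinned down, the averaging step and the substitution are routine.
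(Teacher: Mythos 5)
Your proposal is correct and follows essentially the same route as the paper: reduce to showing $\mathbb{E}_{\text{perm}}[d(w,\phi_{j})]\geq\frac{1}{n}\sum_{i}d(w,\phi_{i})$, observe that the not-yet-accessed indices (your $R$, the complement of the paper's $A$) lie further back on the path and hence dominate the already-accessed ones in path distance, and conclude by the elementary averaging inequality. The monotonicity you flag as the remaining obstacle is exactly the observation the paper invokes ("they are further back in the path"), and it holds because $d(w^{k},\phi_{i})$ is a suffix sum of consecutive step lengths starting at the last access of $i$, hence nonincreasing in the recency of that access.
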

\begin{proof}
As in the previous theorem, we just need to prove that: 
\[
\frac{1}{n}\sum_{i}d(w,\phi_{i})\leq\mathbb{E}_{\text{perm}}\left[d(w,\phi_{j})\right].
\]
So suppose we are at the $(r+1)$th step within an epoch. Let $A$
be the set of $r$ indices updated so far in this epoch. Now note
that for each $i\in A$, $d(w,\phi_{i})$ is less than $d(w,\phi_{j})$
for any $j\notin A$, as they are further back in the path for which
the path norm is defined. This implies that $\frac{1}{r}\sum_{i\in A}d(w,\phi_{i})\leq\frac{1}{n-r}\sum_{i\notin A}d(w,\phi_{i})$.
So: 
\begin{eqnarray*}
\frac{1}{n}\sum_{i}d(w,\phi_{i}) & = & \frac{1}{n}\sum_{i\in A}d(w,\phi_{i})+\frac{1}{n}\sum_{i\notin A}d(w,\phi_{i})\\
 & \leq & \frac{1}{n}\cdot\frac{r}{n-r}\sum_{i\notin A}d(w,\phi_{i})+\frac{1}{n}\sum_{i\notin A}d(w,\phi_{i})\\
 & = & \frac{1}{n}\cdot\frac{r}{n-r}\sum_{i\notin A}d(w,\phi_{i})+\frac{1}{n}\cdot\frac{n-r}{n-r}\sum_{i\notin A}d(w,\phi_{i})\\
 & = & \frac{1}{n-r}\sum_{i\notin A}d(w,\phi_{i}).
\end{eqnarray*}
This is just the expectation over the choice of $j$ under the permuted
ordering: 
\[
\mathbb{E}_{\text{perm}}[d(w,\phi_{j})]=\frac{1}{n-r}\sum_{i\notin A}d(w,\phi_{i}),
\]
and so the result is proven.
\end{proof}
Theorem \ref{thm:miso-one-step} and the two variants of it above
gives us a convergence rate in terms of $\frac{1}{n}\sum_{i}d(w,\phi_{i})$.
It is not immediately obvious how this quantity relates to traditional
measures of convergence such as $f(w)-f(w^{*})$ or $\left\Vert w-w^{*}\right\Vert ^{2}$.
So we now relate $\frac{1}{n}\sum_{i}d(w,\phi_{i})$ at step $k$
to function suboptimality and at step $0$ to gradient norm, to give
a convergence rate comparable to traditional optimisation methods.
\begin{thm}
\label{thm:miso-fast-rate}Assuming that we initialise with $\phi_{i}^{0}=\phi^{0}$
for all $i$, the expected convergence rate of the MISO method is:

\textup{
\[
\mathbb{E}\left[f(x^{k})-f(x^{*})\right]\leq\left(1-\frac{\mu}{(\mu+L)n}\right)^{2k}\frac{2n}{\mu}\left\Vert f^{\prime}(\phi^{0})\right\Vert ^{2},
\]
}

under each of the cyclic, permuted and random orderings.\end{thm}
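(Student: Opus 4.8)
The plan is to chain the one-step path-distance contraction with two conversion steps: one turning the step-$0$ path distance into a gradient norm, and one turning the step-$k$ path distance into function suboptimality. Write $\rho=1-\frac{\mu}{(\mu+L)n}$ and $g_k=\frac{1}{n}\sum_i d(w^k,\phi_i^k)$. First I would unroll the contraction: Theorem~\ref{thm:miso-one-step} together with its cyclic and permuted analogues all give $\mathbb{E}\left[g_{k+1}\right]\le\rho\, g_k$ (with the expectation vacuous in the cyclic case), so chaining over $k$ steps yields $\mathbb{E}\left[g_k\right]\le\rho^{k}g_0$ under each ordering.

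For the initial condition I would use the hypothesis $\phi_i^0=\phi^0$ for all $i$. The path underlying $d(\cdot,\cdot)$ begins $\phi^0,w^0$, so $d(w^0,\phi_i^0)=\left\Vert w^0-\phi^0\right\Vert$, and the MISO update at $k=0$ is $w^0=\phi^0-\frac{1}{Ln}\sum_i f_i'(\phi^0)=\phi^0-\frac{1}{L}f'(\phi^0)$. Hence $g_0=\left\Vert w^0-\phi^0\right\Vert=\frac{1}{L}\left\Vert f'(\phi^0)\right\Vert$, which is the promised step-$0$ relation to the gradient norm.

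The step-$k$ relation to function suboptimality is the substantive conversion. I would bound the true gradient $f'(w^k)=\frac{1}{n}\sum_i f_i'(w^k)$ by splitting it as $\frac{1}{n}\sum_i\left[f_i'(w^k)-f_i'(\phi_i^k)\right]+\frac{1}{n}\sum_i f_i'(\phi_i^k)$. The MISO optimality relation gives $\frac{1}{n}\sum_i f_i'(\phi_i^k)=L(\bar\phi^k-w^k)$, so both pieces are controlled by $\frac{1}{n}\sum_i\left\Vert w^k-\phi_i^k\right\Vert$: Lipschitz smoothness handles the first piece and the definition of $\bar\phi^k$ the second, and since the path distance dominates the Euclidean distance this gives $\left\Vert f'(w^k)\right\Vert\le 2Lg_k$. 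Strong convexity (Theorem~\ref{thm:strong-ub}, evaluated at $w^*$ where $f'(w^*)=0$) then yields $f(w^k)-f(w^*)\le\frac{1}{2\mu}\left\Vert f'(w^k)\right\Vert^2\le\frac{2L^2}{\mu}g_k^2$. This \emph{squared} dependence on the path distance is exactly what produces the $\rho^{2k}$, rather than $\rho^{k}$, in the stated rate.

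The main obstacle is that the contraction controls the \emph{first} moment $\mathbb{E}\left[g_k\right]$ whereas the function bound needs the \emph{second} moment $\mathbb{E}\left[g_k^2\right]$. For the cyclic ordering this gap does not arise: the chosen index is least-recently-updated, hence furthest along the path, so $g_{k+1}\le\rho\,g_k$ holds surely, $g_k\le\rho^{k}g_0$ surely, and squaring is immediate, giving $\mathbb{E}\left[f(w^k)-f(w^*)\right]\le\frac{2L^2}{\mu}\rho^{2k}g_0^2=\frac{2}{\mu}\rho^{2k}\left\Vert f'(\phi^0)\right\Vert^2\le\frac{2n}{\mu}\rho^{2k}\left\Vert f'(\phi^0)\right\Vert^2$. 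For the random and permuted orderings the per-step inequality $g_{k+1}\le g_k-\frac{\mu}{(\mu+L)n}d(w^k,\phi_j^k)$ still forces $g_k$ to be monotone nonincreasing, so $g_k\le g_0$ surely; I would leverage this monotonicity together with a genuine second-moment recursion (controlling $\mathbb{E}\left[g_{k+1}^2\mid\mathcal{F}_k\right]$ via $\frac{1}{n}\sum_i d(w^k,\phi_i^k)^2\le n\,g_k^2$) to extract the $\rho^{2k}$ decay of $\mathbb{E}\left[g_k^2\right]$. The extra factor of $n$ in the stated constant $\frac{2n}{\mu}$ is precisely the slack that this second-moment estimate absorbs, and obtaining that bound cleanly for the two stochastic orderings is where the real effort of the proof lies.
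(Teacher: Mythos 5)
Your architecture is the same as the paper's: unroll the one-step path-distance contraction of Theorem \ref{thm:miso-one-step} (and its cyclic/permuted variants), identify $g_{0}=\left\Vert w^{0}-\phi^{0}\right\Vert =\frac{1}{L}\left\Vert f^{\prime}(\phi^{0})\right\Vert $ from the initialisation, and convert $g_{k}$ into function suboptimality by bounding $\left\Vert f^{\prime}(w^{k})\right\Vert $ in terms of $\frac{1}{n}\sum_{i}\left\Vert w^{k}-\phi_{i}^{k}\right\Vert $ and invoking $2\mu\left[f(w)-f(w^{*})\right]\leq\left\Vert f^{\prime}(w)\right\Vert ^{2}$. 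One pleasant difference: you bound the second piece by $L\left\Vert \bar{\phi}-w\right\Vert \leq\frac{L}{n}\sum_{i}\left\Vert \phi_{i}-w\right\Vert $ directly via the triangle inequality, giving $\left\Vert f^{\prime}(w)\right\Vert \leq2Lg_{k}$, whereas the paper detours through the variance decomposition and the $L_{2}\leq L_{1}$ inequality to get the weaker $\left\Vert f^{\prime}(w)\right\Vert \leq2L\sqrt{n}\,g_{k}$; your route is both simpler and tighter, and is where the paper's extra factor of $n$ in the constant actually comes from.

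The genuine issue is the moment mismatch you flag, and your proposed fix does not close it. For the cyclic ordering your argument is complete, since $g_{k+1}\leq\rho g_{k}$ holds surely and can be squared. For the random and permuted orderings, the sketched second-moment recursion gives, with $c=\frac{\mu}{(\mu+L)n}$ and $d_{j}=d(w^{k},\phi_{j}^{k})$,
\[
\mathbb{E}\left[g_{k+1}^{2}\mid\mathcal{F}_{k}\right]\leq g_{k}^{2}-2cg_{k}\mathbb{E}[d_{j}]+c^{2}\mathbb{E}[d_{j}^{2}]\leq\left(1-2c+c^{2}n\right)g_{k}^{2},
\]
using $\frac{1}{n}\sum_{j}d_{j}^{2}\leq n g_{k}^{2}$ (which is tight when one $d_{j}$ carries all the mass). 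Since $1-2c+c^{2}n\approx\rho$ rather than $\rho^{2}$, this yields only $\mathbb{E}[g_{k}^{2}]\lesssim\rho^{k}g_{0}^{2}$, and a $\rho^{k}$-versus-$\rho^{2k}$ discrepancy cannot be absorbed by a fixed factor of $n$ in the constant for large $k$. So the stochastic-ordering cases remain unproven in your write-up. In fairness, the paper's own proof simply ``squares and chains'' the first-moment inequality, which Jensen's inequality does not license either; you have correctly located the weak point, but you have not yet supplied an argument that delivers the stated $\rho^{2k}$ rate for the random and permuted orderings.
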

\begin{proof}
We start by bounding the gradient norm of $f$:
\begin{eqnarray*}
\left\Vert f^{\prime}(w)\right\Vert  & = & \left\Vert \frac{1}{n}\sum_{i}f_{i}^{\prime}(w)\right\Vert \\
 & = & \left\Vert \frac{1}{n}\sum_{i}\left[f_{i}^{\prime}(w)-f_{i}^{\prime}(\phi_{i})+f_{i}^{\prime}(\phi_{i})\right]\right\Vert \\
 & \leq & \left\Vert \frac{1}{n}\sum_{i}f_{i}^{\prime}(\phi_{i})\right\Vert +\left\Vert \frac{1}{n}\sum_{i}\left[f_{i}^{\prime}(w)-f_{i}^{\prime}(\phi_{i})\right]\right\Vert \\
 & \leq & L\left\Vert w-\bar{\phi}\right\Vert +\frac{1}{n}\sum_{i}\left\Vert f_{i}^{\prime}(w)-f_{i}^{\prime}(\phi_{i})\right\Vert \\
 & \leq & L\left\Vert w-\bar{\phi}\right\Vert +\frac{L}{n}\sum_{i}\left\Vert w-\phi_{i}\right\Vert .
\end{eqnarray*}

Now recall that $\left\Vert w-\bar{\phi}\right\Vert ^{2}=\frac{1}{n}\sum_{i}\left\Vert w-\phi_{i}\right\Vert ^{2}-\frac{1}{n}\sum_{i}\left\Vert \bar{\phi}-\phi_{i}\right\Vert ^{2}$
by the variance decomposition. Therefore $\left\Vert w-\bar{\phi}\right\Vert \leq\sqrt{\frac{1}{n}\sum_{i}\left\Vert w-\phi_{i}\right\Vert ^{2}}$.
The $L_{2}$ norm is always no greater than the $L_{1}$ norm, so
$\sqrt{\frac{1}{n}\sum_{i}\left\Vert w-\phi_{i}\right\Vert ^{2}}\leq\sqrt{\frac{1}{n}}\sum_{i}\left\Vert w-\phi_{i}\right\Vert $.
We have shown so far that:
\begin{eqnarray*}
\left\Vert f^{\prime}(w)\right\Vert  & \leq & \frac{L}{\sqrt{n}}\sum_{i}\left\Vert w-\phi_{i}\right\Vert +\frac{L}{n}\sum_{i}\left\Vert w-\phi_{i}\right\Vert ,\\
 & \leq & \frac{2L}{\sqrt{n}}\sum_{i}\left\Vert w-\phi_{i}\right\Vert ,
\end{eqnarray*}

and since $2\mu\left[f(x)-f(x^{*})\right]\leq\left\Vert f^{\prime}(w)\right\Vert ^{2},$
we get
\[
\sqrt{2\mu\left[f(x)-f(x^{*})\right]}\leq\frac{2L}{\sqrt{n}}\sum_{i}\left\Vert w-\phi_{i}\right\Vert ,
\]

\[
2\mu\left[f(x)-f(x^{*})\right]\leq\frac{4L^{2}}{n}\left(\sum_{i}\left\Vert w-\phi_{i}\right\Vert \right)^{2},
\]
\[
\therefore\frac{2\mu}{4L^{2}n}\left[f(x)-f(x^{*})\right]\leq\left(\frac{1}{n}\sum_{i}\left\Vert w-\phi_{i}\right\Vert \right)^{2}\leq\left(\frac{1}{n}\sum_{i}d(w,\phi_{i})\right)^{2}.
\]
Combining with one step convergence rate in Theorem \ref{thm:miso-one-step}:
\[
\mathbb{E}\left[\frac{1}{n}\sum_{i}d(w^{k+1},\phi_{i}^{k+1})\right]\leq\left(1-\frac{\mu}{(\mu+L)n}\right)\frac{1}{n}\sum_{i}d(w,\phi_{i}),
\]
 squared and chained between $0$ and $k-1$, we get:

\[
\mathbb{E}\left[f(x^{k})-f(x^{*})\right]\leq\left(1-\frac{\mu}{(\mu+L)n}\right)^{2k}\frac{2L^{2}n}{\mu}\left(\frac{1}{n}\sum_{i}d(w^{0},\phi_{i}^{0})\right)^{2}.
\]
Now we use Jensen's inequality to bound $\left(\frac{1}{n}\sum_{i}d(w^{0},\phi_{i}^{0})\right)^{2}\leq\frac{1}{n}\sum_{i}d(w^{0},\phi_{i}^{0})^{2}$:
\[
\mathbb{E}\left[f(x^{k})-f(x^{*})\right]\leq\left(1-\frac{\mu}{(\mu+L)n}\right)^{2k}\frac{2L^{2}n}{\mu}\left(\frac{1}{n}\sum_{i}d(w^{0},\phi_{i}^{0})^{2}\right).
\]
Now assuming that all the $\phi_{i}^{0}$ start at the same value
($\phi^{0}$) gives 
\[
\frac{1}{n}\sum_{i}d(w^{0},\phi_{i}^{0})^{2}=\left\Vert w^{0}-\phi^{0}\right\Vert ^{2}=\frac{1}{L^{2}}\left\Vert f^{\prime}(\phi^{0})\right\Vert ^{2}.
\]
The result follows:
\[
\mathbb{E}\left[f(x^{k})-f(x^{*})\right]\leq\left(1-\frac{\mu}{(\mu+L)n}\right)^{2k}\frac{2n}{\mu}\left\Vert f^{\prime}(\phi^{0})\right\Vert ^{2}.
\]
\end{proof}

\chapter{Beyond Finite Sums: Learning Graphical Models}

\label{chap:background-gaussian}

In the preceding chapters we considered incremental gradients methods
whose applicability is limited to objectives that contain a finite
sum structure. In the following chapters we consider problems with
the other major function structure, a \emph{graph} structure. We particularly
focus on the simplest case, that of the Gaussian graphical model.
We look at two sides of the problem, that of learning smaller complex
structured models, and that of learning very large models. For smaller
models, we introduce in Chapter \ref{chap:submodular} a method for
learning graphs with the \emph{scale-free} property, common in real
world graphs. For large scale applications, we consider in Chapter
\ref{chap:colab} the use of the Bethe approximation for learning
large Gaussian graphical models with known structure. When the structure
is not known, it can be approximated using a variety of techniques.
In Chapter \ref{chap:approx-covsel} we give a fast variant of an
existing technique which is highly scalable, and which often learns
better structures than slower techniques.

\section{Beyond the Finite Sum Structure}

\label{sec:log-linear-intro}

Not all loss minimisation problems in machine learning are suited
to the application of the incremental gradient methods considered
in the previous chapters. For some problems, there is shared structure
between the losses that can be exploited in batch optimisation methods.
Consider the probability density for a log linear model with $p$
variables and $d$ features:
\[
P(x:\theta)=\frac{1}{Z(\theta)}\exp\left\{ -\sum_{r=1}^{d}\theta_{r}f_{r}(x)\right\} ,
\]
where $\theta$ is a vector of feature weights and $Z(\theta)$ is
the partition function whose value ensures that $\int_{x}P(x:\theta)=1$.
Now suppose we have a dataset $X\colon n\times p$ of $n$ data-points
$x_{k}$. Using the negative log-likelihood gives the objective:
\begin{eqnarray}
NLL(\theta) & = & \frac{1}{n}\sum_{k=1}^{n}\left[-\log1/Z(\theta)+\sum_{r=1}^{d}\theta_{r}f_{r}(x_{k})\right]\nonumber \\
 & = & \frac{1}{n}\sum_{k=1}^{n}\sum_{r=1}^{d}\theta_{r}f_{r}(x_{k})+\log Z(\theta).\label{eq:ll-pure}
\end{eqnarray}
While the objective does decompose as a sum of functions over individual
data-points as we require for the application of incremental gradient
methods, the $\log Z(\theta)$ term is shared by all the data-points.
The cost of evaluating $NLL(\theta)$ is dominated by the evaluation
of $\log Z(\theta)$, so it essentially costs the same to compute
one full gradient as one incremental gradient in this framework.

Evaluating the NLL doesn't always require the full dataset. A set
of empirical statistics known as the \emph{sufficient statistics}
can be extracted from the data, which contain sufficient information
for evaluating the NLL and its gradient. These statistics can be substantially
more compact than the full dataset.

Most applications of log-linear models in modern machine learning
use \emph{conditional} models, where instead each partition function
$Z$ is a function of a set of conditioning variables $y_{i}$ as
well as the parameters $\theta$. Conditional models do not share
the same partition function for each data-point, so evaluating the
whole loss is much more expensive than the per-datum loss. The incremental
gradient methods discussed in previous chapters are directly applicable
to conditional models.

In the remainder of this work we focus on unconditional models. The
main application of unconditional models in machine learning is in
structure learning.

\section{The Structure Learning Problem}

\label{sec:structure-learning}

Given a dataset $X:n\times p$, where we have $n$ samples of $p$
variables, the structure learning problem is the estimation of a network
of $p$ nodes, where an edge $(i,j)$ between two nodes signifies
that there is a \emph{direct dependence} between the two variables
$i$ \& $j$ in the data. The most natural notion of direct dependence
is that of conditional independence. Under the assumption that our
data is Gaussian distributed, this task is precisely estimation of
the sparsity pattern of the precision matrix of the data distribution.
The assumption of Gaussianity is a standard approach to structure
estimation, primarily because it results in a tractable optimisation
problem. Even when the Gaussian assumption does not hold, it can be
used as a subtask in the estimation of more complex models such as
Gaussian Copulas \citep{copula-icassp}.

This estimation task is known under a variety of names; the earliest
references confusingly use the term ``Covariance Selection'' \citep{dempster},
whereas some machine learning literature uses the more accurate but
longer ``Sparse Inverse Covariance Selection''. We use the terms
interchangeably in this work. The inverse of the covariance matrix
is also known as the precision matrix or the concentration matrix,
and all three terms are used in the Covariance selection literature.
This task is complicated by the fact that the sample covariance matrix
$C$ of the data is noisy and potentially low rank, so the sample
precision matrix as a natural estimate of the data precision matrix
may not even be well defined.

\section{Covariance Selection}

\label{sec:covsel}

A Gaussian graphical model is an example of log-linear model of the
above form. The feature set consists of singleton feature functions
for each variable $f_{i}$, and feature functions $f_{ij}$ defined
over pairs of variables, with one feature per pair. The model has
the following general structure:

\begin{equation}
P(x;\Theta)=\frac{1}{Z(\Theta)}\exp\left(-\sum_{i}^{p}\Theta_{ii}f_{i}(x_{i})-\sum_{i}^{p}\sum_{j>i}^{p}\Theta_{ij}f_{ij}(x_{i},x_{j})\right).\label{eq:pairwise-log-linear}
\end{equation}
We are using $\Theta$ instead of $\theta$, as the weights are better
thought of as having a symmetric matrix structure $(\Theta_{ij}=\Theta_{ji})$
rather than a flat vector structure. Additionally, in order that $Z(\Theta)$
exist, we will require a positive definiteness constraint on $\Theta$,
leading to a conic structure. 

The unary features $f_{i}$ penalise deviations of $x_{i}$ from its
average value: 
\[
f_{i}(x_{i})=\frac{1}{2}(x_{i}-\mu_{i})^{2}.
\]
The pair-wise features penalise $x_{i}$ and $x_{j}$ if they vary
from their respective means in the same direction: 
\[
f_{ij}(x_{i},x_{j})=(x_{i}-\mu_{i})(x_{j}-\mu_{j}).
\]

Note that \emph{negative} values of $\Theta_{ij}$ encourage correlation,
which might be the opposite of what would be expected. The reason
for defining $f_{ij}$ this way, instead of say $f_{ij}(x_{i},x_{j})=-(x_{i}-\mu_{i})(x_{j}-\mu_{j})$,
is so that we are consistent with standard Gaussian distribution notation.
To see this, notice that if we equate terms in Equation \ref{eq:pairwise-log-linear}
with the definition of a multivariate Gaussian distribution in terms
of the precision matrix:
\begin{equation}
P(x;\Theta)=\frac{1}{\sqrt{\left(2\pi\right)^{n}\text{det}(\Theta^{-1})}}\exp\left(-\frac{1}{2}\left(x-\mu\right)^{T}\Theta\left(x-\mu\right)\right),\label{eq:prec-gaussian}
\end{equation}
The feature weights $\Theta$ we have defined exactly form the precision
matrix $\Theta$ of the Gaussian distribution. Note that the pair-wise
features now appear twice due to the symmetry of $\Theta$, in the
forms $\Theta_{ij}$ and $\Theta_{ji}$ , which is why the weighting
factor of half appears not just on the unary features.

In the following sections, we will discuss learning $\Theta$ given
a dataset of points $x_{k}$. To this end, we now derive the maximum
likelihood estimator of $\Theta$. First we take the negation of the
logarithm of Equation \ref{eq:prec-gaussian}:

\begin{eqnarray*}
-\log P & = & -\log1/\sqrt{\left(2\pi\right)^{n}\text{det}(\Theta^{-1})}+\frac{1}{2}\left(x-\mu\right)^{T}\Theta\left(x-\mu\right)\\
 & = & \frac{1}{2}\log\left[\left(2\pi\right)^{n}\text{det}(\Theta^{-1})\right]+\frac{1}{2}\left(x-\mu\right)^{T}\Theta\left(x-\mu\right)\\
 & = & \frac{n}{2}\log\left[2\pi\right]+\frac{1}{2}\log\left[\text{det}(\Theta^{-1})\right]+\frac{1}{2}\left(x-\mu\right)^{T}\Theta\left(x-\mu\right)\\
 & = & \frac{n}{2}\log\left[2\pi\right]-\frac{1}{2}\log\left[\text{det}(\Theta)\right]+\frac{1}{2}\left(x-\mu\right)^{T}\Theta\left(x-\mu\right).
\end{eqnarray*}

Now we can drop the constant factor $\frac{n}{2}\log\left[2\pi\right]$,
and the multiplicative factor $\frac{1}{2}$ to simplify the objective.
Giving the single datapoint negative log-likelihood (NLL):
\[
NNL(\Theta\colon x)=\left(x-\mu\right)^{T}\Theta\left(x-\mu\right)-\log\text{det}(\Theta).
\]
The full data NLL is the average of this over the data points: $\frac{1}{n}\sum_{k}^{n}NLL(x_{k}:\Theta)$.
We can write this is a simple form using the Frobenius inner product
notation for matrices $\left\langle \cdot,\cdot\right\rangle $, defined
as $\left\langle X,Y\right\rangle =\sum_{i}\sum_{j}X_{ij}Y_{ij}$,
together with the covariance matrix of the data $C$:
\begin{eqnarray*}
NLL(\Theta\colon X) & = & \frac{1}{n}\sum_{k}^{n}\left(x_{k}-\mu\right)^{T}\Theta\left(x_{k}-\mu\right)-\log\text{det}(\Theta)\\
 & = & \frac{1}{n}\sum_{k}^{n}\sum_{i}\sum_{j}\Theta_{ij}\left(x_{ki}-\mu_{i}\right)\left(x_{kj}-\mu_{j}\right)-\log\det\Theta\\
 & = & \sum_{i}\sum_{j}\Theta_{ij}\left[\frac{1}{n}\sum_{k}^{n}\left(x_{ki}-\mu_{i}\right)\left(x_{kj}-\mu_{j}\right)\right]-\log\det\Theta\\
 & = & \sum_{i}\sum_{j}\Theta_{ij}C_{ij}-\log\det\Theta\\
 & = & \left\langle C,\Theta\right\rangle -\log\det\Theta.
\end{eqnarray*}
The maximum likelihood problem is equivalent to minimisation of the
$NLL$ (the negative log-likelihood) over the set of matrices that
form valid probability distributions, which is $\Theta\in S_{++}^{N}$,
the cone of symmetric positive definite matrices. So we have: 
\begin{equation}
\min_{\Theta\in S_{++}^{N}}\left\langle C,\Theta\right\rangle -\log\det\Theta.\label{eq:nll}
\end{equation}
The gradient of this objective is straight-forward: 
\[
\frac{d}{d\Theta}\left(\left\langle C,\Theta\right\rangle -\log\det\Theta\right)=C-\Theta^{-1}.
\]
When $C$ is an observed full-rank covariance matrix, taking the gradient
to zero gives the feasable solution $\Theta=C^{-1}$. If $C$ is low
rank but PSD (positive semi-definite), then there will be a subspace
of possible solutions. Since we are concerned with covariance matrices,
we do not consider non-PSD $C$ matrices further in this work.

We are more interested in the support of the precision matrix (the
graphical model structure) than its entries, and using $C^{-1}$ as
an estimator of $\Theta$ does not lead to sparse graph structures.
In order to induce sparsity a $L_{1}$ regulariser is typically used:
\begin{equation}
\min_{\Theta\in S_{++}^{N}}\left\langle C,\Theta\right\rangle -\log\det\Theta+\lambda\left\Vert \Theta\right\Vert _{1}.\label{eq:covsel-l1}
\end{equation}
Where the $L_{1}$ norm is taken elementwise, and $\lambda$ is a
regularisation parameter. This formulation was first used for covariance
selection by \citet{banerjee-2006}. This objective is convex, and
therefore tractable, although specialised optimisation algorithms
needed to be developed for it as standard first or second order methods
do not work well. The $\Theta^{-1}$ term in the gradient is very
sensitive to changes in $\Theta$, and the inverse operation highly
couples the variables so standard optimisation methods make slow progress.
The cone constraint $S_{++}^{N}$ also complicates matters; it is
an open set and the objective is sharply curved near the boundary.

The dual of Equation \ref{eq:covsel-l1} is particularly simple. Let
$S$ be the matrix of dual variables. Then like other $L_{1}$ regularised
problems, the $L_{1}$ norm turns into a box constraint in the dual:
\begin{gather}
\max_{S\in S_{++}^{N}}\log\det S,\nonumber \\
\text{s.t.\ }\forall i,j\colon\left|S_{ij}-C_{ij}\right|\leq\lambda.\label{eq:covsel-dual}
\end{gather}

\subsection{Direct optimisation approaches}

\label{sub:covsel-opt}

By our count there are over 35 papers published on distinct optimisation
methods for the covariance selection objective. Early approaches applied
block coordinate descent to the dual problem, with a block being the
$i$th row and column \citep{glasso,banerjee-2006}. These approaches
are quadratic time per update, and can be very effective on small
problems, or those with high levels of sparsity. 

For medium sized problems, a proximal approach can be taken, where
the optimisation method alternates between evaluating the proximal
operator of the $\log\det$ function and the $L_{1}$ norm \citep{cov-admm}.
These \emph{alternating} approaches are easy to implement, converge
rapidly and have strong convergence guarantees, but they fail for
problems for which the eigenvalue decomposition at each step is too
expensive. 

For large scale problems, the QUIC framework is currently the state
of the art among direct optimisation methods \citep{quic,bigquic}.
It uses an approximate (proximal) Newton step with line search as
the core component. Finding the Newton step at each iteration is a
separate optimisation problem, requiring sophisticated methods. The
QUIC method is very complex compared to the other methods considered
here. 

A more complete review of optimisation approaches to covariance selection
appears in \citet{covsel-book-chapter}. It should be noted that some
approaches do not regularise the diagonal of $\Theta$, such as the
SPICE method of \citet{rothman-2008}, as there is no motivation for
pushing diagonal elements towards zero, and regularisation of the
diagonal seems to harm theoretical performance. Also, like all the
methods we will discuss, the regularised maximum likelihood approach
can be applied to the Pearson correlation matrix $\bar{C}$ instead
of the covariance matrix. \citet{rothman-2008} are able show better
theoretical performance for the correlation variant of their SPICE
method than the covariance version.

\subsection{Neighbourhood selection}

\label{sub:nei-sel}

Instead of joint maximum likelihood learning of all edges at once,
another approach is to determine the dependence relations for each
variable as a separate problem \citep{neigh-sel}. This involves performing
$p$ $L_{1}$-regularised linear regression operations, working with
the data matrix $X$ instead of its covariance $C$. To write the
method compactly, we introduce the subscript notation $(\neg i)$
to represent taking the subset of that the indices excluding the $i$th,
and the notation $i*$ to represent the $i$th row and $*i$ the $i$th
column. 

For each row $i$ of the precision matrix, the neighbourhood selection
algorithm performs a separate minimisation: 
\[
\Theta_{i(\neg i)}=\arg\min_{\theta}\left[\frac{1}{n}\ns{X_{*i}-X_{*(\neg i)}\theta}_{2}+\lambda\n{\theta}_{1}\right].
\]
The resulting matrix $\Theta$'s support will not necessarily be symmetric,
so further processing is required to determine a symmetric support.
Considering the support only, conflicting edges can be combined by
either a union (i.e. $\Theta_{ij}\neq0\,\cup\,\Theta_{ji}\neq0$)
or intersection operation; \citet{neigh-sel} show consistency results
for both. In the case where $n\approx\sqrt{p}$, neighbourhood selection
is among the fastest methods in theory. In practice, it requires a
very efficient lasso implementation to be practical for large problems.

\subsection{Thresholding approaches}

\label{sub:thresholding}

As the sample covariance matrix is readily available, the technique
of directly thresholding the absolute value of its entries can be
used \citep{bickel-lav}. This technique can give better estimates
of the true covariance matrix than directly using the empirical covariance
matrix for sparse Gaussians, however it doesn't necessarily reveal
the network structure. In the case where the covariance matrix is
full rank, the technique of inverse sample covariance matrix thresholding
may be used \citep{dempster}. This technique is not often used as
the inverse empirical covariance matrix as an estimate of the precision
matrix can be poor as the inversion is sensitive to small changes
in the covariance matrix.

A better approach is thresholding pairwise mutual information. For
Gaussian distributed variables $x_{i}$ and $x_{j}$, the mutual information
is given by:

\[
I(x_{i};x_{j})=\frac{1}{2}\log\left(\frac{C_{ii}C_{jj}}{C_{ii}C_{jj}-C_{ij}^{2}}\right).
\]

The well known Chow-Liu method \citep{chowliu} for learning the best-fitting
\emph{tree} structure relies on mutual information between variables,
and mutual information thresholding is also widely used for general
directed graphical model structure learning \citep{mutual-info-thesis}.
Mutual Information thresholding is a form of independence testing,
a standard technique in applied statistics. Indeed, the mutual information
score is proportional to the $\chi^{2}$ statistic up to first order
\citep{koller-friedman}.

\subsection{Conditional thresholding}

\label{sec:cond-thresh}

A newer class of techniques consider for each entry a set of conditional
covariance or conditional mutual information terms. Instead of thresholding
the mutual information term $I(x_{i};x_{j})$, \citet{anand-nips-2011}
consider thresholding the minimum of the set of conditional mutual
information's $I(x_{i};x_{j}|x_{S})$ where $S$ is a set of variables
not containing $x_{i}$ or $x_{j}$. For tractability the minimisation
is taken over subsets of size at most $\eta$: 
\[
w_{ij}=\min_{S\in V\backslash\{x_{i},x_{j}\},\,|S|\leq\eta}I(x_{i};x_{j}|x_{S}).
\]
Performing this minimisation takes time $O(p^{\eta})$ per edge, so
for thresholding all edges the running time is $O(p^{2+\eta})$. In
applications this is only practical for $\eta=1$ or possibly $\eta=2$.

For the $\eta=1$ case, we introduce some additional notation. We
will subscript matrices by a set of indices to denote selection of
the square submatrix containing the rows and columns in the set. E.g
$C_{\{i,j\}}$ is the submatrix of $C$ containing rows $i,j$ and
columns $i,j$. Using this notation, the conditional mutual information
for variables $x_{i},x_{j}$ given $x_{k}$ is given by:

\[
I(x_{i};x_{j}|x_{k})=\frac{1}{2}\log\left(\frac{\det C_{\{i,k\}}\det C_{\{j,k\}}}{C_{kk}\det C_{\{i,k,j\}}}\right).
\]
\citet{anand-jmlr-prepub} consider the case of conditional covariances,
which behave better theoretically, as the number of samples required
for correct reconstructions depends less on the strength of the data
distributions dependencies. The conditional covariance of $x_{i},x_{j}$
given a set of variable indices $S$ is: 
\[
Cov[x_{i},x_{j}|x_{S}]=C_{ij}-C_{iS}C_{SS}^{-1}C_{Sj}.
\]
As the conditional covariance can be negative, the minimisation is
taken of its absolute value. Like in the mutual information case,
the minimum over all subsets $S$ of a particular size is used. Besides
better theoretical bounds, the conditional covariance is faster to
evaluate for small $\eta$, and its values are easier to interpret.
The $C_{iS}C_{SS}^{-1}C_{Sj}$ term can be thought of as the transitive
covariance contributed by the path between $x_{i}$ and $x_{j}$ through
$x_{S}$. If $C_{ij}$ is significantly above the transitive covariance
over subset $S$, then it is likely due to a direct dependence between
variables $x_{i}$ and $x_{j}$.

\section{Alternative Regularisers}

\label{sec:alt-reg}

The $L_{1}$ norm used in the maximum likelihood approach is known
to encourage sparsity, but it also has a significant and potentially
undesirable effect on the learned weights. One way to overcome this
is to use a regulariser that is stronger in the vicinity of $0$ than
the $L_{1}$ norm, but weaker away from zero. Such a regulariser necessarily
be nonconvex. \citet{reweighted-l1-boyd} suggest the use of a logarithmic
regulariser, such as 

\begin{equation}
\lambda\sum_{i,j}\log\left(\left|\Theta_{ij}\right|+\epsilon\right),\label{eq:reweighted-l1}
\end{equation}

for some $\epsilon>0$. The constant $\epsilon$ just prevents the
log terms being infinity near $0$. In order to efficiently solve
this, the problem can be addressed using repeated linearisation of
the log term. At each stage, a separate weight $\lambda_{ij}$ is
found for each entry of $\Theta_{ij}$ so that the tangent of $\lambda_{ij}\left|\Theta_{ij}\right|$
matches $\lambda\log\left(\left|\Theta_{ij}\right|+\epsilon\right)$
at the current $\Theta_{ij}$ value. Then the resulting ``reweighted''
$L_{1}$ regularised problem is solved, and the reweighted is repeated.
This reweighted $L_{1}$ method falls within the class of majorisation-minimisation
methods \citep{mm}, as on both sides of $0$ the weighted $L_{1}$
regularised problem is an upper bound for the $\lambda\sum_{i,j}\log\left(\left|\Theta_{ij}\right|+\epsilon\right)$
regularised one (i.e. a majorisation), which we minimise at each stage.

This approach has been adapted by \citet{reweighted-l1} to encourage
a \emph{scale free} network structure (discussed in Chapter \ref{chap:submodular})
using the following objective:

\begin{equation}
f(\Theta)=\left\langle \Theta,C\right\rangle -\log\det\Theta+\lambda\sum_{i}\log\left(\left\Vert \Theta_{i*}\right\Vert _{1}-\left|\Theta_{ii}\right|+\epsilon\right)+\gamma\sum_{i}\left|\Theta_{ii}\right|.\label{eq:reweighted-sf}
\end{equation}

The regulariser is split into a row term which is designed to encourage
sparsity in the edge parameters, and a more traditional diagonal term.
The row term $\left(\left\Vert \Theta_{i*}\right\Vert _{1}-\left|\Theta_{ii}\right|\right)$
is in effect a continuous analogue to the degree of node $i$. The
same kind of reweighting optimisation scheme can be applied as for
Equation \ref{eq:reweighted-l1} to this objective.

The other major class of regularisers are those that encourage \emph{group
sparsity}. Group-sparse regularisers just encourage whole sets of
variables to be zero or non-zero together \citep{sparse-norms-chapter}.
For graph structures the most common application is to encourage variable
independence, by treating all edges incoming to a node as a group.
The objective is of the form:
\[
f(\Theta)=\left\langle \Theta,C\right\rangle -\log\det\theta+\lambda\sum_{i}\n{\Theta_{i*}}_{2}.
\]

This is effectively a sum of $L_{1}$ norms, one group per node $i$,
where the group consists of edges adjacent to a node $i$. The groups
overlap due to the symmetry constraint, as each edge is adjacent to
two nodes. 

\chapter{Learning Scale Free Networks}

\label{chap:submodular}

Structure learning for graphical models is a problem that arises in
many contexts. In applied statistics, graphical models with learned
structure can be used as a tool for understanding the underlying conditional
independence relations between variables in a dataset. For example,
in bioinformatics Gaussian graphical models are fitted to data resulting
from micro-array experiments, where the fitted graph can be interpreted
as a gene expression network \citep{gene-expression-2004}.

Recent research has seen the development of \emph{structured} sparsity,
where more complex prior knowledge about a sparsity pattern can be
encoded. Examples include group sparsity \citep{group-sparsity},
where parameters are linked so that they are regularised in groups.
More complex sparsity patterns, such as region shape constraints in
the case of pixels in an image \citep{sparse-pca}, or hierarchical
constraints \citep{hierarchical-sparse} have also been explored.

In this chapter we study the problem of recovering the structure of
a Gaussian graphical model under the assumption that the graph recovered
should be scale-free. Many real-world networks are known a priori
to be scale-free and therefore enforcing that knowledge through a
prior seems natural. Recent work has offered an approach to deal with
this problem which results in a non-convex formulation (Section \ref{sec:alt-reg}).
Here we present a convex formulation. We show that scale-free networks
can be induced by enforcing submodular priors on the network's degree
distribution, and then using their convex envelope (the Lovász extension)
as a convex relaxation. 

The resulting relaxed prior has an interesting non-differentiable
structure, which poses challenges to optimisation. We outline a few
options for solving the optimisation problem via proximal operators,
in particular an efficient dual decomposition method. Experiments
on both synthetic data produced by scale-free network models and a
real bioinformatics dataset suggest that the convex relaxation is
not weak: we can infer scale-free networks with similar or superior
accuracy than with the previous state-of-the-art.

An earlier version of the work in this chapter has been published
as \citet{adefazio-nips2012}.

\section{Combinatorial Objective}

\label{sec:scalefree-objective}

Consider an undirected graph with edge set $E$ and node set $V$,
where $n$ is the number of nodes. We denote the degree of node $v$
as $d_{E}(v)$, and the complete graph with $n$ nodes as $K_{n}$.
We are concerned with placing priors on the degree distributions of
graphs such as $(V,E)$. By degree distribution, we mean the bag of
degrees $\left\{ d_{E}(v)|v\in V\right\} $.

We would like to place higher prior probability mass on graphs with
scale-free structure. Scale-free graphs are simply graphs whose degree
distribution follows a power law. In other words, the random draws
from the bag of degrees of the graph has a distribution of the form:
\[
p(i)\propto i^{-\alpha},
\]
for some $\alpha>1$, typically in the range 2 to 3. It is too much
to ask that the empirical degree distribution of a graph exactly matches
a power-law distribution, so in practice a graph is considered scale
free if the tail of the degree distribution is significantly heavier,
in the sense of having more probability mass, than the exponential
distribution.

A natural prior on degree distributions can be formed from the family
of exponential random graphs \citep{exp-graphs}. Exponential random
graph (ERG) models assign a probability to each $n$ node graph using
an exponential family model. The probability of each graph depends
on a small set of sufficient statistics, in our case we only consider
the degree statistics. A ERG distribution with degree parameterisation
takes the form: 
\[
p(G=(V,E)\colon h)=\frac{1}{Z(h)}\exp\left\{ \left[\,-\sum_{v\in V}h(d_{E}(v))\,\right]\right\} ,
\]
The degree weighting function $h\colon\mathbb{Z}^{+}\rightarrow\mathbb{R}$
encodes the preference for each particular degree. The function $Z$
is chosen so that the distribution is correctly normalised over $n$
node graphs.

A number of choices for $h$ are reasonable; A geometric series $h(i)\propto1-\alpha^{i}$
with $\alpha\in(0,1)$ has been proposed by \citet{geo-erg} and has
been widely adopted. However, for encouraging scale free graphs we
require a more rapidly increasing sequence. Under the strong assumption
that each node's degree is independent of the rest, $h$ grows logarithmically
for scale free networks. To see this, take a scale free model $p(i)\propto i^{-\alpha},$
with scale $\alpha$; the joint distribution under the node degree
independence simplification takes the form:
\[
p(G=(V,E)\colon\epsilon,\alpha)=\frac{1}{Z(\epsilon,\alpha)}\prod_{v\in V}(d_{E}(v)+\epsilon)^{-\alpha},
\]
where $\epsilon>0$ is added to prevent infinite weights. Putting
this into ERG form gives the weight sequence $h(i)=\alpha\log(i+\epsilon)$.
We will consider this and other functions $h$ in Section\textbf{
}\ref{sec:degree-priors}. We intend to perform maximum a posteriori
(MAP) estimation of a graph structure using such a distribution as
a prior, so the object of our attention is the negative log-posterior,
which we denote $F$: 
\[
F(E)=\sum_{v\in V}h(d_{E}(v))+\text{const.}
\]
So far we have defined a function on edge sets only, however in practice
we want to optimise over a weighted graph, which is intractable when
using discontinuous functions such as $F$. We now consider the properties
of $h$ that lead to a convex relaxation of $F$.

\section{Submodularity}

\label{sec:submodularity}

A set function $F\colon2^{E}\rightarrow\mathbb{R}$ on $E$ is a non-decreasing
submodular function if for all $A\subset B\subset E$ and $x\in E\backslash B$
the following conditions hold:
\begin{align*}
F(A\cup\{x\})-F(A) & \ge F(B\cup\{x\})-F(B)\tag{submodularity}\\
\text{and }F(A) & \leq F(B).\tag{non-decreasing}
\end{align*}
The first condition can be interpreted as a diminishing returns condition;
adding $x$ to a set $A$ increases $F(A)$ by more than adding $x$
to a larger set $B$, if $B$ contains $A$. We now consider a set
of conditions that can be placed on $h$ so that $F$ is submodular. 
\begin{prop}
\label{prop:h} Denote $h$ as tractable if $h$ is non-decreasing,
concave and $h(0)=0$. For tractable $h$, $F$ is a non-decreasing
submodular function.\end{prop}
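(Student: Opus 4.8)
The plan is to reduce both claims to the behaviour of $h$ along a single integer variable by computing the marginal gain of adding one edge. First I would fix an edge $x=\{u,w\}$ (with $u\neq w$, since the graph is simple) and an arbitrary edge set $S\subseteq E$. Adding $x$ to $S$ increases exactly the two degrees $d_S(u)$ and $d_S(w)$ by one and leaves every other degree unchanged, so the marginal gain is
\[
F(S\cup\{x\})-F(S)=\big[h(d_S(u)+1)-h(d_S(u))\big]+\big[h(d_S(w)+1)-h(d_S(w))\big].
\]
This one formula drives everything. For the non-decreasing property, each bracket is non-negative because $h$ is non-decreasing, so $F(S\cup\{x\})\ge F(S)$ for every $S$ and $x$; monotonicity of $F$ on nested sets $A\subseteq B$ then follows by adding the edges of $B\setminus A$ one at a time.

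The key step for submodularity is to translate concavity of $h$ into a statement about its first differences. Writing $g(k):=h(k+1)-h(k)$, I would show that concavity on the integer domain $\mathbb{Z}^{+}$ is exactly the condition that $g$ is non-increasing: the concavity inequality $2h(k)\ge h(k-1)+h(k+1)$ rearranges directly to $g(k-1)\ge g(k)$. This is the discrete analogue of a concave function having a non-increasing derivative, and it is the only place where the concavity hypothesis is used.

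With the lemma in hand, submodularity is immediate. Take $A\subseteq B$ and $x=\{u,w\}\notin B$. Because a larger edge set can only raise degrees, $d_A(u)\le d_B(u)$ and $d_A(w)\le d_B(w)$. Applying the monotonicity of $g$ at each endpoint gives $g(d_A(u))\ge g(d_B(u))$ and $g(d_A(w))\ge g(d_B(w))$; summing these two inequalities and substituting back through the marginal-gain formula yields $F(A\cup\{x\})-F(A)\ge F(B\cup\{x\})-F(B)$, which is precisely the diminishing-returns condition. Finally, I would remark that $h(0)=0$ gives the normalisation $F(\emptyset)=\sum_{v\in V}h(0)=0$, which is the form usually demanded before taking the Lov\'asz extension; it plays no part in the two inequalities themselves.

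I do not expect a genuine obstacle here, since the endpoint contributions decouple cleanly and reduce to a one-dimensional argument. The only point requiring care is the discrete concavity step: making precise that ``concave $h$'' on $\mathbb{Z}^{+}$ means non-increasing successive differences, rather than invoking a smooth second-derivative argument that would not be justified on the integer domain.
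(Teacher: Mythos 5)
Your proof is correct, but it takes a more elementary route than the paper. The paper's proof is a two-line citation argument: the degree function $d_E(v)$ is a set cardinality function and hence modular, a concave transformation of a modular function is submodular (citing Bach's tutorial), and a sum of submodular functions is submodular; monotonicity follows similarly from $h$ being non-decreasing. You instead verify everything from first principles by computing the marginal gain $F(S\cup\{x\})-F(S)=g(d_S(u))+g(d_S(w))$ with $g(k)=h(k+1)-h(k)$, and reducing both properties to the sign and monotonicity of the first differences of $h$. The two arguments are the same mathematics viewed at different levels of abstraction: your decoupling of the two endpoint contributions is exactly the ``sum of submodular functions'' step, and your discrete-concavity lemma ($g$ non-increasing) is the content of the ``concave of modular is submodular'' fact specialised to cardinality functions. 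What your version buys is self-containment and an explicit identification of where each hypothesis is used (including the correct observation that $h(0)=0$ is only a normalisation, $F(\emptyset)=0$, needed later for the Lov\'asz extension rather than for monotonicity or submodularity); what the paper's version buys is brevity and a pointer to the general structural result, which is reused implicitly when invoking the machinery of \citet{subbach} in Proposition \ref{prop:convex-envelope}. Your care with discrete concavity (non-increasing successive differences rather than a second-derivative argument) is appropriate since $h$ is only defined on $\mathbb{Z}^{+}$.
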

\begin{proof}
First note that the degree function is a set cardinality function,
and hence modular. A concave transformation of a modular function
is submodular \citep{bach-tutorial}, and the sum of submodular functions
is submodular.
\end{proof}
The concavity restriction we impose on $h$ is the key ingredient
that allows us to use submodularity to enforce a prior for scale-free
networks; any prior favouring long tailed degree distributions must
place a lower weight on new edges joining highly connected nodes than
on those joining other nodes. As far as we are aware, this is a novel
way of mathematically modelling the ``preferential attachment''
rule \citep{ba-model} that gives rise to scale-free networks: through
non-decreasing submodular functions on the degree distribution.

Let $X$ denote a symmetric matrix of edge weights. A natural convex
relaxation of $F$ would be the convex envelope of $F(\textrm{Supp}(X))$
under some restricted domain. For tractable $h$, we have by construction
that $F$ satisfies the conditions of Proposition 1 in \citet{subbach},
so that the convex envelope of $F(\textrm{Supp}(X))$ on the $L_{\infty}$
ball is precisely the Lovász extension evaluated on $|X|$. The Lovász
extension for our function is easy to determine as it is a sum of
``functions of cardinality'' which are considered in \citet{subbach}.
Below is the result from \citet{subbach} adapted to our problem. 
\begin{prop}
\label{prop:convex-envelope} Let $X_{i,(j)}$ be the weight of the
$j$th edge connected to $i$, under a decreasing ordering by absolute
value (i.e. $|X_{i,(0)}|\geq|X_{i,(1)}|\geq...\geq|X_{i,(n-1)}|$).
The notation $(i)$ maps from sorted order to the natural ordering,
with the diagonal not included. Then the convex envelope of $F$ for
tractable $h$ over the $L_{\infty}$ norm unit ball is:
\[
\Omega(X)=\sum_{i=0}^{n}\,\sum_{k=0}^{n-1}\left(h(k+1)-h(k)\right)\left|X_{i,(k)}\right|.
\]
This function is piece-wise linear and convex.
\end{prop}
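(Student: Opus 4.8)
The plan is to reduce the computation of the convex envelope to the known submodular-to-norm correspondence and then evaluate the resulting Lov\'asz extension in closed form. By Proposition~\ref{prop:h}, $F$ is non-decreasing and submodular, and since $h(0)=0$ we also have $F(\emptyset)=0$. These are exactly the hypotheses under which \citet{subbach} shows that the convex envelope of $X\mapsto F(\textrm{Supp}(X))$, restricted to the $L_\infty$ unit ball, equals the Lov\'asz extension of $F$ evaluated at the entrywise absolute values $|X|$. So my first step is simply to invoke this result, turning the problem into one of writing down the Lov\'asz extension of our particular $F$ at $|X|$.

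The second step is to exploit the additive structure of $F$. Writing $F(A)=\sum_{i\in V}h(d_A(i))$, each summand $F_i(A)=h(|A\cap E_i|)$, where $E_i$ is the set of edges incident to node $i$, depends on $A$ only through the number of selected edges incident to $i$; it is thus a concave function of cardinality on the ground set $E_i$. Since the Lov\'asz extension is additive over sums of set functions, the Lov\'asz extension of $F$ is the sum over $i$ of the Lov\'asz extensions of the $F_i$. This is the key manoeuvre: although the Lov\'asz extension is in general defined through a single global sorting of the ground set, the node-wise decomposition lets each term be sorted independently, which is precisely what produces the per-node decreasing order $|X_{i,(0)}|\ge|X_{i,(1)}|\ge\dots$ appearing in the statement.

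The third step is the explicit evaluation. For a cardinality function $g(|\cdot|)$ on a ground set, the Lov\'asz extension at a nonnegative vector $w$ with decreasingly sorted entries $w_{(0)}\ge w_{(1)}\ge\dots$ is $\sum_k (g(k+1)-g(k))\,w_{(k)}$, since the marginal increments $g(k+1)-g(k)$ are the successive chain differences in the definition of the extension. Applying this to each $F_i$ with $g=h$ and $w=|X_{i,\cdot}|$ gives $\sum_{k=0}^{n-1}(h(k+1)-h(k))|X_{i,(k)}|$, and summing over $i$ yields exactly $\Omega(X)$. Convexity and piecewise-linearity follow immediately: the Lov\'asz extension of a submodular function is convex, and $\Omega$ is a finite maximum of linear functions, one piece for each choice of per-node ordering of the $|X_{i,(k)}|$, hence piecewise linear.

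I expect the main obstacle to be making the node-wise decomposition fully rigorous, specifically checking that the general global-sorting definition of the Lov\'asz extension genuinely collapses to independent per-node sortings. This needs care because each edge is shared between its two incident nodes, so the same variable $X_{ij}$ contributes to both $F_i$ and $F_j$; I would confirm that additivity of the Lov\'asz extension accommodates this overlap correctly, and that the absolute-value reduction borrowed from \citet{subbach} is compatible with the decomposition. Once the reduction and the decomposition are established, the remaining increment-counting is routine.
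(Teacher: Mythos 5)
Your proposal is correct and follows essentially the same route as the paper, which simply invokes Proposition 1 of \citet{subbach} (submodular, non-decreasing, $F(\emptyset)=0$ implies the envelope on the $L_\infty$ ball is the Lov\'asz extension at $|X|$) and then notes that $F$ is a sum of concave functions of cardinality whose extensions are computed there. You supply the details the paper leaves to the citation — additivity of the Lov\'asz extension over the node-wise decomposition and the increment formula $\sum_k (h(k+1)-h(k))|X_{i,(k)}|$ — and your worry about the global-versus-per-node sort is resolved exactly as you suspect, most cleanly via the integral representation of the extension, which is linear in the set function and indifferent to the overlap of edges between their two endpoints.
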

The form of $\Omega$ is quite intuitive. It behaves like a $L_{1}$
norm with an additional weight on each edge that depends on how the
edge ranks with respect to the other edges of its neighbouring nodes.

\section{Optimisation}

\label{sec:scalefree-optimization}

We are interested in using $\Omega$ as a prior, for optimisations
of the form
\[
\textrm{minimize}_{X}\quad f(X)=g(X)+\alpha\Omega(X),
\]
for a convex negative log likelihood $g$ and prior strength parameter
$\alpha\geq0$, over symmetric $X$. We will focus on the simplest
structure learning problem that occurs in graphical model training,
that of Gaussian models. In which case we have
\[
g(X)=\left\langle X,C\right\rangle -\log\det X,
\]
where $C$ is the observed covariance matrix of our data (see Section
\ref{sec:covsel} for the derivation). The support of $X$ will then
be the set of edges in the undirected graphical model together with
the node precisions. This function is a rescaling of the maximum likelihood
objective. In order for the resulting $X$ to define a normalisable
distribution, $X$ must be restricted to the cone of symmetric positive
definite matrices. This is not a problem in practice as $g(X)$ is
infinite on the boundary of the PSD cone, and hence the constraint
can be handled by restricting optimisation steps to the interior of
the cone. In fact $X$ can be shown to be in a strictly smaller cone,
$X^{*}\succeq aI$, for a constant $a$ derivable from $C$ \citep{smooth-covsel}.
This restricted domain is useful as $g(X)$ is Lipschitz smooth over
$X\succeq aI$ but not over all positive definite matrices \citep{cov-admm}.

There are a number of possible algorithms that can be applied for
optimising a convex non-differentiable objective such as $f$. \citet{subbach}
suggests two approaches to optimising functions involving submodular
relaxation priors; a subgradient approach and a proximal approach.

Subgradient methods are the simplest class of methods for optimising
non-smooth convex functions. They provide a good baseline for comparison
with other methods. For our objective, a subgradient is simple to
evaluate at any point, due to the piecewise continuous nature of $\Omega(X)$.
Unfortunately (primal) subgradient methods for our problem will not
return sparse solutions except in the limit of convergence. They will
instead give intermediate values that oscillate around their limiting
values.

An alternative is the use of proximal methods. Proximal methods exhibit
superior convergence in comparison to subgradient methods, and produce
sparse solutions. Proximal methods rely on solving a simpler optimisation
problem, known as the \emph{proximal operator} at each iteration:
\[
\arg\min_{X}\,\left[\alpha\Omega(X)+\frac{1}{2}\left\Vert X-Z\right\Vert _{2}^{2}\right],
\]
where $Z$ is a variable that varies at each iteration. For many problems
of interest, the proximal operator can be evaluated using a closed
form solution. For non-decreasing submodular relaxations, the proximal
operator can be evaluated by solving a submodular minimisation on
a related (not necessarily non-decreasing) submodular function \citep{subbach}.

\citet{subbach} considers several example problems where the proximal
operator can be evaluated using fast graph cut methods. For the class
of functions we consider, graph-cut methods are not applicable. Generic
submodular minimisation algorithms can be as bad as $O(p^{6})$ for
$p$ variables \citep{submod-book}, giving $O(n^{12})$ for a $n$-vertex
graph (optimizing over $O(n^{2})$ edges), which is clearly impractical.
We will instead propose a dual decomposition method for solving this
proximal operator problem in Section \ref{sub:dd-proximal}. 

For solving our optimisation problem, instead of using the standard
proximal method (sometimes known as ISTA), which involves a gradient
step followed by the proximal operator, we propose to use the alternating
direction method of multipliers (ADMM), which has shown good results
when applied to the standard $L_{1}$ regularised covariance selection
problem \citep{cov-admm}. Next we show how to apply ADMM to our problem.

\subsection{Alternating direction method of multipliers}

\label{sec:admm}

The alternating direction method of multipliers (ADMM) is one approach
to optimising our objective. ADMM is a classical method (\citealt{admm-gabay,admm-glow}),
recently repopularised in \citealt{admm}.

ADMM has a number of advantages over the basic proximal method. Let
$U$ be the matrix of dual variables for the decoupled problem:
\begin{gather*}
\text{minimize}_{X}\quad g(X)+\alpha\Omega(Y),\\
s.t.\quad X=Y.
\end{gather*}

Following the presentation of the algorithm in \citet{admm}, given
the values $Y^{(l)}$ and $U^{(l)}$ from iteration $l$, with $U^{(0)}=0_{n}$
and $Y^{(0)}=I_{n}$ the ADMM updates for iteration $l+1$ are:
\begin{eqnarray*}
X^{(l+1)} & = & \arg\min_{X}\left[\left\langle X,C\right\rangle -\log\det X+\frac{\rho}{2}||X-Y^{(l)}+U^{(l)}||_{2}^{2}\right],\\
Y^{(l+1)} & = & \arg\min_{Y}\left[\alpha\Omega(Y)+\frac{\rho}{2}||X^{(l+1)}-Y+U^{(l)}||_{2}^{2}\right],\\
U^{(l+1)} & = & U^{(l)}+X^{(l+1)}-Y^{(l+1)},
\end{eqnarray*}
where $\rho>0$ is a fixed step-size parameter (we used $\rho=0.5$).
The advantage of this form is that both the $X$ and $Y$ updates
are a proximal operation. It turns out that the proximal operator
for $g$ (i.e. the $\ensuremath{X^{(l+1)}}$ update) actually has
a simple solution \citep{cov-admm} that can be computed by taking
an eigenvalue decomposition $Q^{T}\Lambda Q=\rho(Y-U)-C$, where $\Lambda=\textrm{diag}(\lambda_{1},\dots,\lambda_{n})$
and updating the eigenvalues using the formula
\[
\lambda_{i}^{\prime}:=\frac{\lambda_{i}+\sqrt{\lambda_{i}^{2}+4\rho}}{2\rho},
\]
to give $X=Q^{T}\Lambda^{\prime}Q$. The stopping criterion we used
was $\left\Vert X^{(l+1)}-Y^{(l+1)}\right\Vert <\epsilon$ and $\left\Vert Y^{(l+1)}-Y^{(l)}\right\Vert <\epsilon$.
In practice the ADMM method is one of the fastest methods for $L_{1}$
regularised covariance selection. \citet{cov-admm} show that convergence
is guaranteed if additional cone restrictions are placed on the minimisation
with respect to $X$, and small enough step sizes are used. For our
degree prior regulariser, the difficultly is in computing the proximal
operator for $\Omega$, as the rest of the algorithm is identical
to that presented in \citet{admm}. We now show how we solve the problem
of computing the proximal operator for $\Omega$.

\subsection{Proximal operator using dual decomposition}

\label{sub:dd-proximal}Here we describe the optimisation algorithm
that we effectively use for computing the proximal operator. The regulariser
$\Omega$ has a quite complicated structure due to the interplay between
the terms involving the two end points for each edge. We can decouple
these terms using the dual decomposition technique, by writing the
proximal operation for a given $Z=Y-U$ as:
\begin{gather*}
\textrm{minimize}_{X}=\frac{\alpha}{\rho}\sum_{i}^{n}\,\sum_{k}^{n-1}\left(h(k+1)-h(k)\right)\left|X_{i,(k)}\right|+\frac{1}{2}||X-Z||_{2}^{2}\\
s.t.\quad X=X^{T}.
\end{gather*}
The only difference so far is that we have made the symmetry constraint
explicit. Taking the dual gives a formulation where the upper and
lower triangle are treated as separate variables. The dual variable
matrix $V$ corresponds to the Lagrange multipliers of the symmetry
constraint, which for notational convenience we store in an anti-symmetric
matrix. The dual decomposition method is given in Algorithm \ref{alg:dd-main}.

\begin{algorithm}[t]
\begin{algorithmic}
	\STATE {\bfseries input:} matrix $Z$, constants $\alpha$, $\rho$
	\STATE {\bfseries input:} step-size $0<\eta<1$
	\STATE {\bfseries initialise:} $X=Z$
	\STATE {\bfseries initialise:} $V=0_n$
	\REPEAT
		\FOR{$l=0$ {\bfseries until} $n-1$}
			\STATE $X_{l*} = \text{solveSubproblem}(Z_{l*}, V_{l*})$ \emph{\# Algorithm  \ref{alg:dd-subproblem}}
		\ENDFOR
		\STATE $V = V + \eta(X - X^T)$
	\UNTIL {$||X-X^T|| < 10^{-6}$}
	\STATE $X = \frac{1}{2}(X+X^T)$ \emph{\# symmetrise}
	\STATE {\bfseries round:} any $|X_{ij}| < 10^{-15}$ to $0$
	\STATE {\bfseries return} X
\end{algorithmic}

\protect\caption{\label{alg:dd-main}Dual decomposition main}

\end{algorithm}

We use the notation $X_{i*}$ to denote the $i$th row of $X$. Since
this is a dual method, the primal variables $X$ are not feasible
(i.e. symmetric) until convergence. Essentially we have decomposed
the original problem, so that now we only need to solve the proximal
operation for each node in isolation, namely the sub-problems:
\begin{equation}
\forall i.\;X_{i*}^{(l+1)}=\arg\min_{x}\,\frac{\alpha}{\rho}\sum_{k}^{n-1}\left(h(k+1)-h(k)\right)\left|x_{(k)}\right|+||x-Z_{i*}+V_{i*}^{(l)}||_{2}^{2}.\label{eq:dd}
\end{equation}
Note that the dual variable has been integrated into the quadratic
term by completing the square. As the diagonal elements of $X$ are
not included in the sort ordering, they will be minimised by $X_{ii}=Z_{ii},$
for all $i$. Each sub-problem is strongly convex as they consist
of convex terms plus a positive quadratic term. This implies that
the dual problem is differentiable (as the subdifferential contains
only one subgradient), hence the $V$ update is actually gradient
ascent. Since a fixed step size is used, and the dual is Lipschitz
smooth, for sufficiently small step-size convergence is guaranteed.
In practice we used $\eta=0.9$ for all our tests.

This dual decomposition sub-problem can also be interpreted as just
a step within the ADMM framework. If applied in a standard way, only
one dual variable update would be performed before another expensive
eigenvalue decomposition step. Since each iteration of the dual decomposition
is much faster than the eigenvalue decomposition, it makes more sense
to treat it as a separate problem as we propose here. It also ensures
that the eigenvalue decomposition is only performed on symmetric matrices. 

Each sub-problem in our decomposition is still a non-trivial problem.
They do have a closed form solution, involving a sort and several
passes over the node's edges, as described in Algorithm \ref{alg:dd-subproblem}.

\begin{algorithm}[p]
\begin{algorithmic}
	\STATE {\bfseries input:} vectors $z$, $v$
	\STATE {\bfseries initialise:} Disjoint-set datastructure 
		with set membership function $\gamma$
	\STATE $w = z-v\quad$ \emph{\# $w$ gives the sort order}
	\STATE $u = 0_n$
	\STATE {\bfseries build:} sorted-to-original position function $\mu$
			under descending absolute value order of $w$, excluding the diagonal
	\FOR{$k=0$ {\bfseries until} $n-1$}
		\STATE $j = \mu(k)$
		\STATE $u_j = |w_j| -  \frac{\alpha}{\rho} \left(h(k+1)-h(k)\right)$
		\STATE $\gamma(j).\textrm{value} = u_j$
		\STATE r = k
		\WHILE{$r>1$ and $ \gamma(\mu(r)).\textrm{value}  \geq 
			\gamma(\mu(r-1)).\textrm{value} $ }
			\STATE {\bfseries join:} the sets containing $\mu(r)$ and $\mu(r-1)$
			\STATE $\gamma(\mu(r)).\textrm{value} = 
					\frac{1}{|\gamma(\mu(r))|} 
					\sum_{i \in \gamma(\mu(r))} u_i $
			\STATE {\bfseries set:} $r$ to the first element 
					of $\gamma(\mu(r))$ by the sort ordering
		\ENDWHILE
	\ENDFOR
	\FOR{$i=1$ {\bfseries to} $N$}
		\STATE $x_i = \gamma(i).\textrm{value}$
		\IF{ $x_i < 0$ }
			\STATE $x_i = 0\quad $ \emph{\# negative values imply shrinkage to 0}
		\ENDIF
		\IF{ $w_i < 0$ }
			\STATE $x_i = -x_i\quad$ \emph{\# Correct orthant}
		\ENDIF
	\ENDFOR
	\STATE {\bfseries return} $x$
\end{algorithmic}

\protect\caption{\label{alg:dd-subproblem}Dual decomposition sub-problem (\emph{solveSubproblem})}

\end{algorithm}

\begin{prop}
Algorithm \ref{alg:dd-subproblem} solves the sub-problem in Equation
\ref{eq:dd}.\end{prop}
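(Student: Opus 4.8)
The plan is to recognise the regulariser in Equation \ref{eq:dd} as the proximal term of a \emph{sorted} (ordered-weighted) $\ell_1$ seminorm and to show that Algorithm \ref{alg:dd-subproblem} is exactly the classical ``sort, shift, pool-adjacent-violators, clamp'' recipe for its proximal operator. Writing $w=z-v$ (as the algorithm does) and $c_k=\frac{\alpha}{\rho}\left(h(k+1)-h(k)\right)$, the sub-problem is, in proximal form (with the standard $\tfrac12$ weighting on the quadratic, matching the $u_j$ update),
\[
\min_x\;\sum_{k=0}^{n-1}c_k\left|x_{(k)}\right|+\tfrac12\|x-w\|^2 ,
\]
where $x_{(k)}$ is the $k$th largest magnitude among the off-diagonal entries. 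By Proposition \ref{prop:h}, $h$ is concave and non-decreasing, so the weights satisfy $c_0\ge c_1\ge\cdots\ge c_{n-1}\ge0$; hence the regulariser is a genuine sorted $\ell_1$ seminorm, the objective is strongly convex, and the minimiser is unique. The diagonal entry is unpenalised and appears only in the quadratic term, so it is set to $w_{ii}=Z_{ii}$ (using $V_{ii}=0$), which is why the algorithm excludes it from the ordering.

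First I would reduce to the nonnegative orthant. For fixed magnitudes the regulariser is unchanged, while each quadratic term $\tfrac12(x_i-w_i)^2$ is minimised over the sign of $x_i$ by taking $\operatorname{sign}(x_i)=\operatorname{sign}(w_i)$; this justifies the ``Correct orthant'' step and lets me assume $w\ge0$, $x\ge0$. Next I would prove an ordering (rearrangement) lemma: at the optimum $x$ is sorted in the same order as $w$, i.e. $w_i\ge w_j\Rightarrow x_i\ge x_j$. If some pair were out of order, swapping the two coordinates of $x$ leaves the order-invariant regulariser unchanged but strictly increases $\langle x,w\rangle$, hence strictly decreases $\tfrac12\|x-w\|^2$, contradicting optimality. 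This licenses using the single permutation $\mu$ that sorts $|w|$ for both $x$ and $w$.

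With these two lemmas, applying $\mu$ turns the problem into a separable-plus-isotonic form: setting $t_k=x_{\mu(k)}$ and $a_k=|w|_{(k)}$,
\[
\min_{t_0\ge t_1\ge\cdots\ge t_{n-1}\ge0}\;\sum_{k=0}^{n-1}\Big(c_k t_k+\tfrac12(t_k-a_k)^2\Big),
\]
and completing the square reduces each term to $\tfrac12\big(t_k-b_k\big)^2$ with $b_k=a_k-c_k=|w|_{(k)}-\tfrac{\alpha}{\rho}\left(h(k+1)-h(k)\right)$ --- precisely the quantity $u_j$ formed in Algorithm \ref{alg:dd-subproblem}. The remaining task is to project $b$ onto the cone $\{t_0\ge\cdots\ge t_{n-1}\ge0\}$. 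I would identify the union-find while-loop, which merges adjacent blocks whenever monotonicity is violated and replaces them by their mean, with the Pool Adjacent Violators Algorithm for isotonic regression, and invoke its standard correctness to conclude it returns the projection onto the decreasing cone. Finally, because that projection is a monotone sequence, its negative entries form a contiguous suffix, so clamping them to zero (the final loop) yields the projection onto the intersection with the nonnegative orthant.

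The main obstacle I anticipate is not any single calculation but assembling the reduction cleanly: verifying that the order-invariance of the sorted-$\ell_1$ term makes the rearrangement argument valid even in the presence of ties, and justifying the ``project-then-clamp'' step --- that clamping the negative suffix of the isotonic solution really does give the exact projection onto the decreasing-\emph{and}-nonnegative cone, rather than merely an approximation. Establishing the PAVA correctness and this clamping lemma, together with the sign and ordering reductions, is what turns the plausible-looking algorithm into a proof that it computes the exact minimiser of Equation \ref{eq:dd}.
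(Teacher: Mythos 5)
Your proposal is correct, and it reaches the result by a genuinely different route from the paper. The paper's proof works from first principles: it writes down the stationarity conditions of the objective directly, splitting into the case of coordinates with unique absolute values (where the subgradient condition yields the soft-thresholding formula $|x_{(j)}|=|w_j|-\tfrac{\alpha}{\rho}h^{\prime}(j)$, clamped at zero) and the case of tied coordinates (where solving the coupled gradient equations yields the block-average formula), and then proves by induction over the algorithm's main loop that the returned point preserves the sort order of $|w|$ and hence satisfies all of these conditions simultaneously. You instead recognise the regulariser as a sorted-$\ell_1$ (ordered-weighted) seminorm --- using the concavity of $h$ to get non-increasing weights --- and decompose the proof into three reusable lemmas: the sign/rearrangement reduction, the reduction via completing the square to isotonic regression of $b_k=|w|_{(k)}-c_k$ onto the decreasing nonnegative cone, and the correctness of pool-adjacent-violators plus the project-then-clamp identity. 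The mathematical content overlaps substantially (the paper's block-averaging case \emph{is} the pooling step, and its order-preservation induction plays the role of your rearrangement lemma combined with PAVA correctness), but your modular framing buys a cleaner structure and connects the algorithm to the known proximal-operator machinery for OWL/SLOPE norms, at the cost of needing to establish (or cite) the PAVA and clamping lemmas that the paper's self-contained induction avoids. One point in your favour: you correctly noticed that the algorithm's update $u_j=|w_j|-\tfrac{\alpha}{\rho}\left(h(k+1)-h(k)\right)$ is consistent with a $\tfrac{1}{2}$ weighting on the quadratic term, which is what the paper's proof uses even though the displayed sub-problem omits that factor; be sure to keep that normalisation explicit if you write the argument out in full.
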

\begin{proof}
See Section \ref{sec:scalefree-proof}. The main subtlety is the grouping
together of elements induced at the non-differentiable points. If
multiple edges connected to the same node have the same absolute value,
their subdifferential becomes the same, and they behave as a single
point whose weight is the average. To handle this grouping, we use
a disjoint-set data-structure, where each $x_{j}$ is either in a
singleton set, or grouped in a set with other elements, whose absolute
value is the same.
\end{proof}

\section{Alternative Degree Priors}

\label{sec:degree-priors}

Under the restrictions on $h$ detailed in Proposition \ref{prop:h},
several other choices seem reasonable. The scale free prior can be
smoothed somewhat, by the addition of a linear term, giving
\[
h_{\epsilon,\beta}(i)=\log(i+\epsilon)+\beta i,
\]
where $\beta$ controls the strength of the smoothing. A slower diminishing
choice would be a square-root function such as
\[
h_{\beta}(i)=(i+1)^{\frac{1}{2}}-1+\beta i.
\]
This requires the linear term in order to correspond to a normalisable
prior.

Ideally we would choose $h$ so that the expected degree distribution
under the ERG model matches the particular form we wish to encourage.
Finding such a $h$ for a particular graph size and degree distribution
amounts to maximum likelihood parameter learning, which for ERG models
is a hard learning problem. The most common approach is to use sampling
based inference. Approaches based on Markov chain Monte Carlo techniques
have been applied widely to ERG models \citep{erg-sampling} and are
therefore applicable to our model.

\section{Experiments}

\label{sec:scalefree-experiments}

\subsection{Reconstruction of synthetic networks}

We performed a comparison against the reweighted $L_{1}$ method of
\citet{reweighted-l1}, and a standard $L_{1}$ regularised method,
both implemented using ADMM for optimisation. Although \citet{reweighted-l1}
use the glasso \citep{glasso} method for the inner loop, ADMM will
give identical results, and is usually faster \citep{cov-admm}. Graphs
with $60$ nodes were generated using both the Barabasi-Albert model
\citep{ba-model} and a predefined degree distribution model sampled
using the method from \citet{random-degree-graph} implemented in
the NetworkX software package. Both methods generate scale-free graphs;
the BA model exhibits a scale parameter of $3.0$, whereas we fixed
the scale parameter at $2.0$ for the other model. 

To define a valid Gaussian model, edge weights of $X_{ij}=-0.2$ were
assigned, and the node weights were set at $X_{ii}=0.5-\sum_{i\neq j}X_{ij}$
so as to make the resulting precision matrix diagonally dominant.
The resulting Gaussian graphical model was sampled 500 times. The
covariance matrix of these samples was formed, then normalised to
have diagonal uniformly $1.0$. We tested with the two $h$ sequences
described in Section \ref{sec:degree-priors}. The parameters for
the degree weight sequences were chosen by grid search on random instances
separate from those we tested on. The resulting ROC curves for the
Hamming reconstruction loss are shown in Figure \ref{fig:synth-roc}.
Results were averaged over $30$ randomly generated graphs for each
each figure.

We can see from the plots that our method with the square-root weighting
presents results superior to those from \citet{reweighted-l1} for
these datasets. This is encouraging particularly since our formulation
is convex while the one from \citet{reweighted-l1} is not. Interestingly,
the log based weights give very similar but not identical results
to the reweighting scheme which also uses a log term. The only case
where it gives inferior reconstructions is when it is forced to give
a sparser reconstruction than the original graph.

\begin{figure}
\begin{centering}
\includegraphics[width=1\textwidth]{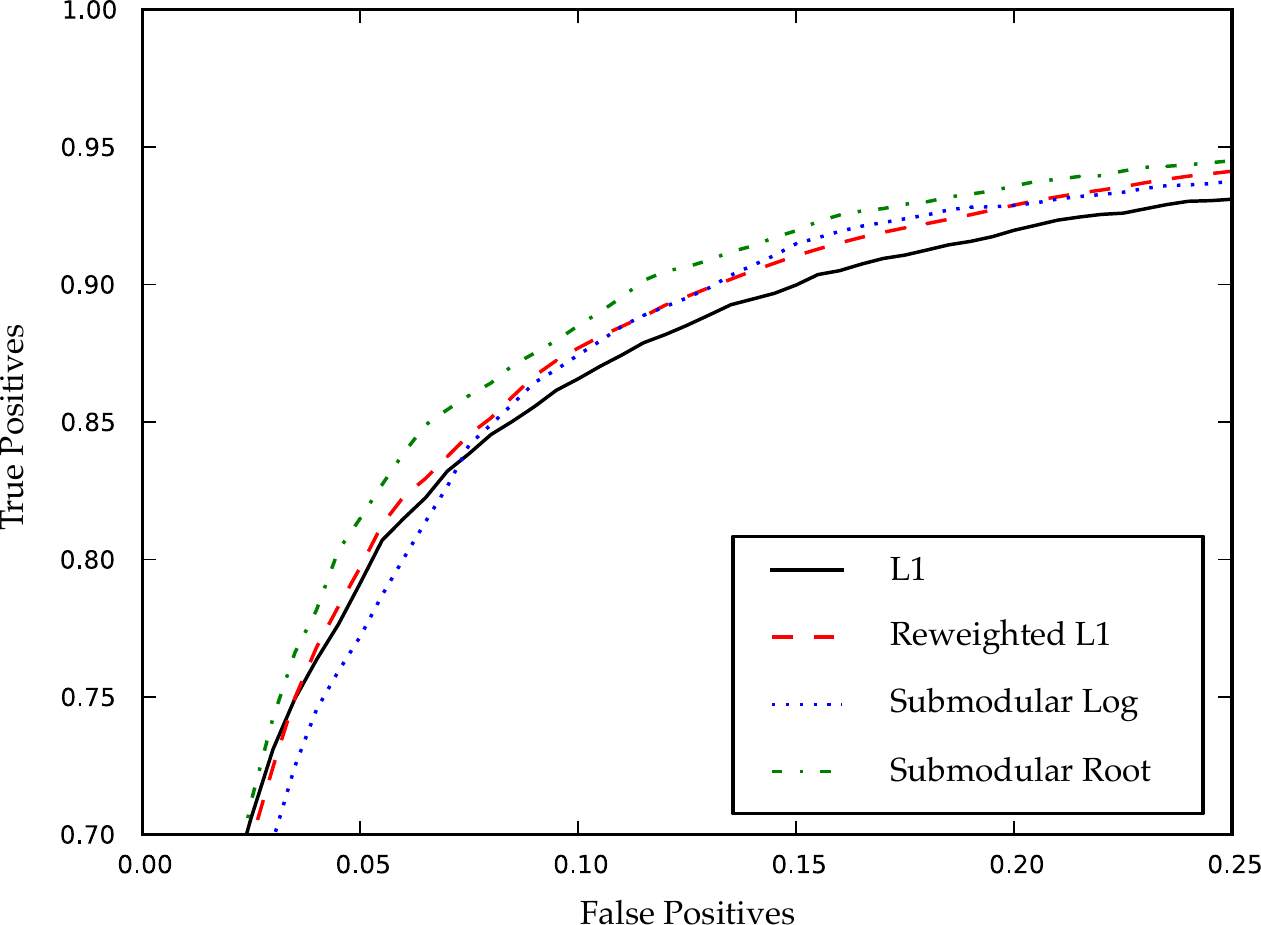}
\par\end{centering}

\begin{centering}
\includegraphics[width=1\textwidth]{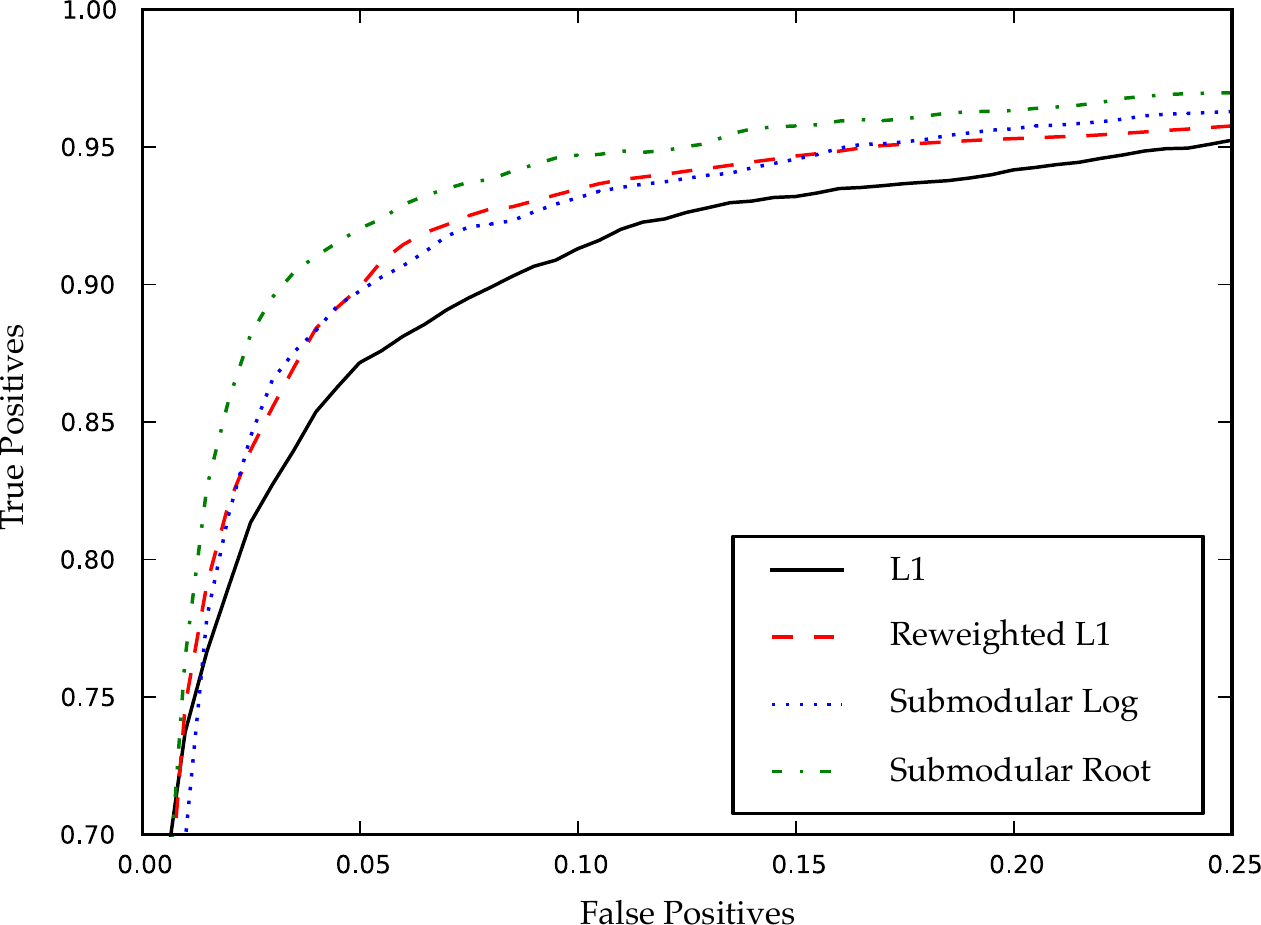}
\par\end{centering}

\protect\caption{\label{fig:synth-roc} ROC curves for BA model (top) and fixed degree
distribution model (bottom)}

\end{figure}

\subsection{Reconstruction of a gene activation network}

A common application of sparse covariance selection is the estimation
of gene association networks from experimental data. A covariance
matrix of gene co-activations from a number of independent micro-array
experiments is typically formed, on which a number of methods, including
sparse covariance selection, can be applied. Sparse estimation is
key for a consistent reconstruction due to the small number of experiments
performed. Many biological networks are conjectured to be scale-free,
and additionally ERG modelling techniques are known to produce good
results on biological networks \citep{erg-gene}. So we consider micro-array
datasets a natural test-bed for our method. We ran the three methods
considered on the first 500 genes from the GDS1429 dataset \footnote{http://www.ncbi.nlm.nih.gov/gds/1429},
which contains $69$ samples for $8565$ genes. The parameters for
both methods were tuned to produce a network with near to $50$ edges
for visualisation purposes. The major connected component for each
is shown in Figure \ref{fig:gene-recon}. 

While these networks are too small for valid statistical analysis
of the degree distribution, the submodular relaxation method produces
a network with structure that is commonly seen in scale free networks.
The star subgraph centered around gene $60$ is more clearly defined
in the submodular relaxation reconstruction, and the tight cluster
of genes in the right is less clustered in the $L_{1}$ reconstruction.
The reweighted $L_{1}$ method produced a quite different reconstruction,
with greater clustering.

\begin{figure}
\begin{centering}
\includegraphics[scale=0.7]{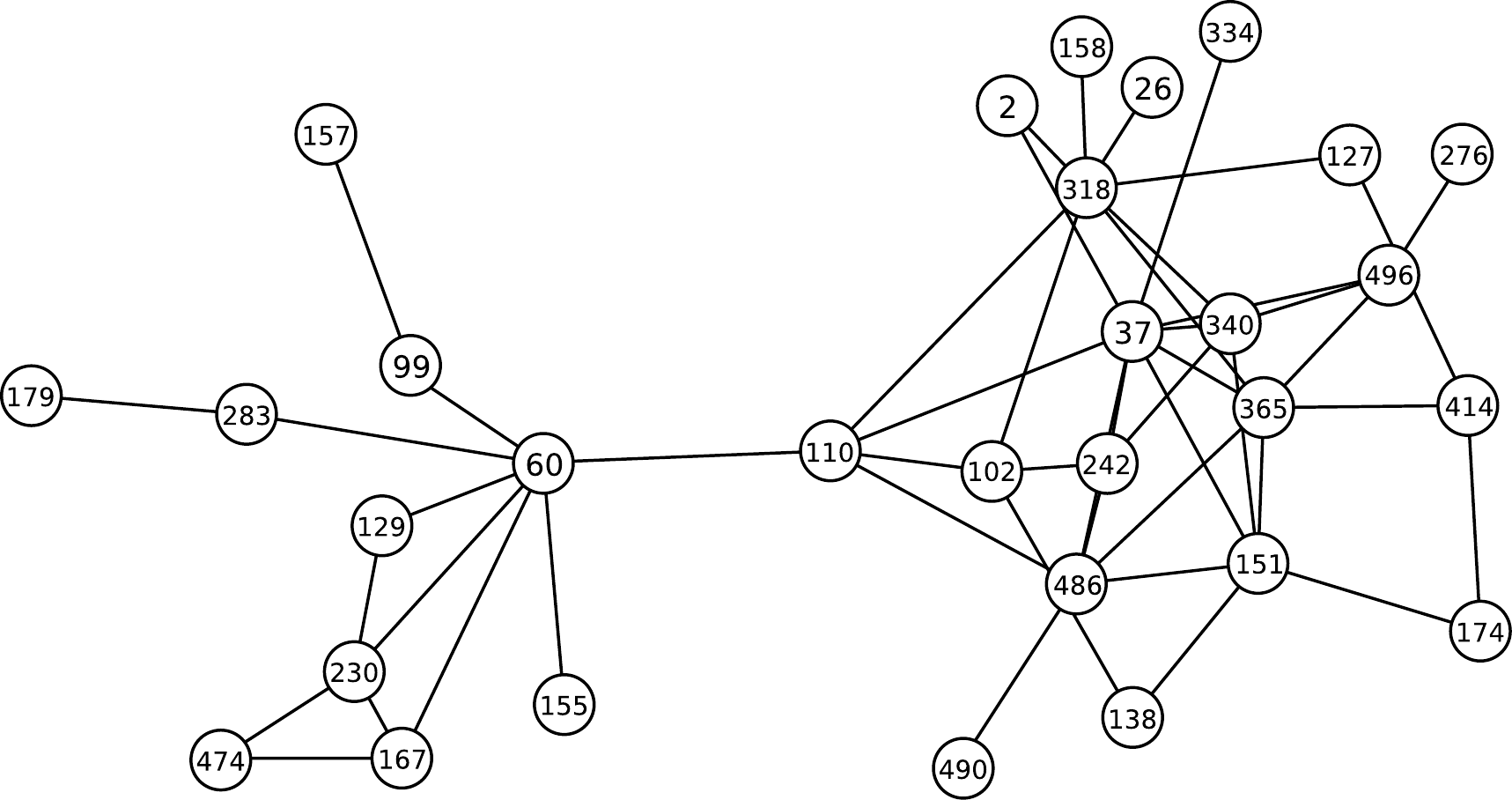}
\par\end{centering}

\begin{centering}
\includegraphics[scale=0.7]{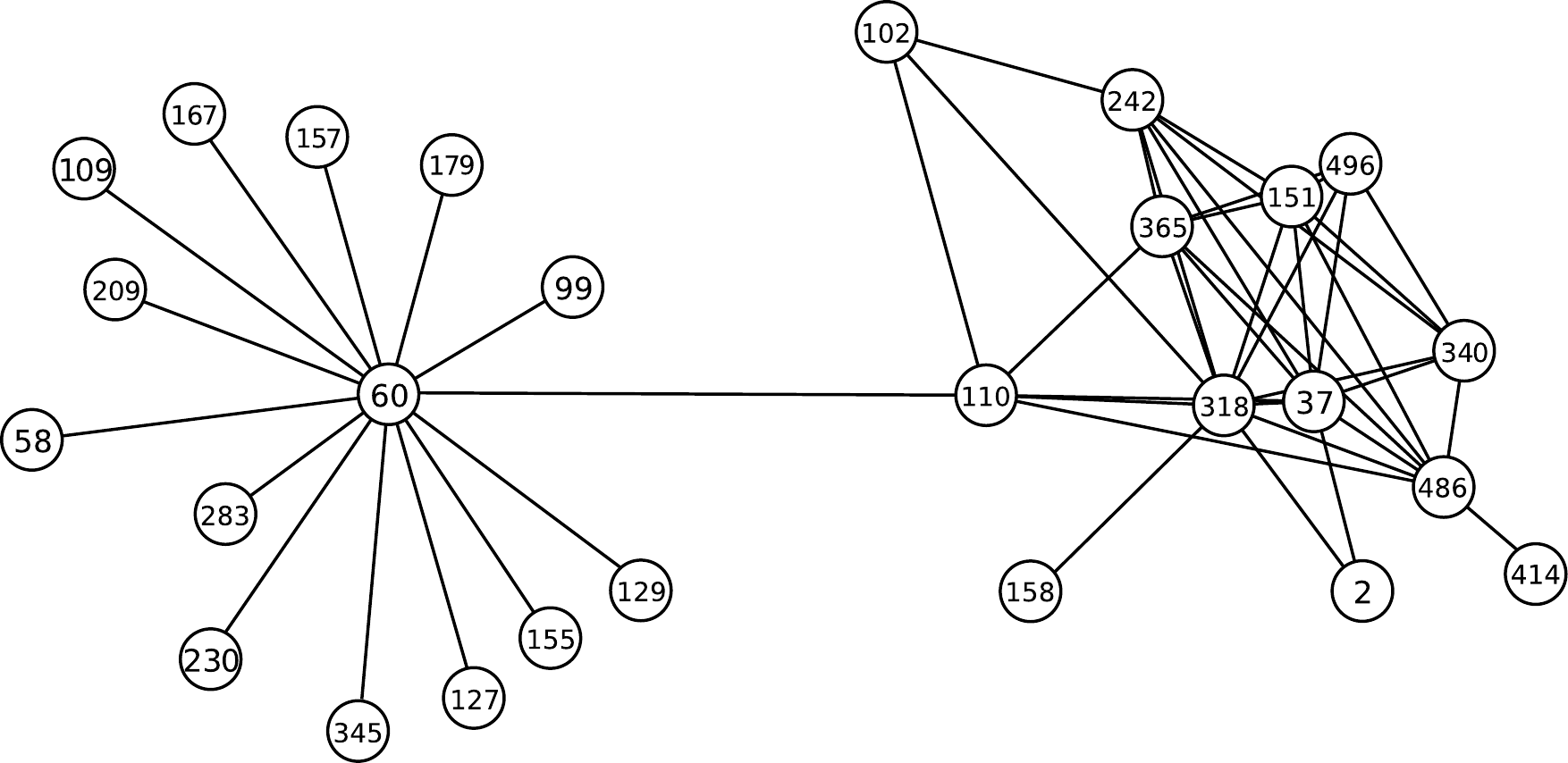}
\par\end{centering}

\begin{centering}
\-\-\-\includegraphics[scale=0.7]{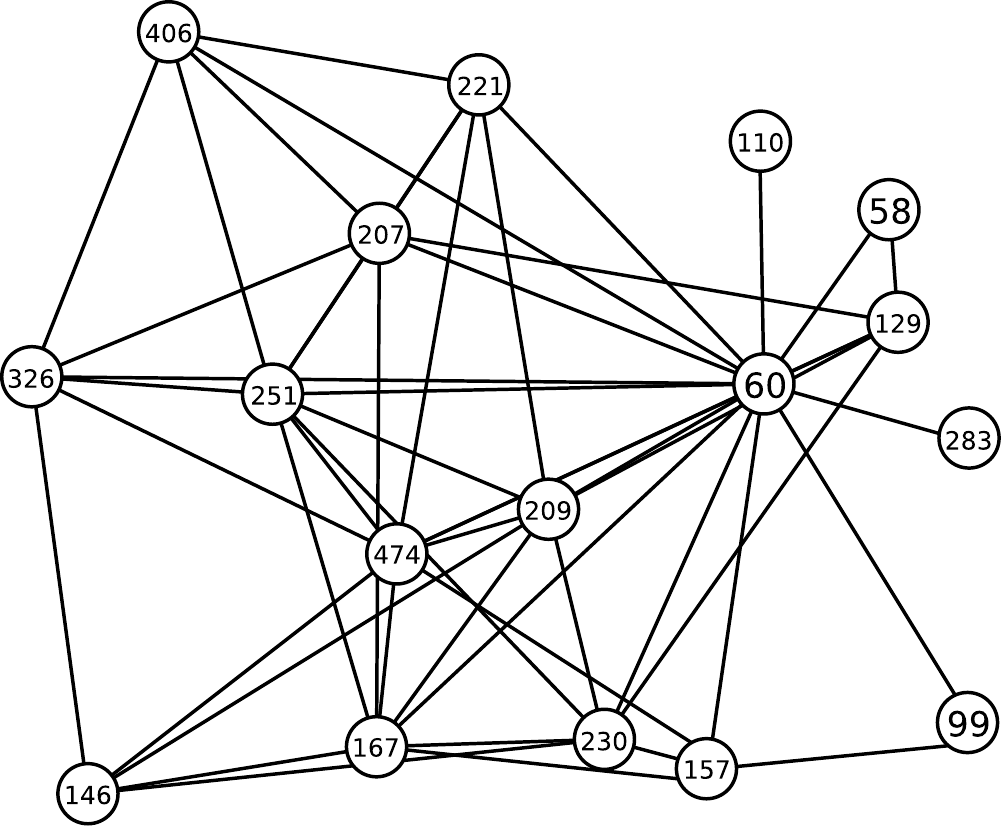}
\par\end{centering}

\protect\caption{\label{fig:gene-recon}Reconstruction of a gene association network
using $L_{1}$ (top), submodular relaxation (middle), and reweighted
$L_{1}$ (bottom) methods}
\end{figure}

\subsection{Runtime comparison: different proximal operator methods}

We performed a comparison against two other methods for computing
the proximal operator: subgradient descent and the minimum norm point
(MNP) algorithm. The MNP algorithm is a submodular minimisation method
that can be adapted for computing the proximal operator \citep{subbach}.
We took the input parameters from the last invocation of the proximal
operator in the BA test, at a prior strength of $0.7$. We then plotted
the convergence rate of each of the methods, shown in Figure \ref{fig:comparison-of-proximal}.
As the tests are on randomly generated graphs, we present only a representative
example.

It is clear from this and similar tests that we performed that the
subgradient descent method converges too slowly to be of practical
applicability for this problem. Subgradient methods can be a good
choice when only a low accuracy solution is required; for convergence
of ADMM the error in the proximal operator needs to be smaller than
what can be obtained by the subgradient method. The MNP method also
converges slowly for this problem, however it achieves a low but usable
accuracy quickly enough that it could be used in practice. The dual
decomposition method achieves a much better rate of convergence, converging
quickly enough to be of use even for strong accuracy requirements.

The time for individual iterations of each of the methods was $0.65$ms
for subgradient descent, $0.82$ms for dual decomposition and $15$ms
for the MNP method. The speed difference is small between a subgradient
iteration and a dual decomposition iteration as both are dominated
by the cost of a sort operation. The cost of a MNP iteration is dominated
by two least squares solves, whose running time in the worst case
is proportional to the square of the current iteration number. Overall,
it is clear that our dual decomposition method is significantly more
efficient.

\begin{figure}
\begin{centering}
\includegraphics[width=1\textwidth]{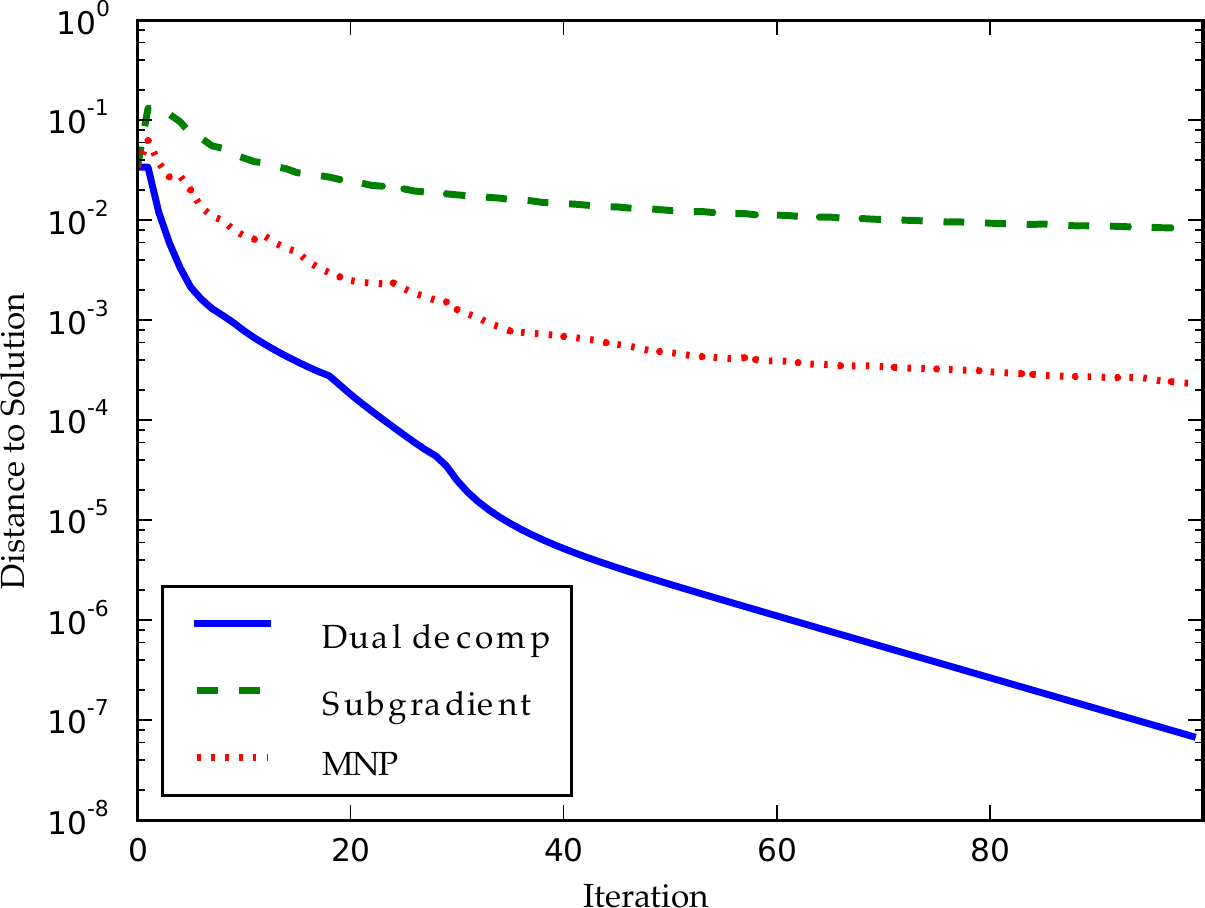}
\par\end{centering}

\protect\caption{\label{fig:comparison-of-proximal}Comparison of proximal operators}

\end{figure}

\subsection{Runtime comparison: submodular relaxation against other approaches}

The running time of the three methods we tested is highly dependent
on implementation details, so the following speed comparison should
be taken as a rough guide. For a sparse reconstruction of a BA model
graph with $100$ vertices and $200$ edges, the average running time
per $10^{-4}$ error reconstruction over $10$ random graphs was $16$
seconds for the reweighted $L_{1}$ method and $5.0$ seconds for
the submodular relaxation method. This accuracy level was chosen so
that the active edge set for both methods had stabilised between iterations.
For comparison, the standard $L_{1}$ method was significantly faster,
taking only $0.72$ seconds on average.

\section{Proof of Correctness}
\begin{proof}
\label{sec:scalefree-proof} In this section we establish that Algorithm
\ref{alg:dd-subproblem} solves Equation \ref{eq:dd}. First we note
that the pseudo-code uses $\mu(i)$ instead of $(i)$ to denote the
mapping from sort order to storage order, to remove ambiguity. We
also use object-oriented notation $\gamma(p)$ to represent the set
containing $p$, with $\gamma(p).\textrm{value}$ denoting the value
field of the set object. This value field is interpreted as the absolute
value of each of the elements of the set, with a negative value implying
zero. The value field is stored as an ancillary array in our implementation.
We also need an operation that returns the ``first'' element of
the set, which in our data structure is just the root of the tree
that represents the set.

We start by simplifying the notation to
\[
f(x)=\frac{\alpha}{\rho}\sum_{j=1}^{n}h^{\prime}(j)\left|x_{(j)}\right|+\frac{1}{2}||x-w_{j}||_{2}^{2},
\]
where $w_{j}=z_{(j)}-v_{(j)}$, and $h^{\prime}(j)=h(j)-h(j-1)$.
We want to find $\arg\min_{x}f(x)$. We consider the two main cases.

\textbf{Unique valued $x_{j}$}

First we take the gradient when $\ensuremath{x_{j}}$ is not at the
same value as another, and is non-zero:
\[
\frac{\partial f}{\partial x_{(j)}}=\frac{\alpha}{\rho}h^{\prime}(j)\textrm{sgn}(x_{(j)})+x_{(j)}-w_{j}.
\]
Equating to zero gives: 
\begin{equation}
|x_{(j)}|=|w_{j}|-\frac{\alpha}{\rho}h^{\prime}(j),\label{eq:unique-dd}
\end{equation}
when $|w_{j}|>\frac{\alpha}{\rho}h^{\prime}(j)$, with $\textrm{sgn}(x_{(j)})=\textrm{sgn}(w_{j})$.
If $|w_{j}|\leq\frac{\alpha}{\rho}h^{\prime}(j)$, then clearly the
gradient can not be equated to zero except at $x_{(j)}=0$, where
the subdifferential needs to be considered. Considering the subdifferential
at zero gives the requirement that 
\[
\frac{w_{j}}{\frac{\alpha}{\rho}h^{\prime}(j)}\in[-1,1],
\]
which follows from $|w_{j}|\leq\frac{\alpha}{\rho}h^{\prime}(j)$.
So essentially if $|w_{j}|-\frac{\alpha}{\rho}h^{\prime}(j)\leq0$
we set $x_{(j)}=0$. This is essentially the same shrinkage as performed
in the proximal operator of a $L_{1}$ regulariser.

\textbf{Non-unique $x_{j}$}

The above update is used for singleton sets in Algorithm \ref{alg:dd-subproblem}.
Now when several $x_{(j)}$ have the same value, say a set $Q$ of
them, and none of them are zero, then equating the gradients to zero
gives a set of equations, which can be solved to give 
\begin{equation}
\forall j\in Q,\quad\textrm{sgn}(w_{j})x_{(j)}=\frac{1}{|Q|}\sum_{i\in Q}\left(|w_{i}|-\frac{\alpha}{\rho}h^{\prime}(i)\right).\label{eq:non-unique-dd}
\end{equation}
One direct implication of this is that the sort ordering for equal
variables doesn't effect the solution, so a stable sort is not necessary
in Algorithm \ref{alg:dd-subproblem}. The result for the singleton
case where the value is negative follows also for the grouped together
variables.

It suffices to show that at termination of Algorithm \ref{alg:dd-subproblem},
these equations (\ref{eq:unique-dd} or \ref{eq:non-unique-dd} )
are satisfied by the returned $x^{*}$ for all $j$, as this implies
that $0$ is in the subdifferential at $x^{*}$. The assignments in
Algorithm \ref{alg:dd-subproblem} clearly ensure that this is the
case under the assumption that the sort ordering of $|w_{j}|$ is
the same as the sort order of $|x_{(j)}|$ after termination of the
algorithm, and likewise for the grouped variables. So we just need
to show that the assignments to the $x_{(j)}$ result in that ordering.
In particular since $x_{i}=\gamma(i).\text{value}$ for positive $x_{i}$and
$w_{i}$, we need that for all $p,q$: 
\[
|w_{p}|>|w_{q}|\implies\gamma(\mu(p)).\textrm{value}>\gamma(\mu(q)).\textrm{value}.
\]
We proceed by induction, with the inductive hypothesis at step $k$
being that the ordering is correct for each group containing $\mu(l)<k$,
i.e. 
\[
\forall\mu(p)<k,\mu(q)<k\colon\quad|w_{p}|>|w_{q}|\implies\gamma(\mu(p)).\textrm{value}>\gamma(\mu(q)).\textrm{value}.
\]
 The base case $k=0$ doesn't trigger the while loop, it just finds
$j=\mu(k)$, then sets $u_{j}$ using Equation \ref{eq:unique-dd}
and sets the singleton set containing $j$ with that value also ($\gamma(j).\text{value}=u_{j}$).
Our inductive hypothesis holds since it is a vacuous statement for
$k=0$.

Assume now that at iteration $k-1$ of the first loop in Algorithm
\ref{alg:dd-subproblem} the inductive hypothesis holds. Then at iteration
$k$, a new singleton set is created for $j=\mu(k)$, with value given
by $|w_{j}|-\frac{\alpha}{\rho}h^{\prime}(j)$. If $\gamma(j)$'s
value is less than the value of $\mu(k-1)$, then by induction the
ordering is correct. Otherwise, the algorithm proceeds by adding $j$
to set $\mu(k-1)$, and updating the value of $j$'s new set to the
average from Equation \ref{eq:non-unique-dd}. This will increase
the value of the set, which may cause its value to increase above
that of another set. That case is handled by the repeated merging
in the main while loop, in the same manner. It is clear that the while
loop must terminate after no more than $k$ steps, as it performs
at one merge per iteration. At termination of the while loop the ordering
is then correct up to $k$, as all changes in the set values that
would cause the ordering to change instead have caused set merges. \end{proof}

\chapter{Fast Approximate Structural Inference}

\label{chap:approx-covsel}

\selectlanguage{english}%
\global\long\def\n#1{\left\Vert #1\right\Vert }

\global\long\def\ns#1{\left\Vert #1\right\Vert ^{2}}

\global\long\def\r{\mathbb{R}}

\global\long\def\e{\mathbb{E}}

\global\long\def\ip#1#2{\left\langle #1,#2\right\rangle }
\selectlanguage{australian}%

\label{sec:intro} 

In Chapter \ref{chap:background-gaussian} we described how learning
the structure of a Gaussian graphical model can be phrased as a regularised
maximum likelihood learning problem. This formulation gives good results
but scales poorly as the number of variables is increased. Intuitively,
it seems overkill to find the values of all the edge weights when
we are only concerned with the existence or non-existence of edges.
In this chapter we consider the conditional covariance thresholding
method \citep[CCT, ][]{anand-nips-2011} which only looks at a small
neighbourhood of each edge when determining if it is part of the structure.
The effect of distant parts of a graph decays exponentially, so for
large graphs with high diameters, looking at just a neighbourhood
can be a substantial saving. We give a modification of the CCT method
that improves its running time by a factor of $O(\sqrt{p})$ for $p$
variables, reducing it to $O(p^{2.5})$ in the important $\eta=1$
case of the method. This allows the method to be applied to substantially
larger problems in practice, essentially any problem for which the
covariance matrix can be formed in memory.

Section \ref{sec:shortcut} describes our algorithm, and discusses
its running time in theory and practice. Our experiments in Section
\ref{sec:experiments} cover the reconstruction of synthetically generated
networks, as well as two practical problems: determining the conditional
independence relations in stock-market prices and weather station
data.

\section{SHORTCUT}

\label{sec:shortcut}

Recall the conditional covariance thresholding (CCT) method from Section
\ref{sub:thresholding}. It starts by forming the following matrix
$W$:
\begin{gather*}
W_{ij}=\min_{S\subset V\backslash\{i,j\},\,|S|\leq\eta}\left|Cov[x_{i},x_{j}|x_{S}]\right|,\\
\text{where }Cov[x_{i},x_{j}|x_{S}]=C_{ij}-C_{iS}C_{SS}^{-1}C_{Sj}.
\end{gather*}

The matrix $W$ is then thresholded by a constant $t$ to give the
edge structure. The CCT method is only practical for $\eta=1$ or
$\eta=2$ for moderately sized problems. In many applications the
number of variables can be as large as 10,000 to 100,000, and in those
cases even the $\eta=1$ variant is too slow. SHORTCUT is fundamentally
a variant of the conditional covariance thresholding method in the
$\eta=1$ regime. We propose to modify the minimisation: 
\[
\min_{k\in V\backslash\left\{ i,j\right\} }\left|C_{ij}-\frac{C_{ik}C_{kj}}{C_{kk}}\right|,
\]
to: 
\[
\max\left\{ 0\,,\,\min_{k\in V\backslash\left\{ i,j\right\} }\left|C_{ij}\right|-\text{sgn}(C_{ij})\frac{C_{ik}C_{kj}}{C_{kk}}\right\} .
\]
We make this approximation to allow for more efficient algorithms
for evaluating the inner minimisation. To understand this approximation,
without loss of generality, consider the case where $C_{ij}$ is positive.
For positive $C_{ij}$, our modification is identical except when
$\frac{C_{ik}C_{kj}}{C_{kk}}>C_{ij}$ for some $k$, in which case
the approximation will return 0 instead of a small positive value.
This special case only occurs if the transitive correlation over the
path $i\rightarrow k\rightarrow j$ is nearly as strong as the direct
correlation. Although this special case appears troubling at first
glance, we show in Section \ref{sec:theory-props} that the SHORTCUT
method still gives structurally consistent reconstructions, under
a similar set of problem instances as the CCT method.
\begin{algorithm}
\begin{algorithmic}
\STATE {\bfseries input:} $p \times p$ Correlation matrix $\bar{C}_{ij}$, threshold $t$
\STATE {\bfseries initialise:} Weight matrix $W$
\FOR{$i=1$ {\bfseries to} $p$}
	\STATE {\bfseries Build:} $P_i$ as the mapping that sorts the positive off-diagonal entries of row $i$ of $\bar{C}$ in descending order
	\STATE {\bfseries Build:} $N_i$ as the mapping that sorts the negative off-diagonal entries of row $i$ of $\bar{C}$ in ascending order
	\STATE {\bfseries Build:} $P_{i}^{-1}$ and $N_{i}^{-1}$ as the inverse maps of $P_i$ and $N_i$ respectively
\ENDFOR
\FOR{$i=1$ {\bfseries to} $p$, $j=1$ {\bfseries to} $i-1$}
	\IF{ $\bar{C}_{ij} >= 0$ }
		\STATE $t_{l}$ := maxProduct($\bar{C}_{i*}$, $\bar{C}_{j*}$, $P_i$, $P_j$, $P_{i}^{-1}$, $P_{j}^{-1}$ )
		\STATE $t_{r}$ := maxProduct($\bar{C}_{i*}$, $\bar{C}_{j*}$, $N_i$, $N_j$, $N_{i}^{-1}$, $N_{j}^{-1}$)
		\STATE $W_{ij}$ := $\max \left (0, \bar{C}_{ij} - max(t_{l}, t_{r}) \right)$
	\ELSE
		\STATE $t_{l}$ := maxProduct($\bar{C}_{i*}$, $|\bar{C}_{j*}|$, $P_i$, $N_j$, $P_{i}^{-1}$, $N_{j}^{-1}$)
		\STATE $t_{r}$ := maxProduct($|\bar{C}_{i*}|$, $\bar{C}_{j*}$, $N_i$, $P_j$,  $N_{i}^{-1}$, $P_{j}^{-1}$)
		\STATE $W_{ij}$ := $\max \left (0, |\bar{C}_{ij}| + max(t_{l}, t_{r}) \right)$
	\ENDIF
	\STATE $W_{ji} = W_{ij}$
\ENDFOR
\STATE {\bfseries Threshold} Set to zero any element $(i,j)$ where $|W_{ij}| < t$
\STATE {\bfseries return} $W$
\end{algorithmic}

\protect\caption{SHORTCUT algorithm\label{algo:main}}
\end{algorithm}
\begin{algorithm}
\begin{algorithmic}
\STATE {\bfseries input:} Sequences $C_{a}$, $C_b$, $P_a$, $P_b$, $P_{a}^{-1}$,  $P_{b}^{-1}$.
\STATE {\bfseries initialise:} $\text{start} := 0$, $\text{best} = P_{a}[0]$
\STATE {\bfseries initialise:} {$\text{max\_seen} := C_{a} \left[ \text{best} \right] \cdot C_{b} \left[ \text{best} \right]$}
\STATE {\bfseries initialise:} $\text{end}_{a} := P_{a}^{-1} \left[  P_{b} \left[ 0   \right]\right]$
	and $\text{end}_{b} := P_{b}^{-1} \left[  P_{a} \left[ 0   \right]\right]$

\STATE \emph{// Check if $P_{b} \left [ 0 \right]$ is a better starting point than $P_{a} \left [ 0 \right]$}
\IF{ $C_{a} \left[ P_{b} \left [ 0 \right] \right] \cdot  C_{b} \left[ P_{b} \left [ 0 \right] \right] > \text{max\_seen}$ }
\STATE $\text{best} = P_{b}[0]$
\STATE $\text{max\_seen} := C_{a} \left[ \text{best} \right] \cdot C_{b} \left[ \text{best} \right]$
\ENDIF

\WHILE { $\text{start} < \text{end}_{a}$  and $\text{start} < \text{end}_{b}$ }
	\IF{ $C_{a} \left[ P_{a} \left [ \text{start} \right] \right] \cdot  C_{b} \left[ P_{a} \left [ \text{start} \right] \right] > \text{max\_seen}$ }
		\STATE $\text{best} := P_{a} \left[ \text{start} \right] $
		\STATE {$\text{max\_seen} := C_{a} \left[ \text{best} \right] \cdot C_{b} \left[ \text{best} \right]$}
	\ENDIF
	\STATE { $ \text{end}_{b} := \min \left\{    
		\text{end}_{b},
		P_{b}^{-1} \left[ P_{a} \left[ \text{start} \right] \right] 
	   \right\} $ }

	   \STATE \emph{// The following is the same as the 5 lines above but with $a$ and $b$ interchanged}

	\IF{ $C_{a} \left[ P_{b} \left [ \text{start} \right] \right] \cdot  C_{b} \left[ P_{b} \left [ \text{start} \right] \right] > \text{max\_seen}$ }
		\STATE $\text{best} := P_{b} \left[ \text{start} \right] $
		\STATE {$\text{max\_seen} := C_{a} \left[ \text{best} \right] \cdot C_{b} \left[ \text{best} \right]$}
	\ENDIF
	\STATE { $ \text{end}_{a} := \min \left\{    
		\text{end}_{a},
		P_{a}^{-1} \left[ P_{b} \left[ \text{start} \right] \right] 
	   \right\} $ }
	   
	   \STATE { $\text{start} := \text{start} + 1$}
\ENDWHILE
\STATE {\bfseries return} max\_seen
\end{algorithmic}

\protect\caption{maxProduct algorithm\label{algo:maxproduct}}
\end{algorithm}

The reason we make the above approximation is so we can rearrange
the optimisation as: 
\[
\max\left\{ 0\,,\,\sqrt{C_{ii}C_{jj}}\left(\left|\bar{C}_{ij}\right|-\max_{k\in V\backslash\left\{ i,j\right\} }\text{sgn}(\bar{C}_{ij})\bar{C}_{ik}\bar{C}_{kj}\right)\right\} .
\]
Where $\bar{C}_{ik}=C_{ik}/\sqrt{C_{ii}C_{kk}}$ is the correlation
between $x_{i}$ and $x_{k}$. This form allows us to use a fast method
for computing the maximum of a set of pairwise products due to \citet{maxprod}.
This approach requires preprocessing, namely row-wise sorting of the
correlation matrix entries, sans-diagonal. Under the assumption that
the permutations that sorted row $i$ and row $j$ are independent,
McAuley \& Caetano's method performs the maximisation in expected
time $O(\sqrt{p})$, which results in a reduction of the running time
of the conditional covariance thresholding method to only expected
time $O(p^{2.5})$. Note that the worst case is still $O(p^{3})$.
McAuley \& Caetano's algorithm can only handle positive values, so
it is necessary to run it twice, split by case; The specific method
we use is shown in Algorithm \ref{algo:main}; it assumes a threshold
is given prescaled by $1/\sqrt{C_{ii}C_{kk}}$.

\section{Running Time}

\label{sec:shortcut-props}

\begin{figure}
\begin{centering}
\includegraphics[scale=1.2]{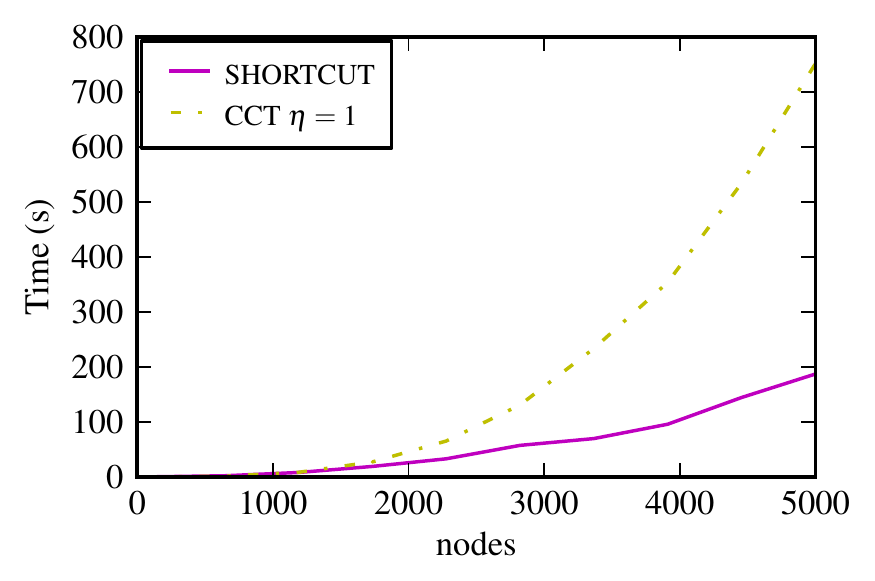} 
\par\end{centering}

\protect\caption{Running time of SHORTCUT on a representative problem\label{fig:fast-cond-cov}}
\end{figure}

The improvement in running time is not just theoretical. We find in
practice that, for randomly generated covariance selection problems,
less than $\sqrt{p}$ pairwise products per edge are required for
each maxProduct invocation; the runtime is not hindered by any hidden
multiplicative factors. The improvement in running time for building
the conditional covariance matrix over the naive approach is illustrated
in Figure \ref{fig:fast-cond-cov}, for implementations in compiled
python. It behaves empirically as predicted by the asymptotic bound.
The performance advantage is roughly a factor of 4 at 4000 nodes.
Our method is aimed at problems with mid tens of thousands of nodes,
where the advantage is much greater.

In general we have observed that the number of comparisons required
(and hence the running time) is lower when the matrix entries are
positively correlated within each row, although the speed-up is dependent
on the distribution of the entries. For example, there exists distributions
that have positive Pearson correlation coefficient, yet exhibit worst-case
run-time behaviour. A simple example is uniform distribution over
a diamond shape in the $[0-1]$ interval in 2 dimensions. Figure \ref{fig:unit-diamond}
shows the (worst-case) linear complexity of the fast max-product method
on it. We have never observed this worst-case complexity when the
max-product method is applied to correlation matrices, in fact they
seem to closer to the ideal case.

\begin{figure}[t]
\begin{centering}
\includegraphics[scale=1.3]{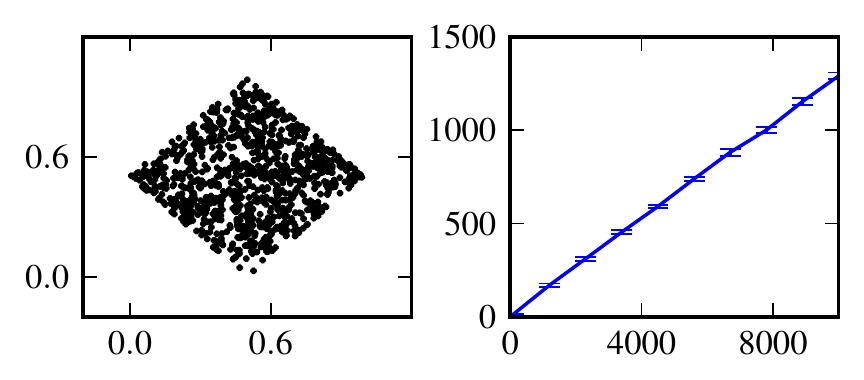} \includegraphics[scale=1.3]{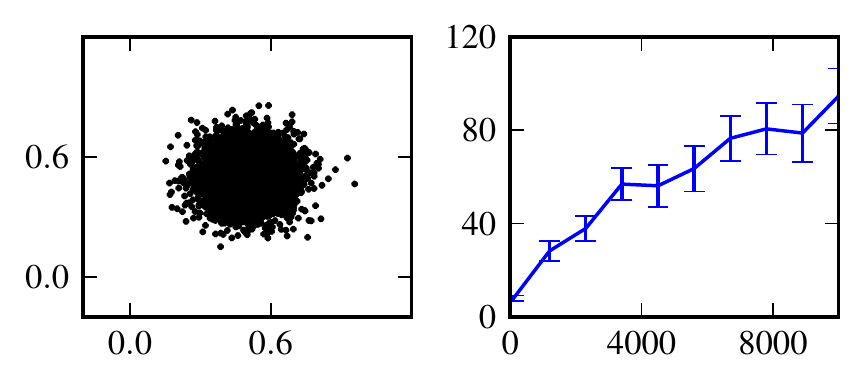} 
\par\end{centering}

\protect\caption{Empirical scaling of the max-product method on samples from the unit
diamond and isotropic Gaussian distributions. The left hand plots
are samples from the respective distributions, and the right hand
plots show scaling in terms of entries examined as the number of samples
increases. \label{fig:unit-diamond}}
\end{figure}

\section{Experiments}

\label{sec:experiments} We ran a suite of experiments testing the
SHORTCUT method against two CCT variants; CCT $\eta=1$ which based
on theoretical results should give similar results to SHORTCUT; and
CCT $\eta=2$ which should give better results at the expense of quartic
running time. We also compared against correlation thresholding, which
is often used by practitioners, $L_{1}$ regularised maximum likelihood
on the correlation matrix, and the neighbourhood selection method
discussed in Section \ref{sub:nei-sel} using both union and disjunction
variants. Covariance thresholding was significantly inferior on our
test problems, and covariance variants of the other methods performed
worse than the correlation versions shown in the plots that follow.

\subsection{Synthetic datasets}

There is a substantial difficulty in doing a comprehensive evaluation
of covariance selection methods due to the lack of ground truth structure
information. We begin with a series of tests on synthetically generated
data, using structures generated from two commonly used models: the
Barabasi-Albert model \citep{ba-model} and the classical Erdos-Renyi
model \citep{er}.

\subsubsection*{Generating synthetic data using walk summability}

\label{sec:walk-sum-gen} Walk summability is usually described in
terms of a spectral condition on the matrix of partial correlation
coefficients. For most practical purposes it is easier to work with
the notion of \emph{pairwise normalisability}. \citet{walksum} detail
the equivalence of the two definitions. Suppose we have a Gaussian
distribution with sparse precision matrix $\Theta$. It is pairwise
normalisable if $\Theta$ can be decomposed into a sum involving the
precision matrices $A^{(i,j)}$ of 2D Gaussians, one for each edge,
in particular: 
\[
\Theta_{ij}=\begin{cases}
A_{ij}^{(i,j)} & \text{if }i\neq j\\
\sum_{k\in\text{ne}(i)}A_{ii}^{(i,k)} & \text{if }i=j
\end{cases}.
\]
Each precision matrix must be positive semidefinite, which is where
the pairwise normalisable terminology comes from. This definition
gives substantial insight into the properties of walk summable models.
The condition is in a sense local, which is not at all clear from
the spectral definition.

We can use this definition for generating walk-summable but not necessarily
diagonally dominant test distributions. For our synthetic tests in
Section \ref{sec:experiments}, to choose the edge weights of our
test matrices, we generated 2x2 precision matrices and then used the
above summation. Each 2D precision matrix $A$ was formed by sampling
the off-diagonal element $A_{1,2}$ (chosen from a normal distribution
with mean $-0.2$ and variance $1$), then sampling a proportionality
constant $\beta$ and setting $A_{1,1}=\beta|A_{1,2}|$ and $A_{2,2}=\beta^{-1}|A_{1,2}|$.
This gives a lopsided distribution that is harder to optimise against
than typical diagonally dominant distributions.

\begin{figure}
\begin{centering}
\includegraphics[scale=1.5]{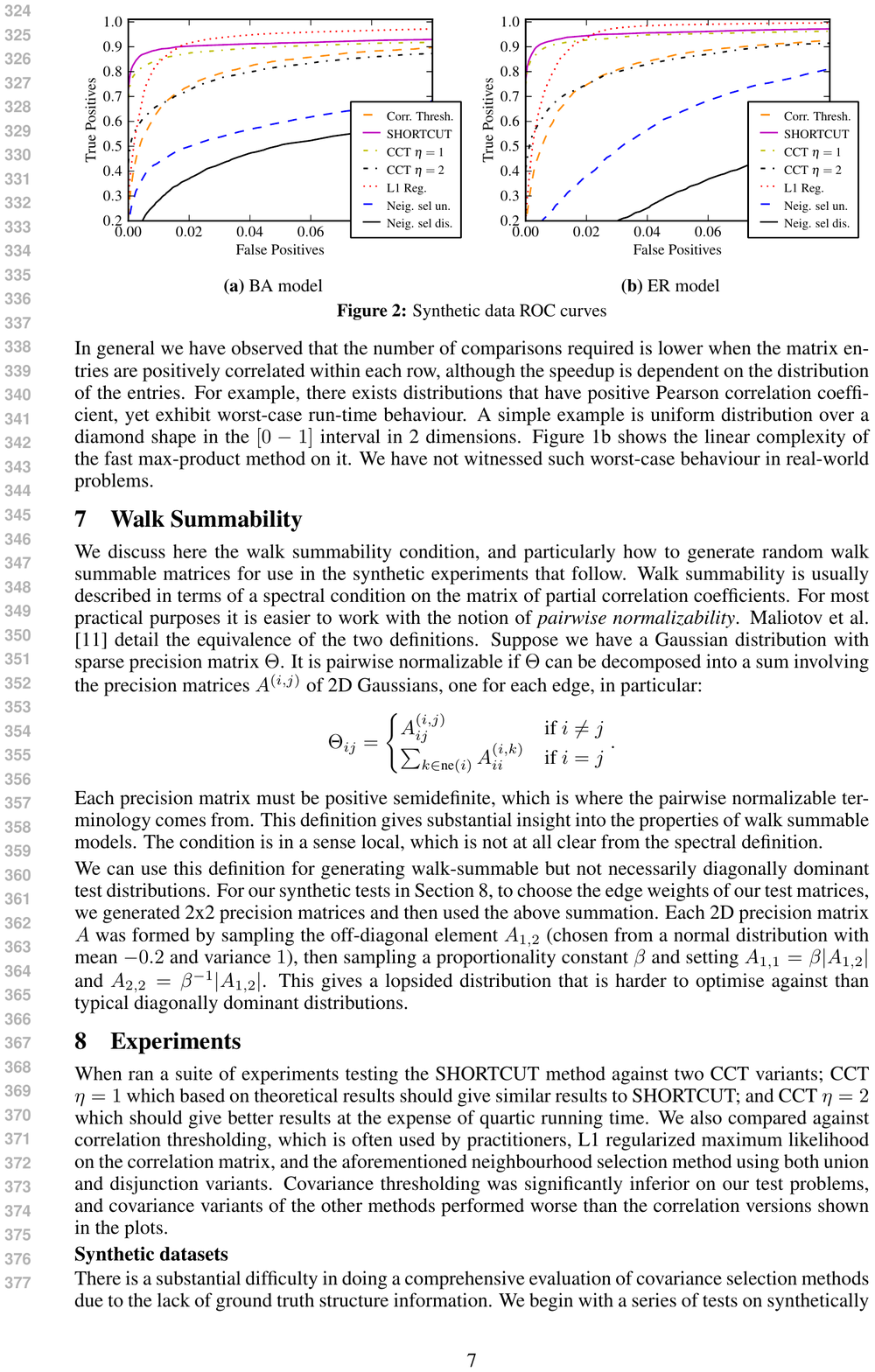} 
\par\end{centering}

\protect\caption{Synthetic data ROC curves for the BA model\label{fig:synth-walksum}}
\end{figure}

For each plot in Figure \ref{fig:synth-walksum}, 30 sparse graphs
were generated in each case with 60 nodes and 120 edges (in expectation).
The shown line is the average over these graphs. Due to the $O(n^{4})$
running time of the CCT $\eta=2$ method, we were not able to test
all methods on substantially larger graphs. For the cubic time methods,
the shown trends were consistent for larger graphs. Edge weights were
chosen so that the walk summable condition holds. The generation method
is detailed in Section \ref{sec:walk-sum-gen}.

We can see in Figure \ref{fig:synth-walksum} that the CCT $\eta=2$
method performs very similarly to the $\eta=1$ case. SHORTCUT performs
similarly on the easier ER model, and slightly better on the BA model.
As would be expected from the simplicity of the method, correlation
matrix thresholding performs poorly. Interestingly, the $L_{1}$ regularised
maximum likelihood method is out-performed by the thresholding methods
for sparse reconstructions, but is able to capture a larger proportion
of correct edges as the number of allowed false positives increases.
It's notable that neighbourhood selection performs particularly badly
on these synthetic problems. We were not able to determine a good
reason for this. Their performance is more in line with the other
methods on the real world datasets in the next section.

\begin{figure}
\begin{centering}
\includegraphics[scale=1.5]{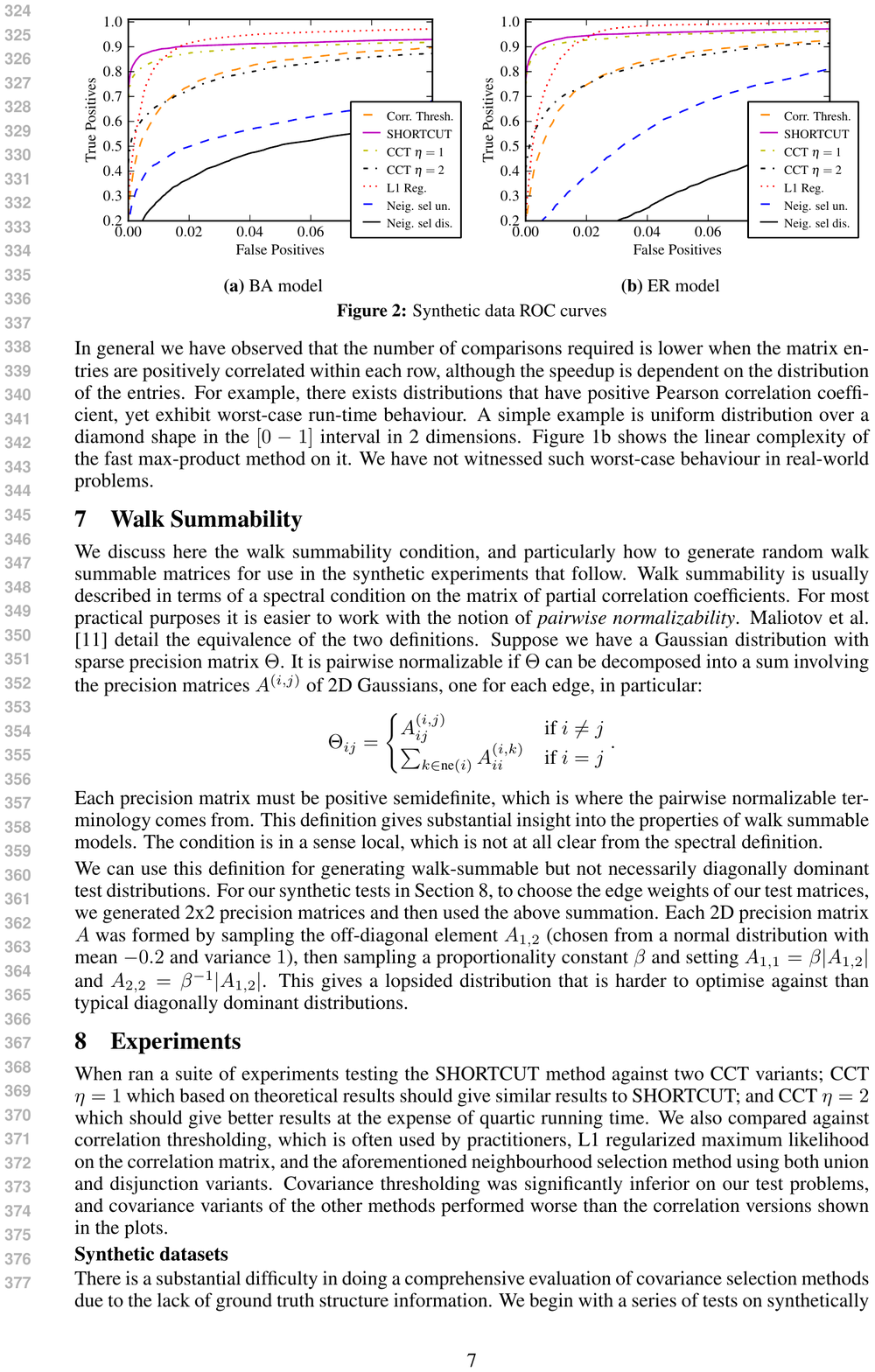} 
\par\end{centering}

\protect\caption{Synthetic data ROC curves for the ER model\label{fig:synth-walksum-er}}
\end{figure}

\subsection{Real world datasets}

In this set of experiments we used a different comparison methodology
than in the synthetic case. Instead of comparing the structure learned
we can compare the quality of the structure when used to fit a maximum
likelihood model. Essentially we use each method as described, then
we fit parameters to a model with each structure as fixed. We can
then compare the likelihood of held out data, which gives us an idea
of how well each structure captures the true dependencies in the data.
This avoids the difficulty of requiring the ground truth structure.
Instead of the likelihood, we plot the negation of the objective in
Equation \ref{eq:nll}, which is a shifted and scaled negative log
likelihood.

For our first real world dataset test we choose stock market data.
We gathered the daily fractional change in value of each share in
the ASX300 between Feb 1 2012 and 18 Dec 2012. We then removed any
shares with missing data, or those that were not part of the ASX300
for the whole period. This left us with 235 shares over 250 days.
We used a 125/125 train/test split. Figure \ref{fig:share-data} shows
the resulting test likelihood on model trained on structure from each
method, for varying number of edges.

We see that on this problem SHORTCUT performs better than $L_{1}$
regularised maximum likelihood, which is significantly better than
correlation thresholding. As in the other tests, SHORTCUT performs
similar to or better than the CCT $\eta=1$ method. The neighbourhood
selection methods give middle of the pack performance.

\begin{figure}
\begin{centering}
\includegraphics[scale=0.53]{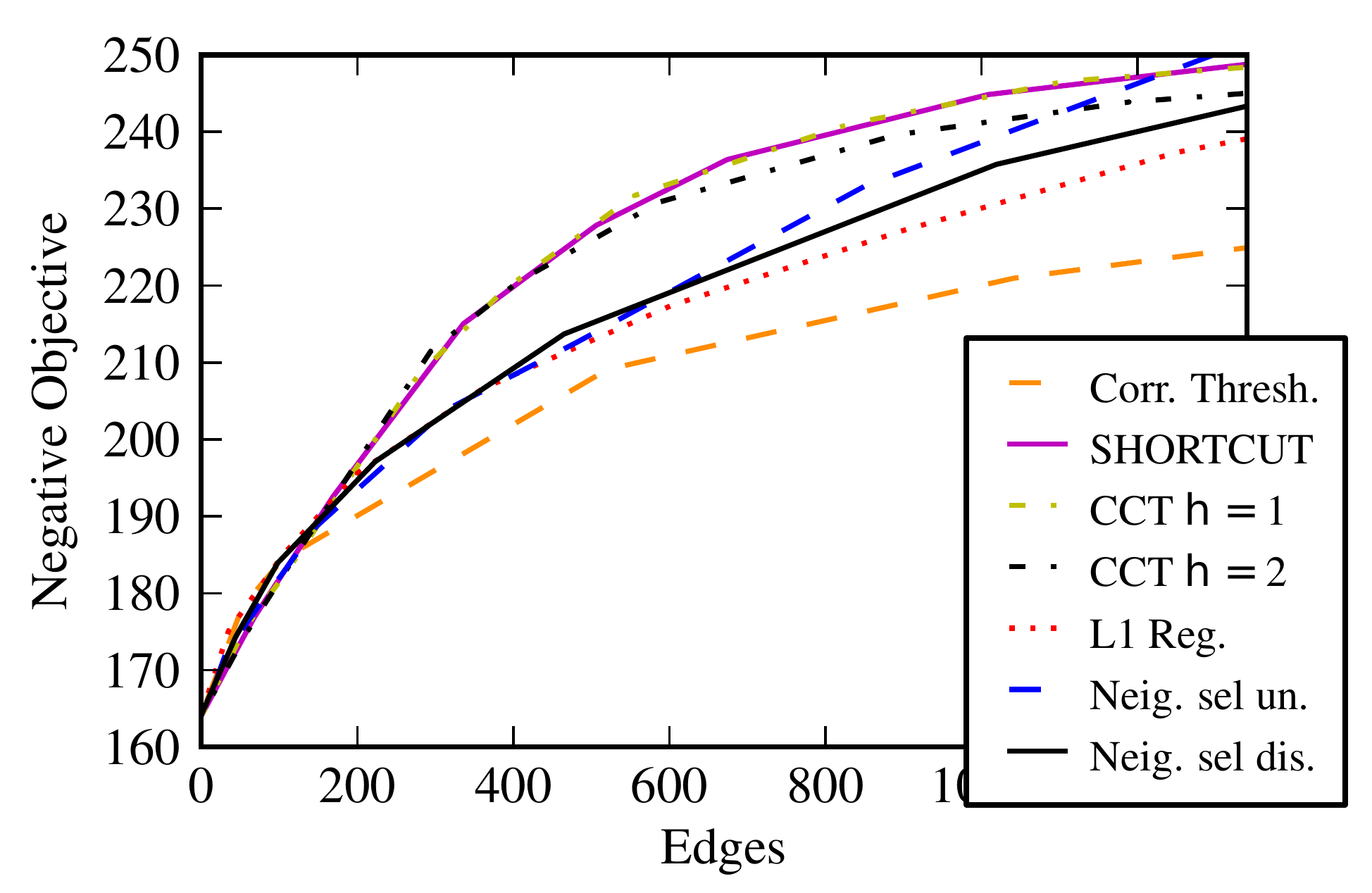} 
\par\end{centering}

\protect\caption{Comparison on stock market data \label{fig:share-data}}
\end{figure}

For our second test we used climate data. We extracted the maximum
daily temperatures from all of the weather stations in NSW, Australia
that reported temperatures for each day between Jan 1st 2012 and Dec
1st 2012. This gave us 171 stations with 336 temperature readings
for each. As before, we use even split between test and training data.

On this particular problem the advantages of CCT with $\eta=2$ is
much more apparent. Figure \ref{fig:climate-data} shows that it outperforms
the other methods, at the expense of quite prohibitive running time.
Like in the other cases, SHORTCUT performs similarly to CCT $\eta=1$,
but interestingly simple correlation thresholding is more effective
than either for this problem. Neighbourhood selection performs very
well using the union variant, and very poorly using the disjunction
variant.

\begin{figure}
\begin{centering}
\includegraphics[scale=0.53]{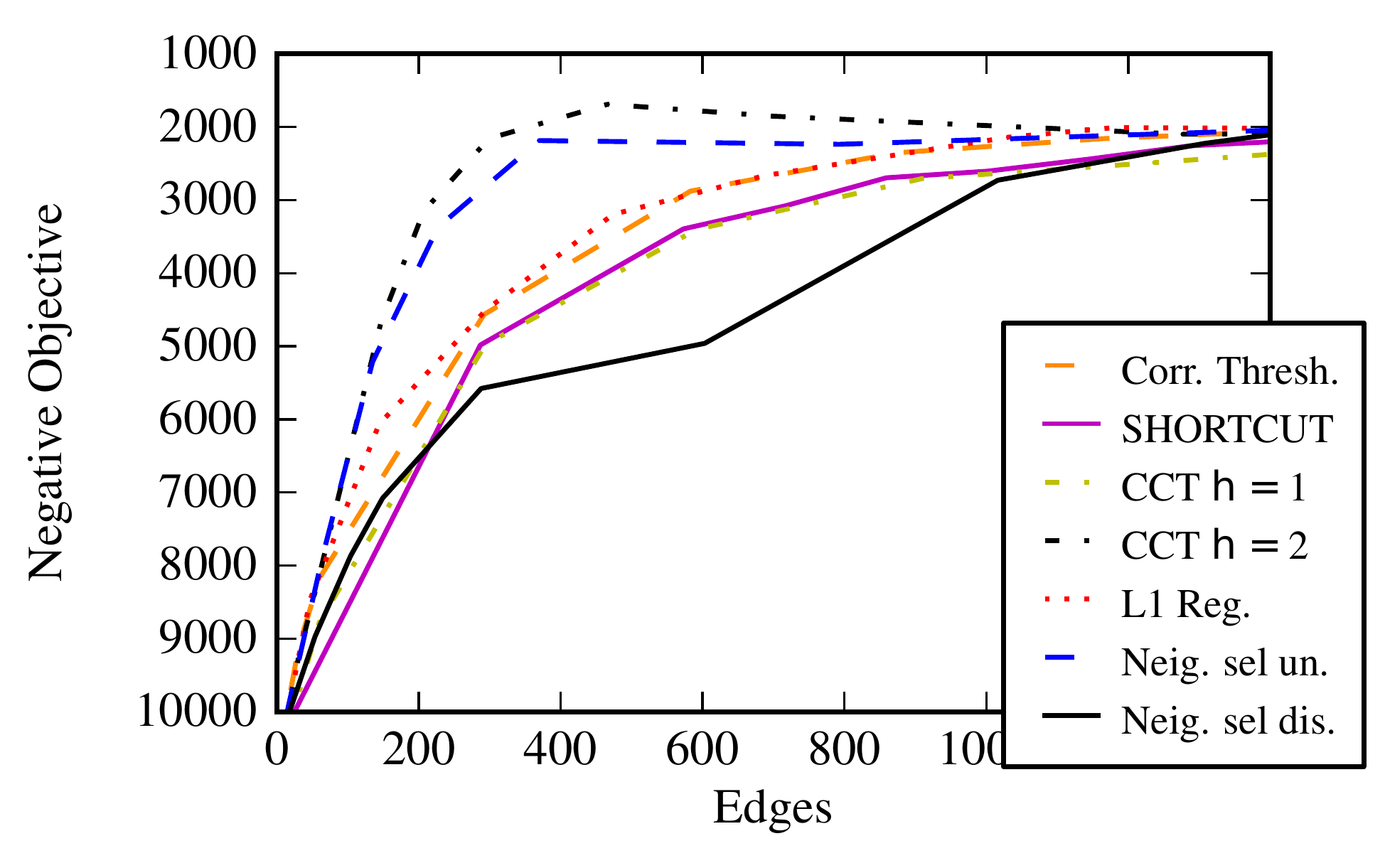} 
\par\end{centering}

\protect\caption{Comparison on climate data \label{fig:climate-data}}
\end{figure}

\section{Theoretical Properties}

\label{sec:theory-props} Although our algorithm differs from the
CCT method, we show in this section that our method has the same theoretical
guarantees, although with slightly stronger assumptions required.
\citet{anand-jmlr-prepub} are able to show that CCT provides reconstructions
structurally consistent with the true model, under a set of asymptotic
conditions. The key requirements are that the number of samples $n$
scales with the number of nodes $p$ as $n=\Omega\left(\Theta_{\min}^{-2}\log p\right)$,
that the model be walk summable, and that a form of local-separation
holds. See \citet[p10-11]{anand-jmlr-prepub} for details. Their proof
proceeds by considering the noise-free (exact covariance) case. They
show that if nodes $i$ and $j$ are not neighbours, then the maximum
absolute conditional covariance over nodes $k\in V\backslash\{i,j\}$
is bounded. They then show that if $i$ and $j$ are neighbours, then
the \emph{minimum} absolute conditional covariance over $k$ is bounded
away from zero. Thus the CCT method can separate the two cases successfully,
with an appropriate choice of threshold.

Pairs of non-neighbouring nodes will be thresholded under both SHORTCUT
and CCT, as the bounded distance from zero ensures that they would
be beneath of threshold of CCT if they changed sign (recall that SHORTCUT
only behaves differently for entries whose covariance's sign differs
from the CC matrices sign). So we only need to consider the case of
neighbouring nodes. The relevant lemma from the CCT proof is as follows:
\begin{lem}
\textbf{{[}Lemma 14 in \citet{anand-jmlr-prepub}, p27{]}} For an
$\alpha$-walk summable graphical model satisfying 
\[
D_{\min}(1-\alpha)\min_{(i,j)\in G}\frac{|\Theta_{ij}|}{K}>1+\delta,
\]
for some $\delta>0$ (not depending on p), where $D_{\min}:=\min_{i}\Theta_{ii}$
and $K=\ns{\Theta_{\{i,j\}(V\backslash\{i,j\})}}$, and where $K>2\Theta_{ii}\Theta_{jj}$.
Then we have 
\[
\left|\Sigma(i,j|k)\right|=\Omega(\Theta_{\min}),
\]
for any neighbouring $(i,j)$ and any $k\in V\backslash\{i,j\}$,
where $\Theta_{\min}$ is the minimum element of $\Theta$. 
\end{lem}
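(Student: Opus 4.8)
The plan is to prove the lower bound through the \emph{walk-sum} representation of Gaussian covariances, which is the natural language for $\alpha$-walk-summable models and is what makes conditioning on a single node tractable. First I would pass to normalised coordinates: writing $D=\mathrm{diag}(\Theta)$ and $R=I-D^{-1/2}\Theta D^{-1/2}$, the matrix $R$ has zero diagonal, entries $R_{ij}=-\Theta_{ij}/\sqrt{\Theta_{ii}\Theta_{jj}}$ (the partial correlations), and spectral radius $\alpha=\rho(|R|)<1$ by walk-summability, where $|R|$ is taken elementwise and the equivalence of this condition with pairwise normalisability is \citet{walksum}. The covariance is $\Sigma=\Theta^{-1}$, so the normalised covariance is the convergent Neumann series $D^{1/2}\Sigma D^{1/2}=(I-R)^{-1}=\sum_{\ell\ge 0}R^{\ell}$, whose $(i,j)$ entry is the sum of the weights $\phi(w)$ over all walks $w$ from $i$ to $j$.

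Next I would invoke the standard walk-sum fact that conditioning on $x_k$ deletes node $k$ from the admissible walk set: in normalised coordinates $\sqrt{\Theta_{ii}\Theta_{jj}}\,\Sigma(i,j\mid k)$ equals the sum of $\phi(w)$ over walks $i\to j$ that never visit $k$. Because $(i,j)$ is an edge and $k\neq i,j$, the length-one walk $i\to j$ always survives this restriction, so I can split
\[
\sqrt{\Theta_{ii}\Theta_{jj}}\,\Sigma(i,j\mid k)=R_{ij}+\!\!\sum_{\substack{w:\,i\to j,\ |w|\ge 2\\ w\not\ni k}}\!\!\phi(w).
\]
The first term is the ``direct'' contribution of magnitude $|R_{ij}|=|\Theta_{ij}|/\sqrt{\Theta_{ii}\Theta_{jj}}$, and the remaining sum is an ``indirect'' tail over longer walks.

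The heart of the argument is to bound this tail. Since walks avoiding $k$ form a subset of all walks, the absolute indirect sum is at most $\bigl[(I-|R|)^{-1}|R|^{2}\bigr]_{ij}$. Every length-$\ge 2$ walk leaves $i$ along an off-support edge $i\to m$ and enters $j$ along some $m'\to j$; applying Cauchy--Schwarz to these first and last steps converts the two off-support edge vectors of rows $i$ and $j$ into $\sqrt{K}\cdot\sqrt{K}=K$ after normalisation, with $K=\ns{\Theta_{\{i,j\}(V\backslash\{i,j\})}}$, where the diagonal rescalings account for the $D_{\min}$ and $\sqrt{\Theta_{ii}\Theta_{jj}}$ factors, while the intermediate portion of each walk sums geometrically to $1/(1-\alpha)$. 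The resulting tail bound is of order $K/\bigl(D_{\min}(1-\alpha)\sqrt{\Theta_{ii}\Theta_{jj}}\bigr)$, and the technical hypothesis $K>2\Theta_{ii}\Theta_{jj}$ keeps this normalisation nondegenerate. The assumption $D_{\min}(1-\alpha)|\Theta_{ij}|/K>1+\delta$ then says exactly that $|R_{ij}|$ exceeds the tail bound by the multiplicative slack $1+\delta$, so a reverse triangle inequality gives $|\Sigma(i,j\mid k)|\ge(\Theta_{ii}\Theta_{jj})^{-1/2}\bigl(|R_{ij}|-\text{tail}\bigr)$, and since $(\Theta_{ii}\Theta_{jj})^{-1/2}|R_{ij}|=|\Theta_{ij}|/(\Theta_{ii}\Theta_{jj})\ge\Theta_{\min}/(\Theta_{ii}\Theta_{jj})$, the bound is $\Omega(\Theta_{\min})$ uniformly in $k$.

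The step I expect to be the main obstacle is making the tail estimate rigorous and genuinely uniform in $k$. I must verify that restricting to walks avoiding $k$ can only shrink the \emph{absolute} walk-sum, so that using the unconditional bound $\bigl[(I-|R|)^{-1}|R|^{2}\bigr]_{ij}$ is legitimate, and I must carry out the factorisation of length-$\ge 2$ walks so that it produces the quantity $K$ rather than a crude spectral-norm surrogate, while tracking the diagonal scalings $\Theta_{ii},\Theta_{jj},D_{\min}$ correctly through the normalisation. Collapsing all of these constants into the clean $\Omega(\Theta_{\min})$ form with the $(1+\delta)$ margin surviving is the delicate part; everything else is a routine combination of the Neumann series and the reverse triangle inequality.
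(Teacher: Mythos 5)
Your strategy is the walk-sum twin of the argument the paper actually uses: the paper conditions on $k$ by deleting row and column $k$ from $\Theta$ and then Schur-complements onto $A=\{i,j\}$, writing $\tilde{\Sigma}_{AA}=(\Theta_{AA}-\Theta_{AB}\Theta_{BB}^{-1}\Theta_{BA})^{-1}$ with $B=V\backslash\{i,j,k\}$, bounds the perturbation $\rho=\Theta_{AB}\Theta_{BB}^{-1}\Theta_{BA}$ elementwise by $\ns{\Theta_{AB}}/\left(D_{\min}(1-\alpha)\right)$ (this is where walk-summability enters, via $\lambda_{\min}(\Theta_{BB})\geq D_{\min}(1-\alpha)$), and finishes with an exact $2\times2$ inversion. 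Your ``direct term plus tail'' decomposition of the Neumann series is the same idea expressed on the covariance side rather than the precision side, and the constants you predict are the right ones.

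The step you flagged as delicate is, however, where the argument as written breaks. The Schur complement $\rho$ corresponds to walks whose \emph{interior avoids $i$ and $j$ as well as $k$}; your tail is the set of all length-$\geq2$ walks avoiding only $k$, which may revisit $i$ and $j$. Consequently the ``first step out of $i$'' and ``last step into $j$'' vectors in your Cauchy--Schwarz bound are full rows of $|R|$ and contain the entry $|R_{ij}|$ itself, so the tail estimate comes out as $\frac{1}{1-\alpha}\sqrt{(R_{ij}^{2}+\Vert R_{iB'}\Vert^{2})(R_{ij}^{2}+\Vert R_{jB'}\Vert^{2})}$ with $B'=V\backslash\{i,j\}$, not as $K/\left(D_{\min}(1-\alpha)\sqrt{\Theta_{ii}\Theta_{jj}}\right)$. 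The extra $R_{ij}^{2}$ contribution is not controlled by the hypothesis $D_{\min}(1-\alpha)|\Theta_{ij}|/K>1+\delta$, and the reverse triangle inequality then requires something like $|R_{ij}|<1-\alpha$, which is not implied by the assumptions (it would force $\alpha<1/2$ in the worst case). The repair is to sum only over walks whose interior also avoids $\{i,j\}$ --- equivalently, to Schur-complement onto $\{i,j\}$ first, so that the off-support blocks are exactly $\Theta_{AB}$ and the quantity $K$ appears by construction --- and then account for the back-and-forth between $i$ and $j$ exactly through the closed-form $2\times2$ inversion $\Sigma(i,j|k)=-\tilde{\Theta}_{ij}/(\tilde{\Theta}_{ii}\tilde{\Theta}_{jj}-\tilde{\Theta}_{ij}^{2})$. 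At that point your argument collapses onto the paper's, which is only a few lines once the bound on $\rho$ is granted.
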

The assumptions in this lemma are the key technical assumptions about
the data. The bound on $K$ is the additional assumption on top of
those made by \citet{anand-jmlr-prepub}. Note that the absolute value
around $\Theta_{ij}$ is a correction of an apparent typo. We will
modify their proof of this lemma to additionally show that the sign
of $\Sigma(i,j|k)$ is the same as $\Sigma(i,j)$, so that the SHORTCUT
method will not incorrectly threshold in the neighbouring case.
\begin{proof}
We use the notation $A=\{i,j\}$ and $B=V\backslash\{i,j,k\}$. Let
$\tilde{\Sigma}$ denote the conditional covariance matrix of $A\cup B$
conditioned on $k$, it is formed by removing row and column $k$
from $\Theta$, then inverting. We are interested in the 2x2 sub-matrix
$\tilde{\Sigma}_{AA}$, as it contains the conditional covariance
value we are interested in. It can be expressed as part of block matrix
inversion: 
\[
\tilde{\Sigma}=\left[\begin{array}{cc}
\Theta_{AA} & \Theta_{AB}\\
\Theta_{BA} & \Theta_{BB}
\end{array}\right]^{-1}=\left[\begin{array}{cc}
\left(\Theta_{AA}-\Theta_{AB}\Theta_{BB}^{-1}\Theta_{BA}\right)^{-1} & \cdots\\
\cdots & \cdots
\end{array}\right].
\]
So in order to bound $\Sigma(i,j|k)=\tilde{\Sigma}_{i,j}$ away from
zero, we need to upper bound the absolute value of $\rho:=\Theta_{AB}\Theta_{BB}^{-1}\Theta_{BA}$,
so that $\tilde{\Theta}_{AA}:=\Theta_{AA}-\rho$ is bounded away from
zero.

\citet{anand-jmlr-prepub} establish that the maximum element of $\rho$
is bounded by 
\[
\frac{\ns{\Theta_{AB}}}{D_{\min}(1-\alpha)},
\]
a constant, and thus since $\Theta_{i,j}>\Theta_{\min}$, we have
that $\tilde{\Theta}_{ij}=\Omega\left(\Theta_{\min}\right)$. It follows
from 2x2 matrix inversion that 
\[
\Sigma(i,j|k)=\frac{-\tilde{\Theta}_{ij}}{\tilde{\Theta}_{ii}\tilde{\Theta}_{jj}-\tilde{\Theta}_{ij}^{2}}>\frac{-\tilde{\Theta}_{ij}}{\tilde{\Theta}_{ii}\tilde{\Theta}_{jj}}=\Omega\left(\Theta_{\min}\right).
\]
We note that the above bound establishes that the sign of $\tilde{\Theta}_{ij}$
is the same as $\Theta_{ij}$, and that $\Sigma(i,j|k)$ has the opposite
sign to $\tilde{\Theta}_{ij}$. Therefore, in order to show that the
sign of the non-conditional covariance coincides, we need to establish
the sign of $\Theta_{ij}$.

From Lemma 13 of \citet{anand-jmlr-prepub}, we have that 
\[
\left|\Sigma_{ij}+\frac{\Theta_{ij}}{\Theta_{ii}\Theta_{jj}-\Theta_{ij}^{2}}\right|\leq\frac{2\alpha^{2}}{D_{\min}(1-\alpha)}.
\]
Using this lemma, it is sufficient to establish that the right hand
side here is bounded by $\frac{\left|\Theta_{ij}\right|}{\Theta_{ii}\Theta_{jj}-\Theta_{ij}^{2}}$.
It follows from our initial assumption that: 
\[
\frac{1}{D_{\min}(1-\alpha)}\leq\frac{\left|\Theta_{ij}\right|}{K(1+\delta)},
\]
therefore 
\[
\left|\Sigma_{ij}+\frac{\Theta_{ij}}{\Theta_{ii}\Theta_{jj}-\Theta_{ij}^{2}}\right|\leq2\alpha^{2}\frac{\left|\Theta_{ij}\right|}{K(1+\delta)}.
\]
We can use our assumed bound on $K$ and the fact that $\alpha<1$
and $\delta>0$ to simplify as: 
\begin{eqnarray*}
\left|\Sigma_{ij}+\frac{\Theta_{ij}}{\Theta_{ii}\Theta_{jj}-\Theta_{ij}^{2}}\right| & \leq & \frac{\alpha^{2}}{1+\delta}\cdot\frac{\left|\Theta_{ij}\right|}{\Theta_{ii}\Theta_{jj}}\\
 & \leq & \frac{\left|\Theta_{ij}\right|}{\Theta_{ii}\Theta_{jj}-\Theta_{ij}^{2}}.
\end{eqnarray*}

The last line follows from $\Theta_{ii}\Theta_{jj}-\Theta_{ij}^{2}$
being a minor of $\Theta$.\end{proof}

\chapter{Fast Approximate Parameter Inference}

\label{chap:colab}In this chapter we consider the problem of large
scale learning of the parameters of a Gaussian graphical model. We
assume the structure is known, perhaps using the techniques from Chapter
\ref{chap:approx-covsel}. Our goal in machine learning is not learning
for its own sake, but to learn predictive models. To this end, we
consider a reasonable restriction of the class of all Gaussian models
which ensures that prediction using all models in the class is tractable.
In Section \ref{sec:training} we give a novel dual-decomposition
learning procedure for this restricted class. We then introduce the
collaborative filtering application domain, where a Gaussian graph
structure naturally arises. In Section \ref{sec:colab-experiments}
we show promising empirical performance of our method on this domain.

An earlier version of the work in this chapter has been published
as \citet{adefazio-icml2012}.

\section{Model Class}

\label{sec:cf-model-class}

In this section we impose a number of restrictions of the class of
precision matrices $\Theta$ that we will consider. These restrictions
have multiple purposes. Primarily, we are looking to ensure that inference
on the resulting model is tractable. A second reason for our structural
assumptions is that they form a hypothesis class restriction. The
approximation we use in Section \ref{sec:training} learns poor models
without the additional restrictions used here.

\subsection{Improper models}

A proper Gaussian graphical model is one in which the function $Z(\Theta)\propto\log\det\Theta$
(the log determinant of the precision matrix) which plays the part
of the normalisation constant is finite. In a sense proper models
are those that can be normalised. The requirement that a Gaussian
graphical be proper is actually stronger than we require in our applications,
and potentially difficult to enforce in practice. For Gaussian models,
if $\Theta$ is positive definite than it is proper. An improper model
is still required to have a symmetric positive semi-definite (PSD)
$\Theta$.

To see why we don't need a proper model, consider that our ultimate
goal is to perform prediction using Bayesian inference. Inference
in this case means conditioning on a subset of variables, then determining
the expected values and variances of the remaining variables. For
this purpose we just need that all of the principal minors of $\Theta$
are positive definite as they form the conditional sub-models which
we need to be proper. We will discuss in the next section how to ensure
this is the case.

\subsection{Precision matrix restrictions}

The first restriction we apply to $\Theta$ is to force it to be a
member of the class of $Z$-matrices.
\begin{defn}
A $Z$-matrix is a matrix for which all off-diagonal entries are non-positive.
I.e
\[
\forall i\neq j\quad\Theta_{ij}\leq0.
\]

For a Gaussian graphical model, a negative $\Theta_{ij}$ value implies
that (conditioned on all other variables) the variables $i$ and $j$
are positively correlated. So this restriction means that we can not
model negative direct correlations in our model. Although this is
a strong restriction, it is a natural one. When one thinks of a network
of dependencies, edges are naturally assumed to imply correlation
between the adjoined nodes rather than anti-correlation. In the collaborative
filtering application area we consider, this assumption is reasonable.
We discuss this further in Section \ref{sec:item-field}. Note that
this restriction to modelling negative correlations manifests in the
covariance matrix $\Theta^{-1}$; it can only contain non-negative
entries for PSD $Z-$matrices.

The $Z$-matrix property together with the SPSD requirement implies
quite a lot about $\Theta$. The most important property for our purpose
is the following:\end{defn}
\begin{thm}
\textbf{\citep[Theorem 4.16 p156]{nonnegative-matrix-book}} Suppose
$\Theta$ is a SPSD $Z$-matrix. Then if $\Theta$ is an irreducible
matrix, i.e. the support of $\Theta$ is a connected graph, then all
principal minors of $\Theta$ are positive definite.
\end{thm}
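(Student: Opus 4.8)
The plan is to recognise $\Theta$ as a (possibly singular) symmetric $M$-matrix and then exploit Perron--Frobenius theory for its irreducible nonnegative part. First I would write $\Theta = sI - M$ with $s := \max_i \Theta_{ii}$ and $M := sI - \Theta$. Because $\Theta$ is a $Z$-matrix, the off-diagonal entries of $M$ equal $-\Theta_{ij} \geq 0$ and the diagonal entries equal $s - \Theta_{ii} \geq 0$, so $M \geq 0$ entrywise. The off-diagonal support of $M$ coincides with that of $\Theta$, so the hypothesis that $\Theta$ is irreducible (connected support) is precisely the statement that $M$ is an irreducible nonnegative matrix.

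Next I would convert semidefiniteness into a spectral-radius bound. Since $\Theta = sI - M$, the two matrices share eigenvectors and the eigenvalues of $\Theta$ are exactly $s - \mu$ as $\mu$ ranges over the eigenvalues of the symmetric matrix $M$. The largest eigenvalue of $M$ is its spectral radius $\rho(M)$, so the smallest eigenvalue of $\Theta$ is $s - \rho(M)$, and $\Theta \succeq 0$ is equivalent to $s \geq \rho(M)$. Now fix any proper index set $S \subsetneq V$ and consider the principal submatrix $\Theta_S = s I_{|S|} - M_S$, where $M_S \geq 0$ is the corresponding principal submatrix of $M$. Invoking strict monotonicity of the spectral radius under deletion of rows and columns from an irreducible nonnegative matrix gives $\rho(M_S) < \rho(M) \leq s$, so the smallest eigenvalue of $\Theta_S$ is $s - \rho(M_S) > 0$. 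Hence $\Theta_S$ is positive definite, which is the claim for every proper principal submatrix.

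The key step, and the one I expect to be the main obstacle, is the strict inequality $\rho(M_S) < \rho(M)$. The non-strict version $\rho(M_S) \leq \rho(M)$ is elementary: zero-pad $M_S$ back to full size to get $\tilde{M} \leq M$ entrywise with $\rho(\tilde{M}) = \rho(M_S)$, then use monotonicity of $\rho$ under entrywise domination of nonnegative matrices. Strictness, however, genuinely uses irreducibility. One argues through the Perron eigenvector $v > 0$ of $M$, which is strictly positive exactly because $M$ is irreducible; a nonnegative Perron eigenvector of $M_S$ padded with zeros can attain eigenvalue $\rho(M)$ only when $S = V$. This is a standard Perron--Frobenius fact and may simply be cited from the nonnegative-matrix reference rather than reproved.

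Finally I would dispatch the boundary case to match the use made of the result. If $\Theta$ is nonsingular then $s > \rho(M)$ and the same computation with $S = V$ shows $\Theta$ itself is positive definite; if $\Theta$ is singular then irreducibility forces $\rho(M) = s$ to be a simple eigenvalue with positive eigenvector, so $\Theta$ has a one-dimensional null space while every \emph{proper} principal submatrix is strictly positive definite. These proper submatrices are exactly the conditional sub-models the surrounding text needs to be proper, so the conclusion stated in the excerpt follows.
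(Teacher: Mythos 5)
The paper does not prove this statement---it is quoted directly from the Berman--Plemmons reference---so there is no internal proof to compare against. Your argument is correct and is essentially the standard one behind the cited result: writing the symmetric $Z$-matrix as $\Theta = sI - M$ with $M \geq 0$ irreducible, translating positive semidefiniteness into $s \geq \rho(M)$, and invoking the strict decrease of the Perron root on proper principal submatrices of an irreducible nonnegative matrix to conclude $\lambda_{\min}(\Theta_S) = s - \rho(M_S) > 0$ for every proper index set $S$.
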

So our requirement that conditional sub-models are proper is satisfied
as long as we ensure that the graph structure of $\Theta$ is connected.
This requirement is very weak in practice. In fact, non-connected
models can still be handled by treating the connected components separately.
When no values in a component are conditioned on then the predictions
for that component are just the (unconditional) expectations of those
variables.

The final restriction we place on $\Theta$ is the requirement that
it is actively diagonally dominant:
\begin{defn}
A symmetric matrix $\Theta$ is \emph{actively diagonally dominant}
if:
\[
\forall i\;\left|A_{ii}\right|=\sum_{j\neq i}\left|A_{ij}\right|.
\]

This is a sub-class of the diagonally dominant matrices, where the
above expression holds with inequality instead ($\left|A_{ii}\right|\geq\sum_{j\neq i}\left|A_{ij}\right|$).
\end{defn}
In our case the $Z$-matrix restriction simplifies this condition
to $A_{ii}=-\sum_{j\neq i}A_{ij}.$ For general matrices diagonal
dominance is a very strong restriction. In our setting, it is not
as strong a requirement. Informally, in order that $A$ be PSD, there
needs to be sufficient weight on the diagonal to counter-act the off-diagonal
entries. If the diagonal entries are of roughly similar magnitude,
then diagonal dominance will hold. The purpose of the \emph{active}
constraint rather than the usual inequality is primarily a practical
one. When optimising over the set of diagonally dominant matrices,
the constraints are normally active during optimisation, so it simplifies
the algorithm to enforce that they are always active. The active constraint
also prevents the model from shrinking its predictions towards the
mean, which is undesirable in some applications. Interestingly, this
diagonal dominance condition is also implicitly assumed in other network
models in the collaborative filtering domain. This is discussed further
in Section \ref{sec:prediction}.

One immediate consequence of the active diagonal dominance is the
singularity of the matrix $\Theta$. So in fact we are certain to
not have a proper Gaussian graphical model.

\section{An Approximate Constrained Maximum Entropy Learning Algorithm}

\label{sec:training} \label{sec:max-ent} 

For Gaussian models, the mean and covariance matrix form the sufficient
statistics, so there is no need to work directly with the data points
during learning. Let $\Sigma$ be the empirical covariance matrix
and $\mu$ the mean vector. We will apply an approximate maximum entropy
approach to learning the parameters of a model under our class restriction.

\subsection{Maximum Entropy Learning}

In the previous chapter we discussed the method of maximum likelihood
learning in the context of Gaussian models. When a L1 regulariser
is used, the dual problem took the form:

\begin{gather}
\max_{C\in S_{++}^{N}}\log\det C,\nonumber \\
\text{s.t.\ }\forall i,j\colon\left|C_{ij}-\Sigma_{ij}\right|\leq\lambda.\label{eq:covsel-dual-1}
\end{gather}

This is actually an instance of Maximum Entropy learning (ME). In
general maximum likelihood problems have maximum entropy duals. The
entropy nomenclature refers to the fact that the objective is a scaled
and shifted form of the entropy of a Gaussian distribution:
\[
H(\mathcal{N}(.,C))=\frac{N}{2}\left(1+\log(2\pi)\right)+\frac{1}{2}\log\det C.
\]

In ME learning we need to place some sort of constraint on the allowed
distributions so that the solution is well defined. For the L1 case
that was a box constraint $\left|C_{ij}-\Sigma_{ij}\right|\leq\lambda$.
For the unregularised case it would simply be $C=\Sigma$, which obviously
gives the trivial solution for $C$. Once we have the ME solution
we can recover a ML solution using the gradients of the Lagrangian.
We give an example of this in Section \ref{sub:learning-restricted}. 

For Gaussian models exact maximum entropy learning can be tractable.
However, for the size of models we are considering it quickly becomes
impractical. There is also a lack of robustness in the solutions.
Instead, we will rely on the use of the Bethe approximation.

\subsection{The Bethe Approximation}

The Bethe approximation is better known as the implicit entropy approximation
made by the standard belief propagation method. For parameter learning,
we will work with the approximation more directly. Suppose we have
a set of pairwise beliefs $b_{ij}(x_{i},x_{j})$ and unary beliefs
$b_{i}(x_{i})$. Beliefs in this context refers to probability distributions
which we intend to (variationally) optimise over so that they become
better approximations to the probabilities $p(x_{i},x_{j})$. Beliefs
that are consistent on their overlap (i.e. $b_{i}(x_{i})=\sum_{y}b_{ij}(x_{i},y)$
for all $i$) encode the local properties of a probability distribution.
The Bethe approximation is an entropy like function that is defined
on such local information of a distribution. It has the following
form for pairwise models \citep[Section 11.3.7]{koller-friedman},
defined using the true entropy $H$ as:
\begin{eqnarray*}
H_{\text{Bethe}}(b) & = & \sum_{i}^{N}\,\sum_{j=i+1}^{N}H(b_{ij})\\
 & - & (N-1)\sum_{i}^{N}H(b_{i}).
\end{eqnarray*}

The goal of this approximation is to approximate the full entropy
using the entropy over the local beliefs $b_{ij}$. We can't just
use the approximation $H_{\text{Bethe}}(b)=\sum_{i}^{N}\sum_{j=i+1}^{N}H(b_{ij})$
as each $x_{i}$ appears $N$ times as part of an entropy, essentially
being over-counted $N-1$ times. The Bethe entropy deals with this
over-counting by subtracting off $N-1$ copies of each unary entropy,
resulting in each $x_{i}$ being counted only once.

In a pairwise model over a graph structure, we only define beliefs
that have matching factors (i.e. edges). So the Bethe entropy takes
the form:
\[
H_{\text{Bethe}}(b)=\sum_{(i,j)\in E}H(b_{ij})-\sum_{i\in V}(\text{deg}(i)-1)H(b_{i}).
\]

\subsection{Maximum entropy learning of unconstrained Gaussians distributions}

In a Gaussian model, it is convenient to encode the beliefs in matrix
form. Each belief $b_{ij}$ is a 2D Gaussian (As marginals of Gaussian
distributions must be Gaussian), and so its covariance matrix has
parameters $C_{ii},C_{ij},C_{jj}$. Since we are only interested in
consistent beliefs (those that agree on the overlap), two beliefs
that share a variable (say $x_{1}$) will agree on the variance of
it ($C_{11}$). So in fact we can write the full set of beliefs compactly
as a sparse matrix $C$, which contains the covariance $C_{ij}$ according
to belief $b_{ij}$ at location $(i,j)$. The diagonal contains all
the unary variances $C_{ii}$ for each $i$.

The Bethe entropy approximation in this notation is: 
\begin{eqnarray*}
H_{\textrm{Bethe}}(C) & = & \sum_{(i,j)\in E}\log\left(C_{ii}C_{jj}-C_{ij}^{2}\right)\\
 &  & +\sum_{i\in V}(1-\text{deg}(i))\log C_{ii}.
\end{eqnarray*}
Using the Bethe approximation, we can now perform approximate maximum
entropy learning. Recall that the maximum entropy objective has additional
constraints requiring that the marginals of the learned distribution
match the observed distribution. When using an approximation entropy,
we just require that the beliefs match. We start by considering a
model without the restrictions from Section \ref{sec:cf-model-class}.
In our matrix notation, the maximum entropy objective is: 
\begin{eqnarray*}
 & \underset{C}{\textrm{maximize}}\quad H_{\textrm{Bethe}}(C)\\
 & \textrm{s.t. }C=\Sigma.
\end{eqnarray*}
Stated this way, the solution is trivial as the constraints directly
specify $C$. However, we are interested in learning the weights $\Theta$,
which are the Lagrange multipliers of the equality constraints. The
Lagrangian is 
\begin{equation}
L_{\Sigma}(C,\Theta)=H_{\textrm{Bethe}}(C)+\left\langle \Theta,\Sigma-C\right\rangle ,\label{eqn:lagrangian}
\end{equation}
where $\left\langle \cdot,\cdot\right\rangle $ is the standard inner
product on matrices. The Lagrangian has derivatives: 
\begin{gather*}
\frac{\partial L_{\Sigma}(C,\Theta)}{\partial C}=\frac{\partial H_{\textrm{Bethe}}(C)}{\partial C}-\Theta,\\
\frac{\partial L_{\Sigma}(C,\Theta)}{\partial\Theta}=\Sigma-C.
\end{gather*}

Equating the gradients to zero, gives the following equation for $\Theta$:
\[
\Theta=\frac{\partial H_{\textrm{Bethe}}(C)}{\partial C}|_{C=\Sigma},
\]
which has the closed form solution: 
\begin{equation}
\begin{split}\Theta_{ij} & =\frac{-\Sigma_{ij}}{\Sigma_{ii}\Sigma_{jj}-\Sigma_{ij}^{2}}\\
\Theta_{ii} & =\frac{1}{\Sigma_{ii}}+\sum_{j\in ne(i)}\left(\frac{\Sigma_{jj}}{\Sigma_{ii}\Sigma_{jj}-\Sigma_{ij}^{2}}-\frac{1}{\Sigma_{ii}}\right).
\end{split}
\label{eqn:closed-form}
\end{equation}
This kind of solution is also known as pseudo-moment matching \citep[20.5.1.1 p963]{koller-friedman}.

\subsection{Restricted Gaussian distributions}

\label{sub:learning-restricted}

If we were applying a vanilla Gaussian model, we could use Equation
\ref{eqn:closed-form} directly. However, for our novel restricted
class we have active diagonal dominance constraints as well. To handle
these constraints, we use variable substitution, replacing $\Theta_{ii}$
with $\sum_{j\in ne(i)}\Theta_{ij}$. In particular:
\[
\Theta_{ii}\left(\Sigma_{ii}-C_{ii}\right)\rightarrow\sum_{j\in ne(i)}\Theta_{ij}\left(\Sigma_{ii}-C_{ii}\right).
\]
Grouping in terms of $\Theta$, we have the following Lagrangian:
\[
H_{\textrm{Bethe}}(C)+\sum_{(i,j)\in E}\Theta_{ij}\left(\Sigma_{ij}-\Sigma_{ii}-\Sigma_{jj}\right)-\sum_{(i,j)\in E}\Theta_{ij}\left(C_{ij}-C_{ii}-C_{jj}\right)
\]
which we denote $L_{\Sigma}^{\prime}(C,\Theta)$. It has gradients:
\begin{align*}
\frac{\partial L_{\Sigma}^{\prime}(C,\Theta)}{\partial C_{ij}}= & \frac{\partial H_{\textrm{Bethe}}(C)}{\partial C_{ij}}-\Theta_{ij},\\
\frac{\partial L_{\Sigma}^{\prime}(C,\Theta)}{\partial C_{ii}}= & \frac{\partial H_{\textrm{Bethe}}(C)}{\partial C_{ii}}+\sum_{j\in\textrm{ne}(i)}\Theta_{ij},\\
\frac{\partial L_{\Sigma}^{\prime}(C,\Theta)}{\partial\Theta_{ij}}= & \Sigma_{ij}-\Sigma_{ii}-\Sigma_{jj}-C_{ij}+C_{ii}+C_{jj}.
\end{align*}
In order to optimise this constrained objective, we can take advantage
of the closed form solution for the simpler unconstrained Gaussian.
The procedure we use is given in Algorithm \ref{alg:primal}. It gives
a quality solution in a small number of iterations (see Figure \ref{fig:early-stopping}).
The core idea is that if we fix the diagonal of $C$, the rest of
the elements are determined. So we take a block coordinate ascent
approach. One block is all off-diagonal entries. We do an exact solve
for that block. The other block is the diagonal elements, for which
we do a gradient step. The algorithm simply alternates between the
two block updates.

The gradient of the diagonal elements can be computed easily from
the dual variable matrix $\Theta$, so we recompute that at each step.
We use a $\alpha/\sqrt{k}$ step size regime for the gradient step,
where $k$ is the step number.

\begin{algorithm}[t]
\begin{algorithmic}
	\STATE {\bfseries input:} covariance $\Sigma$, size $N$, step size $\alpha$
	\STATE $C=\Sigma$ and $k=1$
	\REPEAT
	\STATE \emph{\# Compute $\Theta$, needed for each $C_{ii}$ gradient}
    \FOR{$i=1$ {\bfseries to} $N$}
    	\STATE $\Theta_{ii} = \frac{1}{C_{ii}}$
    \ENDFOR
    \FOR{$(i,j) \in E$}
    	\IF{ $C_{ii}C_{jj}-C_{ij}^{2} > 0$ }
			\STATE $\Theta_{ij} = \frac{-C_{ij}}{C_{ii}C_{jj}-C_{ij}^{2}}$
			\STATE $\Theta_{ii} $ += $\frac{C_{jj}}{C_{ii}C_{jj}-C_{ij}^{2}}-\frac{1}{C_{ii}}$
		\ELSE
			\STATE $\Theta_{ij} = 0$
		\ENDIF
	\ENDFOR
	\STATE \emph{\# Take a gradient step on each $C_{ii}$}
	\FOR{$i=1$ {\bfseries to} $N$}
	\STATE $C_{ii} $ += $\frac{\alpha}{\sqrt{k}}( \Theta_{ii} + \sum_{j\in\textrm{ne}(i)}\Theta_{ij}  ) $
	\ENDFOR
	\STATE \emph{\# Update the off-diagonal elements}
	\FOR{$(i,j) \in E$}
	\STATE $C_{ij} = \Sigma_{ij} - \Sigma_{ii} - \Sigma_{jj} + C_{ii} + C_{jj}$
	\STATE $C_{ij} = \max(C_{ij}, 0)$
	\ENDFOR
	\STATE $k = k + 1$
	\UNTIL{sufficient convergence}
	\STATE {\bfseries return} $\Theta$
\end{algorithmic}\protect\caption{\label{alg:primal} Diagonal ascent algorithm for approximate maximum
entropy learning}
\end{algorithm}

\begin{figure}[t]
\centering{}\includegraphics[width=1\columnwidth]{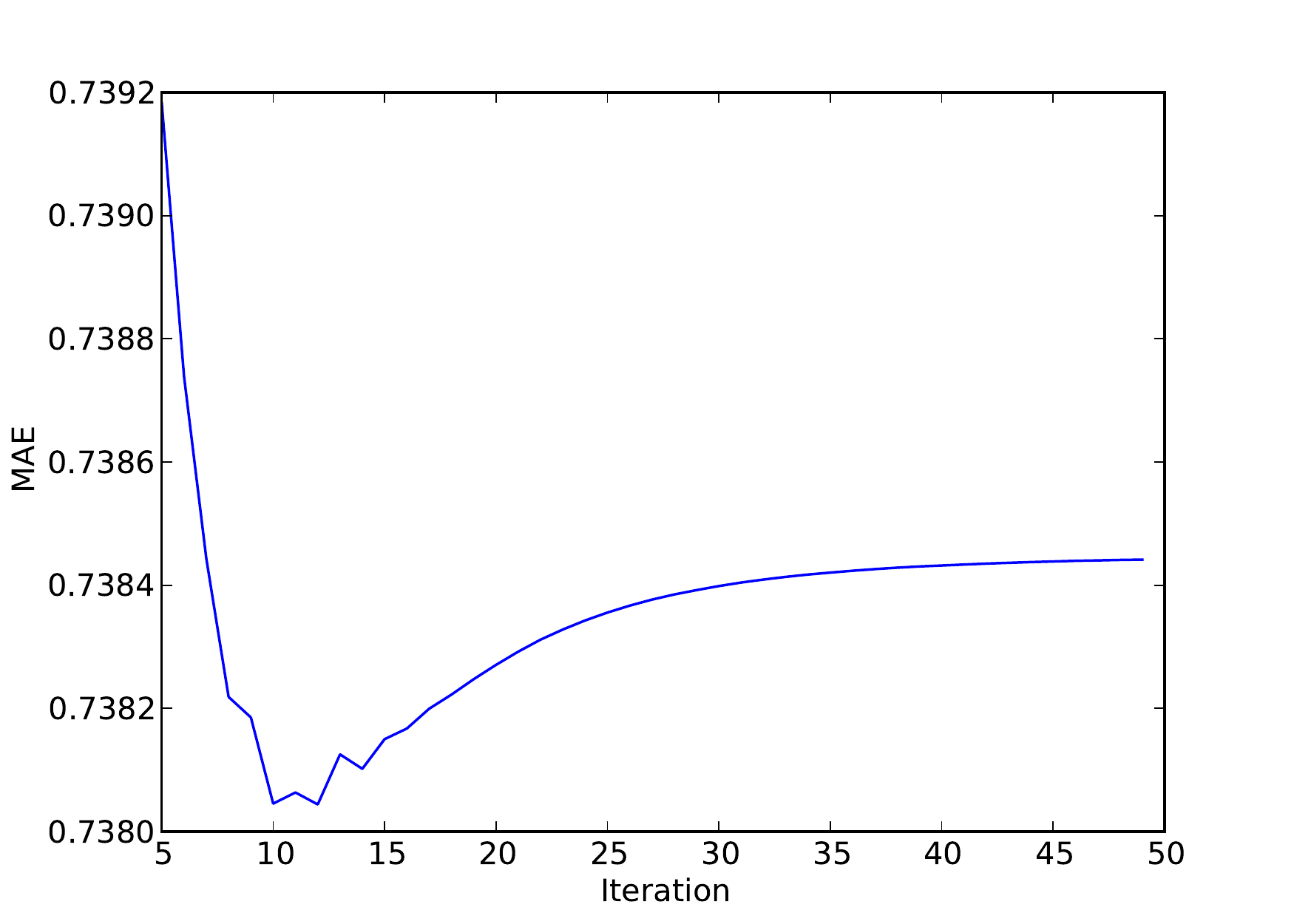}
\protect\caption{\label{fig:early-stopping}Test error as a function of the number
of training iterations on the 100K MovieLens dataset. A mild over-fitting
effect is visible.}
\end{figure}

We have not discussed yet the non-positive restriction on the off
diagonal elements (Section \ref{sec:cf-model-class}). When using
the Bethe approximation, we can see from Equation \ref{eqn:closed-form}
that $\Theta_{ij}$ is negative if $C_{ij}$ is positive. Our restricted
model class can only model covariance matrices with $C_{ij}$ positive,
so we do not need to consider negative $C_{ij}$ at all. Algorithm
\ref{alg:primal} just clamps $C_{ij}=0$ to be positive, which is
a well founded way of enforcing a positivity constraint in a coordinate
descent algorithm.

\section{Maximum Likelihood Learning with Belief Propagation}

\label{sec:max-likelihood}

In Section \ref{sec:training} we derived an efficient learning procedure
for our restricted class of models. It is also possible to apply general
black box optimisation methods to this problem, and we use such as
method as a baseline for comparison. 

Instead of working with a maximum entropy viewpoint, we will use a
more standard maximum likelihood approach. Using the Bethe approximation
to the entropy, we can form an approximate maximum likelihood objective
as follows. First note that the Lagrangian (Equation \ref{eqn:lagrangian})
can be split as follows: 
\begin{align*}
L_{\Sigma}(\Theta,C)= & H_{\textrm{Bethe}}(C)+\left\langle \Theta,\Sigma-C\right\rangle \\
= & \left\langle \Theta,\Sigma\right\rangle -\left(\left\langle \Theta,C\right\rangle -H_{\textrm{Bethe}}(C)\right);
\end{align*}
The dual is then formed by maximising in terms of $C$:
\begin{eqnarray*}
-\log p(\Sigma;\Theta) & \propto & \max_{C}L_{\Sigma}(\Theta,C)\\
 & = & \left\langle \Theta,\Sigma\right\rangle -\min_{C}\left(\left\langle \Theta,C\right\rangle -H_{\textrm{Bethe}}(C)\right);
\end{eqnarray*}
 The term inside of the minimisation on the right is the Bethe free
energy \citep{bethe}. By equating with the non-approximate likelihood,
it can be seen that the log partition function is being approximated
as: 
\[
\log Z(\Theta)=-\min_{C}\left(\left\langle \Theta,C\right\rangle -H_{\textrm{Bethe}}(C)\right).
\]
The value of $\log Z(\Theta)$ and a (locally) minimising $C$ can
be found efficiently using belief propagation \citep{cseke}. The
$C$ found by belief propagation is the gradient of $\log Z(\Theta)$,
so we can use a gradient based method on this approximate maximum
likelihood objective.

For diagonally dominant $\Theta$ belief propagation can be shown
to always converge \citep{WeissBP}. The diagonal constraints as well
as the non-positivity constraints on the off diagonal elements of
$\Theta$ ensure diagonal dominance in this case.

Maximum likelihood objectives for undirected graphical models are
typically optimised using quasi-Newton methods, and that is the approach
we took here. The diagonal constraints are easily handled by variable
substitution, and the non-positivity constraints are simple box constraints.
We used the L-BFGS-B algorithm \citep{lbfgsb} -- a quasi-Newton method
that supports such constraints. The log-partition function $\log Z(\Theta)$
is convex if we are able to exactly solve the inner minimisation over
$C$, which is not the case in general.

\section{Collaborative Filtering}

We will illustrate the application of our learning algorithm on the
collaborative filtering (CF) problem domain. CF is the sub-field of
recommendation systems that makes use of information from all users
that interact with a system in order to make personalised recommendations
for each individual user. Early recommendation systems treated each
user individually, but virtually all modern recommendation systems
fall within the collaborative filtering category. We will use the
standard setup, where a history of past \emph{ratings} are known for
each user. Each user $u$ is associated with a set $R_{K}$ of items,
for which the user's rating $r_{ui}$, $i\in R_{K}$ is known. Ratings
in our test problems are ordinal values in the range 1 to 5.

\section{The Item Graph}

\label{sec:item-graph}

We begin by introducing the foundations of graph based collaborative
filtering. We are given a set of users and items, along with a set
of real valued ratings of items by users. Classical item neighbourhood
methods \citep{item-methods} learn a graph structure over items $i=0\dots N-1$,
along with a set of edge weights $s_{ij}\in\mathbb{R}$, so that if
a query user $u$ is presented, along with his ratings for the neighbours
of item $i$ ($r_{uj},$ $j\in ne(i)$), the predicted rating of item
$i$ is 
\begin{equation}
r_{ui}=\mu_{i}+\frac{\sum_{j\in ne(i)}s_{ji}\left(r_{uj}-\mu_{j}\right)}{\sum_{j\in ne(i)}s_{ji}},\label{eq:cf-classical-rule}
\end{equation}
where $\mu_{i}\in\mathbb{R}$ represent average ratings for that item
over all users. When the rating of all neighbouring items is not known,
then the known subset is used in the prediction rule. See Figure \ref{fig:blade-runner}
for an example of an actual neighbourhood for a movie recommendation
system.

\begin{figure}[t]
\centering{}\includegraphics[width=1\columnwidth]{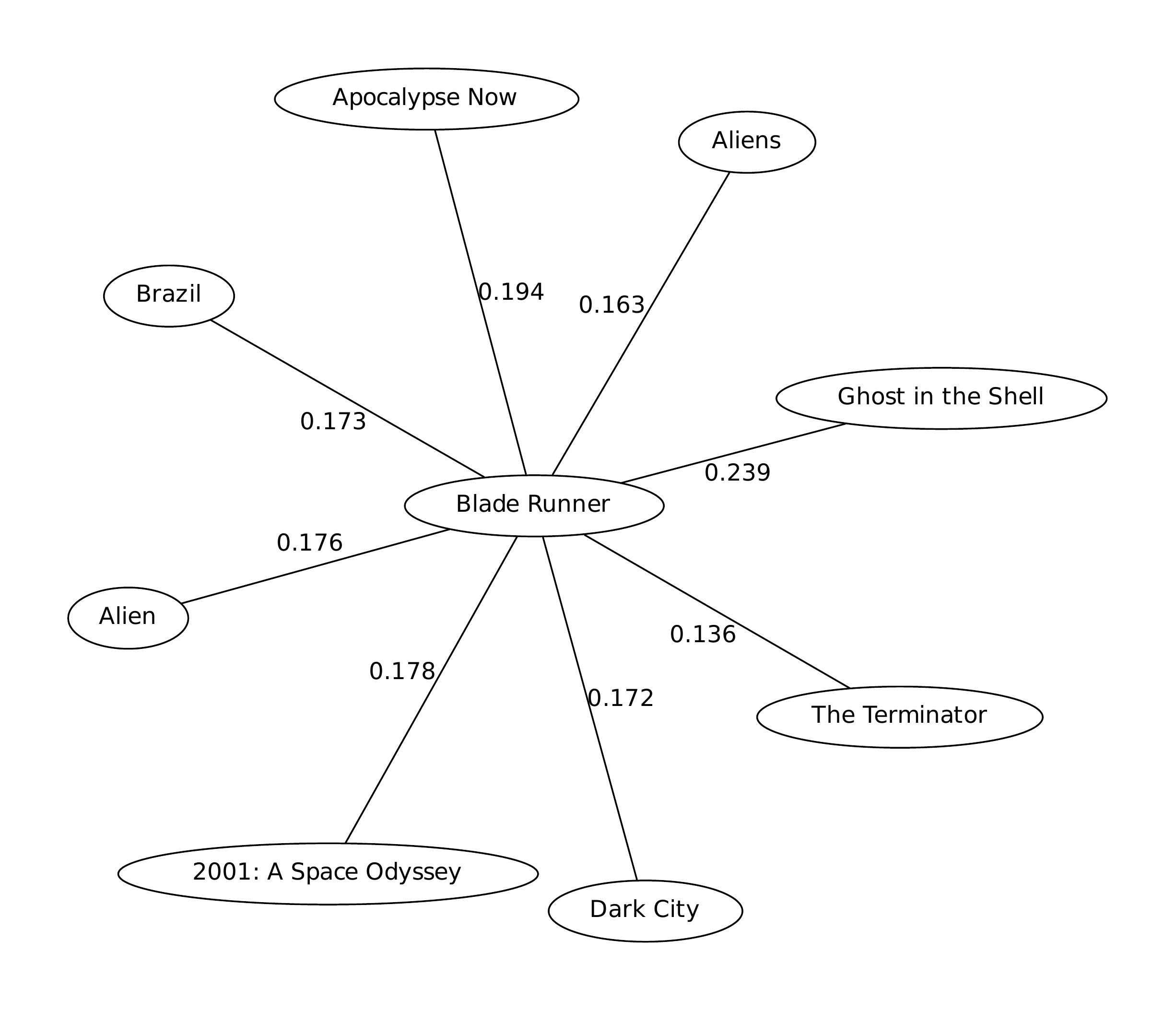}
\protect\caption{\label{fig:blade-runner}Neighbourhood of ``Blade Runner'' found
using the item field method (Section \ref{sec:item-field}) on the
MovieLens 1M dataset}
\end{figure}

In order to use the above method, some principle or learning algorithm
is needed to choose the neighbour weights. The earliest methods use
the Pearson correlation between the items as the weights. In our notation,
the Pearson correlation between two items is defined as 
\[
s_{ij}=\frac{\sum_{u}(r_{ui}-\mu_{i})(r_{uj}-\mu_{j})}{\sqrt{\sum_{u}(r_{ui}-\mu_{i})^{2}}\sqrt{\sum_{u}(r_{uj}-\mu_{j})^{2}}}.
\]
The set of neighbours of each item is chosen as the $k$ most similar,
under the same similarity measure used for prediction. More sophisticated
methods were developed for the NetFlix competition, including the
work of \citet{bellkor}, which identified the following problems
with the above: 
\begin{itemize}
\item Bounded similarity scores can not handle deterministic relations; 
\item Interactions among neighbours are not accounted for, which greatly
skews the results; 
\item The weights $s_{ij}$ cause over-fitting when none of the neighbours
provide useful information. 
\end{itemize}
\emph{Learning} the weights $s_{ij}$ under an appropriate model can
alleviate all of these problems, and provide superior predictions
\citep{bellkor}, with the only disadvantage being the computational
time required for training the model. Learning the neighbourhood structure
for such a model is not straightforward due to the potentially quadratic
number of edges. In this work we take the approach used by other neighbourhood
methods, and assume that the neighbourhood structure is chosen by
connecting each item to its $k$ most similar neighbours, using Pearson
correlation as the similarity measure. We denote this undirected edge
set $E$. The approach detailed in Chapter \ref{chap:approx-covsel}
would give a more principled way of choosing the edge set, but would
make comparison against results in the existing literature harder.

\subsection{Limitations of previous approaches}

The classical prediction rule in Equation \ref{eq:cf-classical-rule}
has a number of disadvantages. Suppose we wish to predict a rating
for a user $u$ on item $i$. If user $u$ has only rated a small
number (possibly 0) of the items neighbouring item $i$, then the
classical prediction rule gives poor results. Ideally in such cases
we want to use indirect information, say from distance 2 or 3 in the
graph, in order to make the predictions. Another limitation of the
classical approach is that it gives points estimates of the predicted
value. Ideally we would like to take a probabilistic approach, and
output a distribution over the possible values.

We show in the next section that the restricted Gaussian class of
models yields a model that fixes both these deficiencies. In fact
it gives a prediction rule that generalises the classical one. When
ratings for all items in a neighbourhood are known, it uses the classical
prediction rule with weights $s_{ji}=\Theta_{ji}$:

\begin{equation}
r_{ui}=\mu_{i}+\frac{\sum_{j\in ne(i)}s_{ji}\left(r_{uj}-\mu_{j}\right)}{\sum_{j\in ne(i)}s_{ji}}.\label{eq:local-pred}
\end{equation}

However, when fewer items are known the unknown items are marginalised
out in the probability model, yielding a well-founded prediction rule
that captures the additional uncertainty in such cases.

\section{The Item Field Model}

\label{sec:item-field}

We will now define a natural pairwise log-linear model over the item
graph structure. Recall that a log-linear model is defined by a set
of feature functions over subsets of variables, where the functions
return a local measure of compatibility. For pairwise models, these
subsets are just pairs of neighbouring variables in the graph structure.
In our case the set of variables is simply the set of items, whose
values we treat as continuous variables in the range $1$ to $5$.
For any particular user, given the set of their items ratings ($R_{K}$),
we will predict their ratings on the remaining items ($R_{U}$) by
conditioning on this distribution, namely computing expectations over
$P(R_{U}|R_{K})$.

Recall the form of a pairwise log-linear model (Section \ref{sec:log-linear-intro})
: 
\[
P(r;\Theta)=\frac{1}{Z(\Theta)}\exp\left(-\sum_{i\in V}\Theta_{ii}f_{i}(r_{i})-\sum_{(i,j)\in E}\Theta_{ij}f_{ij}(r_{i},r_{j})\right).
\]
Here $\Theta$ is a matrix of features weights, and $Z$ is the partition
function, whose value depends on the parameters $\Theta$, and whose
purpose is to ensure the distribution is correctly normalised. We
want functions that encourage smoothness, so that a discrepancy between
the ratings of similar items is penalised. We propose the use of (negated)
squared difference features: 
\begin{eqnarray*}
f_{ij}(r_{i},r_{j}) & = & -\frac{1}{2}((r_{i}-\mu_{i})-(r_{j}-\mu_{j}))^{2}.
\end{eqnarray*}
These features, besides being intuitive, have the advantage that for
any choice of parameters $\Theta$ we can form a Gaussian log-linear
model that defines that same distribution.

In a Gaussian log-linear model, pairwise features are defined for
each edge as $f_{ij}(r_{i},r_{j})=(r_{i}-\mu_{i})(r_{j}-\mu_{j})$,
and unary features as $f_{i}(r_{i})=\frac{1}{2}(r_{i}-\mu_{i})^{2}$.
The pairwise feature weights $\Theta_{ij}$ correspond precisely with
the off diagonal entries of the precision matrix (that is, the inverse
of the covariance matrix). The unary feature weights $\theta_{i}$
correspond then to the diagonal elements. Now notice that we can write
the squared difference features as:
\[
f_{ij}(r_{i},r_{j})=-\frac{1}{2}(r_{i}-\mu_{i})^{2}+(r_{i}-\mu_{i})(r_{j}-\mu_{j})-\frac{1}{2}(r_{j}-\mu_{j})^{2}.
\]
Thus, we can map the squared difference features to a constrained
Gaussian log-linear model, where the diagonal elements are constrained
so that for all $i$, 
\[
\Theta_{ii}=-\sum_{j\in ne(i)}\Theta_{ji}.
\]
As discussed in Section \ref{sec:cf-model-class}, we impose an additional
constraint on the allowable parameters, that each off-diagonal element
$\Theta_{ij}$ is non-positive. Our choice of squared-difference features
was also motivated by our model class restriction; As we just showed,
they yield an actively diagonally dominant $\Theta$ matrix.

\section{Prediction Rule}

\label{sec:prediction}

\begin{figure}[t]
\centering{}\includegraphics[width=1\columnwidth]{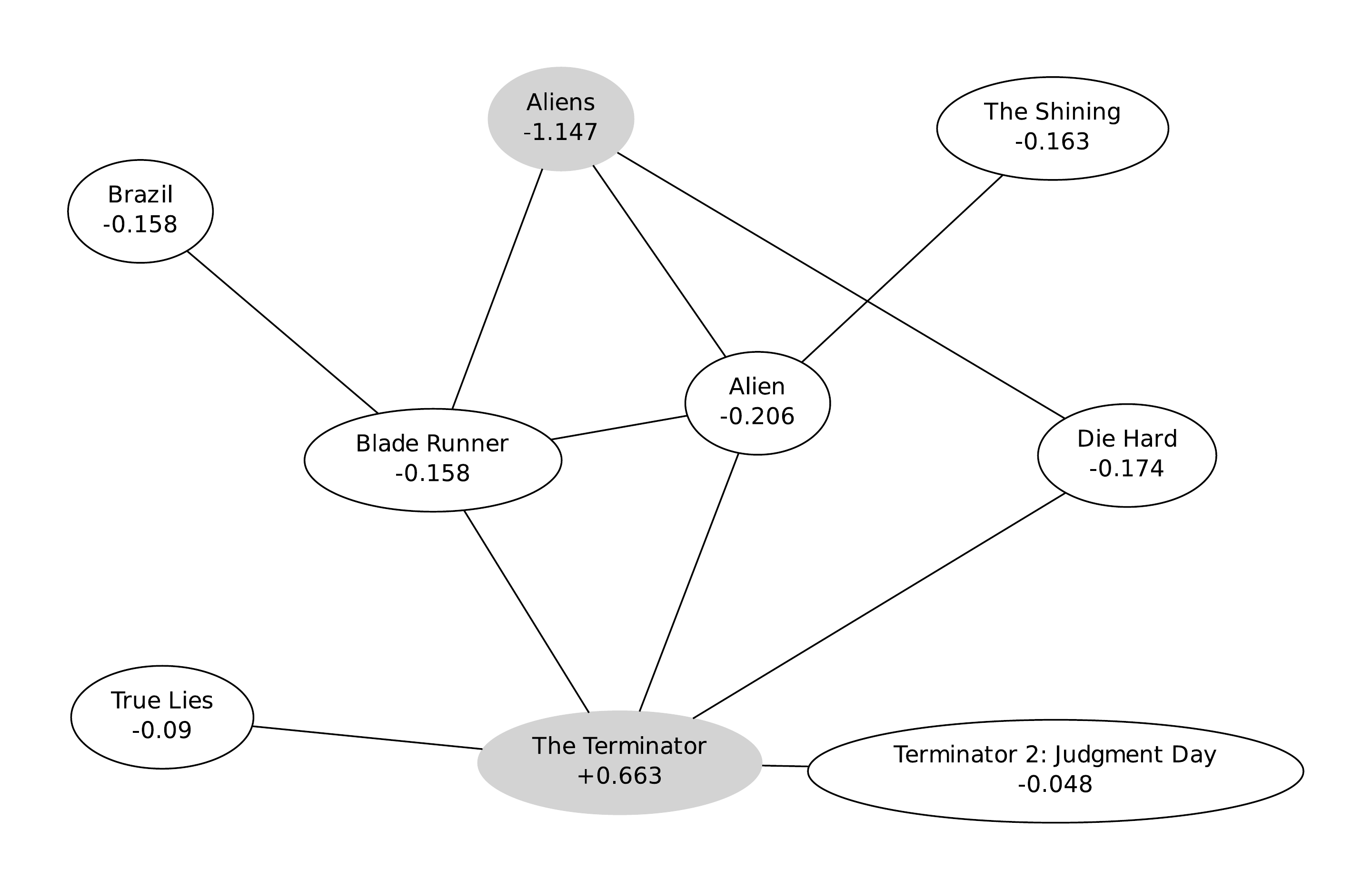}
\protect\caption{\label{fig:diffusion}Diffusion of ratings, shown as deltas from the
item means, for a user that has rated only the shaded items. A hand
chosen subgraph of the item graph generated for the 100K MovieLens
dataset is shown.}
\end{figure}

The feature functions defined in the previous section seem the natural
choice under our computational considerations. We now derive the associated
predictive distributions under that model. Consider the case of predicting
a rating $r_{ui}$, where for user $u$ all ratings $r_{uj}$, $j\in\textrm{ne}(i)$
are known. These neighbours form the Markov blanket of node $i$.
Using standard formulas for manipulating Gaussian distributions \citep[p86 Eq 2.73/2.75]{bishop},
we find that the conditional distribution under the item field model
is: 
\[
\mathcal{N}\left(r_{ui}\,;\,\mu_{i|-i},\frac{1}{\sigma^{2}}=\sum_{j\in ne(i)}\Theta_{ji}\right),\textrm{ where}
\]
\[
\mu_{i|-i}=\mu_{i}-\frac{\sum_{j\in ne(i)}\Theta_{ji}\left(r_{uj}-\mu_{j}\right)}{\sum_{j\in ne(i)}\Theta_{ji}}.
\]
This is a univariate Gaussian distribution, whose mean matches Equation
\ref{eq:local-pred}, the traditional neighbourhood method prediction
rule. In practice we rarely have ratings information for each item's
complete neighbourhood, so this special case is just for illustrating
the link with existing approaches.

In the general case, conditioning on a set of items $K$ with known
ratings $r_{K}$, with the remaining items denoted $U$, we have:
\[
\mathcal{N}\left(r_{U}\,;\,\mu_{U|K},\Theta_{UU}\right),\textrm{ where}
\]
\[
\mu_{U|K}=\mu_{U}-\left[\Theta_{UU}\right]^{-1}\Theta_{UK}\left(r_{K}-\mu_{K}\right).
\]
Thus computing the expected ratings requires nothing more than a few
fast sparse matrix operations, including one sparse solve. If the
prediction variances are required, both the variances and the expected
ratings can be computed using belief propagation, which often requires
fewer iterations than the sparse solve operation \citep{ISIT1}.

The linear solve in this prediction rule has the effect of diffusing
the known ratings information over the graph structure, in a transitive
manner. Figure \ref{fig:diffusion} shows this effect which is sometimes
known as spreading activation. Such transitive diffusion has been
explored previously for collaborative filtering, in a more ad-hoc
fashion \citep{associative-retrieval}.

Note that this prediction rule is a bulk method, in that for a particular
user, it predicts their ratings for all unrated items at once. This
is the most common case in real world systems, as all item ratings
are required in order to form user interface elements such as top
100 recommendation lists.

\section{Experiments}

\label{sec:colab-experiments}

For our comparison we tested on 2 representative datasets. The 1M
ratings MovieLens dataset\footnote{\url{http://grouplens.org/datasets/movielens/}}
consists of 3952 items and 6040 users. As there is no standard test/training
data split for this dataset, we took the approach from \citet{matchbox},
where all data for 90\% of the users is used for training, and the
remaining users have their ratings split into a 75\% training set
and 25\% test set. The 100K ratings MovieLens dataset involves 1682
items and 943 users. This dataset is distributed with five test/train
partitions for cross validation purposes which we made use of.

All reported errors use the mean absolute error (MAE) measure ($\frac{1}{N}\sum_{i}^{N}|\mu_{i}-r_{i}|$).
All 2 datasets consist of ratings on a discrete 1 to 5 star scale.
Each method we tested produced real valued predictions, and so some
scheme was needed to reduce the predictions to real values in the
interval $1$ to $5$. For our tests the values were simply clamped.
Methods that learn a user dependent mapping from the real numbers
into this interval have been explored in the literature \citep{matchbox}.

Table \ref{tab:results} shows the results for the two MovieLens datasets.
Comparisons are against our own implementation of a classical cosine
neighbourhood method \citep{item-methods}; a typical latent factor
model (similar to \citet{sfunk} but with simultaneous stochastic
gradient descent for all factors) and the neighbourhood method from
\citet{koren2010} (the version without latent factors), which uses
a non-linear least squares objective. All implementations were in
python (using cython compilation), so timings are not comparable to
fast implementations. We also show results for various methods of
training the item field model besides the maximum entropy approach,
including exact maximum likelihood training.

The item field model outperforms the other neighbourhood methods when
sparse (10 neighbour) models are used. Increasing the neighbourhood
size past roughly 10 actually starts to degrade the performance of
the item field model: at 50 neighbours using maximum entropy training
on the 1M dataset the MAE is 0.6866 v.s. 0.6772 at 10 neighbours.
We found this occurred with the other training methods as well. This
may be caused by an over-fitting effect as restricting the number
of neighbours is a form of regularisation.

The latent factor model and the least squares neighbourhood model
both use stochastic gradient descent for training. They required looping
over the full set of training data each iteration. The maximum entropy
method only loops over a sparse item graph each iteration which is
why it is roughly two thousand times faster to train. Note that the
dataset still has to be processed once to extract the neighbourhood
structure and covariance values, the timing of which is indicated
in the precomputation column. This is essentially the same for all
the neighbourhood methods we compared. In an on-line recommendation
system the covariance values can be updated as new ratings stream
in, so the precomputation time is amortised. Training time is more
crucial as multiple runs from a cold-start with varying regularisation
are needed to get the best performance (due to local minima).

\begin{table*}[t]
\centering{}%
\begin{tabular}{lccr}
 Method  & MAE  & Precomputation (s)  & Training (s) \tabularnewline
\hline 
100K MovieLens  &  &  & \tabularnewline
\cline{2-4} 
 \hspace*{2em} Maximum Entropy (k=10)  & 0.7384  & 18.9  & 0.12 \tabularnewline
\hspace*{2em} Cosine Neigh. (k=10)  & 0.8107  & 18.7  & 0 \tabularnewline
\hspace*{2em} Bethe Maximum Likelihood (k=10)  & 0.7390  & 18.9  & 28 \tabularnewline
\hspace*{2em} Exact Maximum Likelihood (k=10)  & 0.7398  & 18.9  & 99 \tabularnewline
\hspace*{2em} Least Squares Neighbours (k=10)  & 0.7510  & 18.9  & 90 \tabularnewline
\hspace*{2em} Maximum Entropy (k=50)  & 0.7439  & 19  & 1.7 \tabularnewline
\hspace*{2em} Least Squares Neighbours (k=50)  & 0.7340  & 19  & 288 \tabularnewline
\hspace*{2em} Latent Factor Model (50 factors)  & 0.7321  & 0  & 215 \tabularnewline
 1M MovieLens  &  &  & \tabularnewline
\cline{2-4} 
 \hspace*{2em} Maximum Entropy (k=10)  & 0.6772  & 514  & 0.64 \tabularnewline
\hspace*{2em} Cosine Neigh. (k=10)  & 0.7421  & 513  & 0 \tabularnewline
\hspace*{2em} Bethe Maximum Likelihood (k=10)  & 0.6767  & 514  & 75 \tabularnewline
\hspace*{2em} Exact Maximum Likelihood (k=10)  & 0.6755  & 514  & 2795 \tabularnewline
\hspace*{2em} Least Squares Neighbours (k=10)  & 0.6826  & 514  & 1369 \tabularnewline
\hspace*{2em} Maximum Entropy (k=50)  & 0.6866  & 517  & 7.17 \tabularnewline
\hspace*{2em} Least Squares Neighbours (k=50)  & 0.6756  & 517  & 4551 \tabularnewline
\hspace*{2em} Latent Factor Model (50 factors)  & 0.6683  & 0  & 4566 \tabularnewline
\hline 
\end{tabular}\protect\caption{\label{tab:results}Comparison of a selection of models against the
item field model with approximate maximum likelihood, exact maximum
likelihood and maximum entropy approaches}
\end{table*}

\section{Related Work}

\label{sec:colab-related-work}

There has been previous work that applies undirected graphical models
in recommendation systems. \citet{rbm} used a bipartite graphical
model, with binary hidden variables forming one part. This is essentially
a latent factor model, and due to the hidden variables, requires different
and less efficient training methods than those we apply in the present
paper. They apply a fully connected bipartite graph, in contrast to
the sparse, non-bipartite model we use. Multi-scale conditional random
fields models have also been applied to the more general social recommendation
task with some success \citep{multi-scale}. Directed graphical models
are commonly used as a modelling tool, such as in \citet{prob-factor}.
While undirected models can be used in a similar way, the graph structures
we apply in this work are far less rigidly structured.

Several papers propose methods of learning weights of a neighbourhood
graph \citep{koren2010} \citep{bellkor}, however our model is the
first neighbourhood method we are aware of which gives distributions
over its predictions. Our model uses non-local, transitive information
in the item graph for prediction. Non-local neighbourhood methods
have been explored using the concept of spreading activation, typically
on the user graph \citep{constrained-spreading} or on a bipartite
user-item graph \citep{bipartite-spreading}.

The closest probabilistic approach to ours is the work of \citealt{pref-networks}.
They form an undirected network with one node per user/item pair,
instead of the one node per item approach that characterises neighbourhood
methods. Instead of the squared difference features we use, they apply
absolute deviation features. For learning they apply a pseudo-likelihood
learning approach together with stochastic gradient descent.

\section{Extensions}

There are a few clear avenues for extension of the collaborative filtering
model we have described.

\subsection{Missing Data \& Kernel Functions}

\label{sec:kernel}

The training methods proposed take as input a sparse subset of a covariance
matrix $\Sigma$, which contains the sufficient statistics required
for training. It should be emphasised that we do not assume that the
covariance matrix is sparse, rather our training procedure only needs
to query the entries at the subset of locations where the precision
matrix is assumed to be non-zero.

As our samples are incomplete (we do not know all item ratings for
all users), the true covariance matrix is unknown. For our purposes,
we form a covariance matrix by assuming the unrated items are rated
at their item mean. More sophisticated methods of imputation are possible;
we explored an Expectation-Maximisation (EM) approach, which did not
result in a significant improvement in the predictions made. It did
however give better prediction covariances.

In general a kernel matrix can be used in place of the covariance
matrix, which would allow the introduction of item meta-data through
the kernel function. We left this avenue for future work.

\subsection{Conditional Random Field Variants}

\label{sec:crf} Much recent work in Collaborative filtering has concerned
the handling of additional user meta-data, such as age and gender
information usually collected by on-line systems \citep{matchbox}.
These attributes are naturally discrete, and so integrating them as
part of the MRF model results in mixed discrete/continuous model.
Approximate inference in such as model is no longer a simple linear
algebra problem, and convergence becomes an issue. User attributes
are better handled in a conditional random field (CRF) model, where
the conditional distributions involve the continuous item variables
only.

Unfortunately the optimisation technique described above does not
extend readily to CRF models. Approximate maximum entropy training
using Difference-of-convex methods has been applied to CRF training
successfully \citep{camel}, although such methods are slower than
maximum likelihood. We explored CRF extensions using maximum likelihood
learning, and while they did give better ratings predictions, training
was slow due to the large number of belief propagation calls. While
practical if the computation is distributed, the training time was
still several hundred times slower than any of the other methods we
tested.

\chapter{Conclusion and Discussion}

\label{chap:conclusion}

We have covered a number of new algorithms in this work that address
open problems in the intersection of machine learning and numerical
optimisation. Although these algorithms are of wide applicability,
we have focused on a limited set of experiments to validate each method,
as this is primarily a theoretical thesis. In this chapter we will
attempt to put our contributions in a wider context, discussing related
problems and remaining open problems.

\section{Incremental Gradient Methods}

\begin{figure}
\includegraphics[width=1\textwidth]{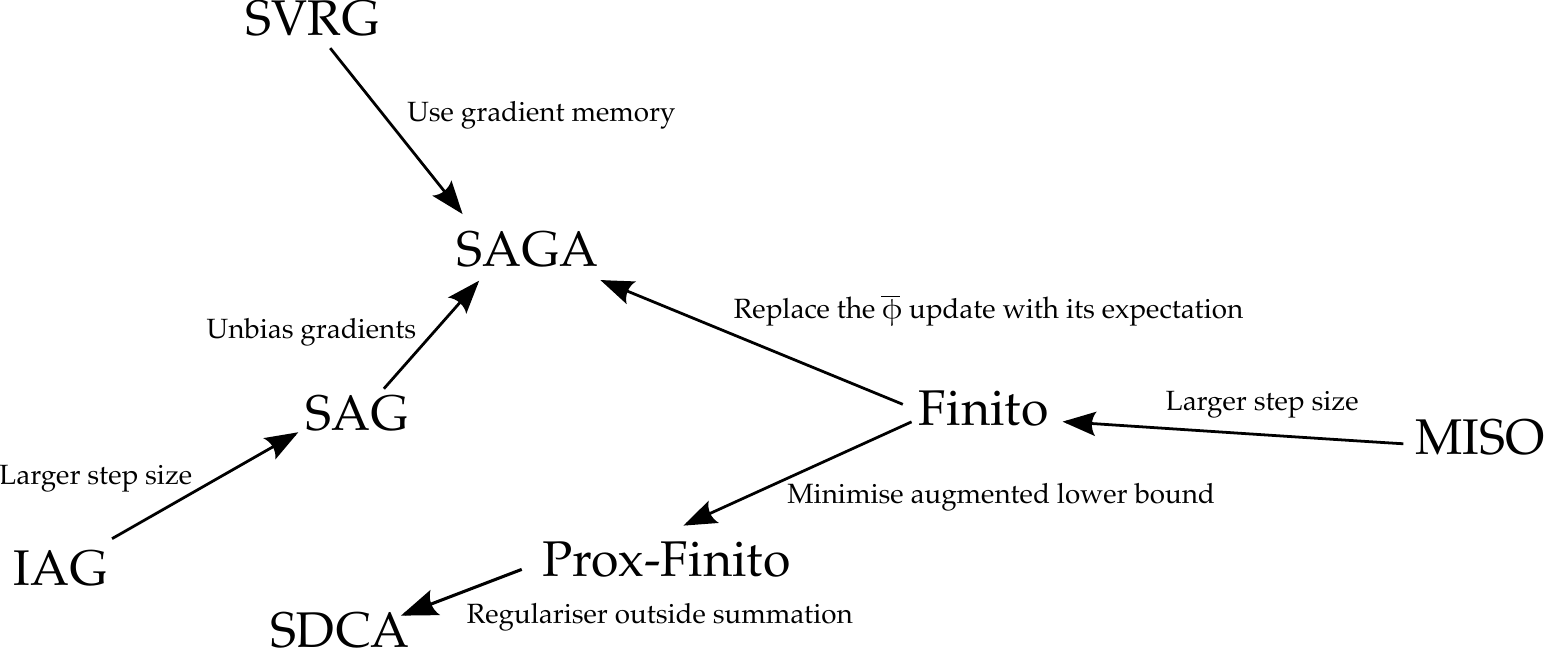}

\protect\caption{Incremental gradient methods}

\end{figure}

\subsection{Summary of contributions}

Our work on fast incremental gradient methods has tied together a
number of disparate methods. Our primary method, SAGA, has an interesting
role as a midpoint between two of the three major fast incremental
gradient methods. In relation to SAG, we showed that it is the natural
extension of SAG to having unbiased gradient estimates. The primary
problem with SAG is the difficulty of theoretical analysis of variants
of it. This has prevented the development of proximal and accelerated
versions of SAG. SAGA resolves this issue, having a much simpler and
more flexible analysis. In relation to SVRG, SAGA has been shown to
be a natural extension from its memory-less formulation to a version
with memory. The analysis techniques are similar but allow for improved
constants and a simplified convergence rate statement.

The SDCA method is not directly related to SAGA, but we have shown
an indirect relation, going from SAGA to Finito, then to Prox-Finito.
The later two methods were introduced in this work. This ties together
all the known fast incremental gradient methods as natural transformations
of each other.

The SAGA and Finito methods have a number of properties that make
them well suited to practical application. We established in our experiments
that none of the discussed methods has a clear advantage in all cases.
The choice in practice of what method to use depends more on the properties
of the methods than the actual speed of convergence. The Finito method
we introduced has a substantial advantage for dense valued problems
over the other methods. Unfortunately it is missing theory covering
the use of $L_{1}$ and other non-differentiable regularisers. The
existing SDCA method is favoured in strongly convex problems with
simple loss functions, but it is problematic (but not impossible)
to apply it on complex losses or non-strongly convex problems. The
SAGA method appears to be better than SAG in all situations covered
by the theory, but SAG can be sped-up by aggressive heuristics, whereas
such approaches have not yet be applied to SAGA. SAGA is the clear
method of choice for non-strongly convex problems. 

Besides the introduction of new incremental gradient methods, we have
also made contributions to their theoretical foundations. We discussed
the importance of strong convexity, and how it is the key that allows
faster convergence than traditional black-box methods. In finite sums,
for the purposes of optimisation the amount of strong convexity is
in a sense amplified by the number of terms in the sum. In convergence
rates $\mu n$ replaces $\mu$. In contrast, we established that without
additional restrictions non-strongly convex problems with summation
structures are no easier to optimise than the non-summation (black-box)
case.

We discussed in detail the importance of randomisation in fast incremental
gradient methods. The effect of randomisation appears to be a key
differentiator between fast and slow incremental gradient methods.
All known fast methods require randomisation to work, whereas slow
methods do not necessarily. For a number of methods, both fast and
slow methods are known that only differ by the step size used. Even
in those cases, the method will converge empirically and theoretically
without randomisation for small steps sizes, whereas only the randomised
variant will converge with large step sizes. We have provided a further
example of this, by establishing theoretically for the small-step
size case of Finito (MISO) a rate of a convergence for randomised
and non-randomised versions.

\subsection{Applications}

The fast incremental gradient methods we have described are primarily
of interest in statistics and machine learning problems, for a number
of reasons. Recall that the Lipschitz constant in their proven convergence
rates depends on the properties of each individual term $f_{i}$,
and so is different from the whole loss Lipschitz smoothness constant
$L_{f}$, which applies to the average $\frac{1}{n}\sum f_{i}$. The
applicability of fast incremental gradient methods depends on how
similar the full gradient Lipschitz constant is to the term one $L$.
It is when they are of comparable magnitude that methods like SAGA
are most effective. Problems in engineering and physics such as discretisations
of partial differential equations unfortunately do not usually have
the required uniformity. 

There is an interesting middle ground of problems for which both coordinate
descent methods and incremental gradient methods could be applicable.
These are the problems that have a summation structure where each
term in the summation only involves a small number of variables. An
example is MAP inference in graphical models, where terms correspond
to factors in the graphical model \citep[Sec. 13.5]{koller-friedman}.
We are not aware of any work applying fast incremental gradient methods
to such problems so far.

In machine learning most state-of-the-art methods use non-convex loss
minimisation objectives. For the primal incremental gradient methods
there is no principled reason why they can not be applied to such
problems. Any theoretical guarantees are likely to be weak though.
Some theoretical results appear in the literature for the MISO method
already \citep{miso2}. There is also some very preliminary experiments
apply the SVRG method to neural networks \citep{svrg}. The primary
difficulty with applying most fast incremental gradient methods is
the storage requirements. Other than SVRG, all the known methods require
storing a set of gradients, one per datapoint, which for neural networks
and other popular models can be on the order of thousands of times
larger than the input datapoints. Storing such gradients slows the
optimisation down by as much as 50 times\footnote{The slowdown is roughly proportional to the ratio of memory bandwidth
to computational throughput on the hardware used, sometimes called
the\emph{ machine balance\citep{balance}. }For reference, an Intel
i7 processor has roughly 40-100 gigaflops computational throughput
versus roughly 10 gigabytes per second memory bandwidth.} compared to SGD, making application fairly impractical. For SVRG,
the computational overhead is only between 2-3 times, making it an
attractive option that deserves further experimental validation.

We predict a large volume of literature in the future on the application
of various fast incremental gradient methods in areas outside of the
loss minimisation framework. The ease of implementation of the SAG/SAGA/SVRG/Finito
methods is likely to spur their future use, just as it has for the
classical stochastic gradient descent method. The SDCA method is not
well presented in the current literature, which may limit its future
application, although it is actually the most suited to be embedded
within software libraries due to the minimal tuning it requires. It
is already in use within the popular \emph{liblinear} library \citep{liblinear}
for instance.

\subsection{Open problems}

The fast incremental gradient framework suggests a number of interesting
open problems, a few of which have already been discussed in Section
\ref{sub:inc-open-prob}. The understanding of randomness is really
the key problem. All the currently know fast incremental gradient
methods require that data-points be accessed randomly both for their
theoretical and practical convergence rates. Is this a fundamental
limitation or can it be overcome? The current lower complexity bounds
are deterministic, but perhaps more sophisticated lower complexity
bounds are possible. The understanding of permuted (without-replacement)
random access rates is of equal interest. Even for classical SGD there
is no theory supporting the use of without-replacement sampling. For
some methods it gives faster convergence, and for some methods it
causes slower convergence. What property of these methods induces
such behaviour? It is likely new analytical tools will be needed to
fully understand such permuted orders, as the standard expected case
analysis techniques fail.

The randomness in each fast incremental gradient method is not necessarily
a problem in practice. When we have a large number of data-points,
the random effects are sufficiently averaged over that convergence
is quite predictable. Perhaps the biggest problem is an implementation
one. Modern hardware is much slower under random access patterns than
in-order patterns. Fortunately, for many large dense problems, the
latency induced by random access is of the order of 100 nanoseconds
\footnote{See \url{https://software.intel.com/en-us/articles/intelr-memory-latency-checker}},
and is small compared to the cost of the gradient evaluation at each
step, which can be >1000 nanoseconds. For sparse problems, there may
only be 50 or less active elements in a data-point vector, and so
that 100 nanosecond delay will result in the method being up-to 5
times slower than in-order access. The development of non-randomised
incremental gradient methods may fix this problem, but at the moment
the best solution is to use hardware data pre-caching using a separate
thread. As far as we are aware, for Intel processors this is only
possible on processors with hyper-threading, and is very difficult
to implement correctly.

The acceleration of incremental gradient methods is a major open problem.
Most optimisation methods can be accelerated using a double-loop procedure,
and this is approach taken in the ASDCA method. We believe the same
techniques are applicable to the other fast incremental gradient methods.
Such double-loop constructions are unsatisfying though. They are particularly
fragile to the values of the Lipschitz and strong convexity parameters,
and they introduce a $\log$ term to the convergence rate which is
almost certainly improvable. The development of simple single loop
primal incremental gradient methods is the next logical step, and
we expect so see them appear over the next few years.

The requirement of the knowledge of the strong convexity or Lipschitz
constants is an issue with incremental gradient methods. This is not
an issue with regular gradient descent, where a line-search can be
used instead of knowledge of the Lipschitz smoothness constant. Gradient
descent is also automatically adaptive to the level of strong convexity
of the problem, just like SAG and SAGA. The accelerated gradient descent
method can also be made to work without prior knowledge of the Lipschitz
and strong convexity constants, however a rather unsatisfying procedure
involving keeping estimates of the strong convexity constant during
the course of optimisation is required \citep{doublelinesearch}.
While such an approach does work, an approach that doesn't explicitly
estimate $\mu$ would be a huge improvement.

It would be a remarkable development to see a primal fast incremental
gradient method that didn't require knowledge of the Lipschitz constant
or estimation of it. Likewise, for the dual case a method that doesn't
require explicit knowledge of the amount of regularisation (or running
estimates of it) would be an improvement.

We have largely ignored the case of non-differentiable loss functions
in this work. The case of non-differentiable regularisers is easy
to cover using proximal methods, such as we do for the SAGA method,
however for primal methods at least, it is not clear how to handle
non-differentiable losses. For dual methods, the non-differentiable
case is quite straight-forward, and a $O(1/k)$ rate is established
for the convergence of SDCA on $L_{2}$ regularised problems. This
is the same rate known for black-box methods. We would like to see
a primal method that makes use of the proximal operator of the loss
in order to achieve a similar rate, likely using techniques from the
theory of mirror-descent and primal-dual optimisation.

Distributed optimisation has been a hot topic in optimisation research
in the last few years. The theory of distributed SGD has greatly expanded,
both with its use in gigantic applications and with theoretical advances.
There has been some preliminary results on the application of SDCA
in a distributed setting \citep{dist-SDCA}. It is not clear in what
settings an improvement in convergence rate is possible for incremental
gradient methods, especially primal ones. The theoretical analysis
is complicated by the need for a model of distributed machines, which
tends to require specific assumptions about hardware, latency and
network topology. A simple map-reduce style framework is provably
not sufficient.

\section{Learning Graph Models}

\subsection{Summary of contributions}

This thesis introduced new methods for structure and parameter learning
for graphical models. We primarily focused on the Gaussian graphical
model case, where the goal is to learn the direct dependencies between
variables in a dataset in the form of a graph structure over the variables.
For the task of learning the structure, we proposed the SHORTCUT method,
a modification of the existing Conditional Covariance Thresholding
(CCT) method which has an improved expected case running time, reduced
from $O(n^{3})$ to $O(n^{2.5})$.

Given a fixed known structure, we also proposed a new method for learning
the parameters (edge weights) in a model. It uses the Bethe approximation,
together with dual decomposition followed by coordinate descent. Our
formulation is careful to restrict the class of models that can be
learned to those for which later inference tasks using the model are
tractable.

For more advanced structure learning, we proposed a method that can
make use of prior knowledge that the true graph structure is scale-free.
This method makes use of sophisticated submodular regularisation techniques
to ensure the resulting optimisation problem is convex. 

In combination, the methods we proposed comprise are a powerful tool-set
for Gaussian graphical model learning. We have addressed both very
large scale learning via approximations as well as improvements to
the reconstruction quality for smaller scale models.

\subsection{Applications}

Learning graph structures is common outside of machine learning for
the purpose of data analysis. Graph structures are one of the primary
ways to visualise the dependencies between variables in a dataset.
A common application of Gaussian graphical model learning is in bioinfomatics,
where the interactions between gene expressions are modelled as a
graph structure (Davidson and Levin, 2005) \nocite{genenetworks}.
We included such a dataset in our experiments. Another potential application
is in modelling social network structures which are not observed directly.
For instance, the propagation of reposts of news articles between
sites can be modelled in such as way \citep{info-diffusion}. 

Graph structure learning can sometimes be used in place of clustering.
When a strong $L_{1}$ regulariser is used, the resulting graph can
be unconnected, with the components playing the roles of clusters.
As an example, in political science the voting decisions of politicians
are sometimes modelled as a unconnected graph structure \citep{ussenate},
and our methods could be applied to such networks as well.

We tested our parameter learning method on a recommendation system
problem, where we modelled a set of movies with a graph structure.
The edge weight between two movies $(i,j)$ indicated how predictive
the rating for movie $i$ by a user is on for their rating of movie
$j$, and vice-versa. Another approach to recommendation is to instead
form a graph over users instead of items. Such user graph approaches
were used in some of the first recommendation systems, but when large
amounts of item ratings information is available, they proved less
effective than item graph approaches. User graph approaches are making
a resurgence lately \citep{graph-social-rec}, as we now have known
graph structures such as the user graph in FaceBook that previously
we had to estimate. Additionally, the user graph structure has greater
utility when we don't have large numbers of numerical ratings for
items. For instance, the small numbers of Boolean ratings that result
from ``likes'' on social network posts \citep{sanner-colab}. Our
parameter learning algorithm could potentially be used for large-scale
learning on such networks.

\subsection{Open Problems}

Despite their basis in well understood Gaussian distribution theory,
Their are still many open problems relating to Gaussian graphical
models. Scaling the $L_{1}$ regularised maximum likelihood learning
problem up to larger problem instances is still an area of active
research. In theory fast matrix multiplication methods can be used
to also speed-up the inversion problem, but it is not clear if such
approaches will ever lead to practical algorithms. This would probably
require further advances towards the conjectured $O(n^{2+\epsilon})$
rate for fast matrix multiplication to occur.

It is well known that the Bethe approximation we apply for parameter
learning can be extended to an approximation over higher-order cliques.
It would be interesting to see if such an extension resulted in better
results in applications such as the recommender system example we
use. The dual decomposition approach may not be a practical approach
with higher-order cliques. 

We briefly mentioned extensions to CRF variants of the recommender
system ``item field'' model. The traditional \emph{collaborative
filtering} datasets we had access to did not have a lot of additional
information that could have been conditioned on, but in the \emph{social}
recommendation space large amounts of biographic information are often
available for each user. Extending our dual decomposition approach
to conditional models is an interesting possibility for future extension.

For our scale free network learning method, we make use of exponential
random graph models. These are still not well understood. Indeed,
just determining the expected degree distribution under an ERG model
currently requires the use of a slow MCMC sampling procedure. ERG
models are still a subject of active research, and we expect that
the classes of ERG models that induce scale free structures will be
better understood in the near future.

The ADMM learning procedure we use in our scale-free formulation is
not known to handle large problems well. In theory it should be possible
to adapt the current large-scale state-of-the-art Newton methods to
use my scale-free inducing regulariser instead of an $L_{1}$ regulariser.
It is not clear if this can be done without losing efficiently though.

\appendix

\chapter{Basic Convexity Theorems}

\label{chap:appendix-convexity}

The theory of convex analysis is large and we make no attempt to cover
it in depth in this appendix. We can get away with this sin as our
theory in the preceding chapters only concerns convex functions defined
on the whole of $\mathbb{R}^{d}$, mapping to the (non-extended) real
numbers $\mathbb{R}$. Such functions are particularly well behaved
as most of the complexity of analysis of convex functions occurs on
the boundary of their domain, or in regions where their values are
$\pm\infty$. For example, all such functions are continuous \citep[Corollary 10.1.1, ][]{rockafellar}.
Most of the theorems in this section are standard; we provide references
to textbooks for these results.

\section{Definitions}

\label{sec:appendix-defs}
\begin{itemize}
\item We use the angle bracket notation $\left\langle x,y\right\rangle $
to denote the inner product between two vectors $x$ and $y$.
\item The \textbf{proximal operator} $\text{prox}_{\gamma}^{f}(v)\colon\mathbb{R}^{d}\times\mathbb{R}^{d}$
is defined as: 
\[
\text{prox}_{\gamma}^{f}(v)=\min_{x}\left\{ \gamma f(x)+\frac{1}{2}\left\Vert x-v\right\Vert ^{2}\right\} .
\]

\item A function $f$ is \textbf{convex} if for all $x,y\in\mathbb{R}^{d}$
and $\alpha\in[0,1]$:
\[
f\left(\alpha x+(1-\alpha)y\right)\leq\alpha f(x)+(1-\alpha)f(y).
\]
Additionally, $f$ is \textbf{strictly convex} if this inequality
is strict.
\item A function $f$ is \textbf{strongly convex} with constant $\mu\geq0$
if for all $x,y\in\mathbb{R}^{d}$ and $\alpha\in[0,1]$:
\[
f\left(\alpha x+(1-\alpha)y\right)\leq\alpha f(x)+(1-\alpha)f(y)-\alpha\left(1-\alpha\right)\frac{\mu}{2}\left\Vert x-y\right\Vert ^{2}.
\]
when we say a function is strongly convex colloquially, we usually
mean that $\mu>0$. Typically, theorems about strongly convex functions
also hold for $\mu=0$, but sometimes become vacuous statements. 
\item A function $f$ has \textbf{Lipschitz continuous gradients} with constant
$L$, or equivalently is \textbf{$L$-smooth} if it is differentiable
and for all $x,y\in\mathbb{R}^{d}$:
\[
\left\Vert f^{\prime}(x)-f^{\prime}(y)\right\Vert \leq L\left\Vert x-y\right\Vert .
\]
We will virtually always make this assumption when working with differentiable
convex functions. 
\item The \textbf{condition number} of $f$ is the ratio $L/\mu$ of its
smoothness and strong convexity constants.
\item The \textbf{directional derivative} of a function at $x\in\mathbb{R}^{d}$
in direction $v\in\mathbb{R}^{d}$ is the quantity:
\[
\Delta_{v}f(x)=\lim_{h\rightarrow0}\frac{f(x+hv)-f(x)}{h}.
\]
This is related to the gradient through the relation $\Delta_{v}f(x)=\left\langle f^{\prime}(x),v\right\rangle $.
\end{itemize}

\section{Useful Properties of Convex Conjugates}

\label{sec:convex-conj}

The convex conjugate $f^{*}\colon\mathbb{R}^{d}\rightarrow\mathbb{R}^{+}$
of a convex function $f\colon\mathbb{R}^{d}\rightarrow\mathbb{R}$
is defined as:
\[
f^{*}(u)=\text{max}_{x}\left\{ \left\langle u,x\right\rangle -f(x)\right\} .
\]

We will call the set of points $u$ such that $f^{*}(u)<\infty$ the
dual space $X^{*}$ of $f$. The derivatives of a function are also
in $X^{*}$, so we can often think of dual points $u$ with finite
$f^{*}(u)$ as being derivatives of $f$ at some (unknown) location.
We can avoid the use of the extended real numbers if we restrict ourselves
to $X^{*}$ when evaluating $f^{*}$. Note also in our definition
of the convex conjugate we have used $\max$ instead of $sup$, as
they are equivalent for convex $f$ with the domain $\mathbb{R}^{d}$.

The following properties of convex conjugates are particularly useful:
\begin{enumerate}
\item $f^{*}$ is convex.
\item \textbf{Fenchel\textendash Moreau theorem} If $f$ is convex and continuous,
then the conjugate-of-the-conjugate (known as the biconjugate) is
the original function: 
\[
f=f^{**}.
\]

\item \label{enu:conj-grad-mapping}\textbf{The Legendre transform property}
For strictly convex differentiable functions, the gradient of the
convex conjugate maps a point in the dual space into the point at
which it is the gradient of. I.e.
\begin{equation}
f^{*\prime}(\left(f^{\prime}(x)\right)=x.\label{eq:Legendre}
\end{equation}
This result is quite surprising at first glance. It often lets one
transform proofs involving a function into a result about it's conjugate.
Note that when strict convexity does not hold, the conjugate may not
be differentiable at the given dual point $f^{\prime}(x)$. The value
$x$ is still in the sub-differential though. For general differentiable
functions $f$, a function $g$ whose gradient is the inverse map
of $f^{\prime}$ is known as a Legendre transform of $f$. For strictly
convex problems, Equation \ref{eq:Legendre} implies that $g(x)=f^{*}(x)$
for all $x\in X^{*}$, i.e. The Legendre transform is just the convex
conjugate restricted to $X^{*}$ where it is finite.\\
Another way of interpreting this result is to see the gradient and
the conjugate's gradient as the mappings to and from the dual space
respectively for a given function.
\item \textbf{Fenchel-Young Inequality} Given a dual point $u$ and a primal
point $x$, it holds that:
\[
f(x)+f^{*}(u)\geq\left\langle x,u\right\rangle .
\]
This inequality becomes an equality when $u=f^{\prime}(x)$, or for
non-differentiable functions, if $u$ is any subgradient at $x$:
\[
f(x)+f^{*}(f^{\prime}(x))=\left\langle f^{\prime}(x),x\right\rangle .
\]
This inequality is useful for computing the conjugate function value.
Together with the previous inequality, it is the main tool used to
convert proofs to their conjugate version.
\item \textbf{Moreau decomposition} For convex $f$, the proximal operator
of the convex conjugate function $f^{*}$ can be computed trivially
from the function's proximal operator using the following equality:
\[
\text{prox}_{\gamma}^{f^{*}}(x)=x-\gamma\text{prox}_{1/\gamma}^{f}(x/\gamma).
\]
This result is surprising, as the conjugate function value is not
necessarily this easy to compute.
\item \label{enu:conjugate-condition}If $f$ has Lipschitz continuous gradients
with constant $L$, then $f^{*}$ is strongly convex with constant
$1/L$. Likewise if $f$ is strongly convex with constant $\mu$,
then $f^{*}$ is Lipschitz smooth with constant $1/\mu$. It follows
that the condition number of $f^{*}$ is the same as $f$, namely
$L/\mu$.
\end{enumerate}

\section{Types of Duality}

\label{sec:duality}

The main use of convex conjugates is in forming the \emph{dual} \emph{problem}
of a given convex optimisation problem. This terminology can be ambiguous,
as there are several types of duals can potentially be formed. Here
we review three types:
\begin{description}
\item [{Fenchel duality}] Given a minimisation problem of the form $f(x)-g(x)$,
where $f$ is convex and $g$ is concave, the Fenchel dual problem
is:
\[
\max_{u}\left\{ g^{*}(u)-f^{*}(u)\right\} .
\]

\item [{Wolfe duality}] Given a constrained minimisation problem for the
function $f$ with constraints $g_{i}(x)\leq0$ for each $i=1\dots m$,
the Wolfe dual is the following:
\[
\max_{x,u_{i}}f(x)+\sum_{j}^{m}u_{j}g_{j}(x),
\]
\[
\text{s.t.}\;f^{\prime}(x)+\sum_{j}^{m}u_{j}g_{j}(x)=0,
\]
\[
u_{i}\geq0,\quad i=1,\dots,m.
\]
This is the least practical of the discussed duals to work with. The
objective is biconvex, and the constraints potentially non-convex.
\item [{Lagrangian duality}] Given a constrained minimisation problem
for the function $f$ with constraints $g_{i}(x)\leq0$ for each $i=1\dots m$,
the Lagrange dual is the following:
\[
\max_{u}\;\min_{x}\left(f(x)+\sum_{j}^{m}u_{j}g_{j}(x)\right),
\]
\[
\text{s.t. }u_{i}\geq0,\quad i=1,\dots,m.
\]
When applying this dual, the inner minimisation is usually solved
analytically.
\end{description}

\section{Properties of Differentiable Functions}

When proving properties about convex functions, it is cumbersome to
work directly with the definitions of differentiability in terms of
limits. Instead, we can often use one of the following higher-level
theorems.
\begin{thm}
\textbf{(Second fundamental theorem of Calculus}). Let $f\colon\mathbb{R}\rightarrow\mathbb{R}$
be a differentiable function. Then:
\[
f(y)=f(x)+\int_{x}^{y}f^{\prime}(z)dz.
\]
\end{thm}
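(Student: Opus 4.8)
The plan is to reduce the identity to a statement about Riemann sums via the Mean Value Theorem, under the standing assumption that $f^{\prime}$ is Riemann integrable. This is harmless in our setting, since throughout the preceding chapters $f^{\prime}$ is always Lipschitz continuous and hence continuous. First I would partition the interval $[x,y]$ into $n$ subintervals with endpoints $x = z_0 < z_1 < \cdots < z_n = y$, so that the total change $f(y) - f(x)$ telescopes as $\sum_{i=0}^{n-1}\left[f(z_{i+1}) - f(z_i)\right]$. This telescoping is the key algebraic observation connecting the pointwise values of $f$ to its increments, and it holds for any partition whatsoever.

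Next I would apply the Mean Value Theorem to $f$ on each subinterval $[z_i, z_{i+1}]$ to obtain a tag $\xi_i \in (z_i, z_{i+1})$ satisfying $f(z_{i+1}) - f(z_i) = f^{\prime}(\xi_i)(z_{i+1} - z_i)$. Substituting into the telescoped sum gives $f(y) - f(x) = \sum_{i=0}^{n-1} f^{\prime}(\xi_i)(z_{i+1} - z_i)$, which is exactly a Riemann sum for $\int_x^y f^{\prime}(z)\,dz$ with tags $\xi_i$. The important point is that the left-hand side is a fixed constant independent of the partition, so this equality is available simultaneously for every partition.

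Finally I would pass to the limit as the mesh of the partition tends to zero. Because $f^{\prime}$ is Riemann integrable, every sequence of Riemann sums with vanishing mesh converges to $\int_x^y f^{\prime}(z)\,dz$ regardless of how the tags are chosen; in particular the sums built from the Mean Value Theorem points converge to the integral. Since the left-hand side remains $f(y) - f(x)$ throughout, taking the limit yields $f(y) - f(x) = \int_x^y f^{\prime}(z)\,dz$, which rearranges to the claimed form. An alternative route is to set $g(t) = \int_x^t f^{\prime}(z)\,dz$, invoke the first fundamental theorem to get $g^{\prime} = f^{\prime}$, conclude that $g - f$ is constant, and evaluate at $t = x$; this avoids the explicit Riemann-sum limit but relies on the same continuity hypothesis.

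The main obstacle is purely the regularity assumption. As literally stated the theorem posits only differentiability, yet a merely differentiable $f$ may have a non-integrable derivative, in which case the right-hand integral is not even defined as a Riemann integral. The honest resolution is to record that in every application within this thesis $f^{\prime}$ is continuous (indeed Lipschitz), so Riemann integrability is automatic and either argument above goes through without further qualification.
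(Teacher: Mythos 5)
Your argument is correct: the telescoping-plus-Mean-Value-Theorem reduction to a Riemann sum, followed by passing to the mesh-zero limit, is the classical proof of this identity, and the alternative via the first fundamental theorem is equally valid. The paper states this theorem without proof as a standard calculus fact, so there is no argument to compare against; your observation that bare differentiability does not guarantee Riemann integrability of $f^{\prime}$ (so the statement implicitly assumes it, which is automatic here since every $f^{\prime}$ used in the thesis is Lipschitz continuous) is a legitimate and worthwhile caveat.
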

\begin{cor}
\label{cor:calculus-corollary} Since we are almost exclusively concerned
with vector valued functions, we tend to use this theorem along intervals
in a vector space. i.e. let $x,y\in\mathbb{R}^{d}$ and $f\colon\mathbb{R}^{d}\rightarrow\mathbb{R}$,
then
\begin{equation}
f(y)=f(x)+\int_{0}^{1}\left\langle f^{\prime}(x+\tau(y-x))\,,\,y-x\right\rangle d\tau.\label{eq:ftc-interval}
\end{equation}

The theorem is also useful when used for functions of the form $f\colon\mathbb{R}^{d}\rightarrow\mathbb{R}^{d}$,
such as derivatives, where it is applied over an interval, but separately
for each coordinate. For example, assuming $f$ is twice differentiable:
\[
f^{\prime}(y)=f^{\prime}(x)+\int_{0}^{1}f^{\prime\prime}\left(x+\tau(y-x)\right)\left(y-x\right)d\tau.
\]
 This appears in proofs of local convergence of Newton's method, where
$x=x^{*}$, and $y$ is the current iterate $x_{k}$. \end{cor}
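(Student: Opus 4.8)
The plan is to reduce the multivariate identity in Equation \ref{eq:ftc-interval} to the one-dimensional Second Fundamental Theorem of Calculus stated immediately above, by restricting $f$ to the line segment joining $x$ and $y$. First I would introduce the scalar auxiliary function $g\colon[0,1]\rightarrow\mathbb{R}$ defined by $g(\tau)=f(x+\tau(y-x))$. Since $f$ is differentiable on $\mathbb{R}^{d}$ and the map $\tau\mapsto x+\tau(y-x)$ is affine (hence differentiable), $g$ is a composition of differentiable functions and is therefore differentiable on $[0,1]$.

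Next I would compute $g'$ via the chain rule. The derivative of the inner affine map with respect to $\tau$ is the constant vector $y-x$, so the chain rule gives $g'(\tau)=\langle f'(x+\tau(y-x)),\,y-x\rangle$, which is exactly the integrand appearing in Equation \ref{eq:ftc-interval}. I would then apply the preceding one-dimensional theorem to $g$ over $[0,1]$, obtaining $g(1)=g(0)+\int_{0}^{1}g'(\tau)\,d\tau$. Substituting $g(1)=f(y)$, $g(0)=f(x)$, and the chain-rule expression for $g'$ then yields Equation \ref{eq:ftc-interval} directly.

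For the twice-differentiable example stated afterwards — the identity for $f'\colon\mathbb{R}^{d}\rightarrow\mathbb{R}^{d}$ — I would repeat the same construction coordinate by coordinate, applying the scalar result to each component $f'_{k}$ of the gradient and then stacking the resulting scalar integrals into one vector-valued integral. The term $f''(x+\tau(y-x))(y-x)$ emerges as the vector whose $k$th entry is $\langle (f'_{k})'(x+\tau(y-x)),\,y-x\rangle$, i.e.\ the Hessian acting on the direction $y-x$.

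There is no genuine analytic obstacle here, as the statement is a direct corollary of a cited theorem; the only point needing care is the integrability of $g'$. The cleanest route is to assume, as is standard and already implicit in the Lipschitz-smoothness setting used throughout this thesis, that $f'$ is continuous, so that $g'$ is continuous and the Riemann integral in Equation \ref{eq:ftc-interval} is well defined. If one wishes to dispense with even that mild hypothesis, the one-dimensional theorem as stated already applies to any differentiable $g$, so differentiability of $f$ alone suffices and no extra regularity is strictly required.
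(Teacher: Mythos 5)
Your argument is correct and is exactly the intended route: the paper states this as an immediate corollary of the one-dimensional theorem without writing out a proof, and the restriction $g(\tau)=f(x+\tau(y-x))$ followed by the chain rule and the scalar fundamental theorem is precisely the implicit argument, as is the coordinate-wise application for the gradient identity. Your remark on continuity of $f^{\prime}$ to guarantee integrability of $g^{\prime}$ is a sensible bit of care that the paper glosses over.
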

\begin{thm}
\textbf{(Mean value theorem)}. Let $f\colon\mathbb{R}\rightarrow\mathbb{R}$
be a continuously differentiable function and $a,b\in\mathbb{R}$
with $a<b$. Then there exists a $c\in(a,b)$ such that:
\[
f^{\prime}(c)=\frac{f(b)-f(a)}{b-a}.
\]

\end{thm}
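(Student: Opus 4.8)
The plan is to reduce the Mean Value Theorem to Rolle's theorem by subtracting off the secant line, and then to establish Rolle's theorem from the compactness of $[a,b]$ together with the interior-extremum (Fermat) principle. This is the cleanest route since it isolates all the analytic content into one lemma.

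First I would introduce the auxiliary function
\[
g(x) = f(x) - \frac{f(b) - f(a)}{b - a}\,(x - a),
\]
which is continuously differentiable on $[a,b]$ because it is $f$ minus an affine function. A direct evaluation gives $g(a) = f(a)$ and $g(b) = f(b) - (f(b) - f(a)) = f(a)$, so the endpoint values coincide: $g(a) = g(b)$. Differentiating yields $g'(x) = f'(x) - \frac{f(b)-f(a)}{b-a}$, so once I exhibit an interior point $c$ with $g'(c) = 0$, the desired identity $f'(c) = \frac{f(b)-f(a)}{b-a}$ follows at once. Everything thus reduces to the special case $g(a) = g(b)$, i.e.\ Rolle's theorem.

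Next I would prove Rolle's theorem for $g$. Since $g$ is continuous on the compact interval $[a,b]$, the Extreme Value Theorem guarantees that $g$ attains both a maximum and a minimum on $[a,b]$. If both are attained only at the endpoints, then since $g(a) = g(b)$ the maximum equals the minimum, so $g$ is constant, $g' \equiv 0$, and any interior point serves as $c$. Otherwise at least one extremum is attained at some interior $c \in (a,b)$, and there I invoke Fermat's principle — at an interior extremum of a differentiable function the derivative vanishes — to conclude $g'(c) = 0$.

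The main obstacle, and the only genuinely non-elementary ingredient, is the appeal to compactness through the Extreme Value Theorem: it is what forces the existence of $c$ and is precisely where continuity (rather than mere differentiability) is consumed. Fermat's principle itself is a short argument from the definition of the derivative, since near an interior maximum the difference quotient has one sign from the left and the opposite sign from the right, so the two-sided limit must be zero. Everything else is the routine verification that $g$ inherits continuous differentiability from $f$ together with the endpoint bookkeeping above.
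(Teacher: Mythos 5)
Your proof is correct and complete: the reduction to Rolle's theorem by subtracting the secant line, followed by the Extreme Value Theorem and Fermat's interior-extremum principle, is the canonical argument, and your endpoint bookkeeping and the handling of the constant case are all accurate. The paper states this theorem as a standard auxiliary result without providing a proof, so there is nothing to compare against; your argument would serve as a valid proof if one were to be included.
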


\section{Convexity Bounds}

This section gives full proofs of a number of basic results in convexity
theory. These proofs often appear in the literature in condensed form,
or are omitted as exercises. Most theorems in this section can also
be proved in the twice differentiable case by arguments involving
the Hessian.

\subsection{Taylor like bounds}
\begin{thm}
\textup{\label{thm:convexity-lb} }\citep[Thm 2.1.8]{nes-book} \textup{Let
$f$ be $\mu\geq0$ strongly convex and differentiable. Then for all
$x,y\in\mathbb{R^{d}}$:}
\[
f(y)\geq f(x)+\left\langle f^{\prime}(x),y-x\right\rangle +\frac{\mu}{2}\left\Vert x-y\right\Vert ^{2}.
\]

When $\mu=0$, this is the familiar convexity lower bound $f(y)\geq f(x)+\left\langle f^{\prime}(x),y-x\right\rangle $,
which applies for general convex functions. \end{thm}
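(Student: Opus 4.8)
The plan is to derive this first-order lower bound directly from the defining inequality of strong convexity by passing to a limit, so that the whole argument reduces to identifying a difference quotient with the directional derivative. First I would introduce the parameter $t = 1-\alpha \in [0,1]$ and apply the strong convexity definition at the point $\alpha x + (1-\alpha)y = x + t(y-x)$, giving
\[
f(x + t(y-x)) \leq (1-t)f(x) + t f(y) - t(1-t)\frac{\mu}{2}\left\Vert x-y\right\Vert ^{2}.
\]
Subtracting $f(x)$ from both sides and dividing by $t > 0$ isolates a difference quotient on the left:
\[
\frac{f(x + t(y-x)) - f(x)}{t} \leq f(y) - f(x) - (1-t)\frac{\mu}{2}\left\Vert x-y\right\Vert ^{2}.
\]

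Next I would take the limit $t \to 0^{+}$. The right-hand side converges to $f(y) - f(x) - \frac{\mu}{2}\left\Vert x-y\right\Vert ^{2}$ immediately. The left-hand side is exactly the difference quotient defining the directional derivative $\Delta_{y-x}f(x)$; since $f$ is differentiable this limit exists and equals $\left\langle f^{\prime}(x), y-x\right\rangle$ by the relation $\Delta_{v}f(x) = \left\langle f^{\prime}(x), v\right\rangle$ recorded among the definitions. Rearranging the resulting inequality
\[
\left\langle f^{\prime}(x), y-x\right\rangle \leq f(y) - f(x) - \frac{\mu}{2}\left\Vert x-y\right\Vert ^{2}
\]
yields the claim, and the case $\mu = 0$ is included automatically since nothing in the argument required $\mu > 0$.

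The only delicate point is the passage to the limit: I must ensure the difference quotient genuinely converges and that its limit is the stated inner product rather than merely a one-sided directional derivative. This is precisely where differentiability (as opposed to mere convexity) enters, and it is what lets me write $\left\langle f^{\prime}(x), y-x\right\rangle$ cleanly. As a backup that sidesteps the limit entirely, I would keep in reserve an argument via the fundamental theorem of calculus (Corollary \ref{cor:calculus-corollary}), writing $f(y)-f(x) = \int_{0}^{1}\left\langle f^{\prime}(x+\tau(y-x)), y-x\right\rangle d\tau$ and then lower-bounding the integrand using the gradient monotonicity implied by strong convexity. However, the limiting argument above is shorter and relies only on the definitions already stated, so I would present that as the main proof.
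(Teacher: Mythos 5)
Your proposal is correct and follows essentially the same route as the paper's proof: both start from the defining inequality of strong convexity, isolate a difference quotient, and pass to the limit (your $t\to 0^{+}$ is the paper's $\alpha\to 1$ after the substitution $t=1-\alpha$), identifying the limit with the directional derivative $\Delta_{y-x}f(x)=\left\langle f^{\prime}(x),y-x\right\rangle$. The only difference is cosmetic — you divide through by $t$ and keep the quotient on the left, whereas the paper isolates $f(y)$ first — so no further comment is needed.
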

\begin{proof}
From the definition of strong convexity $f\left(\alpha x+(1-\alpha)y\right)\leq\alpha f(x)+(1-\alpha)f(y)-\alpha\left(1-\alpha\right)\frac{\mu}{2}\left\Vert x-y\right\Vert ^{2}$
we have:
\[
f\left(\alpha x+(1-\alpha)y\right)-\alpha f(x)+\alpha\left(1-\alpha\right)\frac{\mu}{2}\left\Vert x-y\right\Vert ^{2}\leq(1-\alpha)f(y),
\]
\begin{eqnarray}
\therefore f(y) & \geq & \frac{1}{1-\alpha}\left[f\left(\alpha x+(1-\alpha)y\right)-\alpha f(x)\right]+\alpha\frac{\mu}{2}\left\Vert x-y\right\Vert ^{2}\nonumber \\
 & = & f(x)+\frac{1}{1-\alpha}\left[f\left(\alpha x+(1-\alpha)y\right)-f(x)\right]+\alpha\frac{\mu}{2}\left\Vert x-y\right\Vert ^{2}\nonumber \\
 & = & f(x)+\frac{1}{1-\alpha}\left[f\left(x+(1-\alpha)(y-x)\right)-f(x)\right]+\alpha\frac{\mu}{2}\left\Vert x-y\right\Vert ^{2}.\label{eq:convex-dd}
\end{eqnarray}

Now recall the definition of the directional derivative as a limit:
\[
\Delta_{v}f(x)=\lim_{h\rightarrow0}\frac{f(x+hv)-f(x)}{h}.
\]
The second term of our expression \ref{eq:convex-dd} has this form
for $h=1-\alpha$ and $v=y-x$, so we take the limit as $\alpha\rightarrow1$,
and use the fact that directional derivatives obey $\Delta_{v}f(x)=\left\langle f^{\prime}(x),v\right\rangle $:
\[
\lim_{\alpha\rightarrow1}\frac{1}{1-\alpha}\left[f\left(x+(1-\alpha)(y-x)\right)-f(x)\right]=\Delta_{y-x}f(x)=\left\langle f^{\prime}(x),y-x\right\rangle ,
\]

We also need to take the limit of the term $\alpha\frac{\mu}{2}\left\Vert x-y\right\Vert ^{2}$
as $\alpha\rightarrow1$, which is trivially $\frac{\mu}{2}\left\Vert x-y\right\Vert ^{2}$.
Leaving us with:
\[
f(y)\geq f(x)+\left\langle f^{\prime}(x),y-x\right\rangle +\frac{\mu}{2}\left\Vert x-y\right\Vert ^{2}.
\]
\end{proof}
\begin{thm}
\textbf{\label{thm:lipschitz-ub}(Lipschitz upper bound) }\citep[Thm 2.1.5]{nes-book}
If $f$ is convex and $L$-smooth, then for all $x,y\in\mathbb{R}^{d}$:
\[
f(y)\leq f(x)+\left\langle f^{\prime}(x),y-x\right\rangle +\frac{L}{2}\left\Vert x-y\right\Vert ^{2}.
\]
\end{thm}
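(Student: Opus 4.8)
The plan is to reduce the statement to the fundamental theorem of calculus along the segment joining $x$ and $y$, which is exactly the content of Corollary \ref{cor:calculus-corollary}. Applying Equation \ref{eq:ftc-interval} gives
\[
f(y)=f(x)+\int_{0}^{1}\left\langle f^{\prime}(x+\tau(y-x)),y-x\right\rangle d\tau ,
\]
so that after subtracting $f(x)+\left\langle f^{\prime}(x),y-x\right\rangle$ from both sides and folding the constant term into the integral (using $\int_{0}^{1}d\tau=1$), the residual becomes
\[
f(y)-f(x)-\left\langle f^{\prime}(x),y-x\right\rangle=\int_{0}^{1}\left\langle f^{\prime}(x+\tau(y-x))-f^{\prime}(x),y-x\right\rangle d\tau .
\]

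First I would bound the integrand pointwise in $\tau$. By the Cauchy--Schwarz inequality the inner product is at most $\left\Vert f^{\prime}(x+\tau(y-x))-f^{\prime}(x)\right\Vert \left\Vert y-x\right\Vert$, and then the Lipschitz smoothness hypothesis applied to the pair of points $x+\tau(y-x)$ and $x$ gives $\left\Vert f^{\prime}(x+\tau(y-x))-f^{\prime}(x)\right\Vert \leq L\left\Vert \tau(y-x)\right\Vert =L\tau\left\Vert y-x\right\Vert$. Hence the integrand is at most $L\tau\left\Vert y-x\right\Vert^{2}$. Integrating this bound over $\tau\in[0,1]$ produces $\frac{L}{2}\left\Vert y-x\right\Vert^{2}$ (the factor of one half coming directly from $\int_{0}^{1}\tau\,d\tau$), which is precisely the claimed inequality once $f(x)+\left\langle f^{\prime}(x),y-x\right\rangle$ is moved back to the right-hand side.

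I do not expect any genuine obstacle here: the argument is mechanical, relying only on the integral form of the fundamental theorem of calculus, Cauchy--Schwarz, and the definition of $L$-smoothness. The only points needing minor care are the correct substitution into Equation \ref{eq:ftc-interval} (ensuring the base point $x+\tau(y-x)$ and the direction $y-x$ appear consistently) and the observation that convexity is not actually used in the argument, so the stated convexity hypothesis is merely a convenience matching the surrounding development. An alternative route would avoid the integral entirely by setting $g(\tau)=f(x+\tau(y-x))$ and invoking the mean value theorem stated above, but the FTC approach is cleaner since it yields the constant $\frac{1}{2}$ automatically rather than through a separate bounding step.
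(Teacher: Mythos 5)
Your proposal is correct and follows essentially the same route as the paper's own proof: both apply the fundamental theorem of calculus in the interval form of Equation \ref{eq:ftc-interval}, split off the $\left\langle f^{\prime}(x),y-x\right\rangle$ term, bound the remaining integrand via Cauchy--Schwarz and the Lipschitz condition to get $\tau L\left\Vert x-y\right\Vert^{2}$, and integrate. Your observation that convexity is not actually needed is also accurate.
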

\begin{proof}
Using the second fundamental theorem of calculus, in the interval
form of Equation \ref{eq:ftc-interval}, we have:
\begin{eqnarray*}
f(y) & = & f(x)+\int_{0}^{1}\left\langle f^{\prime}(x+\tau(y-x))\,,\,y-x\right\rangle d\tau\\
 & = & f(x)+\left\langle f^{\prime}(x),y-x\right\rangle +\int_{0}^{1}\left\langle f^{\prime}(x+\tau(y-x))-f^{\prime}(x)\,,\,y-x\right\rangle d\tau\\
 & \leq & f(x)+\left\langle f^{\prime}(x),y-x\right\rangle +\int_{0}^{1}\left\Vert f^{\prime}(x+\tau(y-x))-f^{\prime}(x)\right\Vert \left\Vert x-y\right\Vert d\tau\;\text{(Cauchy-Schwarz)}\\
 & \leq & f(x)+\left\langle f^{\prime}(x),y-x\right\rangle +\int_{0}^{1}\tau L\left\Vert x-y\right\Vert ^{2}d\tau\quad\text{(Lipschitz condition)}\\
 & = & f(x)+\left\langle f^{\prime}(x),y-x\right\rangle +\frac{L}{2}\left\Vert x-y\right\Vert ^{2}.
\end{eqnarray*}

\end{proof}

\subsection{Gradient difference bounds}
\begin{thm}
\label{thm:strong-ub}If $f$ is strongly convex with constant $\mu$,
then for all $x,y\in\mathbb{R}^{d}$:
\[
f(y)\leq f(x)+\left\langle f^{\prime}(x),y-x\right\rangle +\frac{1}{2\mu}\left\Vert f^{\prime}(x)-f^{\prime}(y)\right\Vert ^{2}.
\]
\end{thm}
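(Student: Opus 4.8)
The plan is to prove this ``dual'' of the strong convexity lower bound (Theorem \ref{thm:convexity-lb}) by a reduction-to-minimisation trick. The key observation is that for a fixed $x$, the quantity we wish to bound is governed by the auxiliary function
\[
\phi(z)=f(z)-\left\langle f^{\prime}(x),z\right\rangle .
\]
Since $\phi$ differs from $f$ only by a linear term, it is itself $\mu$-strongly convex, and its gradient is $\phi^{\prime}(z)=f^{\prime}(z)-f^{\prime}(x)$. In particular $\phi^{\prime}(x)=0$, so $x$ is a stationary point; because $\phi$ is strongly convex (hence has a unique global minimiser), $x$ is that minimiser and $\min_{z}\phi(z)=\phi(x)$.

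First I would apply Theorem \ref{thm:convexity-lb} to $\phi$, expanding around the point $y$ rather than $x$:
\[
\phi(z)\geq\phi(y)+\left\langle \phi^{\prime}(y),z-y\right\rangle +\frac{\mu}{2}\left\Vert z-y\right\Vert ^{2}\qquad\text{for all }z.
\]
Next I would minimise both sides over $z$. The left-hand side minimises to $\phi(x)$ by the observation above. The right-hand side is an explicit convex quadratic in $z$, minimised at $z^{*}=y-\frac{1}{\mu}\phi^{\prime}(y)$ with minimum value $\phi(y)-\frac{1}{2\mu}\left\Vert \phi^{\prime}(y)\right\Vert ^{2}$. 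Comparing the two minima gives
\[
\phi(x)\geq\phi(y)-\frac{1}{2\mu}\left\Vert \phi^{\prime}(y)\right\Vert ^{2}.
\]
Finally I would substitute back the definitions $\phi(x)=f(x)-\left\langle f^{\prime}(x),x\right\rangle $, $\phi(y)=f(y)-\left\langle f^{\prime}(x),y\right\rangle $, and $\phi^{\prime}(y)=f^{\prime}(y)-f^{\prime}(x)$, and rearrange. The linear terms collapse into $\left\langle f^{\prime}(x),y-x\right\rangle $, yielding exactly the claimed bound.

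The only genuinely delicate point—the ``main obstacle''—is the step where I minimise the left-hand side and assert it equals $\phi(x)$. This rests on the fact that a stationary point of a strongly convex differentiable function is its unique global minimiser; I would justify this briefly by noting that Theorem \ref{thm:convexity-lb} applied at $x$ with $\phi^{\prime}(x)=0$ gives $\phi(z)\geq\phi(x)+\frac{\mu}{2}\left\Vert z-x\right\Vert ^{2}\geq\phi(x)$ for all $z$. Everything else is routine: completing the square for the quadratic minimisation and bookkeeping the linear terms. An alternative route would invoke property \ref{enu:conjugate-condition} of Section \ref{sec:convex-conj} (that $f^{*}$ is $1/\mu$-smooth) together with the Lipschitz upper bound (Theorem \ref{thm:lipschitz-ub}) applied to $f^{*}$ and Fenchel--Young, but the direct minimisation argument above is shorter and self-contained.
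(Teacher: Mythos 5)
Your proof is correct, but it takes a genuinely different route from the paper. The paper proves Theorem \ref{thm:strong-ub} by passing to the convex conjugate: since $f$ is $\mu$-strongly convex, $f^{*}$ is $1/\mu$-smooth (property \ref{enu:conjugate-condition} of Section \ref{sec:convex-conj}), so one applies the Lipschitz upper bound to $f^{*}$ at the dual points $u=f^{\prime}(y)$, $v=f^{\prime}(x)$ and then translates back to the primal via the Fenchel--Young equality and the Legendre transform property $f^{*\prime}(f^{\prime}(y))=y$. You instead give a direct primal argument: shift $f$ by the linear form $\left\langle f^{\prime}(x),\cdot\right\rangle$ to obtain a $\mu$-strongly convex $\phi$ minimised at $x$, apply the strong convexity lower bound (Theorem \ref{thm:convexity-lb}) around $y$, and minimise both sides over $z$ (your justification that the stationary point $x$ is the global minimiser, and your completion of the square, are both sound). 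This is precisely the mirror image of the technique the paper itself uses to prove the companion result Theorem \ref{thm:lipschitz-lb}, and indeed the paper's remark following that theorem observes that the two proof strategies can be swapped between the two statements. What each buys: the conjugate route showcases the general duality principle the paper wants to advertise as a proof-transformation tool, while your argument is shorter, self-contained, and avoids any appeal to the regularity of $f^{*}$.
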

\begin{proof}
Let $u$ and $v$ be points in the dual space of $f$. As noted in
point \ref{enu:conjugate-condition} of the properties of convex conjugates
in Section \ref{sec:convex-conj}, $f^{*}$ has Lipschitz continuous
gradients with constant $1/\mu$. So: 
\[
f^{*}(v)-f^{*}(u)-\left\langle f^{*\prime}(u),u-v\right\rangle \leq\frac{1}{2\mu}\left\Vert u-v\right\Vert ^{2}.
\]
Now take $v=f^{\prime}(x)$ and $u=f^{\prime}(y)$. We replace each
$f^{*}$ with $f$ using The Fenchel-Young equality $f(x)+f^{*}(f^{\prime}(x))=\left\langle f^{\prime}(x),x\right\rangle $:
\[
\left\langle f^{\prime}(x),x\right\rangle -f(x)-\left\langle f^{\prime}(y),y\right\rangle +f(y)-\left\langle f^{*\prime}(f^{\prime}(y)),f^{\prime}(x)-f^{\prime}(y)\right\rangle \leq\frac{1}{2\mu}\left\Vert f^{\prime}(x)-f^{\prime}(y)\right\Vert ^{2}.
\]
The remaining dual quantity obeys $f^{*\prime}(f^{\prime}(y))=y$
(See point \ref{enu:conj-grad-mapping} in Section \ref{sec:convex-conj}),
so we can simplify further to get the result:

\[
f(y)-f(x)+\left\langle f^{\prime}(x),x-y\right\rangle \leq\frac{1}{2\mu}\left\Vert f^{\prime}(x)-f^{\prime}(y)\right\Vert ^{2}.
\]
\end{proof}
\begin{rem}
This bound in full generality rarely appears in the literature. It
is more often seen in the special case of $x=x^{*}$ and $y=x$ in
the form:
\[
\left\Vert f^{\prime}(x)\right\Vert ^{2}\geq2\mu\left[f(x)-f(x^{*})\right],
\]
which is used in proving convergence of gradient descent for strongly
convex functions in terms of function value.\end{rem}
\begin{thm}
\label{thm:lipschitz-lb}\citep[Thm 2.1.5]{nes-book} If $f$ is convex
and $L$-smooth, then for all $x,y\in\mathbb{R}^{d}$:
\[
f(y)\geq f(x)+\left\langle f^{\prime}(x),y-x\right\rangle +\frac{1}{2L}\left\Vert f^{\prime}(x)-f^{\prime}(y)\right\Vert ^{2}.
\]
\end{thm}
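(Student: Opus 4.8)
The plan is to reduce the statement to the already-established Lipschitz upper bound (Theorem \ref{thm:lipschitz-ub}) by introducing a cleverly shifted auxiliary function. Fix $x$ and define $\phi(z)=f(z)-\left\langle f^{\prime}(x),z\right\rangle$. Subtracting a linear term preserves both convexity and $L$-smoothness, since $\phi^{\prime}(z)=f^{\prime}(z)-f^{\prime}(x)$ and hence $\left\Vert \phi^{\prime}(z)-\phi^{\prime}(w)\right\Vert =\left\Vert f^{\prime}(z)-f^{\prime}(w)\right\Vert \leq L\left\Vert z-w\right\Vert$. The decisive observation is that $\phi^{\prime}(x)=0$, so $x$ is the global minimiser of $\phi$; this is exactly what lets the single point $x$ control $\phi$ everywhere.

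Next I would apply the Lipschitz upper bound from Theorem \ref{thm:lipschitz-ub} to $\phi$ at the pair $(y,w)$ for an arbitrary $w$:
\[
\phi(w)\leq\phi(y)+\left\langle \phi^{\prime}(y),w-y\right\rangle +\frac{L}{2}\left\Vert w-y\right\Vert ^{2}.
\]
The right-hand side is a convex quadratic in $w$, minimised at $w=y-\frac{1}{L}\phi^{\prime}(y)$, where it takes the value $\phi(y)-\frac{1}{2L}\left\Vert \phi^{\prime}(y)\right\Vert ^{2}$. Since $x$ minimises $\phi$ we have $\phi(x)\leq\phi(w)$ for that particular $w$, and therefore
\[
\phi(x)\leq\phi(y)-\frac{1}{2L}\left\Vert \phi^{\prime}(y)\right\Vert ^{2}.
\]

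Finally I would substitute back the definitions $\phi(x)=f(x)-\left\langle f^{\prime}(x),x\right\rangle$, $\phi(y)=f(y)-\left\langle f^{\prime}(x),y\right\rangle$ and $\phi^{\prime}(y)=f^{\prime}(y)-f^{\prime}(x)$, and rearrange. The two inner-product terms combine into $\left\langle f^{\prime}(x),y-x\right\rangle$ and the gradient-difference term appears directly, giving the claimed inequality. I expect no serious obstacle beyond spotting the right auxiliary function and recognising $x$ as its minimiser; once that is in place the remaining work is the elementary minimisation of a quadratic and routine algebra. The only point deserving a line of care is justifying that $x$ is a \emph{global} (not merely stationary) minimiser of $\phi$, which holds because $\phi$ is convex with vanishing gradient at $x$.
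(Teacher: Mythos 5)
Your proposal is correct and is essentially the paper's own proof: the paper defines the same auxiliary function $g(y)=f(y)-\left\langle f^{\prime}(x),y\right\rangle$, applies the Lipschitz upper bound (Theorem \ref{thm:lipschitz-ub}) at the gradient step $y-\frac{1}{L}g^{\prime}(y)$, and uses that $x$ is the global minimiser of $g$. The only cosmetic difference is that you explicitly minimise the quadratic in $w$ rather than simply evaluating at the gradient-step point.
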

\begin{proof}
Define $g(y)=f(y)-\left\langle f^{\prime}(x),y\right\rangle .$ Then
$g$ is convex and has gradient $g^{\prime}(y)=f^{\prime}(y)-f^{\prime}(x)$.
Clearly the gradient is zero at $x$, so its minimiser is $y^{*}=x$.
Now consider the gradient step $y-\frac{1}{L}g^{\prime}(y)$. We apply
the Lipschitz upper bound (Theorem \ref{thm:lipschitz-ub}) to $g$
at $y-\frac{1}{L}f^{\prime}(y)$ around $y$.
\begin{eqnarray*}
g\left(y-\frac{1}{L}g^{\prime}(y)\right) & \leq & g(y)+\left\langle g^{\prime}(y),y-\frac{1}{L}g^{\prime}(y)\right\rangle +\frac{L}{2}\left\Vert -\frac{1}{L}g^{\prime}(y)\right\Vert ^{2}\\
 & = & g(y)-\frac{1}{2L}\left\Vert g^{\prime}(y)\right\Vert ^{2}.
\end{eqnarray*}
We know that $y^{*}$ gives as least as small a function value as
any other point, i.e. $g(y^{*})\leq g(y-\frac{1}{L}g^{\prime}(y))$.
Given that $y^{*}=x$, we thus have:
\[
g(x)\leq g(y)-\frac{1}{2L}\left\Vert g^{\prime}(y)\right\Vert ^{2}.
\]
Therefore, by plugging in $f$:
\[
f(x)-\left\langle f^{\prime}(x),x\right\rangle \leq f(y)-\left\langle f^{\prime}(x),y\right\rangle -\frac{1}{2L}\left\Vert f^{\prime}(y)-f^{\prime}(x)\right\Vert ^{2},
\]
\[
\therefore-f(y)\leq-f(x)+\left\langle f^{\prime}(x),x-y\right\rangle -\frac{1}{2L}\left\Vert f^{\prime}(y)-f^{\prime}(x)\right\Vert ^{2}.
\]
Negating gives the result.\end{proof}
\begin{rem}
This proof follows \citet{nes-book}. It can also be proved using
the techniques used in Theorem \ref{thm:strong-ub}, which gives a
less mysterious but more straightforward proof.
\end{rem}

\subsection{Inner product bounds}

\label{sec:ip-inequalities}

All (continuously-differentiable) convex functions satisfy the basic
inequality $\left\langle f^{\prime}(x)-f^{\prime}(y),x-y\right\rangle \geq0$
for any $x,y$ (See \citealp[Thm 2.1.3]{nes-book}). This inequality
can be strengthened for the classes of functions we consider, using
the following theorems.
\begin{thm}
\label{thm:ip-l}\citep[Thm 2.1.5]{nes-book} If $f$ is $L$-smooth
and convex, then for all $x,y\in\mathbb{R}^{d}$:
\[
\left\langle f^{\prime}(x)-f^{\prime}(y),x-y\right\rangle \geq\frac{1}{L}\left\Vert f^{\prime}(x)-f^{\prime}(y)\right\Vert ^{2}.
\]
\end{thm}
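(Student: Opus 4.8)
If $f$ is $L$-smooth and convex, then for all $x,y\in\mathbb{R}^{d}$:
$$\left\langle f^{\prime}(x)-f^{\prime}(y),x-y\right\rangle \geq\frac{1}{L}\left\Vert f^{\prime}(x)-f^{\prime}(y)\right\Vert ^{2}.$$

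The plan is to derive this co-coercivity inequality directly from the gradient-difference lower bound already established in Theorem \ref{thm:lipschitz-lb}, by applying that bound symmetrically in its two arguments and summing. No new machinery is needed; the result is essentially a one-step algebraic consequence of the earlier theorem.

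First I would invoke Theorem \ref{thm:lipschitz-lb} with arguments $x$ and $y$, giving
\[
f(y)\geq f(x)+\left\langle f^{\prime}(x),y-x\right\rangle +\frac{1}{2L}\left\Vert f^{\prime}(x)-f^{\prime}(y)\right\Vert ^{2}.
\]
Then I would apply the same theorem a second time with the roles of $x$ and $y$ interchanged, obtaining
\[
f(x)\geq f(y)+\left\langle f^{\prime}(y),x-y\right\rangle +\frac{1}{2L}\left\Vert f^{\prime}(x)-f^{\prime}(y)\right\Vert ^{2},
\]
where I use that the squared-norm term is symmetric in $x$ and $y$.

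Next I would add the two inequalities. The $f(x)$ and $f(y)$ contributions cancel, and the two inner-product terms combine via $\left\langle f^{\prime}(x),y-x\right\rangle +\left\langle f^{\prime}(y),x-y\right\rangle =-\left\langle f^{\prime}(x)-f^{\prime}(y),x-y\right\rangle$. This leaves
\[
0\geq -\left\langle f^{\prime}(x)-f^{\prime}(y),x-y\right\rangle +\frac{1}{L}\left\Vert f^{\prime}(x)-f^{\prime}(y)\right\Vert ^{2},
\]
and a trivial rearrangement yields the claimed bound.

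The derivation presents no real obstacle once Theorem \ref{thm:lipschitz-lb} is in hand; the only genuine decision is recognising that the smoothness \emph{lower} bound (rather than the upper bound or the plain convexity inequality) is the correct tool, and that it must be used as a symmetric pair rather than in a single application. If Theorem \ref{thm:lipschitz-lb} were unavailable, the fallback would be to reprove the bound from scratch by applying the conjugate-duality argument of Theorem \ref{thm:strong-ub} to the auxiliary function $g(z)=f(z)-\left\langle f^{\prime}(x),z\right\rangle$, but that route merely reconstructs the lower bound we already possess, so invoking the earlier theorem directly is the cleanest path.
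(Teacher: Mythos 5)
Your proposal is correct and matches the paper's own argument: the paper likewise applies the gradient-difference lower bound $f(y)\geq f(x)+\left\langle f^{\prime}(x),y-x\right\rangle +\frac{1}{2L}\left\Vert f^{\prime}(x)-f^{\prime}(y)\right\Vert ^{2}$ symmetrically in $x$ and $y$, adds the two inequalities, and rearranges. (The paper's proof cites Theorem \ref{thm:full-strong-lb} but the inequality it actually uses is that of Theorem \ref{thm:lipschitz-lb}, exactly as you invoke it.)
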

\begin{proof}
From Theorem \ref{thm:full-strong-lb} we have: $f(y)\geq f(x)+\left\langle f^{\prime}(x),y-x\right\rangle +\frac{1}{2L}\left\Vert f^{\prime}(x)-f^{\prime}(y)\right\Vert ^{2}$.
Add this to the same inequality but with $x$ and $y$ reversed and
we get:
\[
f(y)+f(x)\geq f(x)+f(y)+\left\langle f^{\prime}(x),y-x\right\rangle +\left\langle f^{\prime}(y),x-y\right\rangle +\frac{1}{L}\left\Vert f^{\prime}(x)-f^{\prime}(y)\right\Vert ^{2},
\]
\[
\therefore\left\langle f^{\prime}(x)-f^{\prime}(y),x-y\right\rangle \geq\frac{1}{L}\left\Vert f^{\prime}(x)-f^{\prime}(y)\right\Vert ^{2}.
\]
\end{proof}
\begin{thm}
\citep[Thm 2.1.9]{nes-book} If $f$ is strongly convex with constant
$\mu$, then for all $x,y\in\mathbb{R}^{d}$:
\[
\left\langle f^{\prime}(x)-f^{\prime}(y),x-y\right\rangle \geq\mu\left\Vert x-y\right\Vert ^{2}.
\]
\end{thm}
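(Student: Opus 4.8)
The plan is to mirror exactly the proof of Theorem \ref{thm:ip-l} (the $L$-smooth inner product bound given immediately above), but starting from the strong convexity lower bound of Theorem \ref{thm:convexity-lb} instead of the Lipschitz lower bound. The key observation is that strong convexity furnishes a quadratic lower bound on the function around any point, and summing two such bounds, one at $x$ and one at $y$, cancels the function values while leaving behind a gradient-difference inner product together with a squared-norm term.

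First I would invoke Theorem \ref{thm:convexity-lb} directly, which states that for $\mu$-strongly convex differentiable $f$ and all $x,y\in\mathbb{R}^{d}$,
\[
f(y)\geq f(x)+\left\langle f^{\prime}(x),y-x\right\rangle +\frac{\mu}{2}\left\Vert x-y\right\Vert ^{2}.
\]
Then I would write down the same inequality with the roles of $x$ and $y$ interchanged,
\[
f(x)\geq f(y)+\left\langle f^{\prime}(y),x-y\right\rangle +\frac{\mu}{2}\left\Vert x-y\right\Vert ^{2}.
\]

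Adding these two inequalities cancels $f(x)+f(y)$ from both sides, leaving
\[
0\geq\left\langle f^{\prime}(x),y-x\right\rangle +\left\langle f^{\prime}(y),x-y\right\rangle +\mu\left\Vert x-y\right\Vert ^{2}.
\]
The remaining step is purely algebraic: collecting the two inner products gives $\left\langle f^{\prime}(x)-f^{\prime}(y),y-x\right\rangle =-\left\langle f^{\prime}(x)-f^{\prime}(y),x-y\right\rangle$, and rearranging yields the claimed bound $\left\langle f^{\prime}(x)-f^{\prime}(y),x-y\right\rangle \geq\mu\left\Vert x-y\right\Vert ^{2}$.

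There is no genuine obstacle here; the whole argument is a two-line symmetrisation of a result already in hand, and the only thing requiring care is the sign bookkeeping when combining the two inner-product terms after the addition. If one wished to avoid assuming differentiability, the identical manipulation carries through with subgradients in place of $f^{\prime}$, since every subgradient satisfies the strong convexity lower bound; but as the surrounding development assumes $f$ is differentiable on all of $\mathbb{R}^{d}$, the differentiable version stated above suffices.
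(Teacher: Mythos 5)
Your proof is correct and is exactly the argument the paper intends: the paper's proof reads ``Same as previous theorem, using the strong convexity lower bound,'' which is precisely the symmetrisation of Theorem \ref{thm:convexity-lb} at $x$ and $y$ that you carry out. The sign bookkeeping in your final rearrangement is right, so nothing is missing.
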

\begin{proof}
Same as previous theorem, using the strong convexity lower bound.
\end{proof}

\subsection{Strengthened bounds using both Lipschitz and strong convexity}
\begin{thm}
\label{thm:tight-ip}\citep[Thm 2.1.11]{nes-book} If $f$ is strongly
convex with constant $\mu$ and $L$-smooth, then for all $x,y\in\mathbb{R}^{d}$:
\[
\left\langle f^{\prime}(x)-f^{\prime}(y),x-y\right\rangle \geq\frac{\mu L}{\mu+L}\left\Vert x-y\right\Vert ^{2}+\frac{1}{\mu+L}\left\Vert f^{\prime}(x)-f^{\prime}(y)\right\Vert ^{2}.
\]
\end{thm}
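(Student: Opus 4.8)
The plan is to reduce this mixed bound to the pure Lipschitz inner-product bound already established in Theorem \ref{thm:ip-l} by subtracting off a quadratic, a standard ``peeling'' trick. I would set $\phi(x)=f(x)-\frac{\mu}{2}\left\Vert x\right\Vert ^{2}$, so that $\phi^{\prime}(x)=f^{\prime}(x)-\mu x$, and aim to show $\phi$ is convex and $(L-\mu)$-smooth. Convexity of $\phi$ is exactly the content of $\mu$-strong convexity of $f$, since the strong convexity definition rearranges to say that $f-\frac{\mu}{2}\left\Vert \cdot\right\Vert ^{2}$ satisfies the ordinary convexity interpolation inequality.

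For the smoothness of $\phi$ I would avoid assuming twice-differentiability and instead use the characterisation (equivalent to the Lipschitz upper bound of Theorem \ref{thm:lipschitz-ub}) that a convex differentiable $g$ is $M$-smooth precisely when $\frac{M}{2}\left\Vert \cdot\right\Vert ^{2}-g$ is convex. Observing that
\[
\tfrac{L-\mu}{2}\left\Vert x\right\Vert ^{2}-\phi(x)=\tfrac{L}{2}\left\Vert x\right\Vert ^{2}-f(x),
\]
the right-hand side is convex because $f$ is $L$-smooth, so $\phi$ is $(L-\mu)$-smooth as required.

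Then I would apply Theorem \ref{thm:ip-l} to $\phi$ with constant $L-\mu$, giving $\left\langle \phi^{\prime}(x)-\phi^{\prime}(y),x-y\right\rangle \geq\frac{1}{L-\mu}\left\Vert \phi^{\prime}(x)-\phi^{\prime}(y)\right\Vert ^{2}$, and substitute $\phi^{\prime}(z)=f^{\prime}(z)-\mu z$ on both sides. Writing $a=\left\langle f^{\prime}(x)-f^{\prime}(y),x-y\right\rangle $, $b=\left\Vert x-y\right\Vert ^{2}$ and $c=\left\Vert f^{\prime}(x)-f^{\prime}(y)\right\Vert ^{2}$, the left side becomes $a-\mu b$ and the squared-norm on the right expands to $c-2\mu a+\mu^{2}b$, so the inequality reads $a-\mu b\geq\frac{1}{L-\mu}(c-2\mu a+\mu^{2}b)$. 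Clearing the (positive) denominator and collecting the $a$ terms collapses the $\mu^{2}b$ contributions and yields $(L+\mu)a\geq L\mu b+c$; dividing by $L+\mu$ gives exactly the stated bound. This algebra is routine and I would not grind through every line.

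The main obstacle is the boundary case $L=\mu$, where dividing by $L-\mu$ is invalid. I expect to handle it separately: when $L=\mu$ the constraints force $f^{\prime}$ to be affine with $f^{\prime}(x)-f^{\prime}(y)=\mu(x-y)$, so that $a=\mu b$, $c=\mu^{2}b$, and both sides of the target inequality equal $\mu b$, giving equality directly. Alternatively one can take the claim for $L-\mu>0$ and pass to the limit $L\downarrow\mu$, since every quantity involved is continuous in $L$. A secondary point to state carefully is the smoothness-transfer characterisation above, as it is the one place where the argument genuinely uses both hypotheses on $f$ simultaneously rather than just invoking an earlier theorem verbatim.
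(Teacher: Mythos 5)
Your proof is correct and follows essentially the same route as the paper's: both subtract $\frac{\mu}{2}\Vert x\Vert^{2}$ from $f$ to obtain a convex, $(L-\mu)$-smooth function, apply Theorem \ref{thm:ip-l} to it, and expand the resulting inequality. Your explicit justification of the $(L-\mu)$-smoothness transfer and your separate handling of the degenerate case $L=\mu$ are slightly more careful than the paper, which simply asserts the smoothness and divides by $L-\mu$ without comment.
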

\begin{proof}
Define the function $g$ as $g(x)=f(x)-\frac{\mu}{2}\left\Vert x\right\Vert ^{2}$.
Then the gradient is $g^{\prime}(x)=f^{\prime}(x)-\mu x$, and $g$
is $(L-\mu)$-smooth. From Theorem \ref{thm:ip-l} we have:
\[
\left\langle g^{\prime}(x)-g^{\prime}(y),x-y\right\rangle \geq\frac{1}{L-\mu}\left\Vert g^{\prime}(x)-g^{\prime}(y)\right\Vert ^{2}.
\]
Now replacing $g$ with $f$:

\[
\left\langle f^{\prime}(x)-f^{\prime}(y)-\mu\left(x-y\right),x-y\right\rangle \geq\frac{1}{L-\mu}\left\Vert f^{\prime}(x)-f^{\prime}(y)-\mu\left(x-y\right)\right\Vert ^{2}.
\]
So:

\begin{eqnarray*}
\left\langle f^{\prime}(x)\!-\!f^{\prime}(y),x-y\right\rangle  & \!\!\geq\!\! & \mu\left\Vert x-y\right\Vert ^{2}\!+\!\frac{1}{L\!-\!\mu}\left\Vert f^{\prime}(x)\!-\!f^{\prime}(y)\!-\!\mu\left(x\!-\!y\right)\right\Vert ^{2}\\
 & \!\!=\!\! & \mu\left\Vert x-y\right\Vert ^{2}\!+\!\frac{1}{L\!-\!\mu}\left\Vert f^{\prime}(x)\!-\!f^{\prime}(y)\right\Vert ^{2}\\
\!\! & \!\!\! & \!\!-\frac{2\mu}{L\!-\!\mu}\left\langle f^{\prime}(x)-f^{\prime}(y),x-y\right\rangle \!+\!\frac{\mu^{2}}{L\!-\!\mu}\left\Vert x-y\right\Vert ^{2}\\
\left(1+\frac{2\mu}{L-\mu}\right)\left\langle f^{\prime}(x)\!-\!f^{\prime}(y),x-y\right\rangle  & \!\!\geq\!\! & \left(\mu\!+\!\frac{\mu^{2}}{L\!-\!\mu}\right)\left\Vert x-y\right\Vert ^{2}\!+\!\frac{1}{L\!-\!\mu}\left\Vert f^{\prime}(x)\!-\!f^{\prime}(y)\right\Vert ^{2}\\
\frac{L\!-\!\mu\!+\!2\mu}{L\!-\!\mu}\left\langle f^{\prime}(x)\!-\!f^{\prime}(y),x-y\right\rangle  & \!\!\geq\!\! & \frac{(L\!-\!\mu)\mu\!+\!\mu^{2}}{L\!-\!\mu}\left\Vert x-y\right\Vert ^{2}\!+\!\frac{1}{L-\mu}\left\Vert f^{\prime}(x)\!-\!f^{\prime}(y)\right\Vert ^{2}\\
\frac{L+\mu}{L-\mu}\left\langle f^{\prime}(x)\!-\!f^{\prime}(y),x-y\right\rangle  & \!\!\geq\!\! & \frac{\mu L}{L-\mu}\left\Vert x-y\right\Vert ^{2}\!+\!\frac{1}{L-\mu}\left\Vert f^{\prime}(x)-f^{\prime}(y)\right\Vert ^{2}.
\end{eqnarray*}
Multiplying through by $\frac{L-\mu}{L+\mu}$ gives the result.\end{proof}
\begin{rem}
This inequality allows us to get the RHS from both inner product inequalities
from Section \ref{sec:ip-inequalities} added together, while only
losing about a multiplicative $\frac{\mu}{L}$ factor on each. We
can also achieve the same result by interpolating between the two
inequalities, but that loses a factor of $\frac{1}{2}$ for each.
Since the constants are more complex in this combined inequality,
it leads to more technical but stronger proofs.\end{rem}
\begin{thm}
\label{thm:full-strong-lb}If $f$ is strongly convex with constant
$\mu$ and $L$-smooth, then for all $x,y\in\mathbb{R}^{d}$:
\begin{eqnarray*}
f(x) & \geq & f(y)+\left\langle f^{\prime}(y),x-y\right\rangle +\frac{1}{2\left(L-\mu\right)}\left\Vert f^{\prime}(x)-f^{\prime}(y)\right\Vert ^{2}+\frac{\mu L}{2\left(L-\mu\right)}\left\Vert x-y\right\Vert ^{2}\\
 &  & +\frac{\mu}{\left(L-\mu\right)}\left\langle f^{\prime}(x)-f^{\prime}(y),y-x\right\rangle .
\end{eqnarray*}
\end{thm}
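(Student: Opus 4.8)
The plan is to reduce this strengthened bound to the plain Lipschitz lower bound of Theorem \ref{thm:lipschitz-lb} by shifting $f$ by a quadratic. I would define $g(z)=f(z)-\frac{\mu}{2}\left\Vert z\right\Vert ^{2}$, exactly the construction already used in the proof of Theorem \ref{thm:tight-ip}. Strong convexity of $f$ makes $g$ convex, and (as invoked in that same proof) $g$ is $(L-\mu)$-smooth with gradient $g^{\prime}(z)=f^{\prime}(z)-\mu z$. Since $g$ is thus convex and $(L-\mu)$-smooth, Theorem \ref{thm:lipschitz-lb} applies to it directly.

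First I would write down Theorem \ref{thm:lipschitz-lb} for $g$,
\[
g(x)\geq g(y)+\left\langle g^{\prime}(y),x-y\right\rangle +\frac{1}{2(L-\mu)}\left\Vert g^{\prime}(x)-g^{\prime}(y)\right\Vert ^{2},
\]
and then substitute $g(z)=f(z)-\frac{\mu}{2}\left\Vert z\right\Vert ^{2}$ and $g^{\prime}(z)=f^{\prime}(z)-\mu z$ throughout. The gradient-difference term becomes $\left\Vert f^{\prime}(x)-f^{\prime}(y)-\mu(x-y)\right\Vert ^{2}$, which I would expand as $\left\Vert f^{\prime}(x)-f^{\prime}(y)\right\Vert ^{2}-2\mu\left\langle f^{\prime}(x)-f^{\prime}(y),x-y\right\rangle +\mu^{2}\left\Vert x-y\right\Vert ^{2}$.

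The remaining work is bookkeeping. Isolating $f(x)$ produces the purely quadratic pieces $\frac{\mu}{2}\left\Vert x\right\Vert ^{2}-\frac{\mu}{2}\left\Vert y\right\Vert ^{2}$ (from $g(x)-g(y)$) together with $-\mu\left\langle y,x-y\right\rangle$ (from $\left\langle g^{\prime}(y),x-y\right\rangle$), and these combine via the identity $\frac{\mu}{2}\left\Vert x\right\Vert ^{2}-\frac{\mu}{2}\left\Vert y\right\Vert ^{2}-\mu\left\langle y,x-y\right\rangle=\frac{\mu}{2}\left\Vert x-y\right\Vert ^{2}$. Adding the $\frac{\mu^{2}}{2(L-\mu)}\left\Vert x-y\right\Vert ^{2}$ coming out of the expanded norm, the coefficient on $\left\Vert x-y\right\Vert ^{2}$ becomes $\frac{\mu}{2}+\frac{\mu^{2}}{2(L-\mu)}=\frac{\mu L}{2(L-\mu)}$, as advertised. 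The cross term $-\frac{\mu}{L-\mu}\left\langle f^{\prime}(x)-f^{\prime}(y),x-y\right\rangle$ is exactly $\frac{\mu}{L-\mu}\left\langle f^{\prime}(x)-f^{\prime}(y),y-x\right\rangle$, the last term of the statement, while $\frac{1}{2(L-\mu)}\left\Vert f^{\prime}(x)-f^{\prime}(y)\right\Vert ^{2}$ passes through unchanged.

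There is no genuine obstacle here: the only structural input is that $g$ is convex and $(L-\mu)$-smooth, which is the same fact relied upon in Theorem \ref{thm:tight-ip} and may therefore be taken as given, and everything else is the elementary term collection described above. The one point deserving a line of care is simply checking the two algebraic identities (the completion of the square and the coefficient combination), both of which are immediate.
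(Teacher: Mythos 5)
Your proposal is correct and follows essentially the same route as the paper's own proof: shift by $\frac{\mu}{2}\left\Vert \cdot\right\Vert ^{2}$ to obtain a convex, $(L-\mu)$-smooth function $g$, apply Theorem \ref{thm:lipschitz-lb} to $g$, and collect terms, with the coefficient $\frac{\mu}{2}+\frac{\mu^{2}}{2(L-\mu)}=\frac{\mu L}{2(L-\mu)}$ emerging exactly as you describe. No gaps.
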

\begin{proof}
Define the function $g$ as $g(x)=f(x)-\frac{\mu}{2}\left\Vert x\right\Vert ^{2}$.
Then the gradient is $g^{\prime}(x)=f^{\prime}(x)-\mu x$. $g$ has
a Lipschitz continuous gradient with constant $L-\mu$. By convexity
we have:
\[
g(x)\geq g(y)+\left\langle g^{\prime}(y),x-y\right\rangle +\frac{1}{2L}\left\Vert g^{\prime}(x)-g^{\prime}(y)\right\Vert ^{2}.
\]

Now replacing $g$ with $f$:

\begin{eqnarray*}
f(x)-\frac{\mu}{2}\left\Vert x\right\Vert ^{2} & \geq & f(y)-\frac{\mu}{2}\left\Vert y\right\Vert ^{2}+\left\langle f^{\prime}(y)-\mu y,x-y\right\rangle \\
 &  & +\frac{1}{2\left(L-\mu\right)}\left\Vert f^{\prime}(x)-\mu x-f^{\prime}(y)+\mu y\right\Vert ^{2}.
\end{eqnarray*}
Note that

\begin{eqnarray*}
\frac{1}{2\left(L-\mu\right)}\left\Vert f^{\prime}(x)-\mu x-f^{\prime}(y)+\mu y\right\Vert ^{2} & = & \frac{1}{2\left(L-\mu\right)}\left\Vert f^{\prime}(x)-f^{\prime}(y)\right\Vert ^{2}\\
 &  & +\frac{\mu^{2}}{2\left(L-\mu\right)}\left\Vert y-x\right\Vert ^{2}\\
 &  & +\frac{\mu}{\left(L-\mu\right)}\left\langle f^{\prime}(x)-f^{\prime}(y),y-x\right\rangle ,
\end{eqnarray*}
so:

\begin{eqnarray*}
f(x) & \geq & f(y)+\left\langle f^{\prime}(y),x-y\right\rangle +\frac{1}{2\left(L-\mu\right)}\left\Vert f^{\prime}(x)-f^{\prime}(y)\right\Vert ^{2}+\frac{\mu^{2}}{2\left(L-\mu\right)}\left\Vert y-x\right\Vert ^{2}\\
 &  & +\frac{\mu}{2}\left\Vert x\right\Vert ^{2}-\frac{\mu}{2}\left\Vert y\right\Vert ^{2}+\frac{\mu}{\left(L-\mu\right)}\left\langle f^{\prime}(x)-f^{\prime}(y),y-x\right\rangle -s\left\langle y,x-y\right\rangle .
\end{eqnarray*}
Now using:

\[
\frac{\mu}{2}\left\Vert x\right\Vert ^{2}-\mu\left\langle y,x\right\rangle =-\frac{\mu}{2}\left\Vert y\right\Vert ^{2}+\frac{\mu}{2}\left\Vert x-y\right\Vert ^{2},
\]
we get:

\begin{eqnarray*}
f(x) & \geq & f(y)+\left\langle f^{\prime}(y),x-y\right\rangle +\frac{1}{2\left(L-\mu\right)}\left\Vert f^{\prime}(x)-f^{\prime}(y)\right\Vert ^{2}+\frac{\mu^{2}}{2\left(L-\mu\right)}\left\Vert x-y\right\Vert ^{2}\\
 &  & -\mu\left\Vert y\right\Vert ^{2}+\frac{\mu}{2}\left\Vert x-y\right\Vert ^{2}+\frac{\mu}{\left(L-\mu\right)}\left\langle f^{\prime}(x)-f^{\prime}(y),y-x\right\rangle +\mu\left\langle y,y\right\rangle .
\end{eqnarray*}
Note the norm $y$ terms cancel, and:
\begin{eqnarray*}
\frac{\mu}{2}\left\Vert x-y\right\Vert ^{2}+\frac{\mu^{2}}{2\left(L-\mu\right)}\left\Vert x-y\right\Vert ^{2} & = & \frac{(L-\mu)\mu+\mu^{2}}{2\left(L-\mu\right)}\left\Vert x-y\right\Vert ^{2}\\
 & = & \frac{\mu L}{2\left(L-\mu\right)}\left\Vert x-y\right\Vert ^{2}.
\end{eqnarray*}
So:

\begin{eqnarray*}
f(x) & \geq & f(y)+\left\langle f^{\prime}(y),x-y\right\rangle +\frac{1}{2\left(L-\mu\right)}\left\Vert f^{\prime}(x)-f^{\prime}(y)\right\Vert ^{2}+\frac{\mu L}{2\left(L-\mu\right)}\left\Vert y-x\right\Vert ^{2}\\
 &  & +\frac{\mu}{\left(L-\mu\right)}\left\langle f^{\prime}(x)-f^{\prime}(y),y-x\right\rangle .
\end{eqnarray*}
\end{proof}
\begin{rem}
This inequality uses the same proof technique as Theorem \ref{thm:tight-ip},
and indeed that theorem can be proved using this one, using a similar
proof technique as Theorem \ref{thm:ip-l}. I'm not aware of this
theorem appearing in the literature previously, although it is too
simple to be novel.\end{rem}
\begin{thm}
If $f$ is twice differentiable, strongly convex with constant $\mu$
and $L$-smooth, then for all $x,y\in\mathbb{R}^{d}$.
\end{thm}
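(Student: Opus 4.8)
The plan is to prove the natural twice-differentiable characterisation of the two standing assumptions, namely that the Hessian is bounded in the Loewner order,
\[
\mu I\preceq f''(x)\preceq L I\qquad\text{for all }x\in\mathbb{R}^{d},
\]
or equivalently that $\mu\left\Vert v\right\Vert^{2}\le\left\langle f''(x)v,v\right\rangle\le L\left\Vert v\right\Vert^{2}$ for every $v\in\mathbb{R}^{d}$. Both halves follow from the first-order Taylor-like bounds already established, by probing $f$ along a line through $x$ and letting the step size vanish, so that the quadratic term isolates the Hessian quadratic form.

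First I would establish the lower bound. Fix $x$ and a direction $v$, and apply the strong convexity lower bound (Theorem \ref{thm:convexity-lb}) with $y=x+tv$:
\[
f(x+tv)\ge f(x)+t\left\langle f'(x),v\right\rangle+\frac{\mu}{2}t^{2}\left\Vert v\right\Vert^{2}.
\]
Because $f$ is twice differentiable, the second-order Taylor expansion gives $f(x+tv)=f(x)+t\left\langle f'(x),v\right\rangle+\tfrac{t^{2}}{2}\left\langle f''(x)v,v\right\rangle+o(t^{2})$. Subtracting the zeroth- and first-order terms, dividing by $t^{2}/2$, and sending $t\to0$ then yields $\left\langle f''(x)v,v\right\rangle\ge\mu\left\Vert v\right\Vert^{2}$.

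The upper bound is entirely symmetric: I would run the identical argument but invoke the Lipschitz upper bound (Theorem \ref{thm:lipschitz-ub}) in place of the strong convexity bound, giving $\left\langle f''(x)v,v\right\rangle\le L\left\Vert v\right\Vert^{2}$. Since $v$ and $x$ are arbitrary, these two pointwise quadratic-form inequalities are exactly the claimed eigenvalue bounds on $f''(x)$.

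The only place where care is needed—and hence the main (mild) obstacle—is the passage to the limit, i.e. confirming that the $o(t^{2})$ remainder in the second-order expansion really is negligible after dividing by $t^{2}$. This is precisely what twice differentiability buys, so the $t\to0$ limit of the difference quotient cleanly recovers $\left\langle f''(x)v,v\right\rangle$ with no residue. An alternative route that sidesteps the Taylor remainder altogether would begin from the integral identity $f'(y)-f'(x)=\int_{0}^{1}f''(x+\tau(y-x))(y-x)\,d\tau$ (Corollary \ref{cor:calculus-corollary}) together with the inner-product inequalities of Section \ref{sec:ip-inequalities}; dividing by $\left\Vert y-x\right\Vert^{2}$ along $y=x+tv$ and letting $t\to0$ produces the same two bounds by continuity of $f''$. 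I expect the argument to be short, with essentially no computational grind.
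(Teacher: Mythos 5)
The conclusion of this theorem is not the Hessian characterisation $\mu I\preceq f^{\prime\prime}(x)\preceq LI$: it is the displayed inequality that follows the theorem header,
\[
\left\Vert x-y+t\left(f^{\prime}(y)-f^{\prime}(x)\right)\right\Vert \leq\max\left\{ \left|1-tL\right|,\left|1-t\mu\right|\right\} \left\Vert x-y\right\Vert ,
\]
a contraction bound for the gradient-step map, which is what powers the remark on linear convergence of gradient descent that follows it. What you prove --- that twice differentiability together with the two standing assumptions forces $\mu\left\Vert v\right\Vert ^{2}\leq\left\langle f^{\prime\prime}(x)v,v\right\rangle \leq L\left\Vert v\right\Vert ^{2}$ for all $x,v$ --- is correct, and it is exactly the spectral ingredient the proof needs, but it is not the statement itself; as written, the proposal stops one step short of the target.

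The missing step is the one you mention only as an ``alternative route'' at the end, and it is in fact the paper's actual argument: write $f^{\prime}(y)-f^{\prime}(x)=\int_{0}^{1}f^{\prime\prime}\left(x+\tau(y-x)\right)\left(y-x\right)d\tau$ (Corollary \ref{cor:calculus-corollary}), so that
\[
x-y+t\left(f^{\prime}(y)-f^{\prime}(x)\right)=\int_{0}^{1}\left(tf^{\prime\prime}\left(x+\tau(y-x)\right)-I\right)\left(y-x\right)d\tau,
\]
then pass the norm inside the integral and bound the operator norm $\left\Vert tf^{\prime\prime}(z)-I\right\Vert $. Since your quadratic-form bounds place every eigenvalue of $f^{\prime\prime}(z)$ in $[\mu,L]$, every eigenvalue of $tf^{\prime\prime}(z)-I$ lies in $[t\mu-1,tL-1]$, hence $\left\Vert tf^{\prime\prime}(z)-I\right\Vert \leq\max\left\{ \left|1-t\mu\right|,\left|1-tL\right|\right\} $ and the claimed inequality follows. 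Your limiting argument is a perfectly good (indeed more careful) derivation of the eigenvalue bounds than the paper's one-line appeal to them, but you need to append this integral and operator-norm step to actually prove the theorem.
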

\[
\left\Vert x-y+t\left(f^{\prime}(y)-f^{\prime}(x)\right)\right\Vert \leq\max\left\{ \left|1-tL\right|,\left|1-t\mu\right|\right\} \left\Vert x-y\right\Vert .
\]

\begin{proof}
We start by applying Corollary \ref{cor:calculus-corollary}: 
\[
f^{\prime}(y)-f^{\prime}(x)=\int_{0}^{1}f^{\prime\prime}\left(x+\tau(y-x)\right)\left(y-x\right)d\tau.
\]
Therefore:
\begin{eqnarray*}
\left\Vert x-y+t\left(f^{\prime}(y)-f^{\prime}(x)\right)\right\Vert  & = & \left\Vert x-y+t\int_{0}^{1}f^{\prime\prime}\left(x+\tau(y-x)\right)\left(y-x\right)d\tau\right\Vert \\
 & = & \left\Vert \int_{0}^{1}\left(tf^{\prime\prime}\left(x+\tau(y-x)\right)-I\right)\left(y-x\right)d\tau\right\Vert \\
 & \leq & \int_{0}^{1}\left\Vert \left(tf^{\prime\prime}\left(x+\tau(y-x)\right)-I\right)\left(y-x\right)\right\Vert d\tau\\
 & \leq & \int_{0}^{1}\left\Vert tf^{\prime\prime}\left(x+\tau(y-x)\right)-I\right\Vert \left\Vert x-y\right\Vert d\tau\\
 & \leq & \max_{z}\left\Vert tf^{\prime\prime}(z)-I\right\Vert \left\Vert x-y\right\Vert .
\end{eqnarray*}

Consider the eigenvalues of $f^{\prime\prime}(z)$. The minimum one
is at least $\mu$ and the maximum at least $L$. An examination of
the possible eigenvalues of $(tf^{\prime\prime}(z)-I)$ then gives
the result.\end{proof}
\begin{rem}
This lemma gives a straightforward proof of the convergence of gradient
descent for strongly convex problems. Suppose we use $x^{k+1}=x^{k}-\frac{2}{L+\mu}f^{\prime}(x^{k}).$
Then
\begin{eqnarray*}
\left\Vert x^{k+1}-x^{*}\right\Vert  & = & \left\Vert x^{k}-x^{*}+\frac{2}{L+\mu}\left(f^{\prime}(x^{*})-f^{\prime}(x^{k})\right)\right\Vert \\
 & \leq & \max\left\{ \left|1-\frac{2}{L+\mu}L\right|\,,\,\left|1-\frac{2}{L+\mu}\mu\right|\right\} \left\Vert x^{k}-x^{*}\right\Vert 
\end{eqnarray*}

Now note that 
\[
\left|1-\frac{2}{L+\mu}L\right|=\left|1-\frac{2L+2\mu-2\mu}{L+\mu}\right|=\left|1-2+\frac{2}{L+\mu}\mu\right|=\left|1-\frac{2}{L+\mu}\mu\right|
\]
 so the two parts of the max operation are balanced. The rate can
also be written as:
\[
1-\frac{2}{L+\mu}\mu=\frac{L+\mu}{L+\mu}-\frac{2\mu}{L+\mu}=\frac{L-\mu}{L+\mu}.
\]

So we have:
\[
\left\Vert x^{k}-x^{*}\right\Vert \leq\left(\frac{L-\mu}{L+\mu}\right)^{k}\left\Vert x^{0}-x^{*}\right\Vert .
\]
\end{rem}

\chapter{Miscellaneous Lemmas}

\label{chap:Misc}

In this Appendix we summarise a number of standard lemmas. These are
well known in some research circles and obscure in others. We omit
proofs when they are overly technical.
\begin{lem}
\label{lem:squared-triangle-inequality}For any vectors $a,b\in\mathbb{R}^{d}$,
and constant $\beta>0$:
\[
\left\Vert a+b\right\Vert ^{2}\leq(1+\beta)\left\Vert a\right\Vert ^{2}+\left(1+\frac{1}{\beta}\right)\left\Vert b\right\Vert ^{2}.
\]
\end{lem}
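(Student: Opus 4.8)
For any vectors $a,b\in\mathbb{R}^{d}$ and constant $\beta>0$,
$$\left\Vert a+b\right\Vert ^{2}\leq(1+\beta)\left\Vert a\right\Vert ^{2}+\left(1+\tfrac{1}{\beta}\right)\left\Vert b\right\Vert ^{2}.$$

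Let me think about how to prove this cleanly.

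The plan is to expand the left-hand side using the standard inner product identity and then control the cross term. First I would write $\left\Vert a+b\right\Vert ^{2}=\left\Vert a\right\Vert ^{2}+2\langle a,b\rangle+\left\Vert b\right\Vert ^{2}$. The entire difficulty of the lemma is concentrated in bounding the cross term $2\langle a,b\rangle$ from above by $\beta\left\Vert a\right\Vert ^{2}+\tfrac{1}{\beta}\left\Vert b\right\Vert ^{2}$, since adding that bound to the two squared-norm terms gives exactly the claimed right-hand side.

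To bound the cross term, I would use a weighted Young's inequality (equivalently, an AM–GM / completing-the-square argument). The cleanest route is to observe that for the scaled vectors $\sqrt{\beta}\,a$ and $\tfrac{1}{\sqrt{\beta}}\,b$, the squared norm of their difference is non-negative:
$$0\leq\left\Vert \sqrt{\beta}\,a-\tfrac{1}{\sqrt{\beta}}\,b\right\Vert ^{2}=\beta\left\Vert a\right\Vert ^{2}-2\langle a,b\rangle+\tfrac{1}{\beta}\left\Vert b\right\Vert ^{2}.$$
Rearranging this immediately yields $2\langle a,b\rangle\leq\beta\left\Vert a\right\Vert ^{2}+\tfrac{1}{\beta}\left\Vert b\right\Vert ^{2}$, which is precisely the bound needed. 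This step uses only positivity of the squared norm (which is why $\beta>0$ is required, so that $\sqrt{\beta}$ is real and the rescaling is valid) together with bilinearity of the inner product to expand the square.

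There is no real obstacle here — the lemma is elementary — but the one point requiring a moment of care is the choice of rescaling factor: the weights $\sqrt{\beta}$ and $1/\sqrt{\beta}$ must be selected so that the resulting coefficients match the target $\beta$ and $1/\beta$ exactly, rather than some other balanced pair. Once the cross-term bound is established, I would simply substitute it into the expansion:
$$\left\Vert a+b\right\Vert ^{2}\leq\left\Vert a\right\Vert ^{2}+\beta\left\Vert a\right\Vert ^{2}+\tfrac{1}{\beta}\left\Vert b\right\Vert ^{2}+\left\Vert b\right\Vert ^{2}=(1+\beta)\left\Vert a\right\Vert ^{2}+\left(1+\tfrac{1}{\beta}\right)\left\Vert b\right\Vert ^{2},$$
which completes the proof. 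I would note in passing that setting $\beta=1$ recovers the familiar $\left\Vert a+b\right\Vert ^{2}\leq2\left\Vert a\right\Vert ^{2}+2\left\Vert b\right\Vert ^{2}$, and that this sharper parametrised form is exactly what powers the tightened variance-decomposition argument used in Lemma \ref{lem:saga-error-bound}.
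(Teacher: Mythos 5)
Your proof is correct and follows essentially the same route as the paper: expand $\left\Vert a+b\right\Vert^{2}$ and bound the cross term via the non-negativity of the squared norm of the rescaled vectors $\sqrt{\beta}\,a$ and $\tfrac{1}{\sqrt{\beta}}\,b$. If anything, your version is slightly cleaner, since using the \emph{difference} $\bigl\Vert\sqrt{\beta}\,a-\tfrac{1}{\sqrt{\beta}}\,b\bigr\Vert^{2}\geq 0$ gives the inequality in the required direction directly, whereas the paper's sum form has the sign reversed.
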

\begin{proof}
We start by expanding the quadratic:
\begin{eqnarray*}
\left\Vert a+b\right\Vert ^{2} & = & 2\left\langle a,b\right\rangle +\left\Vert a\right\Vert ^{2}+\left\Vert b\right\Vert ^{2}\\
 & = & 2\left\langle \sqrt{\beta}a,\frac{1}{\sqrt{\beta}}b\right\rangle +\left\Vert a\right\Vert ^{2}+\left\Vert b\right\Vert ^{2}.
\end{eqnarray*}
Now by expanding the quadratic $\left\Vert \sqrt{\beta}a+\frac{1}{\sqrt{\beta}}b\right\Vert ^{2}$
we get: $2\left\langle \sqrt{\beta}a,\frac{1}{\sqrt{\beta}}b\right\rangle \leq\beta\left\Vert a\right\Vert ^{2}+\frac{1}{\beta}\left\Vert b\right\Vert ^{2}$.
So combining we have:
\[
\left\Vert a+b\right\Vert ^{2}\leq(1+\beta)\left\Vert a\right\Vert ^{2}+\left(1+\frac{1}{\beta}\right)\left\Vert b\right\Vert ^{2}.
\]
\end{proof}
\begin{rem}
This lemma is a rather interesting strengthened version of the more
commonly used lemma $\left\Vert a+b\right\Vert ^{2}\leq2\left\Vert a\right\Vert ^{2}+2\left\Vert b\right\Vert ^{2}$.
It can be seen as the most natural generalisation of the triangle
inequality to squared norms. Since in optimisation theory we virtually
always work with squared norms, this kind of result is quite useful.\end{rem}
\begin{lem}
\label{lem:exp-log-bounds} For any $x\in\mathbb{R}$:
\[
\exp(1+x)\geq1+x,
\]
\[
\text{and for \ensuremath{x>-1}, }\log(1+x)\leq1+x.
\]
\end{lem}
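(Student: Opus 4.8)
The plan is to reduce both claims to the single fundamental inequality $\exp(y)\geq 1+y$, valid for all $y\in\mathbb{R}$, and then recover the (deliberately loose) stated bounds by trivial comparisons. So the first and only substantive step is to establish $\exp(y)\geq 1+y$ everywhere.

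The cleanest route to that inequality is via convexity: since $\exp$ has strictly positive second derivative it is convex, and a convex differentiable function lies above each of its tangent lines; the tangent at $y=0$ is exactly the line $y\mapsto 1+y$, which gives the claim with equality only at $y=0$. Equivalently, one can set $g(y)=\exp(y)-1-y$ and observe that $g'(y)=\exp(y)-1$ is negative for $y<0$ and positive for $y>0$, so $g$ attains its global minimum at $y=0$ where $g(0)=0$; hence $g(y)\geq 0$ for all $y$.

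With that in hand, the first stated inequality follows by substituting $y=1+x$, yielding $\exp(1+x)\geq 1+(1+x)=2+x$, and since $2+x\geq 1+x$ for every $x$ the bound $\exp(1+x)\geq 1+x$ is immediate. For the second, note that for $x>-1$ the quantity $1+x$ is positive, so I may legitimately substitute $y=\log(1+x)$ into $\exp(y)\geq 1+y$ to obtain $1+x=\exp(\log(1+x))\geq 1+\log(1+x)$, i.e.\ $\log(1+x)\leq x$; combining with the trivial $x\leq 1+x$ gives $\log(1+x)\leq 1+x$, as required.

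There is essentially no obstacle here: the whole content is the one inequality $\exp(y)\geq 1+y$, and the stated forms are strictly weaker than what it (together with its logarithmic companion $\log(1+x)\leq x$) delivers. The only point to monitor is the domain restriction $x>-1$, which is precisely what makes $\log(1+x)$ well-defined and justifies the substitution $y=\log(1+x)$.
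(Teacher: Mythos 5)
Your proof is correct and takes essentially the same approach as the paper: both arguments rest on the tangent-line bound $\exp(y)\geq 1+y$ and the observation that the stated inequalities are loosenings of $\exp(1+x)\geq 2+x$ and $\log(1+x)\leq x$. The only cosmetic difference is that the paper obtains $\log(1+x)\leq x$ directly from the concavity upper bound of $\log$ at $0$, whereas you derive it from the exponential inequality via the substitution $y=\log(1+x)$; the two routes are equivalent.
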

\begin{proof}
We apply the convexity lower bound around $x=-1$ to $\exp(1+x)$:
\begin{eqnarray*}
\exp(1+x) & \geq & e^{0}+\left\langle e^{0},x+1\right\rangle \\
 & \geq & 1+x.
\end{eqnarray*}

Similarly, for $\log(1+x)$ we apply the concavity upper bound at
$x=0$:
\begin{eqnarray*}
\log(1+x) & \leq & \log(1)+\left\langle \frac{1}{1},x-0\right\rangle \\
 & = & x\\
 & \leq & 1+x.
\end{eqnarray*}
\end{proof}
\begin{lem}
\label{lem:bernoulli}(\textbf{Bernoulli's Inequality}) For any $\alpha\in[0,1)$
and integer $k\geq0$:
\[
(1-\alpha)^{k}\geq1-k\alpha.
\]
\end{lem}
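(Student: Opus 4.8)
The plan is to prove this by induction on the integer $k$, which is the cleanest route for a statement of this form. The only structural input I need beyond arithmetic is that $1-\alpha > 0$, which holds because $\alpha \in [0,1)$; this positivity is what lets me multiply an established inequality through by $(1-\alpha)$ without flipping its direction.

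For the base case $k=0$, I would simply note that $(1-\alpha)^{0} = 1$ and $1 - 0\cdot\alpha = 1$, so the inequality holds (with equality). For the inductive step, I would assume $(1-\alpha)^{k} \geq 1 - k\alpha$ and aim to establish $(1-\alpha)^{k+1} \geq 1 - (k+1)\alpha$. Multiplying the inductive hypothesis by the nonnegative factor $(1-\alpha)$ gives
\[
(1-\alpha)^{k+1} = (1-\alpha)(1-\alpha)^{k} \geq (1-\alpha)(1 - k\alpha).
\]

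The final step is a routine expansion: $(1-\alpha)(1-k\alpha) = 1 - (k+1)\alpha + k\alpha^{2}$, and since $k\alpha^{2} \geq 0$ this is at least $1 - (k+1)\alpha$, closing the induction. There is essentially no hard part here; the one place to be slightly careful is the direction of the inequality when multiplying by $(1-\alpha)$, so I would make the appeal to $\alpha < 1$ (hence $1-\alpha > 0$) explicit. An alternative, if one preferred to avoid induction, would be to expand $(1-\alpha)^k$ via the binomial theorem and discard the nonnegative higher-order terms, but the inductive argument is shorter and avoids sign-tracking across many terms.
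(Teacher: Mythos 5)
Your proof is correct and follows essentially the same route as the paper: induction on $k$, with the base case $k=0$ trivial and the inductive step obtained by multiplying the hypothesis by $(1-\alpha)$ and discarding the nonnegative term $k\alpha^{2}$. Your explicit remark that $1-\alpha>0$ justifies preserving the inequality direction is a small but welcome addition the paper leaves implicit.
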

\begin{proof}
This can easily be proved by induction. The base case $k=0$ is trivial.
Suppose it holds for $k$. Then:
\begin{eqnarray*}
(1-\alpha)^{k+1}=(1-\alpha)^{k}(1-\alpha) & \geq & (1-k\alpha)(1-\alpha)\\
 & = & 1-\alpha-k\alpha+k\alpha^{2}\\
 & = & 1-(k+1)\alpha+k\alpha^{2}\\
 & \geq & 1-(k+1)\alpha.
\end{eqnarray*}
\end{proof}
\begin{rem}
Notice that the terms that prevent this from being an equality are
of the form $\sum_{i}^{k}i\alpha^{2}\leq k^{2}\alpha^{2}$. If $\alpha\ll\frac{1}{k}$,
then it holds approximately as an equality.\end{rem}
\begin{lem}
\textbf{\label{lem:root-bernoulli}(Bernoulli's Inequality for roots)}
For any real numbers$\alpha\leq1$ and $r\in(0,1)$:

\[
(1-\alpha)^{r}\leq1-r\alpha.
\]

I.e. The direction of inequality for Bernoulli's inequality flips
when using roots instead of powers.\end{lem}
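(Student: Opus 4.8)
The plan is to derive this from the concavity of the power function, exactly as the companion Lemma~\ref{lem:exp-log-bounds} derives its logarithmic bound from a concavity upper bound. First I would set $g(x) = x^{r}$ on the domain $[0,\infty)$. Since $r \in (0,1)$, we have $g''(x) = r(r-1)x^{r-2} \le 0$ there, so $g$ is concave. A concave function lies below each of its tangent lines, which is the concavity upper bound
\[
g(x) \le g(x_0) + \left\langle g'(x_0), x - x_0 \right\rangle
\]
used elsewhere in this appendix.

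Next I would apply this bound at the anchor point $x_0 = 1$, where $g(1) = 1$ and $g'(1) = r$. This gives, for every $x \ge 0$,
\[
x^{r} \le 1 + r(x - 1).
\]

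Finally I would substitute $x = 1 - \alpha$. The hypothesis $\alpha \le 1$ guarantees $1 - \alpha \ge 0$, so the substitution point lies in the domain $[0,\infty)$ where the tangent bound is valid. The substitution yields
\[
(1-\alpha)^{r} \le 1 + r\big((1-\alpha) - 1\big) = 1 - r\alpha,
\]
which is the claim.

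The argument is essentially a one-line application of concavity, so there is no real analytic obstacle. The only point requiring care is checking that the substitution point $1 - \alpha$ stays within the domain $[0,\infty)$ on which the tangent bound holds; this is precisely what the restriction $\alpha \le 1$ buys us, and it also covers the boundary case $\alpha = 1$, where the left side is $0$ and the right side is $1 - r > 0$. One could alternatively run the argument through the derivative sign of $h(\alpha) = (1-\alpha)^{r} - (1 - r\alpha)$, but the concavity route keeps the proof consistent with the style of Lemma~\ref{lem:exp-log-bounds}.
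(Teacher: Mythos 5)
Your proof is correct and is essentially the same argument as the paper's: both rest on the concavity of the power function and the fact that a concave function lies below its tangent line, with your tangent of $x^{r}$ at $x_{0}=1$ being exactly the paper's tangent of $(1-\alpha)^{r}$ at $\alpha=0$ after the substitution $x=1-\alpha$. No gap; the boundary check at $\alpha=1$ is a nice touch but not strictly needed.
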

\begin{proof}
The functions $f(\alpha)=(1-\alpha)^{r}$ and $g(\alpha)=1-r\alpha$
have the same derivative at $0$. Since $g(\alpha)$ is linear and
$g(0)=f(0)$, it is the tangent line of $f(\alpha)$ at $\alpha=0$.
The $f(\alpha)$ function is concave for $r\in(0,1)$, so it is upper
bounded by its tangent, so $f(\alpha)\leq g(\alpha)$.\end{proof}
\begin{lem}
\label{lem:upper-bernoulli-bound}For any $\alpha$ and $k\geq0$:
\[
(1-\alpha)^{k}\leq\exp\left(-k\alpha\right).
\]
\end{lem}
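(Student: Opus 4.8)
The plan is to reduce this power inequality to the elementary pointwise exponential bound and then raise to the $k$-th power. The whole statement rests on the single fact $1-\alpha\le e^{-\alpha}$, after which monotonicity of $s\mapsto s^{k}$ finishes the argument.

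First I would establish the pointwise bound $1-\alpha\le e^{-\alpha}$. This is just the standard tangent-line inequality $1+t\le e^{t}$ specialised to $t=-\alpha$, and it follows from the convexity of $\exp$ by exactly the argument used in the proof of Lemma \ref{lem:exp-log-bounds}: the graph of a convex function lies above any of its tangent lines, and the tangent to $\exp$ at $t=0$ is $s\mapsto \exp(0)+\exp(0)\cdot s = 1+s$, so $e^{t}\ge 1+t$ for every real $t$. Setting $t=-\alpha$ gives $e^{-\alpha}\ge 1-\alpha$.

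Second, I would raise both sides to the power $k$. The right-hand side $e^{-\alpha}$ is strictly positive, and in the relevant regime $\alpha\le 1$ the left-hand side satisfies $1-\alpha\ge 0$, so both quantities lie in $[0,\infty)$. Since $s\mapsto s^{k}$ is nondecreasing on $[0,\infty)$ for $k\ge 0$, the inequality $0\le 1-\alpha\le e^{-\alpha}$ is preserved under this map, yielding $(1-\alpha)^{k}\le\bigl(e^{-\alpha}\bigr)^{k}=e^{-k\alpha}$, which is the claim.

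The main point to be careful about is the nonnegativity bookkeeping in the second step: raising to the power $k$ is order-preserving only once we know both sides are nonnegative, which is why the natural hypothesis is $\alpha\le 1$ (so that $1-\alpha\ge 0$) rather than literally arbitrary $\alpha$ — for $\alpha>1$ and even $k$ the claimed bound in fact fails, since $(1-\alpha)^{k}=(\alpha-1)^{k}$ grows polynomially while $e^{-k\alpha}$ decays. In the applications of this lemma $\alpha\in(0,1)$, so this restriction is harmless, and the two displayed steps constitute the entire proof with no substantial calculation remaining.
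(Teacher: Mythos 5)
Your proof is correct and takes essentially the same route as the paper: apply the tangent-line bound $1+t\le e^{t}$ at $t=-\alpha$ and raise to the $k$-th power. You are in fact more careful than the paper's two-line argument, since you correctly observe that monotonicity of $s\mapsto s^{k}$ requires $1-\alpha\ge 0$, so the stated hypothesis ``for any $\alpha$'' should really be $\alpha\le 1$ (the bound fails for $\alpha>1$ and even $k$), a restriction the paper silently assumes in all its applications.
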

\begin{proof}
In Lemma \ref{lem:exp-log-bounds}, we established that $\exp(1+x)\geq1+x$.
Raising both sides to the power $k$ will give the result. \end{proof}
\begin{rem}
This inequality bounds the same quantity as Bernoulli's inequality,
but with an upper bound instead of a lower bound. Like Bernoulli's
inequality this can hold with close to equality, but instead in the
setting where $\alpha$ is similar in magnitude to $\frac{1}{k}$;
indeed the error in such an approximation goes to zero asymptotically.
This is essentially the well known limiting definition of $e$:
\[
\left(1-\frac{1}{n}\right)^{n}\rightarrow\frac{1}{e},\,\text{and}\left(1+\frac{1}{n}\right)^{n}\rightarrow e.
\]
 This converges fairly quickly for our purposes. For example it is
only 2\% off at $n=30$, and for $n=1000$, we get $0.367\mathbf{7}\dots$
compare to $1/e=0.367\mathbf{8}\dots$. In optimisation proofs $n$
is approximately the amount of data, so $n\gg1000$ is typical. 

This lemma is convenient for converting bounds on the error that a
$k$ step procedure gives, to a bound on the number of steps required
to ensure at most some error level. For example, suppose we have a
bound on the error after $k$ steps of the form:
\[
\epsilon\leq\left(1-\alpha\right)^{k},
\]

Then using Lemma \ref{lem:upper-bernoulli-bound} we get $\epsilon\leq\exp\left(-\alpha k\right)$.
Now suppose we want to ensure our error is less than $\xi$ ($\epsilon\leq\xi$).
If we take $k\geq-\alpha^{-1}\log\xi$, then
\begin{eqnarray*}
\epsilon & \leq & \exp\left(-\alpha k\right)\\
 & \leq & \exp\left(\log\xi\right)\\
 & \leq & \xi,
\end{eqnarray*}
as required. These two ways of stating the convergence of an iterative
algorithm are essentially equivalent, and different literature uses
different styles. The lower bound $k$ style is more common when constants
are not considered important, as it is easily used in conjunction
with Big-O notation, like $k=O\left(-\alpha^{-1}\log\xi\right)$.
We use the explicit error bound style in this work as it is better
suited to the careful style of our analysis. We do not use Big-O notation
in our proofs, instead we explicitly compute the values of the extra
constants.
\end{rem}

\begin{lem}
\label{lem:decomposition-of-variance}(Decomposition of variance)

We can decompose $\mathbb{E}_{x}\left\Vert y-x\right\Vert ^{2}$ as:
\[
\mathbb{E}_{x}\left\Vert y-x\right\Vert ^{2}=\left\Vert y-\mathbb{E}_{x}[x]\right\Vert ^{2}+\mathbb{E}_{x}\left\Vert \mathbb{E}_{x}[x]-x\right\Vert ^{2}.
\]

Taking $y=0$ gives the useful equality:\textup{ $\mathbb{E}[\left\Vert x-\mathbb{E}[x]\right\Vert ^{2}]=\mathbb{E}[\left\Vert x\right\Vert ^{2}]-\left\Vert \mathbb{E}[x]\right\Vert ^{2}$.}\end{lem}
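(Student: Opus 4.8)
The plan is to prove this by the standard ``add and subtract the mean'' technique. I would set $m=\mathbb{E}_x[x]$ as shorthand and write the vector identity $y-x=(y-m)+(m-x)$, then expand the squared norm using bilinearity of the inner product:
\[
\left\Vert y-x\right\Vert^{2}=\left\Vert y-m\right\Vert^{2}+2\left\langle y-m,\,m-x\right\rangle+\left\Vert m-x\right\Vert^{2}.
\]
This splits the quantity into a term that is constant in $x$, a cross term, and the fluctuation term that we hope to isolate.

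Next I would take the expectation $\mathbb{E}_x$ of both sides and handle the three pieces separately. The first term $\left\Vert y-m\right\Vert^{2}$ does not depend on $x$, so it passes through the expectation unchanged and becomes the desired $\left\Vert y-\mathbb{E}_x[x]\right\Vert^{2}$. The third term becomes $\mathbb{E}_x\left\Vert m-x\right\Vert^{2}$ directly, which is the second term in the claimed decomposition. The crux of the argument is the cross term: since $y-m$ is constant with respect to $x$, linearity of expectation gives
\[
\mathbb{E}_x\left[2\left\langle y-m,\,m-x\right\rangle\right]=2\left\langle y-m,\,m-\mathbb{E}_x[x]\right\rangle=2\left\langle y-m,\,m-m\right\rangle=0,
\]
using $\mathbb{E}_x[x]=m$. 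Combining the three pieces yields exactly the stated identity.

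For the $y=0$ specialisation I would simply substitute $y=0$ into the result, noting $\left\Vert 0-m\right\Vert^{2}=\left\Vert\mathbb{E}[x]\right\Vert^{2}$, to obtain $\mathbb{E}[\left\Vert x\right\Vert^{2}]=\left\Vert\mathbb{E}[x]\right\Vert^{2}+\mathbb{E}[\left\Vert x-\mathbb{E}[x]\right\Vert^{2}]$, and then rearrange. I do not anticipate any real obstacle here: the only subtlety is ensuring that $y-m$ is treated as a constant so that it can be pulled out of the expectation in the cross term, and that the expectation is taken over the randomness in $x$ only (with $y$ fixed), which is exactly how the lemma is stated. This is a purely algebraic identity requiring no convexity or smoothness assumptions.
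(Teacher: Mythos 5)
Your proof is correct and follows exactly the same route as the paper's: add and subtract $\mathbb{E}_x[x]$, expand the squared norm, and observe that the cross term vanishes because $\mathbb{E}_x[\mathbb{E}_x[x]-x]=0$. No gaps; the $y=0$ specialisation is also handled as the paper intends.
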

\begin{proof}
\begin{eqnarray}
\mathbb{E}_{x}\left\Vert y-x\right\Vert ^{2} & = & \mathbb{E}_{x}\left\Vert y-\mathbb{E}_{x}[x]+\mathbb{E}_{x}[x]-x\right\Vert ^{2}\nonumber \\
 & = & \left\Vert y-\mathbb{E}_{x}[x]\right\Vert ^{2}+\mathbb{E}_{x}\left\Vert \mathbb{E}_{x}[x]-x\right\Vert ^{2}+2\mathbb{E}_{x}\left\langle y-\mathbb{E}_{x}[x],\mathbb{E}_{x}[x]-x\right\rangle \\
 & = & \left\Vert y-\mathbb{E}_{x}[x]\right\Vert ^{2}+\mathbb{E}_{x}\left\Vert \mathbb{E}_{x}[x]-x\right\Vert ^{2}+2\left\langle y-\mathbb{E}_{x}[x],\mathbb{E}_{x}[x]-\mathbb{E}_{x}[x]\right\rangle \nonumber \\
 & = & \left\Vert y-\mathbb{E}_{x}[x]\right\Vert ^{2}+\mathbb{E}_{x}\left\Vert \mathbb{E}_{x}[x]-x\right\Vert ^{2}.\label{eq:main-cancelation}
\end{eqnarray}
\end{proof}

\bibliographystyle{anuthesis}
\addcontentsline{toc}{chapter}{\bibname}\bibliography{thesis,bibs/all}

\printindex{}
\end{document}